\documentclass{article}

\usepackage[english]{babel}
\usepackage[utf8x]{inputenc}
\usepackage[T1]{fontenc}
\usepackage{natbib}
\usepackage{subcaption}
\usepackage{booktabs}
\usepackage{multirow}
\usepackage{url}
\usepackage{soul}
\usepackage[dvipsnames]{xcolor}
\usepackage{tikz}
\usepackage{graphicx}
\usetikzlibrary{calc,quotes,decorations.pathreplacing,decorations.text}
\usepackage{hyperref}
\usepackage{paralist}
\usepackage[stable]{footmisc}
\usepackage{adjustbox}

\usetikzlibrary{calc,quotes,decorations.pathreplacing,decorations.text,arrows.meta,fit,positioning, matrix}

\usepackage{ifthen}
\newcommand{\CC}[1][]{$\text{C\hspace{-.25ex}}^{_{_{_{++}}}}
\ifthenelse{\equal{#1}{}}{}{\text{\hspace{-.625ex}#1}}$}

\usepackage{bm}
\usepackage{bbm}
\usepackage{amsmath}
\usepackage[all,warning]{onlyamsmath}
\usepackage{amssymb}
\usepackage{amsthm}
\usepackage{amsfonts}
\usepackage{thmtools}
\usepackage[euro]{isonums}

\usepackage[mathic=true]{mathtools}
\usepackage{fixmath}
\usepackage{siunitx}
\usepackage{dirtytalk}

\usepackage{mleftright}
\usepackage{stmaryrd}
\usepackage{xfrac}
\usepackage{algorithm}
\usepackage[]{algpseudocodex}

\definecolor{bfTeal}{HTML}{14B8A6}
\definecolor{bfPurple}{HTML}{7C3AED}

\usepackage{thm-restate}

\let\originalleft\mleft
\let\originalright\mright
\renewcommand{\mleft}{\mathopen{}\mathclose\bgroup\originalleft}
\renewcommand{\mright}{\aftergroup\egroup\originalright}

\usepackage{pifont}

\newcommand{\N}{\mathbb{N}}
\newcommand{\R}{\mathbb{R}}

\usepackage{enumitem}
\setlist[enumerate]{itemsep=0.2ex, topsep=0.5\topsep}
\setlist[description]{itemsep=0.2ex, topsep=0.5\topsep}
\setlist[itemize]{itemsep=0.2ex, topsep=0.5\topsep}

\makeatletter
\def\thmt@refnamewithcomma #1#2#3,#4,#5\@nil{\@xa\def\csname\thmt@envname #1utorefname\endcsname{#3}\ifcsname #2refname\endcsname
\csname #2refname\expandafter\endcsname\expandafter{\thmt@envname}{#3}{#4}\fi
}
\makeatother

\usepackage[capitalise,noabbrev]{cleveref}

\newtheorem{theorem}{Theorem}

\newtheorem{proposition}[theorem]{Proposition}

\newtheorem{lemma}[theorem]{Lemma}
\newtheorem{corollary}[theorem]{Corollary}

\newtheorem{claim}[theorem]{Claim}
\theoremstyle{definition}
\newtheorem{definition}[theorem]{Definition}
\theoremstyle{remark}
\newtheorem{remark}[theorem]{Remark}

\newtheorem*{theorem*}{Theorem}

\usepackage[protrusion=true,expansion=false, activate={true,nocompatibility},final,kerning=true,spacing=true]{microtype}
\usepackage{ellipsis}

\usepackage{anyfontsize}
\usepackage{yfonts}

\usepackage[auth-lg]{authblk}

\newcommand{\cA}{\mathcal{A}}

\newcommand{\cF}{\mathcal{F}}
\newcommand{\cG}{\mathcal{G}}

\newcommand{\cL}{\mathcal{L}}

\newcommand{\cN}{\mathcal{N}}

\newcommand{\cP}{\mathcal{P}}
\newcommand{\cQ}{\mathcal{Q}}

\newcommand{\cS}{\mathcal{S}}
\newcommand{\cT}{\mathcal{T}}

\newcommand{\cV}{\mathcal{V}}
\newcommand{\cX}{\mathcal{X}}
\newcommand{\cY}{\mathcal{Y}}

\newcommand{\Nb}{\mathbb{N}}

\newcommand{\Rb}{\mathbb{R}}

\newcommand{\FNN}{\mathsf{FNN}}
\newcommand{\MPNN}{\mathsf{MPNN}}

\newcommand{\SSSP}{\mathsf{SSSP}}
\newcommand{\MST}{\mathsf{MST}}

\newcommand{\wlone}{$1$\textrm{-}\textsf{WL}}
\newcommand{\wlfive}{$1$\textrm{-}\textsf{iWL}}
\newcommand{\wloo}{$(1,1)$\textrm{-}\textsf{WL}}

\newcommand{\hb}{\mathbold{h}}

\newcommand{\UPD}{\mathsf{UPD}}
\newcommand{\AGG}{\mathsf{AGG}}
\newcommand{\RO}{\mathsf{READOUT}}

\newcommand{\REL}{\mathsf{RELABEL}}

\newcommand{\new}[1]{\emph{#1}}

\renewcommand{\vec}[1]{\mathbold{#1}}
\newcommand{\oms}{\{\!\!\{}
\newcommand{\cms}{\}\!\!\}}
\newcommand{\tup}[1]{{(#1)}}
\DeclarePairedDelimiterX{\norm}[1]{\lVert}{\rVert}{#1}

\DeclareFontFamily{U}{mathx}{\hyphenchar\font45}
\DeclareFontShape{U}{mathx}{m}{n}{<-> mathx10}{}
\DeclareSymbolFont{mathx}{U}{mathx}{m}{n}

\newcommand{\abs}[1]{\left|#1\right|}

\newcommand{\UNR}[1]{\ensuremath{\mathsf{unr}(#1)}}

\newcommand{\MSF}{\mathsf{MSF}}
\newcommand{\cc}{\mathrm{cc}}

\usepackage[preprint]{icml2026}
\makeatletter
\renewcommand{\Notice@String}{}
\makeatother
\hypersetup{pdfsubject={}}

\icmltitlerunning{Which Algorithms Can Graph Neural Networks Learn?}

\begin{document}

\twocolumn[
  \icmltitle{Which Algorithms Can Graph Neural Networks Learn?}

  \icmlsetsymbol{equal}{*}

  \begin{icmlauthorlist}
    \icmlauthor{Solveig Wittig}{equal,rwth}
    \icmlauthor{Antonis Vasileiou}{equal,rwth}
    \icmlauthor{Robert R. Nerem}{equal,sd}
    \icmlauthor{Timo Stoll}{rwth}
    \icmlauthor{Floris Geerts}{ua}
    \icmlauthor{Yusu Wang}{sd}
    \icmlauthor{Christopher Morris}{rwth}

  \end{icmlauthorlist}

  \icmlaffiliation{rwth}{RWTH Aachen University, Germany}
  \icmlaffiliation{sd}{University of California San Diego, USA}
  \icmlaffiliation{ua}{University of Antwerp, Belgium}

  \icmlcorrespondingauthor{Solveig Wittig}{solveig.wittig@log.rwth-aachen.de}

  \icmlkeywords{Machine Learning, ICML}

  \vskip 0.3in
]

\printAffiliationsAndNotice{\icmlEqualContribution}

\begin{abstract}
In recent years, there has been growing interest in understanding neural architectures' ability to learn to execute discrete algorithms, a line of work often referred to as neural algorithmic reasoning. The goal is to integrate algorithmic reasoning capabilities into larger neural pipelines. Many such architectures are based on (message-passing) graph neural networks (MPNNs), owing to their permutation equivariance and ability to deal with sparsity and variable-sized inputs. However, much existing work is either largely empirical and lacks formal guarantees or it focuses solely on expressivity, leaving open the question of when and how such architectures generalize beyond a finite training set. In this work, we propose a general theoretical framework that characterizes sufficient conditions under which MPNNs can learn an algorithm from a training set of small instances and provably approximate its behavior on inputs of arbitrary size with worst-case guarantees. Our framework applies to a broad class of algorithms, including single-source shortest paths, minimum spanning trees, and general dynamic programming problems, such as the $0$-$1$ knapsack problem. In addition, we establish impossibility results for a wide range of algorithmic tasks, showing that standard MPNNs cannot learn them and derive more expressive MPNN-like architectures that overcome these limitations. Finally, we refine our analysis for the Bellman–Ford algorithm, yielding substantially smaller required training sets and significantly extending the recent work of~\citet{Ner+2025} by allowing for a differentiable regularization loss. Empirical results largely support our theoretical findings.
\end{abstract}

\new{Graph neural networks} (GNNs), and in particular \new{message-passing neural networks} (MPNNs), constitute a versatile and expressive class of neural architectures for learning over graph-structured data~\citep{Gil+2017,Sca+2009}. Their permutation equivariance and ability to operate on sparse, variable-sized inputs have made them a central tool across a wide range of applications, including drug design~\citep{Won+2023}, global medium-range weather forecasting~\citep{Lam+2023}, and combinatorial optimization~\citep{Cap+2021,Gas+2019,Sca+2024,Qia+2023}.

Recently, MPNNs have played a central role in \new{neural algorithmic reasoning} (NAR), a research direction that seeks to bridge classical algorithm design and neural computation~\citep{Cap+2021,Vel+2021,Xu+2021}. The goal of NAR is to enable neural networks to learn, execute, and generalize discrete algorithms, thereby seamlessly integrating algorithmic reasoning into end-to-end trainable neural pipelines. Due to their close correspondence with iterative, local graph computation, MPNNs have emerged as a natural architectural backbone for learning graph algorithms such as shortest paths, minimum spanning trees, and dynamic programming procedures~\citep{Vel+2020}.
A key motivation for studying such algorithmic primitives is neural combinatorial optimization, where understanding how MPNNs learn and exploit algorithmic structure may help in designing models for more complex optimization problems for which efficient algorithms are unknown or difficult to integrate into differentiable pipelines. In this context, canonical graph algorithms provide a controlled setting for studying learnability and size generalization in neural architectures, rather than replacing classical solvers where these are already available.

Despite substantial empirical progress, theoretical understanding of neural algorithmic reasoning remains limited. Much prior work is either primarily empirical, demonstrating performance on benchmark instances without formal guarantees, or focuses on expressivity questions, characterizing which algorithms can in principle be represented by a given architecture, e.g.,~\citet{Azi+2020,Che+2020,He+2025,Lou2019,Mor+2019,Xu+2018b}, with a recent focus on transformer architectures, e.g.,~\citet{Luc+2024,Luc+2025,Mer+2025,San+2024b,San+2024c,Yeh+2025,zhou2024what}. Some recent works have also begun to study sample complexity and learnability in neural algorithmic reasoning settings, including analyses based on algorithmic alignment~\citep[Theorem 3.6]{Xu+2018b}, VC-dimension and covering-number bounds for expressive GNN architectures~\citep[Section 4]{Pel+2025} and \citet{Fra+2024}, and PAC-style norm-based generalization bounds for transformer architectures capable of simulating MPC computation~\citep[Section 6]{Luc+2025}. However, existing analyses primarily provide in-distribution or PAC-style guarantees and offer limited insight into worst-case or out-of-distribution generalization to larger or structurally different graphs. As a consequence, it remains unclear when learned models provably generalize beyond the training distribution, a property essential for reliable algorithmic deployment.

A notable recent exception is the work of~\citet{Ner+2025}, which provides theoretical guarantees for learning graph algorithms with MPNNs. However, their analysis is restricted to a single algorithm (i.e., Bellman--Ford) and enforces correctness via a non-differentiable regularization term. These assumptions limit the applicability of the results to broader classes of algorithms and to standard gradient-based learning pipelines.

\textbf{Present work}
In this work, we develop a general theoretical framework for learning algorithms with MPNNs that addresses these limitations. We characterize sufficient conditions under which an MPNN trained on a small set of instances can provably generalize to inputs of arbitrary size with worst-case guarantees, covering a broad class of graph algorithms while remaining compatible with fully differentiable training objectives. Our framework clarifies which algorithms MPNNs can \emph{learn} from finite data and which they \emph{cannot learn}. Concretely, our contributions are as follows.

\begin{enumerate}
\item We introduce a theoretical framework characterizing when standard MPNNs (or more expressive MPNNs) can learn the cost function of graph algorithms, uniformly over graphs of arbitrary size, from finite data by minimizing an empirical loss; see~\cref{sec:can}.
\item Using this framework, we identify conditions under which GNNs can learn standard graph algorithms, including single-source shortest-path, minimum spanning tree, and dynamic programming algorithms.
\item For single-source shortest-path, we propose a differentiable $\ell_1$-regularization term that reduces required training data by balancing empirical risk minimization with regularization to enforce a sparsity pattern; see~\cref{sec:improved}.
\item We empirically validate that these insights translate into practice, underscoring the role of training data and the proposed regularization; see~\cref{sec:experiments}.
\end{enumerate}

\emph{Taken together, our framework provides a precise characterization of which algorithms standard and more expressive MPNNs can learn, enabling a more principled understanding of the capabilities and limitations of GNN-based, data-driven algorithmic design.}

See~\cref{app:sec:related_work} for a detailed discussion on related work.

\section{Background}\label{sec:background}

In the following sections, we introduce the notation and provide the necessary background.

\textbf{Basic notations}
Let $\Nb \coloneq \{1,2,\dots\}$, $\Nb_0 \coloneq \Nb \cup \{0\}$,
$\Rb^+$ denote the non-negative reals, and $\Rb_{>0}$ the positive reals.
For $n\in\Nb$, let $[n]\coloneq\{1,\dots,n\}$ and $[n]_0\coloneq\{0,1,\dots,n\}$.
We use $\oms\dots\cms$ to denote multisets.
For non-empty sets $X,Y$, let $Y^X$ be the set of functions $X\to Y$.
For $A\subset X$, let $1_A\colon X\to\{0,1\}$ be the indicator function.
For a matrix $\vec{M}\in\Rb^{n\times m}$, $\vec{M}_{i,\cdot}$ and $\vec{M}_{\cdot,j}$ denote its $i$th row and $j$th column.
The symbol $\vec{0}$ denotes an all-zero vector of appropriate dimension.
Functions are applied to sets, multisets, and matrices element-wise.
For $\vec{x}\in\Rb^n$, define $\|\vec{x}\|_2 \coloneq (\sum_{i=1}^n x_i^2)^{1/2}$ and
$\|\vec{x}\|_\infty \coloneq \max_{i\in[n]} |x_i|$.
For $\vec{M}\in\Rb^{n\times m}$, define the Frobenius norm
$\|\vec{M}\|_{\text{F}} \coloneq (\sum_{i,j} M_{ij}^2)^{1/2}$,
the operator norm
$\|\vec{M}\|_2 \coloneq \sup_{\vec{x}\neq\vec{0}} \|\vec{M}\vec{x}\|_2/\|\vec{x}\|_2$,
and $\|\vec{M}\|_\infty \coloneq \max_{i\in[n]} \sum_{j=1}^m |M_{ij}|$.

\textbf{Graphs}
An (undirected) \new{graph}  $G$ is a pair  $(V(G),E(G))$ with finite vertex set $V(G)$ and
edge set $E(G)\subseteq\{\{u,v\}\subseteq V(G)\mid u\neq v\}$.
The \new{order} of $G$ is $|V(G)|$; if $|V(G)|=n$ we call $G$ an \new{$n$-order graph}.
A \new{directed graph} satisfies $E(G)\subseteq V(G)^2$.
For an $n$-order graph with $V(G)=[n]$, the \new{adjacency matrix}
$\vec{A}(G)\in\{0,1\}^{n\times n}$ is defined by
$\vec{A}(G)_{vw}=1$ iff $\{v,w\}\in E(G)$.
The \new{neighborhood} of $v\in V(G)$ is
$N_G(v)\coloneq\{u\in V(G)\mid\{u,v\}\in E(G)\}$.

An \new{attributed graph} is a pair $(G,a_G)$ with $a_G\colon V(G)\to\Rb^d$;
the \new{attribute} of $v$ is $a_G(v)$.
An \new{edge-featured graph} is a pair $(G,w_G)$ with
$w_G\colon E(G)\to\Rb^d$; in the special case $d=1$ and $w_G(e)\in\Rb_{>0}$,
$G$ is \new{edge-weighted} and $w_G(e)$ is the \new{edge weight}.
When unambiguous, we write $w(e)$ or $w_e$.

For graphs without edge features, the degree of $u$ is
$\deg_G(u)\coloneq |N_G(u)|$; for edge-weighted graphs,
$\deg_G(u)\coloneq \sum_{v\in N_G(u)} w_G(u,v)$.
We omit the subscript when the graph is clear.

For a graph class $\cG$ and $k\in\Nb$, let
$V_k(\cG)\coloneq\{(G,\vec v)\mid G\in\cG,\ \vec v\in V(G)^k\}$, and set
$V_0(\cG)\coloneq\cG$.

See \cref{app:extended_background} for additional graph-related notation.

\textbf{Invariants} Let $\cG$ be a set of graphs, a \new{graph-level invariant (regarding $\cG$)} is a function $h \colon \cG \to \Rb^d$, $d > 0$ such that $h(G) = h(H)$, for $G$ and $H$ being isomorphic. In addition, for $k>0$, a \new{$k$-tuple invariant} is a function $h\colon V_k(\cG)\to\Rb^d$, such that $h(G,\vec v)=h(H,\vec{w})$ whenever $(G,\vec{v})$ and $(H,\vec w)$ are isomorphic. For $k=0$, this recovers the notion of a graph invariant. \looseness=-1

\subsection{Metric spaces, covering numbers, and partitions}\label{sec:metricspaces}

Here, we define pseudo-metric spaces, continuity assumptions, covering numbers, and partitions, which play an essential role in the following.

\textbf{Metric spaces} In the remainder of the paper, \say{distances} between graphs play an essential role, which we make precise by defining a \new{pseudo-metric} (on the set of graphs). Let $\cX$ be a set equipped with a pseudo-metric $d \colon \cX\times \cX\to\Rb^+$, i.e., $d$ is a function satisfying $d(x,x)=0$ and $d(x,y)=d(y,x)$ for $x,y\in\cX$, and $d(x,y)\leq d(x,z)+d(z,y)$, for $x,y,z \in \cX$. The latter property is called the triangle inequality. The pair $(\cX,d)$ is called a \new{pseudo-metric space}. For $(\cX,d)$ to be a \new{metric space}, $d$ additionally needs to satisfy $d(x,y)=0\Rightarrow x=y$, for $x,y\in\cX$.\footnote{Observe that computing a metric on the set of graphs $\cG$ up to isomorphism is at least as hard as solving the graph isomorphism problem on $\cG$.}

\textbf{Lipschitz continuity on metric spaces} Let $(\cX,d_\cX)$ and $(\cY,d_\cY)$ be two pseudo-metric spaces. A function $ f \colon \cX \to \cY $ is called \new{$c_f$-Lipschitz continuous}, for $c_f \in \Rb_{>0}$, if,  for $ x,x' \in \cX $,
\begin{equation*}
	d_{\cY} (f(x),f(x'))  \leq c_f \cdot d_{\cX}(x,x').
\end{equation*}

\textbf{Covering numbers}
Let $(\cX,d)$ be a pseudo-metric space. Given an $\varepsilon>0$, an \new{$\varepsilon$-cover} of $\cX$ is a subset $C\subseteq \cX$ such that for all elements $x\in\cX$ there is an element $y\in C$ such that $d(x,y) \leq \varepsilon$. Given $\varepsilon > 0$ and a pseudo-metric $d$ on the set $\cX$, we define the \new{covering number} of $\cX$,
\begin{equation*}
	\cN(\cX,d,\varepsilon)\!\coloneqq\!\min\{m \!\mid\! \text{$\exists$ an $\varepsilon$-cover of $\cX$ of cardinality $m$} \},
\end{equation*}
i.e., the smallest number $m$ such that there exists a $\varepsilon$-cover of cardinality $m$ of the set $\cX$  with regard to the pseudo-metric $d$.

\subsection{Message-passing graph neural networks}
\label{sec:mpnns}

One particular, well-known class of graph machine learning architectures is MPNNs. MPNNs learn a $d$-dimensional real-valued vector of each vertex in a graph by aggregating information from neighboring vertices. Following~\citet{Gil+2017}, let $(G, a_G, w_G)$ be an attributed, edge-weighted graph with initial vertex feature $\hb_{v}^\tup{0} \coloneqq a_G(v) \in \Rb^{d_0}$, $d_0 \in \Nb$, for $v\in V(G)$. An \new{$L$-layer MPNN architecture} consists of a composition of $L$ neural network layers for some $L>0$. In each \new{layer}, $t \in \Nb$, we compute a node feature
\begin{equation}\label{def:MPNN_aggregation}
\begin{aligned}
  \hb^\tup{t}_{v} \coloneqq &\UPD_{\vec{u}_t}^\tup{t}\Bigl(\hb_{v}^\tup{t-1}, \AGG_{\vec{a}_t}^\tup{t} \bigl( \\
  &\oms (\hb_v^\tup{t-1},\hb_{u}^\tup{t-1},w_G(v,u))
		\mid u\in N(v) \cms \bigr)\Bigr),
\end{aligned}
\end{equation}
$d_t \in \Nb$, for $v\in V(G)$, where $\UPD_{\vec{u}_t}^\tup{t}$ and $\AGG_{\vec{a}_t}^\tup{t}$ are functions, parameterized by $\vec{u}_t \in \vec{U}_t$ and $\vec{a}_t \in \vec{A}_t$, e.g., neural networks, with $\vec{U}_t$ and $\vec{A}_t$ being sets of parameters, e.g., $\Rb^d$. In the case of graph-level tasks, e.g., graph classification, one also uses a \new{readout}, where
\begin{equation}\label{def:MPNN_readout}
	\hb_G \coloneq \RO_{\vec{r}}\Bigl( \oms \hb_{v}^{\tup{L}}\mid v\in V(G) \cms \Bigr) \in \Rb^{d},
\end{equation}
to compute a single vectorial representation based on learned vertex features after iteration $L$. Again, $\RO_{\vec{r}}$ is a a parameterized function, for $\vec{r}$ in some parameter set $\vec{R}$. Throughout the paper, we consider a variety of MPNN architectures; all of them can be viewed as special cases of the general MPNN formulation introduced above. We distinguish between \new{node-} and \new{graph-level} MPNNs, i.e., the former compute a feature for each node in a given graph while the latter compute a single feature for the whole graph; see~\cref{app:sec:nodelevel} in the appendix for details.

See~\cref{app:extended_background} for connections to the graph isomorphism problem and for definitions of simple heuristics based on variants of the \new{$1$-dimensional Weisfeiler--Leman algorithm} (namely \wlfive{} and \wloo), as well as the separation and approximation abilities of MPNNs.

\section{What and how \emph{can} GNNs learn}\label{sec:can}

Based on the definition of invariants in \cref{sec:background}, we can view algorithms as invariant maps from graphs (or their nodes) to scalars or real-valued vectors. Given a hypothesis class, we can compare its distinguishability with that of an algorithm via their induced equivalence relations, closely related to uniform approximation (see \cref{prop:septoapprox} in \cref{app:extended_background}). However, these results do not explain how to choose a function in the hypothesis class that approximates a given algorithm, and, moreover, the notion of approximation is non-uniform, applying only to graphs of fixed size. Consequently, it does not, by itself, imply learnability, particularly for larger instances. In this section, we address these limitations by deriving suitable loss functions and finite datasets; training under these settings yields guarantees on how well MPNNs can learn to approximate algorithms uniformly across graph sizes. \looseness=-1

We distinguish between learning algorithms that are graph-level invariants, e.g., the cost of the minimum spanning tree, and $k$-tuple invariants, e.g., the shortest-path distance from a source node to all other nodes in a graph.

\subsection{Regularization-induced extrapolation}

Here, we develop a general theoretical framework showing that, with suitable regularization and carefully chosen datasets, learning systems can extrapolate beyond the training range, provided the input features lie in a compact set with an appropriate topology. In particular, our results apply to extrapolation to arbitrarily large domains. We establish this theory in a general learning setting by proving basic learnability properties of Lipschitz functions on compact sets. In \cref{subsec:finite_lip_mpnns}, we specialize the analysis to MPNNs with different architectures, yielding our main extrapolation (size generalization) results.

We begin by defining the notion of a \new{finite Lipschitz class}: a set of parameterized functions over a bounded domain for which the parameters control the Lipschitz constant.

\begin{definition}
\label{def:finite_uniform_approx}
Let $\cX$ be a non-empty set. Given a hypothesis class $\cF_{\vec{\Theta}} \coloneqq \{f_{\vec{\theta}} \colon \vec{\theta} \in \vec{\Theta}\}$ with $f_{\vec{\theta}} \colon \cX \to \Rb$ and $\vec{\Theta}$ being a set of parameters. We say that $\cF_{\vec{\Theta}}$ is a \new{finite Lipschitz class} if there exists a (pseudo-)metric $d_{\cX}$ on $\cX$ such that for every $\vec{\theta}\in\vec{\Theta}$, $f_{\vec{\theta}}$ is Lipschitz with minimal Lipschitz constant\footnote{The infimum is used if the minimum does not exist. } $M_{\vec{\theta}}<\infty$, and the covering number $\cN(\cX,d,\varepsilon)$ is finite for all $\varepsilon>0$.
\end{definition}

In what follows, we assume that whenever the above definition is satisfied, one can compute an upper bound $B_{\vec{\theta}}\in \Rb_{>0}$ such that $M_{\vec{\theta}} \le B_{\vec{\theta}}$, for all $\vec{\theta} \in \vec{\Theta}$, which we call a \new{Lipschitz certificate}. A target function $f^*$ is approximable (with respect to $\mathcal F_{\vec{\Theta}}$ with certificate $B_{f^*}$ if for all $\epsilon> 0$,
there exists $\vec\theta \in \vec \Theta$ such that $|f_{\vec\theta}(x) - f^*(x)| < \epsilon$ for all $x \in \mathcal X.$

Let $N \in \Nb$ and $\cF_{\vec{\Theta}}$ be a finite Lipschitz class, let $f^* \colon \cX \to \Rb$ be a target function, and let $X \coloneqq \{ x_1, \ldots, x_N \} \subseteq \cX$. For the dataset $\{(x_i,y_i)\}_{i=1}^N$ with $y_i \coloneqq f^*(x_i)$, the \emph{empirical loss} of a hypothesis $f_{\vec{\theta}}\in\cF_{\vec{\Theta}}$ is
\begin{equation*}
\mathcal{L}_X^{\mathrm{emp}}(f_{\vec{\theta}}) \coloneqq \frac{1}{N}\sum_{i=1}^{N} |f_{\vec{\theta}}(x_i) - y_i|,
\end{equation*}
and its \new{regularized loss} is
\begin{equation*}
\mathcal{L}_X (f_{\vec{\theta}}) \coloneqq \mathcal{L}_X^{\mathrm{emp}}(f_{\vec{\theta}}) + \mathcal{L}^{\mathrm{reg}}(f_{\vec{\theta}}),
\end{equation*}
where $\mathcal{L}^{\mathrm{reg}} \colon \mathcal F_{\vec{\Theta}} \to \Rb^+$ is a regularization term which, in practice, depends on the certificate $B_{\vec{\theta}}$ described above. Such regularizers typically depend on the norms of the trainable weights; a concrete example is given below.

\textbf{Example of finite Lipschitz class and certificates}
A common example a finite Lipschitz class is given by standard feedforward neural networks of the form $\vec{x} \mapsto \vec{W}_L \sigma(\vec{W}_{L-1}\sigma(\cdots \sigma(\vec{W}_1 \vec{x})\cdots))$ with $1$-Lipschitz nonlinearities, for which one may take $B_{\vec{\theta}} = \prod_{\ell=1}^{L} \|\vec{W}_\ell\|_{2}$ with respect to the induced Euclidean metric and assume inputs $\vec{x}$ in a closed Euclidean ball. More generally, certificates for compositions are obtained by multiplying per-layer or operator bounds, and many architectures (including constrained residual or normalized variants) admit similarly computable certificates. Likewise, we assume we have a known upper bound $B_{f^*} \in \Rb_{>0} $ such that the target $f^*$ is Lipschitz regarding $d_{\cX}$ with minimal Lipschitz constant at most $B_{f^*}$.

The following result shows that for a finite Lipschitz class, controlling the Lipschitz certificate of that class, and the Lipschitz continuity of the target class imply that a target function can be learned from finite data; see~\cref{fig:t-idm} for an illustration.

\begin{theorem}[Informal]
\label{thm:specific_regularization}
Let $\mathcal{F}_{\vec{\Theta}}$ be a finite Lipschitz class on $(\cX,d_{\cX})$ with certificates $B_{\vec{\theta}}$, and let $f^*$ be a Lipschitz target that is approximable with certificate $B_{f^*}$. Then, defining the regularization term as
\[
\mathcal{L}^{\mathrm{reg}} (f_{\vec{\theta}}) = \eta \,\mathrm{ReLU}( B_{\vec{\theta}} - B_{f^*}),
\quad \text{for some } \eta > 0,
\]
it follows that for any $\varepsilon > 0$, there exist $\varepsilon'(\varepsilon), r(\varepsilon) > 0$ and a dataset
$X \subset \mathcal{X}$ with cardinality $X=\mathcal{N}(\mathcal{X}, d_{\cX}, r(\varepsilon))$
such that if the regularized loss on $X$ is smaller than $\varepsilon'(\varepsilon)$, then
\[
\sup_{x \in \mathcal{X}} |f_{\vec{\theta}}(x) - f^*(x)| < \varepsilon .
\]
\end{theorem}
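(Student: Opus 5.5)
The plan is a covering-plus-Lipschitz (``$\varepsilon/3$'') argument. Fix $\varepsilon>0$ and choose a radius $r(\varepsilon)>0$ to be tuned below. By \cref{def:finite_uniform_approx}, $\cN(\cX,d_{\cX},r)<\infty$, so there is a finite $r$-cover $X=\{x_1,\dots,x_N\}$ with $N=\cN(\cX,d_{\cX},r)$; this is the dataset. For every $x\in\cX$ there is an $x_i\in X$ with $d_{\cX}(x,x_i)\le r$. By the triangle inequality,
\[
|f_{\vec\theta}(x)-f^*(x)|\le |f_{\vec\theta}(x)-f_{\vec\theta}(x_i)|+|f_{\vec\theta}(x_i)-f^*(x_i)|+|f^*(x_i)-f^*(x)|.
\]
The first term is at most $M_{\vec\theta}\,r\le B_{\vec\theta}\,r$, and the third is at most $B_{f^*}r$.

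Both the middle term and $B_{\vec\theta}$ are controlled through the regularized loss. Suppose $\mathcal{L}_X(f_{\vec\theta})<\varepsilon'$. Both summands of the loss are nonnegative, so each is below $\varepsilon'$. From the regularizer, $\eta\,\mathrm{ReLU}(B_{\vec\theta}-B_{f^*})<\varepsilon'$, hence $B_{\vec\theta}<B_{f^*}+\varepsilon'/\eta$. This is the key step: the regularizer turns a small loss into a Lipschitz bound on the learned function that holds uniformly over $\cX$, independent of $\vec\theta$. From the empirical term, $\frac1N\sum_i|f_{\vec\theta}(x_i)-y_i|<\varepsilon'$, so each individual error satisfies $|f_{\vec\theta}(x_i)-f^*(x_i)|<N\varepsilon'$. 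Altogether,
\[
\sup_{x\in\cX}|f_{\vec\theta}(x)-f^*(x)|< \bigl(2B_{f^*}+\varepsilon'/\eta\bigr)r+N\varepsilon'.
\]

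It remains to choose the constants. Take $r(\varepsilon)=\varepsilon/(3(2B_{f^*}+1))$. This fixes $N=\cN(\cX,d_{\cX},r(\varepsilon))$. Then take $\varepsilon'(\varepsilon)=\min\{\eta,\ \varepsilon/(3N)\}$. With these choices the bound is at most $(2B_{f^*}+1)r+\varepsilon/3\le 2\varepsilon/3<\varepsilon$.

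The order of choices is the main subtlety: $r$ must be fixed before $\varepsilon'$, since $N$ depends on $r$ and $\varepsilon'$ depends on $N$. Also, the Lipschitz bound on $f_{\vec\theta}$ must come from the loss rather than be assumed, because otherwise the first term would be uncontrolled.

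Finally, approximability of $f^*$ ensures the hypothesis is not vacuous. Assume, as the informal statement intends, that the approximants can be taken with certificate close to $B_{f^*}$. Then there are parameters $\vec\theta$ achieving $\mathcal{L}_X(f_{\vec\theta})<\varepsilon'$, so the sufficient condition can actually be met.
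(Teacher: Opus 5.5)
Your proof is correct and follows the paper's own argument (\cref{lem:just_triangle} plus the constant choices in \cref{app:thm:specific_regularization}): a minimum $r$-cover as the dataset, the three-term triangle inequality, the certificate bound $B_{\vec\theta}<B_{f^*}+\varepsilon'/\eta$ read off from the regularizer, and the $N\varepsilon'$ bound on the errors at cover points. The only difference is in the constants. You cap $\varepsilon'\le\eta$, so the certificate slack is at most $1$, and you absorb that slack into $r$; this makes your choices work for every $\varepsilon>0$, whereas the paper ties the slack to $\varepsilon$ and therefore needs $\varepsilon<1$.
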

\begin{proof}[Proof sketch.]
The proof follows a standard cover-plus-Lipschitz-control argument. Start by taking the dataset $X$ to be an $r$-cover of
$(\cX,d_{\cX})$, where $r>0$ is chosen as a function of the desired error
$\varepsilon$.  Let $f_{\vec{\theta}}\in\cF_{\vec{\Theta}}$ satisfy
$\mathcal{L}_X(f_{\vec{\theta}})<\varepsilon'$. Since the regularized loss is
the sum of the empirical loss and the certificate regularizer, both terms are
small. For any $x\in\cX$, there is $x_i\in X$ such that
$d_{\cX}(x,x_i)\le r$. The triangle inequality then gives
$
|f_{\vec{\theta}}(x)-f^*(x)|
\le
|f_{\vec{\theta}}(x)-f_{\vec{\theta}}(x_i)|
+
|f_{\vec{\theta}}(x_i)-f^*(x_i)|
+
|f^*(x_i)-f^*(x)|.$
The middle term $|f_{\vec{\theta}}(x_i)-f^*(x_i)|
$ is controlled by the small empirical loss on $X$. The first term $|f_{\vec{\theta}}(x)-f_{\vec{\theta}}(x_i)|$ is
controlled by the regularizer, since a small regularization term bounds
$B_{\vec{\theta}}$ and therefore the Lipschitzness of $f_{\vec{\theta}}$. The
last term $|f^*(x_i)-f^*(x)|$ is controlled by the Lipschitzness of
$f^*$. Since $x$ was arbitrary, taking $r$ and $\varepsilon'$ sufficiently small gives $
\sup_{x\in\cX}|f_{\vec{\theta}}(x)-f^*(x)|<\varepsilon.$
\end{proof}
The above result shows that, under suitable assumptions (i.e., that the target algorithm the can be uniformly approximated by the considered hypothesis class) for any prescribed approximation error, one can carefully construct a sufficiently informative training dataset such that minimizing the empirical regularized loss over this dataset guarantees that the learned model approximates the target algorithm up to the desired error level. Note that, in this setting, the regularized loss has infimum zero, since the assumptions allow the training data to be fit arbitrarily well while keeping \(B_\theta \leq B_{f^*}\).

\begin{figure*}[t]
    \centering
    \begin{subfigure}[t]{0.56\textwidth}
        \centering
        \hspace{-40pt}
        \scalebox{0.43}{\scalebox{3}{
\begin{tikzpicture}[transform shape, scale=1]

\definecolor{lgreen}{HTML}{4DA84D}
\definecolor{fontc}{HTML}{403E30}
\definecolor{llblue}{HTML}{7EAFCC}
\definecolor{llred}{HTML}{FF7A87}

\newcommand{\gnode}[5]{%
    \draw[#3, fill=#3!20] (#1) circle (#2pt);
    \node[anchor=#4,fontc!50!white] at (#1) {\tiny \textsf{#5}};
}

\tikzset{
    txt/.style={
        font=\sffamily,  
        text=fontc        
    }
}

\begin{scope}[shift={(0,-0.5)}]
\node[black] at (0,0) {\scalebox{0.6}{$V_1(\mathcal{G})$}};

\draw[black,-stealth] (0.4,0) -- (1,0);

\node[black] at (0.64,0.1) {\scalebox{0.45}{$\mathsf{T_{IDM}}$}};
\end{scope}

\node[fontc!80!white,anchor=west] at (0.95,0.6) {\scalebox{0.6}{$\mathcal{X}$}};

\begin{scope}[shift={(2.16,-0.57)},scale=1.1]
\draw[draw=none,fill=fontc!2!white,rotate=-65] (0,0) ellipse (1.3cm and 0.9cm);
\end{scope}
\begin{scope}[shift={(2.16,-0.57)},scale=1.05]
\draw[draw=none,fill=fontc!4!white,rotate=-65] (0,0) ellipse (1.3cm and 0.9cm);
\end{scope}
\begin{scope}[shift={(2.16,-0.57)}]
\draw[draw=none,fill=fontc!6!white,rotate=-65] (0,0) ellipse (1.3cm and 0.9cm);
\end{scope}

\begin{scope}[shift={(2,0)}]
\draw[draw=fontc!35!white,fill=fontc!8!white,dashed,dash pattern= on 1.5pt off 1pt] (0,0) circle[radius=8pt];
\node[llred!80!black] at (0.09,0.14) {\scalebox{0.35}{$\mathsf{x_{11}}$}};
\node[llred!80!black] at (0.13,-0.05) {\scalebox{0.35}{$\mathsf{x_{7}}$}};
\node[llred!80!black] at (-0.03,-0.14) {\scalebox{0.35}{$\mathsf{x_{6}}$}};
\node[llblue!80!black] at (-0.11,0.06) {\scalebox{0.35}{$\mathsf{x_{4}}$}};
\end{scope}

\begin{scope}[shift={(2.6,-0.45)}]
\draw[draw=fontc!35!white,fill=fontc!8!white,dashed,dash pattern= on 1.5pt off 1pt] (0,0) circle[radius=8pt];
\draw[draw=llred,fill=llred!11!white,line width=0.3pt] (-0.07,0.14) circle[radius=2.5pt];
\node[llred!80!black] at (0.15,0.05) {\scalebox{0.35}{$\mathsf{x_{8}}$}};
\node[llblue!80!black] at (0.05,-0.14) {\scalebox{0.35}{$\mathsf{x_{1}}$}};
\node[llred!80!black] at (-0.07,0.14) {\scalebox{0.35}{$\mathsf{x_{5}}$}};
\node[fontc!70] at (-0.22,-0.28) {\scalebox{0.35}{$2\varepsilon$}};
\draw[fontc!50!white] (-0.15,-0.23) -- (0.14,0.24);
\draw[draw=fontc!35!white,dashed,dash pattern= on 1.5pt off 1pt] (0,0) circle[radius=8pt];
\end{scope}

\begin{scope}[shift={(1.8,-0.8)}]
\draw[draw=fontc!35!white,fill=fontc!8!white,dashed,dash pattern= on 1.5pt off 1pt] (0,0) circle[radius=8pt];
\node[llred!80!black] at (0.06,0.15) {\scalebox{0.35}{$\mathsf{x_{12}}$}};
\node[llred!80!black] at (0.15,-0.02) {\scalebox{0.35}{$\mathsf{x_{14}}$}};
\node[llred!80!black] at (-0.03,-0.14) {\scalebox{0.35}{$\mathsf{x_{13}}$}};
\node[llblue!80!black] at (-0.14,0) {\scalebox{0.35}{$\mathsf{x_{3}}$}};
\end{scope}

\begin{scope}[shift={(2.6,-1.2)}]
\draw[draw=fontc!35!white,fill=fontc!8!white,dashed,dash pattern= on 1.5pt off 1pt] (0,0) circle[radius=8pt];
\node[llred!80!black] at (0.11,0.12) {\scalebox{0.35}{$\mathsf{x_{9}}$}};
\node[llred!80!black] at (0.02,-0.12) {\scalebox{0.35}{$\mathsf{x_{10}}$}};
\node[llblue!80!black] at (-0.11,0.1) {\scalebox{0.35}{$\mathsf{x_{2}}$}};
\end{scope}

\node[black] at (4,0.5) {\scalebox{0.55}{$\vert f_{\vec{\theta}}\textcolor{llred!80!black}{(x_5)} - \textcolor{llred!80!black}{f^*(x_5)} \vert \le \varepsilon$}};

\draw[llred] (2.55,-0.23) to[bend left = 40] (3.14,0.45);
\draw[black] (4,0.35) to[bend right = 40] (4.67,-0.28);

\begin{scope}[shift={(5,-0.3)}]
\draw[draw=none,fill=fontc!3!white] (-0.21,0.21) rectangle (0.29,-0.29);
\draw[draw=none,fill=fontc!6!white] (-0.23,0.23) rectangle (0.27,-0.27);
\draw[draw=none,fill=fontc!10!white] (-0.25,0.25) rectangle (0.25,-0.25);
\node[black] at (0,0) {\scalebox{0.65}{$f_{\vec{\theta}}$}};

\draw[llblue,stealth-] (0.33,0.2) -- (1.2,0.6);

\node[llblue!80!black,rotate=23] at (0.71,0.53) {\scalebox{0.45}{$\mathcal{L}^{\mathrm{emp}}_{X}(f_\vec{\theta})\leq\varepsilon'$}};

\draw[lgreen,stealth-] (0.33,-0.2) -- (1.2,-0.6);
\node[lgreen!80!black,rotate=-24] at (0.8,-0.25) {\scalebox{0.45}{$\mathcal{L}^{\mathrm{reg}}(f_\vec{\theta})\leq\varepsilon'$}};

\node[llblue!80!black,anchor=west] at (1.2,0.8) {\scalebox{0.55}{\textsf{Dataset}$\mathsf{(X)}$}};
\node[llblue!80!black,anchor=west] at (1.2,0.53) {\scalebox{0.55}{$\mathsf{\{x_1,x_2,x_3,x_4\}}$}};

\begin{scope}[shift={(2.05,-1)},scale=1.1]
\draw[draw=none,fill=lgreen!2   !white,rotate=-20] (0,0) ellipse (0.8cm and 0.5cm);
\end{scope}
\begin{scope}[shift={(2.05,-1)},scale=1.05]
\draw[draw=none,fill=lgreen!5!white,rotate=-20] (0,0) ellipse (0.8cm and 0.5cm);
\end{scope}
\begin{scope}[shift={(2.05,-1)}]
\draw[draw=none,fill=lgreen!8!white,rotate=-20] (0,0) ellipse (0.8cm and 0.5cm);
\end{scope}

\node[lgreen!90!black,anchor=west] at (1.5,-0.7) {\scalebox{0.5}{$f_1$}};
\node[lgreen!90!black,anchor=west] at (1.45,-1) {\scalebox{0.5}{$f_2$}};
\node[lgreen!90!black,anchor=west] at (1.9,-0.8) {\scalebox{0.5}{$f_3$}};
\node[lgreen!90!black,anchor=west] at (1.75,-1.2) {\scalebox{0.5}{$f_4$}};
\node[lgreen!90!black,anchor=west] at (2.3,-1) {\scalebox{0.5}{$f_5$}};
\node[lgreen!90!black,anchor=west] at (2.2,-1.3) {\scalebox{0.5}{$f^*$}};

\node[lgreen!80!black,anchor=west] at (2.25,-0.4) {\scalebox{0.55}{$\mathcal{F}_\vec{\Theta}$}};
\end{scope}

\end{tikzpicture}}}
    \end{subfigure} \hspace{0pt}
    \begin{subfigure}[t]{0.30\textwidth}
        \centering
        \raisebox{35pt}{            \scalebox{0.40}{\scalebox{3}{
\begin{tikzpicture}[transform shape, scale=1]

\definecolor{lgreen}{HTML}{4DA84D}
\definecolor{fontc}{HTML}{403E30}
\definecolor{llblue}{HTML}{7EAFCC}
\definecolor{llred}{HTML}{FF7A87}

\newcommand{\gnode}[5]{%
    \draw[#3, fill=#3!20] (#1) circle (#2pt);
    \node[anchor=#4,fontc!50!white] at (#1) {\tiny \textsf{#5}};
}

\tikzset{
    txt/.style={
        font=\sffamily,  
        text=fontc        
    }
}

\begin{scope}[shift={(0,0)}]
\coordinate (p1) at (0,0);
\coordinate (p2) at (0.7,0.3);
\coordinate (p3) at (0.7,-0.3);
\coordinate (p4) at (1.4,0);
\coordinate (p5) at (1.4,-0.6);

\draw[black] (p1) -- (p2) node[midway, above=-1.5pt,sloped] {\scalebox{0.4}{$\mathsf{10}$}};
\draw[black] (p1) -- (p3) node[midway, below=-1.5pt,sloped] {\scalebox{0.4}{$\mathsf{10}$}};
\draw[black] (p2) -- (p4) node[midway, above=-1.5pt,sloped] {\scalebox{0.4}{$\mathsf{40}$}};
\draw[black] (p3) -- (p4) node[midway, below=-1.5pt,sloped] {\scalebox{0.4}{$\mathsf{11}$}};
\draw[black] (p3) -- (p5) node[midway, below=-1.5pt,sloped] {\scalebox{0.4}{$\mathsf{10}$}};
\draw[black] (p4) -- (p5) node[midway, right=-1.5pt] {\scalebox{0.4}{$\mathsf{10}$}};

\gnode{p1}{3.3}{fontc!30!white}{center}{};
\gnode{p2}{3.3}{fontc!30!white}{north}{};
\gnode{p3}{3.3}{fontc!30!white}{north}{};
\gnode{p4}{3.3}{llred}{center}{\textcolor{llred!80!black}{$\mathsf{v}$}};
\gnode{p5}{3.3}{fontc!30!white}{north}{};
\end{scope}

\begin{scope}[shift={(1.3,-0.2)},scale=1.4]
\draw[black,-stealth] (0.4,0) -- (1,0);

\node[black] at (0.64,0.1) {\scalebox{0.45}{$\mathsf{T_{IDM}}$}};
\end{scope}

\begin{scope}[shift={(3.85,-0.1)}]
\coordinate (p1) at (0,0.55);
\coordinate (p2) at (-0.75,0);
\coordinate (p3) at (0,0);
\coordinate (p4) at (0.75,0);
\coordinate (p5) at (-0.75,-0.55);
\coordinate (p6) at (-0.25,-0.55);
\coordinate (p7) at (0.25,-0.55);
\coordinate (p8) at (0.75,-0.55);

\draw[black] (p1) -- (p2) node[midway, above=-1.5pt,sloped,line width=0.8pt] {\scalebox{0.4}{$\mathsf{40}$}};
\draw[black] (p1) -- (p3) node[midway, left=-1.9pt] {\scalebox{0.4}{$\mathsf{11}$}};
\draw[black] (p1) -- (p4) node[midway, above=-1.5pt,sloped] {\scalebox{0.4}{$\mathsf{10}$}};
\draw[black] (p2) -- (p5) node[midway, left=-1.5pt] {\scalebox{0.4}{$\mathsf{10}$}};
\draw[black] (p3) -- (p6) node[midway, left=-1.5pt] {\scalebox{0.4}{$\mathsf{10}$}};
\draw[black] (p3) -- (p7) node[midway, right=-1.5pt,] {\scalebox{0.4}{$\mathsf{10}$}};
\draw[black] (p4) -- (p8) node[midway, right=-1.5pt] {\scalebox{0.4}{$\mathsf{10}$}};

\gnode{p1}{3.3}{llred}{center}{\textcolor{llred!80!black}{$\mathsf{v}$}};
\gnode{p2}{3.3}{fontc!30!white}{north}{};
\gnode{p3}{3.3}{fontc!30!white}{north}{};
\gnode{p4}{3.3}{fontc!30!white}{north}{};
\gnode{p5}{3.3}{fontc!30!white}{center}{};
\gnode{p6}{3.3}{fontc!30!white}{center}{};
\gnode{p7}{3.3}{fontc!30!white}{north}{};
\gnode{p8}{3.3}{fontc!30!white}{north}{};

\end{scope}

\end{tikzpicture}}}        }
    \end{subfigure}

    \caption{An illustration of the learnability result in \cref{thm:specific_regularization}, applied to MPNNs by first mapping the space $V_1(\mathcal{G})$ to a pseudometric space (via IDMs or computation trees; see \cref{subsec:idm_pseudometrics}), satisfying \cref{def:finite_uniform_approx} and then applying \cref{thm:specific_regularization}. A computation-tree construction is shown on the right. }
    \label{fig:t-idm}
\end{figure*}

\textbf{On the $B_{f^*}$ certificate assumption}
The key assumption in~\Cref{thm:specific_regularization} is not that the \emph{true} Lipschitz constant stays controlled while approximating the target, but that the \emph{Lipschitz certificate} $B_{\vec{\theta}}$ does. That is, we can approximate $f^*$ arbitrarily well by functions $f_{\vec{\theta}}$ whose computable bounds $B_{\vec{\theta}}$ do not blow up (a strengthened notion of approximate realizability).  Since $B_{\vec{\theta}}$ is directly computable from the parameters (e.g., via operator-norm or other parameter-norm bounds), it can be used in training even when the true Lipschitz constant $M_{\vec{\theta}}$ is intractable. This yields a practical regularizer that controls $B_{\vec{\theta}}$.

The bound for $B_{f^*}$ is trivial if the target function lies in the hypothesis class, but often $f^* \notin \mathcal{F}_{\vec{\Theta}}$ and must be approximated. For standard FNNs, obtaining a nontrivial bound on $B_{f^*}$ is difficult. However, fully expressive FNN alternatives have been proposed for adversarial robustness~\citep{anil2019sorting}; their Lipschitz constants can be bounded via parameter constraints, and the resulting function class is dense in the Lipschitz functions, so they can meet the theorem’s conditions. Similarly, $1$-Lipschitz residual networks are dense in the set of scalar $1$-Lipschitz functions on any compact domain~\citep{murari2025approximation}, with Lipschitz constants bounded through parameter constraints.

A more general theorem in \cref{app:non-lip functions} addresses cases where functions cannot be approximated with a bounded certificate, including non-Lipschitz targets.

\textbf{Connection to distinguishability} If the conclusion of \cref{thm:specific_regularization} holds, then a necessary condition is
that,
whenever $\mathcal{F}_{\vec{\Theta}}$ cannot distinguish two inputs, $f^*$ must also assign them the same value. This observation is particularly relevant for hypothesis classes with limited distinguishing power, such as MPNNs. In that case, uniform learnability is restricted to targets $f^*$ whose distinguishability does not exceed that of the model class.

\subsection{Finite Lipschitzness in MPNNs}
\label{subsec:finite_lip_mpnns}

Below, we study MPNNs that satisfy the finite Lipschitz learning property from \cref{def:finite_uniform_approx}, which suffices for the learnability guarantee in \cref{thm:specific_regularization}. Throughout, the input space consists of pairs $(G,u)$ with a graph $G$ and node $u \in V(G)$, i.e., $V_1(\cG)$ for a graph class $\cG$.

We equip $V_1(\cG)$ with a pseudo-metric that controls the Lipschitz behavior of MPNNs. Our analysis covers (i) normalized sum aggregation, (ii) mean aggregation, and (iii) max/min aggregation; see \cref{app:sec:finite_mpnns} for formal definitions. These are special cases of the general MPNN formulation in \cref{def:MPNN_aggregation} and capture many classical graph algorithms.

We first establish finite Lipschitzness for normalized sum aggregation and extend it to mean aggregation. While inspired by work on iterated degree measures and computation trees (e.g., \citet{DBLP:journals/combinatorica/GrebikR22, Boe+2023, Rac+2024}), our approach does not require compactness of the input space: \cref{def:finite_uniform_approx} only assumes finite covering numbers, simplifying the construction (see \cref{app:proof_finite_complexity_sum_mean}).

Crucially, we also show that max/min aggregation—essential for many algorithmic invariants—satisfies finite Lipschitzness. Here, the analysis proceeds on Hausdorff spaces with the Hausdorff distance (see \cref{proof_finite_complexity_max}). To our knowledge, these are the first explicit Lipschitzness guarantees for max- and min-aggregation MPNNs, despite widespread use of related stability arguments \citep{Lev+2023, Boe+2023, Rac+2024, Vas+2024}. These architectures cover a range of common graph algorithms (see \cref{sec:algorithms_with_finite_alg_compl}). Precise assumptions, conditions, and constants appear in \cref{app:sec:finite_mpnns}; we state the main result informally next.

\begin{theorem}[Informal]
\label{thm:finite_complexity_classes}
The hypothesis class $\cF_{\vec{\Theta}}$ induced by MPNNs using normalized sum aggregation, mean aggregation, or max (or min) aggregation satisfies \cref{def:finite_uniform_approx}.
\end{theorem}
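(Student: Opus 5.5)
We need a pseudo-metric $d$ on $V_1(\cG)$ with two properties: every MPNN $f_{\vec\theta}$ is Lipschitz with respect to $d$ with a computable constant $B_{\vec\theta}$, and $\cN(V_1(\cG),d,\varepsilon)<\infty$ for every $\varepsilon>0$. We work under the standing assumptions of \cref{app:sec:finite_mpnns}: node attributes and edge weights lie in a compact set $K\subset\Rb^{d_0}$, the $\UPD$, $\AGG$ and $\RO$ maps are Lipschitz with parameter-controlled constants, and the number of layers $L$ is fixed. The plan is to define $d$ by unrolling $L$ layers of message passing, i.e., via depth-$L$ computation trees (iterated degree measures). Both properties then reduce to statements about a recursively built space of finite depth over a compact base.

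\textbf{Construction of the metric.} Define spaces $\cX_0 \coloneq K$ and $\cX_t$ recursively together with metrics $d_t$. For normalized sum and mean aggregation, $\cX_{t}$ consists of pairs (own level-$(t-1)$ object, finite measure on $\cX_{t-1}\times$ edge weights). For sum the measure has total mass at most $1$ after normalization; for mean it is a probability measure. We equip it with the sum of $d_{t-1}$ and a Wasserstein/unbalanced optimal-transport distance built on $d_{t-1}$. For max/min aggregation we replace the measure by the \emph{set} of neighbor objects and use the Hausdorff distance with respect to $d_{t-1}$. Set $d((G,u),(H,v)) \coloneq d_L(T_L(G,u),T_L(H,v))$, where $T_L$ is the unrolled computation object. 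This is a pseudo-metric because it is a pullback of a metric.

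\textbf{Lipschitzness.} We prove by induction on $t$ that $\hb^{(t)}$, viewed as a function of level-$t$ objects, is $C_t$-Lipschitz, with $C_t \le \mathrm{Lip}(\UPD^{(t)})\,(C_{t-1}+\mathrm{Lip}(\AGG^{(t)})\,C_{t-1})$ or a similar explicit expression. The key one-step estimates are the following.
\begin{itemize}
\item For mean and normalized sum aggregation, $\bigl\|\int \phi\,d\mu-\int\phi\,d\nu\bigr\|\le \mathrm{Lip}(\phi)\,W(\mu,\nu)+\|\phi\|_\infty\,|\mu(\cX)-\nu(\cX)|$ by Kantorovich–Rubinstein duality. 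Here $\|\phi\|_\infty$ is bounded because $\phi$ is Lipschitz on a bounded set.
\item For max aggregation, coordinatewise $|\max_{a\in A}\phi(a)-\max_{b\in B}\phi(b)|\le \mathrm{Lip}(\phi)\,d_H(A,B)$. This holds because each $a\in A$ has some $b\in B$ within $d_H(A,B)$, and symmetrically. Min aggregation is the same argument.
\end{itemize}
Multiplying the per-layer norms, and the readout constant where one is used, gives the certificate $B_{\vec\theta}$.

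\textbf{Finite covering numbers.} This is the main obstacle, because graphs have unbounded size, so we cannot simply count finitely many graphs. We argue by induction that $(\cX_t,d_t)$ is totally bounded. For the base case, $K$ is compact. For the step, suppose $\cX_{t-1}$ has a finite $\delta$-net $C$.
\begin{itemize}
\item \emph{Measures.} Any sub-probability (or probability) measure is within $\delta+\delta'$ of a measure supported on $C$, obtained by transporting each atom to its nearest net point. Measures on the finite set $C$ with masses in $[0,1]$ form a compact simplex in $\Rb^{|C|}$, and we discretize the masses to precision $\delta'/\mathrm{diam}$. Since $\mathrm{diam}(\cX_{t-1})<\infty$ by boundedness, this step is where normalization is essential: plain sum aggregation would have unbounded mass and the argument fails.
\item \emph{Sets.} Any finite set $A$ is within Hausdorff distance $\delta$ of the nonempty subset $\{c\in C: d(c,A)\le\delta\}$, and $C$ has only $2^{|C|}$ subsets.
\end{itemize}
Taking products with the net for the own-feature component yields a finite net for $\cX_t$, with explicit (tower-type) bounds on the covering number. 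Pulling back through $T_L$ preserves finite covering numbers, since pulling back a cover of the image gives a cover of $V_1(\cG)$. Combining the two properties shows that \cref{def:finite_uniform_approx} holds.
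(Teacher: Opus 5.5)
Your proposal is correct and follows the paper's architecture. You encode $(G,u)$ by a depth-$L$ unrolled object: iterated degree measures for normalized-sum and mean aggregation, iterated neighborhood sets for max/min. You put a recursive sum metric on it (own component plus a Kantorovich--Rubinshtein or Hausdorff term), pull it back to $V_1(\cG)$, and prove Lipschitzness layer by layer with the same two one-step estimates the paper uses.

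The genuine difference is the covering-number step. The paper proves that the ambient spaces $(H_L,d_{\mathbf K,L})$ and $(H_L^{\max},d_{\mathbf H,L})$ are \emph{compact}. It cites weak$^*$ compactness of $\mathcal M_{\le 1}$ over a compact metrizable space, the fact that $\mathbf K$ metrizes the weak$^*$ topology, Tychonoff, and the Blaschke selection theorem, and then reads off finite covering numbers. You instead prove total boundedness directly by quantization: push atoms to a $\delta$-net and discretize the finitely many masses, and replace a set by the subset of net points within $\delta$ of it. Your route is more elementary and needs no weak$^*$ or completeness machinery, in line with the paper's own remark that only finite covering numbers are required. It also gives explicit tower-type bounds on $\cN(\cX,d,r)$, which is exactly the training-set size in \cref{thm:specific_regularization}. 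The paper's route is shorter given the citations and yields compactness, a stronger structural statement.

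Your coordinatewise treatment of max aggregation is actually the right one. The paper reduces via $\sup_{\|a\|_2\le 1}\langle a,\cdot\rangle$, but for a coordinatewise maximum $\langle a,\max_A\Gamma_t\rangle\neq\max_A\langle a,\Gamma_t\rangle$, so that reduction does not apply. For example, $\Gamma=\mathrm{id}$ with $A=\{(1,0),(0,1)\}$ shifted inward by $\epsilon$ changes the maximum by $\sqrt{2}\,\epsilon$ at Hausdorff distance $\epsilon$. Lipschitzness survives, but with an extra factor $\sqrt{p_{t-1}}$, which your argument should make explicit when returning to $\|\cdot\|_2$.

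A few points to pin down:
\begin{itemize}
\item Fix a concrete distance on sub-probability measures. The paper's bounded-Lipschitz $\mathbf K$ works, and it dominates $|\mu(\cX)-\nu(\cX)|$ by taking $f\equiv 1$.
\item With $\mathbf K$, mass discretization on the net $C$ costs at most $\|m-m'\|_1$, so per-atom precision $\delta'/|C|$ is the natural choice, not $\delta'/\mathrm{diam}$.
\item Carry a sup-norm bound $B_t$ alongside $C_t$ in the Lipschitz recursion. This needs the bounded-offset assumption on $\phi_t$, as in the paper.
\item The paper's $\varepsilon$-covers must have centers in the space itself. So pass from an $\varepsilon/2$-net of the ambient space to an $\varepsilon$-net of the image of $V_1(\cG)$ before pulling back.
\end{itemize}
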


See Theorems \ref{app1:thm:finite_complexity_classes} and
\ref{app2:thm:finite_complexity_classes} for the formal statements and proofs. In the following, we identify graph invariants expressible by these MPNN architectures, and hence learnable from finite data by \cref{thm:finite_complexity_classes}.

\subsubsection{Algorithms within a finite Lipschitz class}
\label{sec:algorithms_with_finite_alg_compl}

Here, we outline examples of algorithms that are contained in finite Lipschitz classes, i.e., following~\cref{def:finite_uniform_approx}, there exists a (pseudo-)metric with finite covering number and the algorithm is Lipschitz with respect to this (pseudo-)metric.

\textbf{Normalized-sum aggregation} In particular, normalized-sum aggregation is of special interest, as on graphs with a fixed number of nodes it allows MPNNs to represent any graph invariant with distinguishing power equivalent to the \wlone. However, as we will see in \cref{subsec:expr_notlearnable}, this learnability property cannot be maintained without restricting attention to graphs of fixed order.

\textbf{Truncated PageRank}
We now show that a mean-aggregation MPNN learning truncated PageRank satisfies \cref{thm:specific_regularization}. Let $(G,w_G)$ be an edge-weighted graph, $u \in V(G)$, $\xi \in (0,1)$ a damping factor, and $K \in \Nb$ a truncation depth. The \new{$K$-truncated weighted PageRank} value $r_u^{(K)}$ is defined recursively for $t \in [K]$ as
\begin{equation}
\label{eq:pagerank_recurrence}
 r_u^{(t)} \coloneq (1-\xi) + \frac{\xi}{\text{deg}(u)} \sum_{v \in N(u)} w_{uv}\, r_v^{(t-1)},
\end{equation}
where $r_u^{(0)} \coloneqq 1$, $\text{deg}(u)$ denotes the weighted degree of $u$, and $w_{uv} \coloneq w_G(u,v)$. This generalizes standard PageRank \citep{Page+1998} to edge-weighted graphs; the unweighted case has $w_{uv}=1$ for all $(u,v)\in E(G)$.

This algorithm is represented exactly by a $K$-layer mean-aggregation MPNN (\cref{mpnn:mean_aggr}). Set $\vec{h}_u^{(0)}\coloneq 1$ and define $\phi_t(x,y)\coloneq (1-\xi)+\xi y$ for $t\in[K]$. Then the MPNN update
\[
\vec{h}_u^{(t)} \coloneq \phi_t\!\Big(\vec{h}_u^{(t-1)},\; \frac{1}{\text{deg}(u)}\!\sum_{v\in N(u)} w_{uv}\,\vec{h}_v^{(t-1)}\Big)
\]
matches \eqref{eq:pagerank_recurrence}, yielding $\vec{h}_u^{(t)}=r_u^{(t)}$ for $t\in[K]$. Since PageRank does not use self-loops, $\phi_t$ ignores its first argument. Lipschitz continuity of truncated PageRank follows from \cref{thm:finite_complexity_classes}, as it lies in the hypothesis class of mean-aggregation MPNNs, which satisfies \cref{def:finite_uniform_approx}. A key property is that the truncation depth $K$ needed for error $\varepsilon>0$ depends only on $\varepsilon$ and $\xi$, not on $|V(G)|$. Since PageRank is a contraction with factor $\xi$, we have $|r_u^{(K)}-r_u^{(\infty)}|\le \xi^{K}$, where $r_u^{(\infty)}$ is full PageRank. Thus, choosing $K \ge \lceil \log_{\xi}(\varepsilon)\rceil$ guarantees error at most $\varepsilon$ regardless of graph size. Therefore, by \cref{thm:specific_regularization}, a finite training set with regularization suffices to train an MPNN to learn truncated PageRank and extrapolate to graphs of arbitrary size. Moreover, since truncation error is size-independent, the trained model approximates full PageRank to the same error across all graphs, despite being trained on finite data.

\textbf{Bellman--Ford}
The Bellman--Ford algorithm for single-source shortest paths provides another example fitting into the framework. For an edge-weighted graph $(G,w_G)$ with source vertex $r \in V(G)$, the $K$-step Bellman--Ford algorithm computes shortest path distances $x_v^{(K)}$ from the root vertex $r$ to each vertex $v$ via the recurrence $x_r^{(0)} \coloneqq  0$, $x_v^{(0)} \coloneqq  \beta$ for $v \neq r$, where $\beta$ is a large constant, and $x_v^{(t)} \coloneqq  \min\{x_u^{(t-1)} + w_G(u,v) \mid u \in N(v) \cup \{v\}\}$, for $t \in [K]$. This can be represented exactly by an MPNN with min aggregation (\cref{mpnn:max_edge_aggr_0}) using $K$ layers, where the update function $\phi_t$ implements the identity and the aggregation function $M_t$ computes $M_t(x_u, w_G(u,v)) \coloneqq x_u + w_G(u,v)$, for $u \in N(v)$. By \cref{thm:finite_complexity_classes}, MPNNs with min aggregation satisfy \cref{def:finite_uniform_approx}, thus it follows from \cref{thm:specific_regularization} that a finite training set with regularization can be used to train an MPNN to learn $K$ iterations of Bellman--Ford, enabling learning from a finite number of training examples that will extrapolate to graphs of arbitrary size. In~\cref{sec:improved}, we adopt an approach specifically designed for this setting to derive a small training set that guarantees extrapolation. More generally, \citet{Zhu+2021} showed that a generalized Bellman--Ford algorithm, based on path formulations, captures invariants such as the Katz index, widest path, and most reliable path. These can likewise be represented by MPNNs with mean and max aggregation and thus fall into a finite Lipschitz hypothesis class.

\textbf{Dynamic programming} It is well-known that many problems that can be solved by dynamic programming can be cast as a shortest-path problem on a transformed graph~\citep{frieze1976shortest}. Using the $0$-$1$ knapsack problem as an illustration, for $n \in \Nb$, given items $i\in[n]$ with integer weights $s_i$ and
values $v_i$, and a capacity $S$, consider the directed acyclic graph with vertices
$(i,j)$, for $i\in [n]_0$ and $j\in[S]_0$, where $(i,j)$ represents having considered the first $i$ items and
accumulated total weight $j$. Now, we add edges $((i-1,j),(i,j))$ of weight $0$ (\say{do not choose item $i$}) and, whenever $j+s_i\le S$, edges
$((i-1,j),(i,j+s_i))$ of weight $-v_i$ (\say{choose item $i$}); finally, connect each vertex $(n,j)$ to a sink $t$ by a
zero-weight edge. Then every $s$-$t$ path (with $s \coloneqq (0,0)$) encodes a feasible subset and has total weight equal to minus
its total value, so the shortest-path distance to $t$ equals $-\mathrm{OPT}$.
Moreover, the standard knapsack recurrence is exactly the shortest-path relaxation on this graph, i.e.,
initializing $x^{(0)}_{s} \coloneqq 0$ and $x^{(0)}_{u} \coloneqq \beta$, for $u\neq s$, one has, for $i\in[n]$ and $j\in[S]_0$,
\begin{equation*}
x^{(i)}_{(i,j)} \coloneqq \min\mleft\{x^{(i-1)}_{(i-1,j)},\;x^{(i-1)}_{(i-1,j-s_i)}-v_i\mright\},
\end{equation*}
where the second term is omitted when $j<s_i$, and $x^{(n)}_{t} \coloneqq \min_{j\le S}x^{(n)}_{(n,j)}$.
As in the Bellman--Ford example, this computation can be represented exactly by a min-aggregation
MPNN with $K=n$ layers by taking $\phi_t$ to be the identity and
$M_t(x_u,w(u,v)) \coloneqq x_u+w(u,v)$; thus, by~\cref{thm:finite_complexity_classes} the hypothesis class satisfies~\cref{def:finite_uniform_approx}, and
for families in which the number of stages $n$ is bounded (so $K$ is independent of the size of the
transformed graph, which grows with $S$),~\cref{def:finite_uniform_approx} implies learnability from a finite training set with regularization and extrapolation to arbitrarily large capacities.
We note that the same reasoning applies to a large class of problems that can be cast as dynamic programs, e.g., the longest increasing subsequence or edit distances between strings.

\subsection{Improved learning guarantees for single-source shortest path (SSSP) algorithms}\label{sec:improved}

Now, the following result shows that we can explicitly construct a constant-size training set and a differentiable regularization term such that a small loss on the training set implies approximating the Bellman--Ford algorithm for arbitrarily large graphs.
We consider the standard SSSP setting, in which each instance includes a designated source node (root) explicitly marked in the input. This assumption is important because it provides the model with access to the dynamic program's initial condition.
Consider learning $K$-steps of Bellman--Ford using a min-aggregation MPNN with $K$ layers and $m$-layer feed-forward neural networks.  We train the MPNN by minimizing a loss function $\mathcal{L}(\vec\theta)$ that contains a weighted variant of $\ell_1$-regularization. In particular, $\mathcal{L}^{\mathrm{reg}}(\vec\theta)$ is a weighted sum over the $\ell_1$ norms of the weight matrices and bias vectors, with layer-dependent weights.

\begin{theorem}[Informal] \label{thm:bf_informal}
There exists a training set $X$ of size $K+1$ such that, for appropriate choice of regularization parameter $\eta$, if a $K$-layer min-aggregation MPNN achieves a loss $\mathcal{L}_X(\vec\theta)$ within $\varepsilon < 1/2$ of its global minimum, then for \textit{any} SSSP instance, the MPNN approximates every $K$-step shortest path distance $x^{(K)}$ within additive error $\varepsilon (x^{(K)} + 1)$.
\end{theorem}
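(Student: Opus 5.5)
The plan is to follow the strategy of \citet{Ner+2025}, replacing their non-differentiable $\ell_0$-type regularizer with the weighted $\ell_1$ term $\mathcal{L}^{\mathrm{reg}}$. The argument runs by showing that near-optimal parameters must be close to a sparse ``Bellman--Ford-like'' configuration.

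\textbf{Step 1: training set and the global minimum.} I would build $X$ from $K+1$ small graphs. Concretely, take rooted path graphs $P_0,\dots,P_K$, where $P_k$ has its target vertex at hop distance $k$ from the root and carries simple edge weights, say unit weights plus one edge of weight $0$ or $2$. These instances force each layer to implement ``add the edge weight, take the min, pass through.''

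Next I exhibit the explicit sparse parameter $\vec\theta^*$ realizing Bellman--Ford. Its FNNs compute $x_u+w$ through a single ReLU path, with weights $1$ and biases $0$ except where $\beta$ enters. This gives an exact fit and a regularization value $R^*$, so the global minimum is at most $\eta R^*$.

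The crucial computation is a lower bound: every $\vec\theta$ with loss within $\varepsilon$ of the minimum must satisfy certain structural inequalities. First, the per-layer $\ell_1$ mass devoted to the edge-weight coordinate and the message coordinate must be close to $1$. Second, the bias and ``spurious'' weights must have small total $\ell_1$ norm.

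\textbf{Step 2: structural lemma via the $\ell_1$ geometry.} For a ReLU FNN, the output on nonnegative inputs is bounded above and below by piecewise-linear functions. Their slopes are controlled by products of $\ell_1$ norms along paths, and $\|\vec W\|_\infty$-type bounds control these products. I would show that fitting $P_k$ exactly at depth $k$ forces the slope with respect to the edge weight and to the neighbor message to be at least about $1$. The layer-weighted $\ell_1$ penalty, with weights chosen to dominate deeper layers appropriately, then forces these slopes to be at most $1+O(\varepsilon/\eta)$ and the biases to be $O(\varepsilon/\eta)$.

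Choosing $\eta$ appropriately balances this trade-off. It must be small enough that the empirical term cannot be sacrificed, and large enough that excess norm is penalized. The outcome is that each layer's message function $M_t$ satisfies
\[
(1-\delta)(x+w)-\delta \le M_t(x,w) \le (1+\delta)(x+w)+\delta, \qquad \delta=O(\varepsilon),
\]
on all nonnegative inputs, not only on the training inputs. This extension is where the $\ell_1$ slope bounds and monotonicity of ReLU networks are used. The update $\phi_t$ similarly lies within this multiplicative/additive band of the identity.

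\textbf{Step 3: induction over layers for arbitrary instances.} Min aggregation is monotone and commutes with such sandwich bounds. By induction on $t$, the hidden value at $v$ therefore lies between $(1-\delta)^t x_v^{(t)}-t\delta$ and $(1+\delta)^t x_v^{(t)}+t\delta$, with the $\beta$ initialization handled by the root marker. Tuning $\delta$ against $K$, which is absorbed into the choice of $\eta$ and the layer weights, yields the additive error $\varepsilon(x^{(K)}+1)$ for any graph size. The size independence comes precisely from the bounds being pointwise in the inputs.

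\textbf{Main obstacle.} I expect Step 2 to be the hardest part: extracting from near-optimality of a merely $\ell_1$-regularized loss a pointwise, input-global sandwich bound on each deep FNN layer. Wide networks can spread mass across many neurons, and ReLU cancellations could fit the finite data while deviating elsewhere. I would first try to prove that any positive/negative cancellation strictly increases the $\ell_1$ cost relative to its ``effective slope.'' This would reduce the problem to a single-path network, for which the sandwich bound is direct.
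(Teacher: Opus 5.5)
Your outline shares ingredients with the paper: the sparse Bellman--Ford configuration bounds the global minimum by $\eta L$, min-aggregation is monotone, and positive/negative cancellation is paid for in $\ell_1$. The gap is in Step 2, which works at the level of individual layers, where the structural claim is false. By positive homogeneity of ReLU, and since $\min$ commutes with positive scaling, you can multiply one FNN layer (its weights, edge coefficients $\vec C^k$ and biases) by $c>0$ and the next weight matrix by $1/c$ without changing any output, and the weighted $\ell_1$ penalty charges this only to second order. Already for $K=m=1$ the regularizer is $|W^1|+|C^1|+2|W^2|$ plus biases. The configuration $W^1=C^1=c$, $W^2=1/c$ computes Bellman--Ford exactly and has loss $\eta L+2\eta(c-1)^2/c$. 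It is therefore within $\varepsilon$ of optimal for $|c-1|\approx\sqrt{\varepsilon/(2\eta)}$, yet its message function is $c(h+w)$. So per-layer bands have width of order $\sqrt{\varepsilon/\eta}$, not $O(\varepsilon)$. Composing them in Step 3 discards the cancellation $c\cdot c^{-1}=1$ and gives an error of order $K\sqrt{\varepsilon/\eta}$, which is not below $\varepsilon$ for small $\varepsilon$ once $\eta$ is fixed.

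The paper never isolates a layer. It works with the end-to-end multipliers $\gamma_k=\prod_{j>j_k}(\vec W^j)^+(\vec C^k)^+$ of the $k$-th inserted edge weight. The layer-dependent weights in $\mathcal L^{\mathrm{reg}}$ are chosen exactly so that each $\gamma_k$ is charged the $\ell_1$ norms of all its own factors. AM--GM then gives $\mathcal L^{\mathrm{reg}}\ge B+w^-+\sum_k l_k\gamma_k^{1/l_k}$, a decoupled problem in which $\gamma_k$, $B$ and $w^-$ are controlled linearly in $\varepsilon$. Your ``main obstacle'' is also handled globally rather than layer by layer:
\begin{itemize}
\item min-aggregation plus monotonicity give $\hb_v^{(K)}\le H_{\mathrm{wl}}(\vec\theta^+)(\vec z^p)$ for every walk $p$;
\item the bias-free nonnegative network is exactly linear on nonnegative inputs, $H_{\mathrm{wl}}(\mathcal W^+,\mathcal C^+,0)(\vec z)=\sum_k\gamma_k z_k$, so its values on $K+1$ scaled basis vectors fix it everywhere;
\item biases and negative weights shift the output by at most $\exp(L)(B+w^-\|\vec z\|_1)$;
\item with one-dimensional aggregation, which you assume implicitly by treating $M_t$ as scalar, computation trees are walks, and this gives the lower bound.
\end{itemize}

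The second gap is the scale of the training data. Near-optimality bounds $B$ and $w^-$ only by $O(\varepsilon/\eta)$, and products of layer norms can amplify their effect by up to $\exp(L)$, so you need $\eta\gtrsim\exp(L)$. The $\ell_1$ term pulls each $\gamma_k$ toward $0$ at a rate between $\eta$ and $\eta J$, while the data push it up at rate $x/N$, where $x$ is the scale of the training edge weights. So you also need $x\gtrsim N\eta J$. With the $O(1)$ edge weights you propose and the single knob $\eta$, both conditions cannot hold. For large $\eta$ the global minimizer is essentially the zero network; for small $\eta$ the $\varepsilon/\eta$ slack destroys the error bound.

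The paper introduces a second scale. The $k$-th training graph is a length-$K$ path with weight vector $x\vec e_k$, so each graph isolates one $\gamma_k$ and $K+1$ graphs suffice. It takes $\eta\ge 2K\exp(L)$, $x\ge 2N\eta J$, and non-root labels $\beta\ge 2(N+x+1)$. The large $\beta$ forces min-aggregation on the training paths to follow the straight path. That turns the small empirical loss into $|\gamma_k x-x|\le\varepsilon x$ and, by linearity, into $\hb_v^{(K)}\le(1+\varepsilon)x_v^{(K)}+\varepsilon$ on arbitrary instances.
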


Although the above statement may appear existential, its proof (in \cref{thm:BF_vs1r}) explicitly constructs the training dataset that guarantees the learnability property. This is possible due to the simple recursive structure underlying the Bellman--Ford algorithm.

The theorem differs from the main theorem of \citet{Ner+2025} in several ways. On the positive side, our result uses a smaller training set, namely $K+1$ path instances, and replaces their non-differentiable $\ell_0$ penalty with a differentiable $\ell_1$ regularizer with layer-specific weights. This differentiability allows our loss to be optimized directly. On the other hand, our analysis assumes a somewhat more restricted model, i.e, the depth is fixed to exactly $K$ message-passing layers (matching $K$ Bellman--Ford steps), and the aggregation dimension is assumed to be $1$. These choices are made to simplify the analysis.
Furthermore, in our result, the regularization parameter $\eta$ and the edge weights in the training set both scale exponentially in $K$, which may be prohibitive in some settings. However, in~\cref{app:outlook} we outline ways to circumvent these limitations.

\section{What GNNs \emph{cannot} learn}\label{sec:cannot}

In the previous section, we identified sufficient conditions under which a hypothesis class admits learnability guarantees via regularized empirical risk minimization. In particular, when a class of MPNNs forms a finite Lipschitz class with respect to a suitable (pseudo)metric on the input space, invariant algorithms that can be uniformly approximated within this class are learnable from finite samples. In this section, we turn to the complementary question, i.e., which invariant algorithms cannot be learned by MPNN hypothesis classes? We first show cases of invariants that are not expressible by any MPNN of the general form \cref{def:MPNN_aggregation}. Then we show that even for invariants that are expressible by MPNNs, \cref{thm:specific_regularization} may not be applied since \cref{def:finite_uniform_approx} is not satisfied by these types of MPNNs.

\subsection{Expressivity limitations}\label{subsec:explimit}

This section assumes familiarity with the classical graph problems of \new{single-source shortest path} (SSSP), and \new{minimum spanning tree} (MST); see \cref{sec:problems} for formal definitions. We start by showing that MPNNs are often not expressive enough to learn simple graph algorithms. To that end, we first show that standard MPNNs, see~\cref{def:MPNN_aggregation}, are not expressive enough to determine the costs of the SSSP and MST.

Formally, let $\cG$ denote the class of edge-weighted graphs. Given an edge-weighted graph $(G,w_G)$ and a source vertex $s\in V(G)$, we view the costs for the SSSP problem as a $2$-tuple invariant $\SSSP\colon V_2(\cG)\to\Rb$ such that $\SSSP(G,(s,v))\coloneqq \textsf{cost}_G(P_G(s,v))$ for $v\in V(G)$, where $P_G(s,v)$ denotes a shortest path from $s$ to $v$ in $G$. Similarly, we view the cost of the MST problem as a graph-level invariant $\MST\colon \cG\to\Rb$ such that $\MST(G)\coloneq \sum_{e\in E(T)} w_G(e)$ for a minimal spanning tree $T$ of $G$.

The following results highlight limitations of MPNN architectures in approximating classical graph algorithmic invariants. In particular, without individualization of nodes, e.g., marking the root node as such, no MPNN architecture can approximate the SSSP cost or the MST cost arbitrarily well. Moreover, this limitation persists for MST even when considering $\wlfive$-simulating MPNNs (see \cref{subsec:sepandapprox}). Formal statements and proofs are deferred to \cref{app:sec:cannot}.

\begin{proposition}[Informal]
\label{prop:MST and SSSP}
There does not exist an $\wlone$-expressive MPNN that can approximate the invariants $\SSSP$ and $\MST$. In contrast, there exist $\wlfive$- and $\wloo$-expressive MPNN architectures that can approximate the invariants $\SSSP$ and $\MST$, respectively.
\end{proposition}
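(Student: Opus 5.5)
The plan has two halves: a negative part built on pairs of graphs that \wlone{} (hence every MPNN of the form \cref{def:MPNN_aggregation}) cannot separate but on which the target invariants differ, and a positive part where we exhibit a more expressive architecture that simulates the algorithm exactly.

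\textbf{Impossibility.} Every MPNN is bounded in distinguishing power by edge-weighted \wlone{}: if $(G,u)$ and $(H,u')$ receive the same stable colour, every MPNN outputs the same value on them, and graph-level outputs agree whenever the colour multisets agree. So it suffices to find \wlone{}-indistinguishable inputs whose target values differ by a fixed constant $\delta>0$. No MPNN can then approximate the target within error $<\delta/2$.

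For $\MST$, take $G$ to be two disjoint triangles and $H$ a $6$-cycle, with all edge weights $1$. Both are $2$-regular, so \wlone{} cannot separate them. The cycle has an MST of cost $5$, whereas the disconnected $G$ has no spanning tree. To avoid disconnectedness, we instead use connected $3$-regular graphs with unit weights, which also have MST cost $n-1$. A cleaner choice is a weighted example: a $6$-cycle with alternating weights $1,2$ versus two triangles. This forces us to handle connectivity differently, so we will pick two connected graphs on the same number of vertices that are weight-regular, i.e.\ every vertex sees the same multiset of incident weights, but whose MST costs differ. Such a pair is obtained by taking a $3$-regular graph with a $1$-factor of weight-$1$ edges and the rest of weight $2$, in two ways: one where the weight-$1$ edges plus one weight-$2$ edge per component structure give different MST totals. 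Concretely, we compare the prism and $K_{3,3}$. Both are $3$-regular on $6$ vertices, and each has a perfect matching of weight $1$ with the other edges weight $2$, so every vertex sees the weight multiset $\{1,2,2\}$ and the two graphs are \wlone{}-equivalent. The MST cost is $3\cdot1+2\cdot2=7$ in both cases, since the matching is always completable. This shows the genuine obstacle: we must pick the weight pattern so that the weight-$1$ subgraph has a different number of components. We therefore take the weight-$1$ edges to form two triangles in one graph (so the cheap subgraph has $2$ components, MST $=4+2=6$) versus a $6$-cycle in the other (cheap subgraph connected, MST $=5$). Both graphs are $4$-regular with incident weight multiset $\{1,1,2,2\}$: two triangles plus a weight-$2$ cycle structure, versus a $6$-cycle plus a complementary weight-$2$ $2$-factor. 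Verifying local equivalence and connectivity is a short check.

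For $\SSSP$, \wlone{} on $V_2$ colours the pair $(s,v)$ by the colours of $v$ only, since the source is unmarked. A single vertex-transitive weighted graph, such as a cycle $C_n$ with unit weights, suffices: $(s,v)$ and $(v,v)$ get identical colours but have distances $\ge1$ and $0$.

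\textbf{Expressive architectures.} For $\SSSP$, \wlfive{} individualizes $s$. An MPNN with min aggregation, initial features $0$ on $s$ and $\beta$ elsewhere, and $n-1$ layers implements Bellman--Ford exactly, as in \cref{sec:algorithms_with_finite_alg_compl}. For $\MST$, \wloo{} individualizes pairs, so the model can compute for each edge $\{u,v\}$ the minimax path weight between $u$ and $v$ in $G-\{u,v\}$ via a widest-path style min--max Bellman--Ford, where the pair marking supplies the source. By the cycle property, an edge lies in the MST (with a fixed tie-breaking, or cost-wise regardless of ties) iff its weight is at most that bottleneck. Summing the weights of the selected edges through the readout gives the MST cost.

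The main obstacle I expect is fixing the formal \wloo{} simulation, so that per-edge bottleneck values can be aggregated into a single graph-level sum with ties handled without affecting the cost. I will use the fact that MST cost equals $\sum_e w_e\cdot 1[w_e<\text{bottleneck}]$, plus a tie-correction obtained via the equivalent formula $\sum_{k}(\#\text{components of }G_{<w_k})$ differences.
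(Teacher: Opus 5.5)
Your impossibility half is fine, but it uses different witnesses from the paper. Your final pair gives two weightings of the octahedron: the weight-$1$ edges form $2K_3$, resp.\ $C_6$, and the remaining $2$-factor has weight $2$. Both are weight-regular with incident multiset $\{1,1,2,2\}$, hence \wlone-equivalent, and their $\MST$ values are $6$ and $5$. Drop the false starts, which are either disconnected or have equal cost. Your $\SSSP$ witness is also valid, since a node-level MPNN never sees $s$. For the \wlfive{} half, an explicit min-aggregation Bellman--Ford is a legitimate constructive alternative to the paper's unrolling-tree argument plus separation-implies-approximation.

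The gap is the $\MST$ half, and it has two parts.

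\emph{Wrong algorithm.} \wloo{} does not individualize pairs. As defined in the paper, it individualizes a single vertex per copy and compares the \wlfive{} colourings over all placements. Your bottleneck $b_{G-e}(u,v)$ comes from letting both marked endpoints ignore $e$. With only $u$ marked, the naive variants fail: if $u$'s neighbours discard its message, nothing propagates; if they keep it, the direct edge caps the value at $v$ by $w(e)$. So as written you argue about a more expressive, pair-individualizing architecture, not \wloo.

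\emph{Ties.} The sum $\sum_e w_e\,\mathbf{1}[w_e<b_{G-e}(u,v)]$ counts only edges lying in \emph{every} MST, and under ties it can miss everything. For a unit-weight triangle it equals $0$, while $\MST=2$. So your ``tie-correction'' is not a correction but the entire content, and you leave it unproved.

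The missing ingredient is exactly the paper's route. Let $w_1<\dots<w_m$ be the distinct weights and $\kappa_j=\#\cc(G_{<w_j})$. The first step is the identity $\MST(G)=\sum_{j=1}^{m}(\kappa_j-\kappa_{j+1})w_j$. It follows by induction on $j$, since a minimum spanning forest of $G_{\le w_j}$ costs as much as one of $G_{<w_j}$ plus $(\kappa_j-\kappa_{j+1})w_j$. The second step is that \wloo-indistinguishable graphs have the same weight set and the same $\kappa_j$. The paper gets this from Laplacian cospectrality of \wloo-equivalent graphs.

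With single-vertex individualization you can also see it directly. In the copy with $v$ marked, the stable colour of $x$ records, via its rooted unrolling tree, whether $x$ reaches the marked vertex along edges of weight $<w$. So the colouring determines $|C_w(v)|$, the size of $v$'s component in $G_{<w}$, and $\#\cc(G_{<w})=\sum_v 1/|C_w(v)|$. Hence \wloo-equivalence forces equal $\MST$, and \cref{prop:septoapprox} finishes. No per-edge bottlenecks, and no discontinuous indicators, are needed.
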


Here, $\wlfive$ and $\wloo$ refer to MPNN architectures that simulate individualized variants of the $1$-WL refinement, as introduced in \cref{subsec:wlandvariants}. These variants support vertex marking and provide a simple mechanism for increasing expressivity beyond standard MPNNs; see \cref{app:subsec:explimit} for details.

\subsection{Expressible but (possibly) not learnable invariants}\label{subsec:expr_notlearnable}

Beyond expressivity limitations, there exist invariant algorithms that are representable by MPNNs but for which our sufficient conditions for learnability do not apply. In such cases, the obstruction is not a lack of expressive power, but rather the absence of a metric structure on the input space that yields finite covering numbers, as required by \cref{def:finite_uniform_approx}.

As a simple example, consider a variant of the MPNN architecture in which the aggregation operator in \cref{def:MPNN_aggregation} is replaced by an unnormalized sum. Restricting to graphs without node or edge features (or, equivalently, to graphs with constant node and edge features across all vertices and edges) and to one-layer architectures, the resulting hypothesis class clearly contains the degree invariant $(G,u) \mapsto \deg_G(u)$,
since node degrees are computed exactly by summing over neighbors.

However, if we don't restrict our space to graphs with bounded maximum degree, this expressivity already prevents the hypothesis class from being a finite Lipschitz class. Indeed, for any (pseudo-)metric under which the degree map is Lipschitz with a finite constant, the induced metric space necessarily has infinite covering number (for sufficiently small radius) as shown next.

\begin{lemma}[Informal]
\label{lem:complete_graphs_infinite_cover}
Let $\mathcal{K}$ be the family of all complete graphs. Let $d$ be any (pseudo-)metric on $V_1(\mathcal{K})$ such that the degree invariant
$$
\deg:V_1(\mathcal{K})\to\Nb
$$
is $L$-Lipschitz for some $L\in \Rb_{>0}$. Then, for every $\varepsilon\in(0,\frac{1}{L})$, $\mathcal{N}(V_1(\mathcal{K}),d,\varepsilon)=\infty.$
Consequently, no hypothesis class containing the degree invariant can satisfy
\cref{def:finite_uniform_approx} on any graph space containing $V_1(\mathcal{K})$.
\end{lemma}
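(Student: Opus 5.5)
The plan is to show that any $\varepsilon$-ball in $(V_1(\mathcal{K}),d)$ with $\varepsilon<1/L$ can contain only pairs $(K_n,u)$ with a single order $n$. Since $\mathcal{K}$ contains complete graphs of every order, no finite family of such balls can cover the space.

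For each $n\in\Nb$, let $K_n$ be the complete graph on $n$ vertices. Any $u\in V(K_n)$ has $\deg_{K_n}(u)=n-1$, since all vertices of $K_n$ are equivalent. Take $(K_n,u)$ and $(K_m,v)$ with $n\neq m$. Lipschitz continuity of the degree map gives
\[
1\le |n-m| = |\deg_{K_n}(u)-\deg_{K_m}(v)| \le L\, d\big((K_n,u),(K_m,v)\big),
\]
so $d((K_n,u),(K_m,v))\ge 1/L>\varepsilon$. Thus points coming from complete graphs of different orders are strictly more than $\varepsilon$ apart.

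Now suppose $C\subseteq V_1(\mathcal{K})$ is a finite $\varepsilon$-cover. Each $c\in C$ has the form $(K_{n_c},u_c)$. If some $x=(K_n,u)$ satisfies $d(x,c)\le\varepsilon$, the previous step forces $n=n_c$. So the balls of radius $\varepsilon$ around the points of $C$ together contain only pairs whose order lies in the finite set $\{n_c: c\in C\}$. Picking any $n$ outside this set, the point $(K_n,u)$ is not covered, which is a contradiction. Hence $\mathcal{N}(V_1(\mathcal{K}),d,\varepsilon)=\infty$. The same argument shows that $\{(K_n,u_n)\}_{n\in\Nb}$ is an infinite $1/L$-separated set, which is another way to state the conclusion.

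For the consequence, suppose $\cF_{\vec\Theta}$ contains the degree invariant and satisfies \cref{def:finite_uniform_approx} with metric $d$ on a space $\cX\supseteq V_1(\mathcal{K})$. Then $\deg$ is Lipschitz with respect to $d$ with some finite constant $L$. The step to be careful about is that a finite covering number of $\cX$ passes to the subset $V_1(\mathcal{K})$: given an $\varepsilon$-cover of $\cX$, replace each centre that is within $\varepsilon$ of some point of $V_1(\mathcal{K})$ by such a point. By the triangle inequality this gives a $2\varepsilon$-cover of $V_1(\mathcal{K})$ of no larger size. Choosing $\varepsilon<1/(2L)$ then contradicts the first part. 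The argument is elementary, and I expect no real obstacle; the only points needing care are this radius doubling and the fact that the metric space has no other structure to exploit.
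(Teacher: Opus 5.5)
Your proof is correct and rests on the same observation as the paper's: Lipschitz continuity of $\deg$ forces points from complete graphs of different orders to be at distance at least $1/L$. The packaging differs slightly.

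The paper picks $x_k=(K_{1+kq},u_k)$ with $q=\lceil L\varepsilon\rceil+1$ and concludes from the $\varepsilon$-separation of this sequence. Since $L\varepsilon<1$, this $q$ equals $2$, so the points are actually $2/L>2\varepsilon$ apart. It is this stronger separation that makes the step ``separated $\Rightarrow$ infinite covering number'' valid, because a single $\varepsilon$-ball can contain two points at distance up to $2\varepsilon$.

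You sidestep this by using that the centres of a cover lie in $V_1(\mathcal{K})$ itself. Each $\varepsilon$-ball then meets only one order, and separation $1/L>\varepsilon$ suffices. For the same reason, your closing remark that $1/L$-separation is ``another way to state the conclusion'' is too quick: by itself it only gives $\mathcal{N}(V_1(\mathcal{K}),d,\varepsilon)=\infty$ for $\varepsilon<1/(2L)$, not for all $\varepsilon<1/L$.

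Your explicit transfer from $\cX$ to the subset $V_1(\mathcal{K})$ via radius doubling fills in a step the paper leaves implicit. Both the restriction on $\varepsilon$ and the doubling can in fact be dropped. If $C\subseteq\cX$ is a finite $\varepsilon$-cover, then every $x\in V_1(\mathcal{K})$ satisfies $\deg(x)\le\max_{c\in C}\deg(c)+L\varepsilon$. This contradicts $\deg(K_n,u)=n-1\to\infty$, so the covering number is infinite at every radius.
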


A similar obstruction applies to any invariant equivalent to the \wlone, since its first iteration already captures the degree invariant. Thus, any hypothesis class expressive enough to represent all $1$-WL invariants inherits the same covering-number pathology on graph spaces with unbounded degrees.

\section{Limitations and future directions}\label{sec:thefuture}

While our framework provides the first general, provable learning guarantees for a broad class of graph algorithms with MPNNs, it relies on several structural assumptions that limit its scope. In particular, our theory assumes access to carefully constructed training datasets that form suitable covers of the underlying graph space. Although \cref{thm:specific_regularization} guarantees the existence of such datasets, in full generality, the result is not constructive, and recovering the corresponding datasets may require computing $\varepsilon$-covers under graph pseudometrics, which can be computationally expensive for the classes considered in \cref{thm:finite_complexity_classes}. In contrast, for more structured target functions such as Bellman--Ford (\cref{thm:bf_informal}), we later obtain a genuinely constructive result that explicitly specifies the required training dataset. More broadly, developing high-probability, sampling-based methods for constructing informative training sets remains an important direction for future work. Moreover, our results are formulated in terms of achieving sufficiently small regularized training loss. Yet we do not guarantee that standard gradient-based optimization methods will reliably converge to parameter settings that generalize.

Hence, \emph{looking forward}, bridging our learning-theoretic analysis with convergence results for gradient descent, therefore, remains an important direction for future work. Finally, our current analysis is restricted to polynomial-time algorithms; extending the framework to approximation algorithms for computationally hard problems and studying whether exploiting the data distribution can yield approximation ratios beyond worst-case guarantees constitutes another promising avenue for future research.

\section{Experimental study}\label{sec:experiments}

In the following, we investigate the extent to which our theoretical results translate into practice. Specifically, we answer the following questions.

\begin{description}
	\item[Q1] Does gradient descent converge to parameter assignments that allow for size generalization?
    \item[Q2] Do the more expressive MPNN architectures of~\cref{sec:cannot} lead to improved predictive performance in practice?
	\item[Q3] Does the differentiable regularization term from~\cref{sec:improved} leads to improved generalization errors compared to $p$-norm based regularization term?
\end{description}

We use an MPNN aligned with the theoretical results observed in \Cref{sec:improved}. Due to the construction of the training set, we conduct the experiments on the SSSP problem outlined in \Cref{sec:improved}. For this, we generate synthetic training and test datasets based on Erdős--Rényi graphs and path graphs derived from~\cref{thm:bf_informal}. We then train two- and three-layer MPNNs to predict two or three steps, respectively, of the Bellman-Ford algorithm. See \Cref{app:experiments} for details on dataset construction, experimental settings, and additional results. The source code of all methods and evaluation procedures is available at \url{https://github.com/Timo-SH/exact_nar}.

\begin{figure}[ht!]

\label{fig:experimental_results_main}
\centering
\includegraphics[scale=0.55]{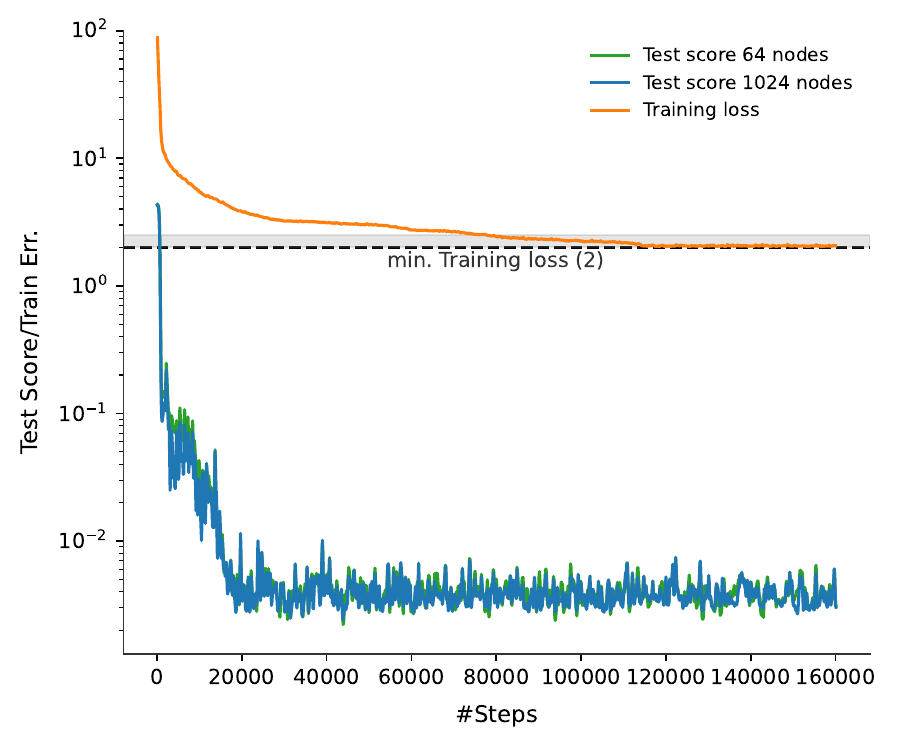}
\caption{Training error and test score (lower is better) for size generalization experiments in \textbf{Q1} using test datasets with 64 and 1024 nodes, respectively. Values were smoothed using Gaussian smoothing with $\sigma =1$.  The gray region indicates the loss values for which \cref{thm:bf_informal} guarantees extrapolation. }
\end{figure}

\textbf{Results and discussion} In the following, we address \textbf{Q1} to \textbf{Q3}.

Regarding \textbf{Q1},~\Cref{fig:experimental_results_main} shows size generalization properties for 64 and 1024 nodes on test graphs. The underlying MPNNs were trained on the same training dataset with a fixed edge weight for the path graphs outlined in \Cref{sec:improved}. In addition, weights were uniformly sampled for test set edges. To demonstrate size generalization, we provide three test sets with increasing graph diversity. Across all test sets, test error does not increase with node count for each graph. Furthermore, the average node degree, as shown between the degree-bound Er-constdeg dataset and the unbounded ER dataset, does not affect test set performance. For the construction of the datasets, the behavior of the weight parameter, and further results, see \Cref{app:experiments}.

Regarding \textbf{Q2},~\Cref{tab:Q2results} shows the inability of a standard MPNN to generalize to unseen graphs in the test set at all. In addition, compared to the \wlfive-equivalent MPNN, we observe a significantly higher training loss.

Regarding \textbf{Q3}, as seen in \Cref{tab:Q3results}, replacing the proposed differentiable regularization term with a $p$-norm based regularization leads to similar results using $\ell_1$ regularization.  While $\ell_1$ and $\ell_2$ norms allow for sufficient training and generalization results, size generalization improves with the proposed regularization term across graph sizes. Furthermore, with $\ell_2$ regularization, training behavior is noticeably less stable and yields slightly worse test-set performance. These results align with~\cref{thm:bf_informal}, highlighting the application of a differentiable regularization term to the SSSP problem, opposed to a non-differentiable regularization previously required.

\begin{table}
\caption{Test score results for \textbf{Q2} with a  standard and \wlfive-equivalent MPNNs. All results were obtained across three seeds, and test scores are averaged. The number in brackets indicates the number of nodes for the test graphs.}
	\centering
	\resizebox{0.4\textwidth}{!}{ 	\renewcommand{\arraystretch}{1.05}
		\begin{tabular}{@{}lccc@{}}
			\toprule   & \multicolumn{3}{c}{\textbf{Dataset}}
			\\\cmidrule{2-4}
            \textbf{Task} (Test score $\downarrow$)& \textsc{ER-constdeg}    & \textsc{ER}      & \textsc{General}  \\
            \midrule
             \wlone{} (64) &$0{.}9725\tiny\pm0{.}0001$ & $0{.}9725\tiny\pm0{.}0001$ & $0{.}8393\tiny\pm0{.}0001$  \\
			  \wlfive{} (64) & $0{.}0035\tiny\pm0{.}0002$ & $0{.}0034\tiny\pm0{.}0004$ & $0{.}0032\tiny\pm0{.}0002$ \\
           \midrule
           \wlone{} (256)& $0{.}9896\tiny\pm0{.}0006$ & $0{.}9765\tiny\pm0{.}0001$ & $0{.}8368\tiny\pm0{.}0010$ \\
			\wlfive{} (256)& $0{.}0033\tiny\pm0{.}0002$ & $0{.}0037\tiny\pm0{.}0001$ & $0{.}0033\tiny\pm0{.}0002$ \\
              \midrule
              \wlone{} (1024)& $0{.}9948\tiny\pm0{.}0005$ & $0{.}9645\tiny\pm0{.}0001$ & $0{.}8217\tiny\pm0{.}0010$ \\
			\wlfive{} (1024) & $0{.}0030 \tiny\pm 0{.}0001$ & $0{.}0038\tiny\pm0{.}0002$  & $0{.}0033\tiny\pm0{.}0002$ \\
			\bottomrule
		\end{tabular}
        }
        \label{tab:Q2results}
\end{table}
\begin{table}
\caption{Comparison of regularization between our $\ell_1$ based method (here named $\ell_{\text{reg}}$), $\ell_1$ and $\ell_2$ regularization terms. Across all experiments, $\eta=0{.}1$ holds, and the MPNN from \textbf{Q1} was used. Furthermore, the General dataset as outlined in \Cref{app:experiments} is used for all experiments. Results are obtained across three seeds.}
	\centering
	\resizebox{0.48\textwidth}{!}{ 	\renewcommand{\arraystretch}{1.05}
		\begin{tabular}{@{}lccccc@{}}
			\toprule   & \multicolumn{5}{c}{\textbf{Nodes}}
			\\\cmidrule{2-6}
            \textbf{Reg.} ($\downarrow$) &  64 &  128 & 256 & 512 & 1024 \\
            \midrule
             $\ell_1$ & $0{.}0061\tiny\pm0{.}0010$ & $0{.}0053\tiny\pm0{.}0011$ & $0{.}0054\tiny\pm0{.}0010$ & $0{.}0060\tiny\pm0{.}0009$ & $0{.}0056\tiny\pm0{.}0009$  \\
			 $\ell_2$ & $0{.}0336\tiny\pm0{.}0503$ & $0{.}0336\tiny\pm0{.}0500$ & $0{.}0343\tiny\pm0{.}0510$ & $0{.}0133\tiny\pm0{.}0123$ & $0{.}0369\tiny\pm0{.}0557$  \\
             \midrule
             $\ell_{\text{reg}}$  & $0{.}0032\tiny\pm0{.}0002$ & $0{.}0033\tiny\pm0{.}0002$ & $0{.}0033\tiny\pm0{.}0002$ &  $0{.}0033\tiny\pm0{.}0003$ & $0{.}0033\tiny\pm0{.}0002$ \\

			\bottomrule
		\end{tabular}
        }
        \label{tab:Q3results}
\end{table}

\section{Conclusion}
We developed a general theoretical framework for learning graph algorithms with GNNs. Our framework characterizes the conditions under which an MPNN, or more expressive variants, can be trained on a finite set of instances and provably generalize to inputs of arbitrary size by minimizing a supervised, regularized loss. By connecting algorithmic learning to notions of metric structure, covering numbers, and regularization-induced extrapolation, we move beyond purely expressivity-based analyses and provide learning-theoretic guarantees for neural algorithmic reasoning on graphs. Building on this framework, we identified broad classes of graph algorithms, ranging from shortest paths to dynamic programming problems, that suitably expressive GNNs can learn, and we also identified fundamental limitations of standard MPNNs. For the single-source shortest-path problem, we further showed how a differentiable $p$-norm-based regularization significantly reduces the size of the required training set. \emph{In summary, our results provide a precise characterization of which algorithms GNNs can learn from finite data and which they cannot, thereby enabling a more principled understanding of data-driven algorithmic design and its potential for reliable generalization beyond the training regime.}

\newpage
\section*{Acknowledgement}

SW and AV are supported by the German Research Foundation (DFG) within Research Training Group 2236/2 (UnRAVeL). SW is supported by the DFG Heinz Maier-Leibnitz award. TS and CM are partially funded by a DFG Emmy Noether grant (468502433) and RWTH Junior Principal Investigator Fellowship under Germany’s Excellence Strategy. RN and YW are supported by U.S. National Science Foundation (NSF) grants CCF-2112665 and CCF-2310411.

\section*{Impact statement} This paper presents work whose goal is to advance the field of machine learning. There are many potential societal consequences of our work, none of which we feel must be specifically highlighted here.

\bibliography{bibliography}

\newpage
\appendix

\onecolumn

\section{Related work}\label{app:sec:related_work}

In the following, we discuss related work.

\textbf{MPNNs} MPNNs~\citep{Gil+2017,Sca+2009} emerged as the most prominent graph machine learning architecture. Notable instances of this architecture include, e.g.,~\citet{Duv+2015,Ham+2017,Kip+2017} and~\citet{Vel+2018}, which can be subsumed under the message-passing framework introduced in~\citet{Gil+2017}. In parallel, approaches based on spectral information were introduced in, e.g.,~\citet{Bru+2014,Defferrard2016,Gam+2019,Kip+2017,Lev+2019}, and~\citet{Mon+2017}---all of which descend from early work in~\citet{bas+1997,Gol+1996,Kir+1995,Mer+2005,mic+2005,mic+2009,Sca+2009}, and~\citet{Spe+1997}.

\textbf{Expressivity of MPNNs}
The \emph{expressivity} of an MPNN is the architecture's ability to express or approximate different functions over a set of graphs. High expressivity means the neural network can represent many functions over this domain. In the literature, the expressivity of MPNNs is modeled mathematically using two main approaches: separation power compared to graph isomorphism test~\cite {Mor+2021}, and universal approximation theorems~\cite{Azi+2020,geerts2022}. Works following the first approach study if an MPNN, by choosing appropriate weights, can distinguish the same pairs of non-isomorphic graphs with a given graph isomorphism test. The most commonly used graph isomorphism test for analyzing the expressive power of MPNNs is the \new{$1$-dimensional Weisfeiler--Leman algorithm}~\wlone{}, a well-studied heuristic for the graph isomorphism problem, and its more expressive variants. Here, an MPNN distinguishes two non-isomorphic graphs if it can compute distinct vector representations for them. Specifically,~\citet{Mor+2019} and~\citet{Xu+2018b} showed that the \wlone{} limits the expressive power of any possible MPNN architecture in distinguishing non-isomorphic graphs. In turn, these results have been generalized to the $k$-dimensional Weisfeiler--Leman algorithm, e.g.,~\citet{Azi+2020,Gee+2020,Mar+2019,Mor+2019,Morris2020b,Mor+2022b} and (ordered) subgraph GNNs~\citet{Bev+2021,Cot+2021,li2020distance,Qia+2022, Zha+2023b}. Works following the second approach, which operates over a set of graphs, can be approximated arbitrarily closely by an MPNN~\citep{Azi+2020,Boe+2023,Che+2019,geerts2022,Mae+2019}.

\textbf{Generalization abilities of MPNNs}
Early work by~\citet{Sca+2018}, building on classical learning theory~\citep{Kar+1997,Vap+95}, bounded the Vapnik--Chervonenkis theory dimension of MPNNs with piecewise polynomial activations on fixed graphs by $\mathcal{O}(P^2 n \log n)$, where $P$ is the number of parameters and $n$ the graph's order; see also~\citet{Ham+2001}. However, their MPNN model differs from modern architectures~\citep{Gil+2017,dinverno2024vc}. \citet{Gar+2020} bounded the empirical Rademacher complexity of a simple sum-aggregation MPNN in terms of graph degree, depth, Lipschitz constants, and parameter norms, assuming weight sharing. This line was extended to $E(n)$-equivariant MPNNs by~\citet{Kar+2024} and refined via PAC-Bayesian analyses by~\citet{Lia+2021,Ju+2023}; see also~\citet{Lee+2024} for knowledge graphs. \citet{Mor+2023} connected MPNNs’ expressivity and generalization via the Vapnik--Chervonenkis theory, showing that VC dimension depends on the number of $\wlone$ equivalence classes, logarithmically on the number of colors, and polynomially on the number of parameters. Their discrete pseudo-metric assumption was extended by~\citet{Pel+2024}, who studied node-individualized MPNNs using covering numbers, though without explicit metric bounds. Related refinements include VC lower bounds for restricted MPNNs~\citep{Daniels+2024}, margin-based analyses~\citep{Fra+2024,Li+2024,Chuang+2021}, and more expressive MPNNs~\citep{Fra+2024,Mas+2025}. Several works analyze generalization under structural assumptions. \citet{Mas+2022,Mas+2024, Wang+2025} considered random graph models, while~\citet{Lev+2023,Rac+2024, Vas+2024} derived bounds using covering numbers. Transductive generalization was studied via algorithmic stability~\citep{Ver+2019} and Rademacher complexity under stochastic block models~\citep{Ess+2021,Tan+2023}. For semi-supervised node classification,~\citet{Bar+2021a} analyzed MPNNs on mixtures of Gaussians over stochastic block models. Importantly, the above work analyzed MPNNs' generalization ability in the classical uniform convergence regime. In contrast, the present work examines the generalization of a single function to larger graphs than those seen during training. \citet{Yeh+2021} derived negative generalization results for larger graphs than those seen in the training set, while~\citet{Lev+2025} derived necessary conditions of generalization to larger graphs. See \citet{Vas+2024b} for a survey on generalization analyses of MPNNs and related architectures.

\textbf{Empirical work on NAR on graphs} A large body of empirical work studies NAR for graphs problems~\citep{Vel+2021,Cap+2021}. Early empirical studies demonstrated that neural architectures can imitate algorithmic execution when trained on intermediate computation traces~\citep{Vel+2020}. MPNNs have emerged as a particularly effective backbone for NAR on graph-structured problems due to their close correspondence with local, iterative graph computations~\citep{Cap+2021}. Empirical results show that MPNNs can learn graph algorithms such as single-source shortest paths, breadth-first search, minimum spanning trees, as well as dynamic programming-style problems such as knapsack~\citep{Pan+2022,Vel+2020,yonetani2021path,pozgaj2025knarsack}. Subsequent work investigated architectural refinements and training strategies to improve stability and generalization, e.g.,~\citep{Gro+2022,Iba+2022,Jai+2023,Jur+2023,Num+2023,Rod+2025,Xho+2021}. Surveys~\citep{Cap+2021} and benchmark studies~\citep{Vec+2022} further systematized these empirical findings and highlighted both the potential and the limitations of machine-learning-enhanced approaches to algorithmic reasoning and combinatorial optimization.

\textbf{Theoretical work on algorithmic reasoning on graphs} There is a substantial body of work studying the expressive capabilities of message-passing neural networks (MPNNs) and related architectures for representing graph algorithms. For example,~\citet{Lou2019} studied the depth and width requirements of MPNNs for solving problems such as minimum vertex cover, leveraging results from distributed computing. \citet{Xu+2018b,Mor+2019} showed that MPNNs are inherently limited by the $\wlone$ test in their ability to distinguish non-isomorphic graphs. \citet{Qia+2024} demonstrated that MPNNs can express each step of the primal–dual interior-point method for solving linear optimization problems, while~\citet{Yau+2024} investigated their ability to represent approximation algorithms for hard combinatorial problems such as maximum cut and minimum vertex cover; see also~\citet{Sat+2019}. \citet{Dud+2022} used category-theoretic tools to establish a connection between MPNNs and dynamic programming. \citet{Her+2023,Her+2025} studied the size requirements of recurrent neural networks for solving knapsack and maximum-flow problems. More recently,~\citet{Ros+2025} devised a general framework for understanding the ability of recurrent MPNNs to simulate algorithms on arbitrarily large instances, and~\citet{He+2025} showed that MPNNs can simulate classical primal–dual approximation schemes. Recently, several works have begun to study the ability of transformer architectures to simulate (graph) algorithms, e.g.,~\citet{Luc+2024,Luc+2025,Mer+2025,San+2024b,San+2024c,Yeh+2025,zhou2024what}. Overall, these works primarily focus on expressivity, largely ignoring questions related to learning and optimization.

\citet{Xu+2021} addressed this gap by proposing PAC-style sample-complexity bounds for learning common (graph) algorithms using MPNNs, demonstrating that architectural alignment with the target algorithm can improve sample efficiency. \citet{Ner+2025} demonstrated that MPNNs trained on small datasets, equipped with a regularization term, and optimized to a sufficiently small loss can execute the Bellman--Ford algorithm on arbitrarily large graphs. However, their analysis crucially relies on a non-differentiable regularization term.

\section{Extended background}\label{app:extended_background}

Here, give additional background.

\textbf{Graphs} An \new{(undirected) graph} $G$ is a pair $(V(G),E(G))$ with \emph{finite} sets of \new{vertices} $V(G)$ and \new{edges} $E(G) \subseteq \{ \{u,v\} \subseteq V(G) \mid u \neq v \}$. \new{vertices} or \new{nodes} $V(G)$ and \new{edges} $E(G) \subseteq \{ \{u,v\} \subseteq V(G) \mid u \neq v \}$.  The \new{order} of a graph $G$ is its number $|V(G)|$ of vertices. We call $G$ an \new{$n$-order graph} if $G$ has order $n$. In a \new{directed graph}, we define $E(G) \subseteq V(G)^2$, where each edge $(u,v)$ has a direction from $u$ to $v$. Given a directed graph $G$ and vertices $u,v \in V(G)$, we say that $v$ is a \new{child} of $u$ if $(u,v) \in E(G)$. For a graph $G$ and an edge $e \in E(G)$, we denote by $G \setminus e$ the \new{graph induced by removing} the edge $e$ from $G$. For an $n$-order graph $G$, assuming $V(G) = [n]$, we denote its \new{adjacency matrix} by $\vec{A}(G) \in \{ 0,1 \}^{n \times n}$, where $\vec{A}(G)_{vw} = 1$ if, and only, if $\{v,w\} \in E(G)$. The \new{neighborhood} of a vertex $v \in V(G)$ is denoted by $N_G(v) \coloneq \{ u \in V(G) \mid \{v, u\} \in E(G) \}$, where we usually omit the subscript for ease of notation.

An \new{attributed graph} is a pair $(G,a_G)$ with a graph $G = (V(G),E(G))$ and a (vertex-)attribute function $a_G \colon V(G) \to \Rb^{1 \times d}$, for $d > 0$. The \new{attribute} or \new{feature} of $v \in V(G)$ is $a_G(v)$. Similarly, we consider graphs equipped with edge features. An \new{edge-featured graph} is a pair $(G, w_G)$, where $G = (V(G), E(G))$ is a graph and $w_G \colon E(G) \to \Rb^{1 \times d}$ assigns a (possibly vector-valued) feature to each edge. For an edge $e \in E(G)$, the vector $w_G(e)$ is referred to as the \new{edge feature} of $e$. The special case $p=1$ with $w_G(e) \in \Rb^{+}$ for all $e \in E(G)$ corresponds to an \new{edge-weighted graph}, in which case $w_G(e)$ is called the \new{(edge) weight} of $e$. When the underlying graph is clear from the context, we simply write $w(e)$ or $w_e$ to denote the edge feature of $e\in E(G)$.

For a graph $G$ without edge features, the \new{degree} of a node $u \in V(G)$ is defined as $\text{deg}_G(u) \coloneqq |N_G(u)|$. For an edge-weighted graph $(G,w_G)$, the \new{weighted degree} of $u \in V(G)$ is given by $\text{deg}_G(u) \coloneqq \sum_{v \in N_G(u)}  w_G(u,v).$ When the underlying graph is clear from the context, we omit the subscript $G$ and simply write $\text{degree}(u)$.

Let $G$ be  graph, a path $P$ on $G$ of \new{length} $k$ is a sequence of vertices $(v_0, v_1, v_2, \ldots, v_k)$ such that for $i \in [k]$, it holds that $(v_{i-1}, v_i) \in E(G)$. We denote the set of paths between vertices $v,w \in V(G)$ by $\cP_G(v,w)$. A graph is \new{connected} if $\cP(v,w) \neq \emptyset$, for all $v,w \in V(G)$. A graph $G$ is a \new{tree} if it is connected, but $G \setminus e$ is disconnected for any $e \in E(G)$. A tree or a disjoint collection of trees is known as a forest.

A \new{rooted tree} $(G,r)$ is a tree where a specific vertex $r$ is marked as the \new{root}. For a rooted (undirected) tree, we can define an implicit direction on all edges as pointing away from the root; thus, when we refer to the \new{children} of a vertex $u$ in a rooted tree, we implicitly consider this directed structure. For $S \subseteq V(G)$, the graph $G[S] \coloneq (S,E_S)$ is the \new{subgraph induced by $S$}, where $E_S \coloneq \{ (u,v) \in E(G) \mid u,v \in S \}$. A \new{(vertex-)labeled graph} is a pair $(G,\ell_G)$ with a graph $G = (V(G),E(G))$ and a (vertex-)label function $\ell_G \colon V(G) \to \Sigma$, where $\Sigma$ is an arbitrary countable label set. For a vertex $v \in V(G)$, $\ell_G(v)$ denotes its \new{label}.

Two graphs $G$ and $H$ are \new{isomorphic} if there exists a bijection $\varphi \colon V(G) \to V(H)$ that preserves adjacency, i.e., $(u,v) \in E(G)$ if and only if $(\varphi(u),\varphi(v)) \in E(H)$. The bijection $\varphi$ is called an isomorphism. In the case of attributed graphs, we additionally require $a_G(v)=a_H(\varphi(v))$ for all $v\in V(G)$, and similarly for edge- or feature-labeled graphs.
More generally, for $k\ge 1$, and $(G,\vec v), (H,\vec w) \in V_k(\cG)$ for some graph space $\cG$, we say that $(G,\vec v)$, and $(H,\vec w)$ are isomorphic if there exists an isomorphism $\varphi \colon V(G) \to V(H)$ with $\varphi(\vec v)=\vec w$ (applied componentwise). Given two graphs $G$ and $H$ with disjoint vertex sets, we denote their disjoint union by $G \,\dot\cup\, H$.

\subsection{Node-level and graph-level MPNN classes}\label{app:sec:nodelevel} Since \cref{def:MPNN_aggregation} and \cref{def:MPNN_readout} are parametrized functions, we can define function classes of MPNNs that operate at the node and graph levels. Let $\cG_n$ be a set of $n$-order graphs, and let $L > 0$, $d > 0$. Furthermore,  let $\cS_L \coloneqq (\UPD_{\cdot}^\tup{1}, \AGG_{\cdot}^\tup{1}, \ldots, \UPD_{\cdot}^\tup{L},\AGG_{\cdot}^\tup{L})$ be a sequence of parameterized functions following~\cref{def:MPNN_aggregation} and $\cP_L \coloneqq (\vec{U}_1, \vec{A}_1, \ldots, \vec{U}_L, \vec{A}_L)$ be a corresponding sets of parameters. We then define
\begin{equation*}
\MPNN^{\cP_L}_{(\cS_L,d,n)}(\cG_n) \coloneq \Bigl\{
	h \colon V_1(\cG_n) \rightarrow \Rb^{n \times d} \mathrel{\Big|} h(G)_v =  \hb^\tup{t}_{v}, G \in \cX, \text{ where } \vec{u}_t \in \vec{U}_t, \vec{a}_t \in \vec{A}_t \Bigr\}.
\end{equation*}
We call such a set of functions a \new{node-level MPNN class}. Similarly, let  $\cT_L \coloneqq (\UPD_{\cdot}^\tup{1}, \AGG_{\cdot}^\tup{1}, \ldots,\allowbreak \UPD_{\cdot}^\tup{L}, \AGG_{\cdot}^\tup{L}, \RO_{\cdot})$ be a sequence of parameterized functions following~\cref{def:MPNN_aggregation,def:MPNN_readout} and $\cQ_L \coloneqq (\vec{U}_1, \vec{A}_1, \ldots, \vec{U}_L, \vec{A}_L, \vec{R})$ be a corresponding set of parameters. We then define
\begin{equation*}
	\MPNN^{\cQ_L}_{(\cT_L,d)}(\cG_n) \coloneq \Bigl\{
	h \colon \cG_n \rightarrow \Rb \mathrel{\Big|} h(G) \coloneqq \vec{h}_{G} , G \in \cX, \text{ where } \vec{u}_t \in \vec{U}_t, \vec{a}_t \in \vec{A}_t, \text{ and } \vec{r} \in \vec{R} \Bigr\}.
\end{equation*}
We call such a set of functions a \new{graph-level MPNN class}. We call a concrete choice of parameters, e.g., $((\vec{u}_t,\vec{a}_t)_{t \in [L]}, \vec{r})$ of an graph-level MPNN architecture \new{parametrization}.

\subsection{The $1$-dimensional Weisfeiler--Leman algorithm and variants}\label{subsec:wlandvariants}

Here, we introduce the $1$-dimensional Weisfeiler--Leman algorithm and some variants.

\textbf{The \texorpdfstring{$1$}{1}-dimensional Weisfeiler--Leman algorithm}
The \new{$1$-dimensional Weisfeiler--Leman algorithm} (\wlone) or \new{color refinement} is a well-studied heuristic for the graph isomorphism problem, originally proposed by~\citet{Wei+1968}.\footnote{Strictly speaking, the \wlone{} and color refinement are two different algorithms. That is, the \wlone{} considers neighbors and non-neighbors to update the coloring, resulting in a slightly higher expressive power when distinguishing vertices in a given graph; see~\citet{Gro+2021} for details. Following the conventions in the machine learning literature, we treat both algorithms as equivalent.} Intuitively, the algorithm determines if two graphs are non-isomorphic by iteratively coloring or labeling vertices. Given an initial coloring or labeling of the vertices of both graphs, e.g., their degree or application-specific information, in each iteration, two vertices with the same label get different labels if the number of identically labeled neighbors is unequal. These labels induce a vertex partition, and the algorithm terminates when, after some iterations, it does not refine the current partition, i.e., when a \new{stable coloring} or \new{stable partition} is obtained. Then, if the number of vertices with a specific label differs between the two graphs, we can conclude that the graphs are not isomorphic. It is easy to see that the algorithm cannot distinguish all non-isomorphic graphs~\citep{Cai+1992}. However, it is a powerful heuristic that can successfully decide isomorphism for a broad class of graphs~\citep{Arv+2015,Bab+1979}.

In the following, we formally describe a variant of the \wlone{} that also considers edge weights. Formally, let $(G,\ell_G)$ be a labeled graph and let $w_G \colon E(G) \to \Rb$ be an edge-weight function for $G$. In each iteration, $t > 0$, the \wlone{} computes a \new{vertex coloring} $C^1_t \colon V(G) \to \Nb$, depending on the coloring of the neighbors and the weights of the incident edges. That is, in iteration $t>0$, we set
\begin{equation*}
	C^{1}_t(v) \coloneq \REL\Bigl(\!\bigl(C^{1}_{t-1}(v),\oms (C^{1}_{t-1}(u), w_G(u,v)) \mid u \in N(v)  \cms \bigr)\! \Bigr),
\end{equation*}
for vertex $v \in V(G)$, where $\REL$ injectively maps the above pair to a unique natural number, which has not been used in previous iterations. In iteration $0$, the coloring $C^1_{0}\coloneqq \ell_G$ is used.\footnote{Here, we implicitly assume an injective function from $\Sigma$ to $\Nb$.} To test whether two graphs $G$ and $H$ are non-isomorphic, we run the above algorithm in ``parallel'' on both graphs. If the two graphs have a different number of vertices colored $c \in \Nb$ at some iteration, the \wlone{} \new{distinguishes} the graphs as non-isomorphic. Moreover, if the number of colors between two iterations, $t$ and $(t+1)$, does not change, i.e., the cardinalities of the images of $C^1_{t}$ and $C^1_{i+t}$ are equal, or, equivalently,
\begin{equation*}
	C^{1}_{t}(v) = C^{1}_{t}(w) \iff C^{1}_{t+1}(v) = C^{1}_{t+1}(w),
\end{equation*}
for all vertices $v,w \in V(G\,\dot\cup H)$, then the algorithm terminates. For such $t$, we define the \new{stable coloring} $C^1_{\infty}(v) = C^1_t(v)$, for $v \in V(G\,\dot\cup H)$. The stable coloring is reached after at most $\max \{ |V(G)|,|V(H)| \}$ iterations~\citep{Gro2017}.

It is straightforward to show that the \wlone{} has limited expressivity in distinguishing pairs of non-isomorphic graphs. Hence, in the following, we derive two more expressive variants that allow us to characterize the needed expressivity to capture well-known graph algorithms.

\textbf{The \texorpdfstring{$1$}{1}-dimensional Weisfeiler--Leman algorithm on individualized graphs}
We consider the \wlone{} on \emph{individualized graphs}, i.e., graphs equipped with a distinguished vertex.
Intuitively, given a root vertex $r$, we \emph{individualize} $r$ by assigning it a unique initial label and then run the standard \wlone{} refinement. Formally, let $(G,w_G)$ be an edge-weighted graph with uniform vertex labels $\ell_G$, and let $r\in V(G)$. We define initial labels by setting $\ell_G(r) \coloneq \mathsf{[*]}$, where $\mathsf{[*]}$ is a fresh label not used for any other vertex.
Then, for each iteration $t>0$, the algorithm computes a coloring
$
C^{1,r}_t \colon V(G)\to \mathbb{N}
$
by the usual \wlone{} update rule, i.e., $C^{1,r}_t$ is obtained from $C^{1,r}_{t-1}$ by aggregating the multiset of neighbor colors together with the incident edge weights, exactly as in~\cref{sec:background}. Equivalently, $C^{1,r}_0$ is the coloring induced by $\ell_G$ with $r$ labeled $\mathsf{[*]}$.

For two edge-weighted graphs $(G,w_G)$ and $(H,w_H)$ with individualized vertices $v\in V(G)$ and $w\in V(H)$, we say that \wlone{} \emph{distinguishes} the (individualized) graphs $(G,v)$ and $(H,w)$ if, when running the above refinement in parallel on $(G,v)$ and $(H,w)$, the resulting color multisets differ at some iteration (analogously to the usual \wlone{} notion of distinction). We write \wlfive{} when \wlone{} is used on individualized graphs.

\textbf{The \texorpdfstring{$1{.}1$}{1{.}1}-dimensional Weisfeiler--Leman algorithm} The \emph{$1{.}1$-dimensional Weisfeiler--Leman algorithm} \wloo~\citep{Rat+2023,Qia+2023} can be seen as an extension of the \wlfive, which tries every possible placement for the unique label $[*]$ and runs the \wlfive{} in parallel on the disjoint union of these individualized graphs. Formally, the \wloo{} does \new{not distinguish} a pair of graphs $(G,H)$ if there exists a bijection $\pi \colon V(G) \to V(H)$ such that, for $v \in V(G)$, running the \wlfive{}
in \say{parallel} on $G$, with $v$ individualized, and $H$, with $\pi(v)$
individualized, does not distinguish between the two graphs.

Observe that we can define $1$-tuple or graph-level invariants based on these \wlone{} variants.

\subsection{Separation and approximation abilities of MPNNs}
\label{subsec:sepandapprox}
\citet{Mor+2019} and \citet{Xu+2018b} established that the graph-distinguishing power of any MPNN architecture is upper bounded by the \wlone{} test. Moreover, for MPNNs with sum aggregation, \citet{Mor+2019} showed that, on any finite set of graphs, suitable parameter choices yield expressivity matching that of \wlone; see \citet{Gro+2021} and \citet{Mor+2022} for further discussion. Analogous statements can be lifted to the \wlfive{} and \wloo{} settings.

To formalize distinguishability, following \citet{Azi+2020} we express the ability of a function class to distinguish graphs via an induced equivalence relation. Let $\cG$ be a set of graphs, and let $\cF$ be a class of functions $f \colon \cG \to \mathbb{D}$ for some domain $\mathbb{D}$. We define the equivalence relation $\rho_\cG(\cF)$ on $\cG$ by
\begin{equation*}
(G,H) \in \rho_\cG(\cF) \iff f(G)=f(H)\ \text{for all } f\in\cF.
\end{equation*}
In case $(G,H)\in\rho_\cG(\cF)$ we say that $\cF$ cannot distinguish $G$ and $H$.
If $\cF=\{f\}$ is a singleton, we write $\rho(f)$ instead of $\rho(\{f\})$.

We extend the definition to $k$-tuple invariants as follows. Let $V_k(\cG)$
denote the set of pairs $(G,\vec v)$ with $G\in\cG$ and $\vec v\in V(G)^k$. Furthermore, we let $V_0(\cG)=\cG$. For a given $k\in\Nb$ and for a class $\cF$ of functions $f \colon V_k(\cG) \to \mathbb{D}$, for some domain $\mathbb{D}$, define the equivalence relation $\rho_{V_k(\cG)}(\cF)$ on $V_k(\cG)$ by
\begin{equation*}
(G,\vec v,H,\vec w)\in \rho_{V_k(\cG)}(\cF) \iff f(G,\vec v)=f(H,\vec w)\ \text{for all } f\in\cF
\end{equation*}
We can now rephrase the various notions of distinguishability from~\cref{subsec:wlandvariants}, as follows. Let $\cG$ be a set of graphs.
\begin{align*}
\rho_{V_1(\cG)}(\text{\wlone})
&=\bigl\{(G,v,H,w)\in (V_1(\cG))^2\mid C_\infty^{1}(v)=C_\infty^{1}(w)\bigr\},\\
\rho_{\cG}(\text{\wlone})
&=\bigl\{(G,H)\in\cG^2\mid \exists\pi:V(G)\to V(H),\\
&\hspace*{6cm}\forall v\in V(G): (G,v,H,\pi(v))\in \rho_{V_1(\cG)}(\text{\wlone})\bigr\},\\
\rho_{V_2(\cG)}(\text{\wlfive})
&=\bigl\{(G,(r,v),H,(s,w))\in (V_2(\cG))^2\mid C^{1{.}5,r}_\infty(v)=C^{1{.}5,s}_\infty(w)\bigr\},\\
\rho_{V_1(\cG)}(\text{\wlfive})
&=\bigl\{(G,r,H,s)\in (V_1(\cG))^2\mid \exists \pi:V(G)\to V(H),\ \forall v\in V(G),\\
 &\hspace*{6.2cm} (G,(r,v),H,(s,\pi(v)))\in\rho_{V_2(\cG)}(\text{\wlfive})\bigr\},\\
\rho_{\cG}(\text{\wloo})
 &=\bigl\{(G,H)\in\cG^2\mid \exists\pi:V(G)\to V(H),\ \forall v\in V(G),\ (G,v,H,\pi(v))\in \rho_{V_1(\cG)}(\text{\wlfive})\bigr\},
\end{align*}
where $\pi:V(G)\to V(H)$ denotes a bijection. We remark that for \wlfive, we interpret an element $(G,v)\in V_1(\cG)$ as graphs $G\in\cG$ in which $v$ is individualized.

Given any $\mathsf{alg}\in\{\wlone,\wlfive,\wloo\}$ and
class of functions $\cF:V_k(\cG)\to\mathbb{D}$ for some $k\in\Nb$ and domain $\mathbb{D}$, we say that $\cF$ is \new{\textsf{alg}-simulating} if and only if
\begin{equation*}
\rho_{V_k(\cG)}(\cF)=\rho_{V_k(\cG)}(\textsf{alg}).
\end{equation*}
That is, the distinguishing power of $\cF$ is precisely that of $\textsf{alg}$.
With this notation, the expressiveness result from \citet{Mor+2019} can be stated as
\[
\rho_\cG(\textsf{MPNN})=\rho_\cG(\text{\wlone})\quad\text{and}\quad
\rho_{V_1(\cG)}(\textsf{MPNN})=\rho_{V_1(\cG)}(\text{\wlone}),
\]
where we abuse notation and let \textsf{MPNN} denote both the class of all graph-level MPNNs (for $\rho_\cG(\textsf{MPNN})$) and the class of all node-level MPNNs (for $\rho_{V_1(\cG)}(\textsf{MPNN})$), see also~\cref{sec:mpnns}.
As noted above, one can similarly verify the existence of MPNN variants that are \wlfive-simulating (at rooted/tuple-level) and \wloo-simulating (at graph-level).

Finally, \citet{Azi+2020} and \citet{geerts2022} showed that, under mild regularity assumptions, separation entails approximation. We remark that these approximations require fixing the order of the underlying graphs and restricting features to a compact domain.

\begin{proposition}\label{prop:septoapprox}
Let $\cG$ be the set of attributed graphs with $n$ vertices and node/edge
attributes taking values in a compact set of $\R^d$.
Let $\textsf{alg}\in\{\wlone, \wlfive, \wloo\}$, and let $g\colon V_k(\cG)\to \R$
be a $k$-tuple invariant such that $\rho_{V_k(\cG)}(\textsf{alg}) \subseteq \rho_{V_k(\cG)}(g)$.
Assuming standard technical conditions on a class of MPNNs that simulates
$\rho_{V_k(\cG)}(\textsf{alg})$, for every $\epsilon>0$ there exists an MPNN $m$ such that
\[
\sup_{(G,\vec v)\in V_k(\cG)} |g(G,\vec v)-m(G,\vec v)| < \epsilon.
\]
\end{proposition}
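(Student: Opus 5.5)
The plan is to follow the classical ``separation implies approximation'' route of \citet{Azi+2020} and \citet{geerts2022}: use a Stone--Weierstrass-type theorem for functions invariant under an equivalence relation. First fix $n$ and the compact attribute set $K\subset\R^d$. The space $V_k(\cG)$ can then be identified with a compact subset of a Euclidean space, namely adjacency/weight tensors with entries in $K$ together with a choice of $k$ indices, quotiented by the action of the symmetric group $S_n$. The quotient of a compact set by a finite group is compact Hausdorff, and it carries the metric $\min_{\sigma\in S_n}\|\cdot-\sigma\cdot\|$. I will also assume, as among the ``standard technical conditions,'' that $g$ and all MPNNs in the class are continuous on this compact space.

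Second, the relation $\rho\coloneqq\rho_{V_k(\cG)}(\textsf{alg})$ has to be shown closed in the product topology. For \wlone{} this follows because, on $n$-vertex graphs, the coloring stabilizes after at most $n$ rounds. Indistinguishability is then equivalent to the agreement of finitely many continuous invariants, so $\rho$ is the preimage of a closed set. The same argument applies to \wlfive{} and \wloo{}. It follows that the quotient $Q\coloneqq V_k(\cG)/\rho$ is compact Hausdorff. The hypothesis $\rho\subseteq\rho_{V_k(\cG)}(g)$ means $g$ factors through $Q$ as a continuous function $\tilde g$.

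Third, take the MPNN class $\mathcal M$ that simulates $\textsf{alg}$, so $\rho_{V_k(\cG)}(\mathcal M)=\rho$. Every $m\in\mathcal M$ descends to $Q$, and by the simulation property these descended functions separate the points of $Q$. The remaining technical conditions should ensure two things. The first is that $\mathcal M$ contains the constants. The second is that $\mathcal M$ is closed under sums and products, or at least that its uniform closure is: one can run two MPNNs in parallel by concatenating features and apply an MLP readout, and MLPs approximate $(a,b)\mapsto a+b$ and $(a,b)\mapsto ab$ uniformly on compacts. Under these conditions, the uniform closure of $\mathcal M$ restricted to $Q$ is a subalgebra of $C(Q)$ that separates points and contains the constants. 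The Stone--Weierstrass theorem then gives an $m$ with $\sup_Q|\tilde g-m|<\epsilon$, and lifting back to $V_k(\cG)$ yields the claim.

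The main obstacle I expect is the second step together with the algebra-closure part of the third step. Proving closedness of $\rho$ requires continuity of the WL invariants in real-valued attributes. Discrete relabeling is not continuous, so I would instead use the fact that a finite family of continuous MPNNs already realises $\rho$; this in turn is a consequence of the simulation assumption combined with compactness. The algebra closure additionally needs a uniform approximation argument carried through the composed network. I would treat both points as the technical conditions assumed in the statement, citing \citet{geerts2022} for the detailed verification.
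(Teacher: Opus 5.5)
Your proposal is correct and is essentially the paper's argument. The paper invokes the generalized Stone--Weierstrass theorem as packaged in \citet[Theorem~6.1]{geerts2022} and \citet[Lemma~32]{Azi+2020}, after noting that the (continuous) MPNN class is concatenation-closed and FNN-closed; your quotient-plus-classical-Stone--Weierstrass argument simply unpacks that theorem, and you rightly make continuity of $g$ explicit, as the paper's corollary does. One simplification: closedness of $\rho$ needs neither a finite realising family nor compactness, since $\rho=\rho_{V_k(\cG)}(\mathcal M)$ is the intersection over continuous $m\in\mathcal M$ of the closed sets $\{(x,y)\mid m(x)=m(y)\}$, and Hausdorffness of $Q$ follows directly from $\mathcal M$ separating its points.
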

In other words, whenever an invariant  $\mathsf{alg}$ has enough information to distinguish everything that matters for the function $g$ of interest, an $\mathsf{alg}$-simulating class of MPNNs can approximate
$g$ arbitrarily well. We refer to \cref{app:approxmation} for details.

\paragraph{Feedforward neural networks}
Let $J \in \N$ and $(d_0, \dots, d_J) \in \N ^{J+1}$.
Given weights $\mathcal{W} = (\vec{W}^1, \dots , \vec{W}^J )\in \prod_{i =1}^J  \R^ {d_i \times d_{i-1}} \eqqcolon \vec{\Theta}_W$, biases $\mathcal{B} = (\vec{b}^1, \dots , \vec{b}^J ) \in \prod_{i =1}^J  \R^ {d_i } \eqqcolon\vec{\Theta}_b $, and $ j \in [J]_0$
define the \new{feed-forward neural network} (FNN)  with parameters  $(\mathcal{W}, \mathcal{B})$ up to layer $j$, as the map $\FNN^\tup{J}_{j}(\mathcal{W}, \mathcal{B}) \colon \Rb^{d_0} \to \Rb^{ d_j}$ such that
\begin{equation*}
	\FNN^\tup{J}_{j}(\mathcal{W}, \mathcal{B})(\vec{x})
    \coloneq \sigma \Bigl( \vec{W}^{(j)}\cdots  \sigma \mleft( \vec{W}^{(2)} \sigma \mleft(\vec{W}^{(1)} \vec{x} + \vec{b}^{(1)} \mright)  + \vec{b}^{(2)} \mright) \cdots  + \vec{b}^{(j)} \Bigr) \in \Rb^{d_j},
\end{equation*}
for $\vec{x} \in \Rb^{ d_0}$. Here, the function $\sigma \colon \Rb \to \Rb$ is an \new{activation function}, applied component-wisely, e.g., a \emph{rectified linear unit} (ReLU), where $\sigma(x) \coloneq \max(0,x)$.
Further we will write $\vec{\theta }\coloneq (\mathcal{W}, \mathcal{B}) \in \vec{ \Theta} \coloneq \vec{\Theta}_W \times \vec{\Theta}_b$ to denote the whole parameter set.
In case  $j=J$, we denote the $J$-layer \new{feed-forward neural network} by
\begin{equation*}
	\FNN^\tup{J}(\vec{\theta})(\vec{x})
    \coloneq \FNN^\tup{J}_{J}(\vec{\theta})(\vec{x}).
\end{equation*}

\subsection{Considered graph problems}\label{sec:problems}

In the following, we formally introduce the studied graph problems, namely, the \new{single-source shortest path} (SSSP) problem, the \new{shortest path-finding} (SPF) problem, the \new{minimum spanning tree} (MST) problem, and the \new{knapsack problem}.

\new{Solving the SSSP problem}, given an edge-weighted graph $(G,w_G)$ and a \new{source vertex} $s \in V(G)$, amounts to finding the \new{shortest path} from the \new{source vertex} $s$ to all other vertices in the graph $G$. That is, for a vertex $v \in V(G)$, we aim to find a path
\begin{equation*}
    P^*_G(s,v) \coloneqq \arg\min_{P \in \cP_G(s,v)} \sum_{e \in P} w_G(e).
\end{equation*}
The \new{cost} $\textsf{cost}_G(P)$ of a (shortest) path $P$ is $\sum_{e \in P} w_G(e)$. Given an edge-weighted graph $(G,w_G)$, \new{determining the cost of the SSSP problem} amounts to determining the cost of a shortest path $P^*_G(s,v)$ from the source vertex $s$ to $v$, for all $v \in V(G)$.

Similarly, \new{solving the SPF problem}, given an edge-weighted graph $(G,w_G)$ a source vertex $s \in V(G)$, a \new{target vertex} $t \in V(G)$, amounts to finding the shortest path from the source vertex $s$ to the target vertex $t$. That is, we aim to find a path
\begin{equation*}
    P^*_G(s,t) \coloneqq \arg\min_{P \in \cP_G(s,t)} \sum_{e \in P} w_G(e).
\end{equation*}
Given an edge-weighted graph $(G,w_G)$ \new{determining the cost of the SPF problem} amounts to determining the cost of a shortest path $P^*_G(s,t)$ from the source vertex $s$ to the target vertex $t$.

\new{Solving the MST problem}, given an edge-weighted graph $(G,w_G)$, amounts to finding a tree over all vertices in the graph $G$ with minimum overall edge weight, the \new{minimum spanning tree}. That is, we aim to find a tree
\begin{equation*}
    T^*_G \coloneqq \arg\min_{\substack{V(T) = V(G) \\ T \text{ is a tree.}}} \sum_{e \in E(T)} w_G(e).
\end{equation*}
The \new{cost} $\textsf{cost}(T)$ of a (minimum) spanning tree $T$ is $\sum_{e \in T(E)} w_G(e)$. Given an edge-weighted graph $(G,w_G)$, \new{determining the cost of the MST} amounts to determining the cost of a minimum spanning tree.

Finally, we consider the \new{knapsack problem}. Given a finite set of items $I = \{1,\dots,n\}$, each item $i \in I$ is associated with a \new{value} $v_i \in \Rb_{>0}$ and a \new{weight} $w_i \in \Rb_{>0}$. Given a \new{capacity} $C \in \Rb_{>0}$, the knapsack problem consists of selecting a subset of items whose total weight does not exceed $C$ and whose total value is maximized. Formally, we aim to find a subset
\begin{equation*}
    S^* \coloneqq \arg\max_{\substack{S \subseteq I \\ \sum_{i \in S} w_i \le C}} \sum_{i \in S} v_i.
\end{equation*}
The \new{cost} (or value) of a solution $S$ is given by $\sum_{i \in S} v_i$. Given $(\{(v_i,w_i)\}_{i\in I}, C)$, \new{determining the cost of the knapsack problem} amounts to determining the maximum achievable total value under the capacity constraint. While the knapsack problem is not a graph problem in its standard formulation, it admits a classical reduction to a shortest-path problem on a suitably constructed directed graph; see~\cref{sec:algorithms_with_finite_alg_compl}.

\section{Proof of \cref{thm:specific_regularization}}

In this appendix, we prove \cref{thm:specific_regularization}. We begin with a lemma showing that, for finite Lipschitz classes equipped with certificates, one can control the deviation from a given Lipschitz-continuous target function.

\begin{lemma}\label{lem:just_triangle}
Consider a hypothesis class $\cF_{\vec{\Theta}}\subset \Rb^{\cX}$ and a target function $f^* \in \Rb^{\cX}$. Assume $f^*$ is Lipschitz continuous with respect to $d_{\cX}$ with Lipschitz constant $B_{f^*}$.  Let $r > 0$, $\varepsilon > 0$ and assume  $\mathcal F_{\vec{\Theta}}$ is a finite Lipschitz class with certificates $\{B_\vec{\theta}\}_{\vec{\theta} \in \vec{\Theta}}$ and (pseudo-)metric $d_{\cX}$. Consider the regularized loss $\mathcal{L}_X (f_{\vec{\theta}})$ with regularization term $\mathcal{L}^{\mathrm{reg}} (f_{\vec{\theta}})$.
Let $X \coloneqq \{x_i\}_{i=1}^n$ be an $r$-cover of $\mathcal{X}$ of minimum cardinality, i.e., $n = \mathcal{N}(\mathcal{X}, d_{\cX}, r)$.
If the regularized loss satisfies $\mathcal{L}_X (f_{\vec{\theta}})\le \varepsilon$, then
\begin{equation*}
\norm{f_{\vec{\theta}}- f^*}_\infty
\le ( B_{\vec{\theta}} + B_{f^*}) r + \mathcal{N}(\mathcal{X}, d_{\cX}, r)\,\varepsilon.
\end{equation*}
\end{lemma}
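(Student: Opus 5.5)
Fix an arbitrary $x\in\cX$. Since $X=\{x_i\}_{i=1}^n$ is an $r$-cover of $(\cX,d_{\cX})$, we can choose an index $i$ with $d_{\cX}(x,x_i)\le r$. We then split the error with the triangle inequality, exactly as in the proof sketch of \cref{thm:specific_regularization}:
\begin{equation*}
|f_{\vec{\theta}}(x)-f^*(x)|
\le |f_{\vec{\theta}}(x)-f_{\vec{\theta}}(x_i)| + |f_{\vec{\theta}}(x_i)-f^*(x_i)| + |f^*(x_i)-f^*(x)|.
\end{equation*}
Each of the three terms is bounded separately.

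For the first term, the definition of a finite Lipschitz class says that $f_{\vec{\theta}}$ is Lipschitz with minimal constant $M_{\vec{\theta}}$. The certificate satisfies $M_{\vec{\theta}}\le B_{\vec{\theta}}$, so the first term is at most $B_{\vec{\theta}}\, d_{\cX}(x,x_i)\le B_{\vec{\theta}} r$. If $M_{\vec{\theta}}$ is only an infimum, the inequality $|f_{\vec{\theta}}(x)-f_{\vec{\theta}}(x')|\le M_{\vec{\theta}} d(x,x')$ still holds by taking limits, so this causes no difficulty. The third term is at most $B_{f^*} r$, by the assumed Lipschitz continuity of $f^*$ with constant $B_{f^*}$.

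The middle term is where the loss enters. The regularizer is nonnegative, since $\mathcal{L}^{\mathrm{reg}}$ maps into $\Rb^+$. Hence
\begin{equation*}
\mathcal{L}_X^{\mathrm{emp}}(f_{\vec{\theta}})\le \mathcal{L}_X(f_{\vec{\theta}})\le\varepsilon.
\end{equation*}
The empirical loss is an average of nonnegative terms over $n$ points, so any single term is at most $n$ times the average:
\begin{equation*}
|f_{\vec{\theta}}(x_i)-y_i|\le \sum_{j=1}^n |f_{\vec{\theta}}(x_j)-y_j| = n\,\mathcal{L}_X^{\mathrm{emp}}(f_{\vec{\theta}})\le n\varepsilon.
\end{equation*}
Since $y_i=f^*(x_i)$ and $n=\mathcal{N}(\cX,d_{\cX},r)$, this is exactly the factor $\mathcal{N}(\cX,d_{\cX},r)\,\varepsilon$ in the statement. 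The main point to be careful about is this conversion from an averaged loss to a pointwise bound; the covering number appears precisely because of it.

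Combining the three bounds gives the claimed inequality for this $x$. Because the bound is independent of $x$, taking the supremum over $x\in\cX$ yields the stated bound on $\norm{f_{\vec{\theta}}-f^*}_\infty$.
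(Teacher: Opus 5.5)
Your proposal is correct and follows essentially the same argument as the paper: the three-term triangle inequality around a cover point $x_i$, the Lipschitz bounds via the certificate $B_{\vec{\theta}}$ and $B_{f^*}$, and the conversion $\max_j |f_{\vec{\theta}}(x_j)-f^*(x_j)| \le \sum_j |f_{\vec{\theta}}(x_j)-f^*(x_j)| = n\,\mathcal{L}_X^{\mathrm{emp}}(f_{\vec{\theta}}) \le n\varepsilon$ using nonnegativity of the regularizer. Your remark about $M_{\vec{\theta}}$ possibly being only an infimum is a small extra care that the paper leaves implicit.
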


\begin{proof}
Since $\mathcal{L}_X (f_{\vec{\theta}}) \le \varepsilon$ and $\mathcal{L}^{\mathrm{emp}}_X(f_{\vec{\theta}})\ge 0$, we have $\mathcal{L}^{\mathrm{reg}}(f_{\vec{\theta}}) \le \mathcal{L}_X(f_{\vec{\theta}}) \le \varepsilon$.

Since $X$ is an $r$-cover of $\mathcal{X}$, for any $x \in \mathcal{X}$ there exists $x_i \in X$ with $d_{\cX}(x, x_i) \le r$. By the triangle inequality and Lipschitz continuity:
\begin{align*}
|f_{\vec{\theta}}(x) - f^*(x)| &\le |f_{\vec{\theta}}(x) - f_{\vec{\theta}}(x_i)| + |f_{\vec{\theta}}(x_i) - f^*(x_i)| + |f^*(x_i) - f^*(x)|\\
&\le B_{\vec{\theta}} \cdot d_{\cX}(x, x_i) + |f_{\vec{\theta}}(x_i) - f^*(x_i)| + B_{f^*} \cdot d_{\cX}(x, x_i)\\
&\le (B_{\vec{\theta}} + B_{f^*}) \cdot r + |f_{\vec{\theta}}(x_i) - f^*(x_i)|.
\end{align*}

By definition of $y_i$, we have
\begin{equation*}
\frac{1}{N}\sum_{i=1}^{N} |f_{\vec{\theta}}(x_i) - f^*(x_i)|
=\mathcal{L}_X^{\mathrm{emp}}(f_{\vec{\theta}})
\le \mathcal{L}_X(f_{\vec{\theta}})
\le \varepsilon.
\end{equation*}
Thus,
\begin{equation*}
\max_{1 \le j \le N} |f_{\vec{\theta}}(x_j) - f^*(x_j)|
\le \sum_{j=1}^{N} |f_{\vec{\theta}}(x_j) - f^*(x_j)|
\le N\varepsilon.
\end{equation*}
Therefore, for the $x_i$ with $d_{\cX}(x, x_i) \le r$, we have $|f_{\vec{\theta}}(x_i) - f^*(x_i)| \le N\varepsilon$, and
\begin{align*}
|f_{\vec{\theta}}(x) - f^*(x)| &\le (B_{\vec{\theta}} + B_{f^*}) r + N\varepsilon,
\end{align*}
for all $x \in \mathcal{X}$.
\end{proof}

We are now ready to prove the main theorem. For completeness, we restate it below in a formal form.

\begin{theorem}[\cref{thm:specific_regularization} in the main text]
\label{app:thm:specific_regularization}
Let $\mathcal{F}_{\vec{\Theta}}$ be a finite Lipschitz class through the (pseudo)metric $d_{\cX}$, and let $f^*$ be a target function. Assume there is a constant $B_{f^*} \in \Rb^{+}$ such that $f^*$ is Lipschitz continuous regarding $d_{\cX}$ with Lipschitz constant $B_{f^*}$.
Moreover, suppose that, for every $\delta > 0$, there exists $\vec{\theta}$ satisfying $\sup_{x \in \mathcal{X}} |f_{\vec{\theta}}(x) - f^*(x)| < \delta$ and the certificate $B_{\vec{\theta}} \le B_{f^*}$.

For $X\coloneqq\{x_1,\ldots,x_n\}\subset\cX$ and given dataset $\{(x_i, f^*(x_i))\}_{i=1}^{n}$, and $\eta > 0$, consider the regularized loss $\mathcal{L}_X (f_{\vec{\theta}})$ with $\mathcal{L}^{\mathrm{reg}}_X (f_{\vec{\theta}}) =  \eta \mathrm{ReLU}( B_{\vec{\theta}} - B_{f^*})$, which satisfies $\inf_{\vec{\theta} \in \vec{\Theta}} \{ \mathcal{L}_X(f_{\vec{\theta}}) \} = 0$.
Then, for $\varepsilon \in (0,1)$, there exists $r > 0$ and $\varepsilon' > 0$  such that if we take a dataset $X$ that is an $r$-cover of minimum cardinality with $|X| = K_r = \mathcal{N}(\mathcal{X}, d_{\cX}, r)$, then
\begin{equation*}
\mathcal{L}_X (f_{\vec\theta}) < \varepsilon',
\end{equation*}
implies
\begin{equation*}
|f_{\vec\theta}(x) - f^*(x)| < \varepsilon, \quad \text{for all } x \in \mathcal{X}.
\end{equation*}
In particular, the above conclusion holds for $r = \frac{\varepsilon}{6(1+B_{f^*})}$ and any $\varepsilon' < \min\mleft\{ \frac{\varepsilon}{3\mathcal{N}(\mathcal{X}, d, r)}, \frac{\varepsilon \eta}{6(1+B_{f^*})} \mright\}$; in this case, the required number of samples is $K_r = \mathcal{N}(\mathcal{X}, d_{\cX}, r)$.
\end{theorem}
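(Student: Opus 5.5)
The plan is to reduce everything to \cref{lem:just_triangle} and then tune $r$ and $\varepsilon'$. First I would note that the infimum of the regularized loss is indeed zero, since this makes the hypothesis $\mathcal{L}_X(f_{\vec\theta})<\varepsilon'$ satisfiable for every $\varepsilon'>0$. By assumption, for every $\delta>0$ there is $\vec\theta$ with $\sup_x|f_{\vec\theta}(x)-f^*(x)|<\delta$ and $B_{\vec\theta}\le B_{f^*}$. For this $\vec\theta$ the regularizer vanishes and the empirical loss is below $\delta$. So the infimum is $0$, and the statement is not vacuous.

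Next, fix $\varepsilon\in(0,1)$, set $r=\varepsilon/(6(1+B_{f^*}))$, and let $X$ be a minimum $r$-cover, which is finite by \cref{def:finite_uniform_approx}. Suppose $\mathcal{L}_X(f_{\vec\theta})<\varepsilon'$. Both summands of the loss are nonnegative, so each is below $\varepsilon'$. The regularizer bound gives $\eta\,\mathrm{ReLU}(B_{\vec\theta}-B_{f^*})<\varepsilon'$, hence
\[
B_{\vec\theta}<B_{f^*}+\varepsilon'/\eta .
\]
Feeding this into \cref{lem:just_triangle} (with $\varepsilon'$ in place of its $\varepsilon$) yields
\[
\norm{f_{\vec\theta}-f^*}_\infty\le \bigl(2B_{f^*}+\varepsilon'/\eta\bigr)r+\mathcal{N}(\cX,d_{\cX},r)\,\varepsilon'.
\]

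Then I bound the three contributions separately:
\begin{itemize}
\item $2B_{f^*}r\le 2(1+B_{f^*})r=\varepsilon/3$.
\item Since $\varepsilon'<\varepsilon\eta/(6(1+B_{f^*}))$, the term $(\varepsilon'/\eta)\,r$ is at most $\varepsilon^2/(36(1+B_{f^*})^2)<\varepsilon/3$, using $\varepsilon<1$ and $B_{f^*}\ge 0$.
\item Since $\varepsilon'<\varepsilon/(3\mathcal{N}(\cX,d,r))$, the term $\mathcal{N}(\cX,d_{\cX},r)\,\varepsilon'$ is strictly below $\varepsilon/3$.
\end{itemize}
Summing gives a strict bound $<\varepsilon$ uniformly in $x$.

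The only delicate point is the order of choices. The covering number depends on $r$, so $r$ must be fixed first, depending only on $\varepsilon$ and $B_{f^*}$. Only afterwards can $\varepsilon'$ be chosen, depending on $\mathcal{N}(\cX,d_{\cX},r)$ and $\eta$. One must also make sure the regularizer controls $B_{\vec\theta}$ only up to the slack $\varepsilon'/\eta$, which is absorbed by the second constraint on $\varepsilon'$. Beyond this bookkeeping I expect no real obstacle.
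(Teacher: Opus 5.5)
Your proposal is correct and matches the paper's proof: the same choice $r=\varepsilon/(6(1+B_{f^*}))$, the same two constraints on $\varepsilon'$, and the same reduction to \cref{lem:just_triangle} using $B_{\vec\theta}<B_{f^*}+\varepsilon'/\eta$. The only difference is cosmetic. You bound the three error terms by $\varepsilon/3$ each, whereas the paper tracks them more tightly and arrives at $25\varepsilon/36$.
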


\begin{proof}
We recall that we assume that, for every $\delta > 0$, there exists $\vec{\theta}$ satisfying $\sup_{x \in \mathcal{X}} |f_{\vec{\theta}}(x) - f^*(x)| < \delta$ and the certificate $B_{\vec{\theta}} \le B_{f^*}$. This implies that there exists a sequence $f_{\vec{\theta}_k}$ such that $\norm{f_{\vec{\theta}_k}-f^*}_\infty \to 0$ and $B_{\vec{\theta}_k} \le B_{f^*}$.

Note
\begin{align*}
\mathcal{L}^{\mathrm{reg}} (f_{\vec{\theta}})
=\eta \mathrm{ReLU}( B_{\vec{\theta}} - B_{f^*})
\le B,
\end{align*}
implies
\begin{align*}
B_{\vec{\theta}}
\le \frac{1}{\eta} B + B_{f^*} .
\end{align*}

Choose $r = \frac{\varepsilon}{6(1+B_{f^*})}$, let $K_r = \mathcal{N}(\mathcal{X}, d_{\cX}, r)$, and choose $\varepsilon' < \min\mleft\{ \frac{\varepsilon}{3K_r}, \frac{\varepsilon \eta}{6(1+B_{f^*})} \mright\}$. By the assumption on $\mathcal{F}_{\vec{\theta}}$, there exists an $r$-cover $X$ with $|X| = K_r$.

\medskip
Let $L \coloneqq \inf_{\vec{\theta} \in \vec{\Theta}} \{\mathcal{L}_X(f_{\vec{\theta}})\}$. We claim that $L=0$. Indeed, consider the sequence $\{f_k\}$ and $\{B_{\vec\theta_k}\}$ mentioned earlier.
Then $B_{\vec{\theta}_k} \le B_{f^*}$ and thus $\mathcal{L}^{\mathrm{reg}}(f_{\vec{\theta}_k})=0$.
Further $\mathcal{L}_X^{\mathrm{emp}}(f_{\vec{\theta}_k}) \le \norm{f^* - f_{\vec{\theta}_k}}_\infty \to 0$.
Therefore
\begin{equation*}
L
=
\inf_{\vec{\theta} \in \vec{\Theta}} \{\mathcal{L}_X(f_{\vec{\theta}})\}
\le \lim_{k \to \infty} \mathcal{L}_X(f_{\vec{\theta}_k})
=0.
\end{equation*}

Now we apply Lemma~\ref{lem:just_triangle} to the set $X=\{x_i\}_{i=1}^{K_r}$,
which is an $r$-cover by construction.
We consider the dataset $\{(x_i,f^*(x_i))\}_{i=1}^{K_r}$.
By assumption, $\mathcal{L}_X(f_{\vec{\theta}}) < \varepsilon'$. Hence, we have $\eta \mathrm{ReLU}(B_{\vec{\theta}} - B_{f^*}) < \varepsilon'$, which implies $B_{\vec{\theta}} < B_{f^*} + \frac{\varepsilon'}{\eta} \le B_{f^*} + \frac{\varepsilon}{6(1+B_{f^*})}$.

Thus by Lemma~\ref{lem:just_triangle},
\begin{align*}
\norm{f_{\vec{\theta}}- f^*}_\infty
&\le (B_{\vec{\theta}} + B_{f^*}) r + K_r \varepsilon'\\
&\le \mleft(B_{f^*} + \frac{\varepsilon}{6(1+B_{f^*})} + B_{f^*}\mright) \cdot \frac{\varepsilon}{6(1+B_{f^*})} + K_r \cdot \frac{\varepsilon}{3K_r}\\
&= \mleft(2B_{f^*} + \frac{\varepsilon}{6(1+B_{f^*})}\mright) \cdot \frac{\varepsilon}{6(1+B_{f^*})} + \frac{\varepsilon}{3}\\
&= \frac{2B_{f^*} \varepsilon}{6(1+B_{f^*})} + \frac{\varepsilon^2}{36(1+B_{f^*})^2} + \frac{\varepsilon}{3}.
\end{align*}

We now assume $\varepsilon < 1$ so that $\frac{\varepsilon^2}{36(1+B_{f^*})^2} \le \frac{\varepsilon}{36}$, and observe that $\frac{B_{f^*}}{1+B_{f^*}} < 1$. From these inequalities, it follows that
\begin{align*}
\norm{f_{\vec{\theta}}- f^*}_\infty
&< \frac{\varepsilon}{3} \cdot \frac{B_{f^*}}{1+B_{f^*}} + \frac{\varepsilon}{3} + \frac{\varepsilon}{36}\\
&< \frac{\varepsilon}{3} + \frac{\varepsilon}{3} + \frac{\varepsilon}{36}\\
&= \frac{2\varepsilon}{3} + \frac{\varepsilon}{36}\\
&= \frac{24\varepsilon}{36} + \frac{\varepsilon}{36} = \frac{25\varepsilon}{36} < \varepsilon.
\end{align*}

The required number of samples is $K_r = \mathcal{N}(\mathcal{X}, d_{\cX}, r)$ where $r = \frac{\varepsilon}{6(1+B_{f^*})}$.
\end{proof}

\section{Learning non-realizable functions}
\label{app:non-lip functions}

This appendix records a variant of \Cref{thm:specific_regularization} that applies to a larger class of target functions, particularly functions that cannot be approximated with a bounded Lipschitz constant. The approximation properties of the target are
summarized by a \emph{certificate profile}, which captures how large a
certificate budget is required to approximate the target to a given accuracy.

\paragraph{Certificate profile.}
Let $(\cX,d_{\cX})$ be a pseudo-metric space and let
$\cF_{\Theta}=\{f_{\vec\theta}:\cX\to\Rb\mid \vec\theta\in\Theta\}$ be a hypothesis class.
Assume that for each $\vec{\theta}\in\Theta$ we can compute a certificate
$B_{\vec\theta}\in\Rb_+$ such that $f_{\vec\theta}$ is $B_{\vec\theta}$-Lipschitz with respect to $d_{\cX}$.
For a target function $f^*:\cX\to\Rb$ and $\varepsilon>0$, define
\begin{equation}
\label{eq:inverse_certificate_profile}
\widetilde B_{f^*}(\varepsilon)
\;\coloneq\;
\inf\Bigl\{
B\ge 0\ \Bigm|\ \exists \vec\theta\in\Theta:\ \|f_{\vec\theta}-f^*\|_\infty\le \varepsilon
\ \text{and}\ B_{\vec\theta}\le B
\Bigr\}.
\end{equation}
Note that $\widetilde B_{f^*}(\varepsilon)$ may be finite  even if $f^*\notin\cF_\Theta$.

\begin{lemma}
\label{lem:inverse_certificate_profile_learning}
Let $f^*:\cX\to\Rb$ be a target and fix $\varepsilon\in(0,1)$, $\eta>0$, and
$\varepsilon'>0$.
Set
\begin{equation}
\label{eq:inverse_profile_radius}
r
\;\coloneq\;
\frac{\varepsilon}{4\bigl(1+2\widetilde B_{f^*}(\varepsilon/4)+\varepsilon'/\eta\bigr)}
\;.
\end{equation}
Let $X=\{x_1,\ldots,x_K\}\subset\cX$ be a minimum-cardinality $r$-cover (so
$K=\cN(\cX,d_{\cX},r)$) and set $y_i\coloneq f^*(x_i)$. For $f_{\vec\theta}\in\cF_\Theta$, define
\begin{equation}
\label{eq:inverse_profile_regloss}
\cL_X(f_{\vec\theta})
\;\coloneq\;
\frac{1}{K}\sum_{i=1}^K |f_{\vec\theta}(x_i)-y_i|
\;+\;
\eta\,\mathrm{ReLU}(B_{\vec\theta}-\widetilde B_{f^*}(\varepsilon/4)).
\end{equation}
If $\cL_X(f_{\vec\theta})\le \varepsilon'$ and
\begin{equation}
\label{eq:inverse_profile_epsprime_condition}
\varepsilon' \;\le\; \frac{\varepsilon}{4K},
\end{equation}
then $\|f_{\vec\theta}-f^*\|_\infty \le \varepsilon$.

\smallskip
\noindent
In particular, taking $B=\widetilde B_{f^*}(\varepsilon/4)$ yields the sample size
\[
K(\varepsilon,\eta,\varepsilon')
\;=\;
\cN\!\mleft(\cX,d_{\cX},\ \frac{\varepsilon}{4\bigl(1+2(\widetilde B_{f^*}(\varepsilon/4)+\varepsilon'/\eta\bigr)}\mright).
\]
\end{lemma}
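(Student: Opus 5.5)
The plan is to rerun the cover-plus-triangle argument of \cref{lem:just_triangle}. The one change is that $f^*$ need not be Lipschitz, so I will not use any Lipschitz property of $f^*$ itself. Instead I will route through an auxiliary hypothesis $g\coloneqq f_{\vec\theta_0}$ that is uniformly close to $f^*$ and has a controlled certificate.

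\textbf{Choice of $g$.} Write $\widetilde B\coloneqq\widetilde B_{f^*}(\varepsilon/4)$. By definition \eqref{eq:inverse_certificate_profile} as an infimum, for any $\delta\in(0,1]$ there is $\vec\theta_0$ with $\|f_{\vec\theta_0}-f^*\|_\infty\le\varepsilon/4$ and $B_{\vec\theta_0}\le \widetilde B+\delta$. The infimum need not be attained, so I carry this slack $\delta$; the additive $1$ in the denominator of \eqref{eq:inverse_profile_radius} is exactly what absorbs it. Hence $g$ is $(\widetilde B+\delta)$-Lipschitz with respect to $d_{\cX}$.

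\textbf{Consequences of the loss bound.} These are extracted as in the proof of \cref{app:thm:specific_regularization}.
\begin{itemize}
\item The regularization term is at most $\varepsilon'$, which gives $B_{\vec\theta}\le \widetilde B+\varepsilon'/\eta$.
\item The empirical term is at most $\varepsilon'$. Since the average of nonnegative terms bounds each term up to the factor $K$, we get $\max_i|f_{\vec\theta}(x_i)-f^*(x_i)|\le K\varepsilon'\le\varepsilon/4$ by \eqref{eq:inverse_profile_epsprime_condition}.
\end{itemize}

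\textbf{Five-term splitting.} Fix $x\in\cX$ and choose a cover point $x_i$ with $d_{\cX}(x,x_i)\le r$. Then
\begin{align*}
|f_{\vec\theta}(x)-f^*(x)|\le{}& |f_{\vec\theta}(x)-f_{\vec\theta}(x_i)|+|f_{\vec\theta}(x_i)-f^*(x_i)|+|f^*(x_i)-g(x_i)|\\
&+|g(x_i)-g(x)|+|g(x)-f^*(x)|.
\end{align*}
The five terms are bounded as follows.
\begin{itemize}
\item The first term is at most $(\widetilde B+\varepsilon'/\eta)r$.
\item The second term is at most $\varepsilon/4$.
\item The third and fifth terms are each at most $\varepsilon/4$, by the choice of $g$.
\item The fourth term is at most $(\widetilde B+\delta)r$.
\end{itemize}
Adding up, the Lipschitz contributions total $r\,(2\widetilde B+\varepsilon'/\eta+\delta)$. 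Since $\delta\le 1$, this is at most $r\,(1+2\widetilde B+\varepsilon'/\eta)=\varepsilon/4$ by \eqref{eq:inverse_profile_radius}. The remaining three terms contribute $3\varepsilon/4$, so the total is at most $\varepsilon$.

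\textbf{Conclusion and main obstacle.} Since $x$ was arbitrary, $\|f_{\vec\theta}-f^*\|_\infty\le\varepsilon$. The sample-size statement is then just $K=\cN(\cX,d_{\cX},r)$ with this $r$. The only delicate point is the non-attainment of the infimum defining $\widetilde B_{f^*}$, which the slack $\delta\le 1$ handles. Otherwise the argument is routine bookkeeping, with the budget split into four pieces of size $\varepsilon/4$ rather than the thirds used in \cref{app:thm:specific_regularization}.
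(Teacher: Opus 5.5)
Your proof is correct and is essentially the paper's argument. Both use the same five-term triangle inequality routed through an auxiliary approximant $f_{\vec\theta^*}$ (your $g$), with the error budget split into four pieces of size $\varepsilon/4$. The one difference is in your favor: the paper simply asserts a $\vec\theta^*$ with $B_{\vec\theta^*}\le \widetilde B_{f^*}(\varepsilon/4)$, which presumes the infimum is attained. Your slack $\delta\le 1$, absorbed by the $1$ in the denominator of $r$, closes that small gap.
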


\begin{proof}

By the definition of $\widetilde B_{f^*}(\varepsilon/4)$ there exists $\vec\theta^* \in \Theta$ such that
\begin{equation}
\label{eq:inverse_profile_exists_theta_star}
\|f_{\theta^*}-f^*\|_\infty \le \varepsilon/4
\qquad\text{and}\qquad
B_{\theta^*}\le B.
\end{equation}
If $\cL_X(f_{\vec\theta})\le \varepsilon'$, then the regularizer term in
\eqref{eq:inverse_profile_regloss} satisfies
$
\eta\,\mathrm{ReLU}(B_{\vec\theta}-\widetilde B_{f^*}(\varepsilon/4)) \le \varepsilon',
$
hence $\mathrm{ReLU}(B_{\vec\theta}-\widetilde B_{f^*}(\varepsilon/4))\le \varepsilon'/\eta$ and therefore
\begin{equation}
\label{eq:inverse_profile_certificate_bound}
B_{\vec\theta} \le \widetilde B_{f^*}(\varepsilon/4)+\varepsilon'/\eta.
\end{equation}

Since $y_i=f^*(x_i)$, we have
\[
\frac{1}{K}\sum_{i=1}^K |f_{\vec\theta}(x_i)-f^*(x_i)|
\le
\cL_X(f_{\vec\theta})
\le
\varepsilon'.
\]
Hence,
\begin{equation}
\label{eq:inverse_profile_max_err_on_cover}
\max_{1\le i\le K} |f_{\vec\theta}(x_i)-f^*(x_i)|
\le
\sum_{i=1}^K |f_{\vec\theta}(x_i)-f^*(x_i)|
\le
K\,\varepsilon'.
\end{equation}

Fix $x\in\cX$ and choose $x_i\in X$ with $d_{\cX}(x,x_i)\le r$.
Using the triangle inequality,
\begin{align*}
|f_{\vec\theta}(x)-f^*(x)|
&\le
|f_{\vec\theta}(x)-f_{\vec\theta^*}(x)|
+
|f_{\vec\theta^*}(x)-f^*(x)|.
\end{align*}
The second term is at most $\varepsilon/4$ by \eqref{eq:inverse_profile_exists_theta_star}.
For the first term, Lipschitz continuity of $f_{\vec\theta}$ and $f_{\vec\theta^*}$ yields
\begin{align*}
|f_{\vec\theta}(x)-f_{\vec\theta^*}(x)|
&\le
|f_{\vec\theta}(x)-f_{\vec\theta}(x_i)|
+
|f_{\vec\theta}(x_i)-f_{\vec\theta^*}(x_i)|
+
|f_{\vec\theta^*}(x_i)-f_{\vec\theta^*}(x)|\\
&\le
(B_{\vec\theta}+B_{\vec\theta^*})\,d(x,x_i)
+
|f_{\vec\theta}(x_i)-f_{\vec\theta^*}(x_i)|\\
&\le
(B_{\vec\theta}+B_{\vec\theta^*})\,r
+
|f_{\vec\theta}(x_i)-f^*(x_i)|
+
|f^*(x_i)-f_{\vec\theta^*}(x_i)|.
\end{align*}
By \eqref{eq:inverse_profile_certificate_bound} and \eqref{eq:inverse_profile_exists_theta_star},
$B_{\vec\theta}+B_{\vec\theta^*}\le (B+\varepsilon'/\eta)+B$, and by
\eqref{eq:inverse_profile_max_err_on_cover} and \eqref{eq:inverse_profile_exists_theta_star},
\[
|f_{\vec\theta}(x_i)-f^*(x_i)| \le K\varepsilon',
\qquad
|f^*(x_i)-f_{\vec\theta^*}(x_i)| \le \varepsilon/4.
\]
Combining these bounds gives
\begin{equation}
\label{eq:inverse_profile_pre_final}
|f_{\vec\theta}(x)-f^*(x)|
\le
(B_{\vec\theta}+B_{\vec\theta^*})\,r
+
K\varepsilon'
+
\varepsilon/2.
\end{equation}
By the choice of $r$ in \eqref{eq:inverse_profile_radius},
\[
(B_{\vec\theta}+B_{\vec\theta^*})\,r
\le
\frac{(B_{\vec\theta}+B_{\vec\theta^*})\varepsilon}{4(1+2\widetilde B_{f^*}(\varepsilon/4)+\varepsilon'/\eta)}
\le \varepsilon/4,
\]
and by \eqref{eq:inverse_profile_epsprime_condition}, $K\varepsilon'\le \varepsilon/4$.
Substituting into \eqref{eq:inverse_profile_pre_final} yields
$|f_{\vec\theta}(x)-f^*(x)|\le \varepsilon$.
Since $x\in\cX$ was arbitrary, $\|f_{\vec\theta}-f^*\|_\infty\le \varepsilon$.
\end{proof}

\paragraph{Remark}
The dependence on $\eta$ and the tolerance $\varepsilon'$ appears through the term
$\varepsilon'/\eta$ in \eqref{eq:inverse_profile_radius}: weaker regularization
(smaller $\eta$) permits larger certificates $B_{\vec\theta}\le \widetilde B_{f^*}(\varepsilon/4)+\varepsilon'/\eta$,
which shrinks the admissible cover radius and increases the sample size
$\cN(\cX,d_{\cX},r)$. Conversely, stronger regularization tightens certificate control but
may increase the achievable regularized loss when an accurate approximation requires
larger budgets (as captured by $\widetilde B_{f^*}(\cdot)$).

\section{Improved learning guarantees for SSSPs algorithms}\label{sec:BF}

While the above sections derive conditions under which a finite training dataset exists, and the algorithm can be learned, here we derive a concrete, small training dataset for learning the Bellman--Ford algorithm for the SSSP problem. Our analysis significantly extends the analysis of~\citet{Ner+2025} by deriving a simpler training dataset and, unlike the former work, a differentiable regularization term.\\

In the following, we formalize the Bellman--Ford update, which corresponds
exactly to the update performed by the Bellman--Ford algorithm at each
iteration on all nodes of a graph. Our objective is to learn the $K$-fold
application of this update.

\begin{definition}[Bellman--Ford instance, update, and distance]
\label{def:BF-update}
An attributed, edge-weighted graph $G$ is called a
\emph{Bellman--Ford instance} (BF instance) if both its node labels
$a_G(v) \ge 0$, for all $v \in V(G)$, and its edge weights
$w_G(e) \ge 0$, for all $e \in E(G)$, are non-negative real-valued functions.
For notational convenience, we additionally assume that
\[
N_G(v) = N_G(v) \cup \{v\},
\qquad
w_G(v,v) = 0,
\qquad
\forall v \in V(G).
\]

We denote by $\mathcal{G}_{\text{BF}}$ the set of all BF instances.
We further define the map
$\Gamma \colon \mathcal{G}_{\text{BF}} \to \mathcal{G}_{\text{BF}}$
to be the operator that maps a BF instance $G$ to the BF instance $\Gamma(G)$
with the same vertices, edges, and edge weights as $G$, but whose node labels
are updated according to
\[
a_{\Gamma(G)}(v)
\;\coloneq\;
\min \{ a_G(u) + w_G(u,v) \colon u \in N_G(v) \},
\qquad
\forall v \in V(G).
\]
We refer to $\Gamma$ as the \emph{Bellman--Ford update}.
Note that $\Gamma(G) \in \mathcal{G}_{\text{BF}}$ for all
$G \in \mathcal{G}_{\text{BF}}$.

Given $G \in \mathcal{G}_{\text{BF}}$, we define the
\emph{$t$-step Bellman--Ford distance of a node $v$} as
\[
x_v^{(t)} \;\coloneq\; a_{\Gamma^t(G)}(v),
\]
where $\Gamma^t$ denotes the $t$-fold composition of $\Gamma$.
\end{definition}
Consequently, the learning task studied in this work is to approximate the node-level mapping induced by $\Gamma^K$ using a message-passing neural network.

\subsection{Employed MPNN architecture}\label{sec:bf_mpnn}

Consider $G \in \mathcal{G}_{BF}$.
Let $K\in \N$ denote the number of iterations of the Bellman-Ford Update we wish to learn.
We want to employ MPNNs as defined in \cref{sec:mpnns} with minimum aggregation and $m$-layer ReLU FNNs, where $m \in \N$, as update and aggregation functions.
We initialize the node representations by setting
\[
    \hb_v^\tup{0} \coloneqq a_G(v) \in \R_{\ge 0},
    \quad \text{for } v \in V(G),
\]
More precisely, we define the update and aggregation functions through
\begin{align*}
\hb_v^{(t)}
&\coloneqq
\UPD_{\vec{u}_t}^{(t)}\!\left(
    \hb_v^{(t-1)},
    \AGG_{\vec{a}_t}^{(t)}
    \Bigl\{
        \oms\bigl(
            \hb_v^{(t-1)},
            \hb_u^{(t-1)},
            w_G(v,u)
        \bigr)
        \mid u \in N(v)
        \cms
    \Bigr\}
\right)
\\[0.5em]
&\coloneqq
\FNN^{(m)}\!\bigl(\vec{\theta}_{\UPD}^{(t)}\bigr)
\Biggl(
    \min_{u \in N(v)}
    \FNN^{(m)}\!\bigl(\vec{\theta}_{\AGG}^{(t)}\bigr)
\!\left(
    \left(
    \begin{smallmatrix}
        \hb_u^{(t-1)} \\[0.2em]
        w_G(v,u)
    \end{smallmatrix}
    \right)
\right)
\Biggr),
\end{align*}
where $t \in [K]$, $v\in V(G)$, and
\begin{align*}
    \vec{\theta}_{\UPD}^{(t)}
    &\coloneqq
    (\vec{W}^{(\UPD, t,1)},\vec{b}^{(\UPD, t,1)}, \ldots, \vec{W}^{(\UPD, t,m)}, \vec{b}^{(\UPD, t,m)}),\\
    \vec{\theta}_{\AGG}^{(t)}
    &\coloneqq
    (\vec{W}^{(\AGG, t,1)},\vec{b}^{(\AGG, t,1)}, \ldots, \vec{W}^{(\AGG, t,m)}, \vec{b}^{(\AGG, t,m)}),
\end{align*}
are parameter sets, each consisting of $m$ weight matrices and bias vectors of appropriately chosen dimension, which we will specify below.
Finally, we denote the output of the MPNN after $K$ message-passing steps for vertex $v\in V(G)$ by $\hb^\tup{K}_{v}$.

\paragraph{Global Indexing and hidden dimensions}\label{par:indexing_global_layers}
Note that in the above, each weight matrix and bias vector is indexed depending on whether they belong to an update or aggregation FNN, the index $t \in [K]$ of the aggregation layer, and the layer index within the FNN. We now introduce an alternative global indexing scheme, in which each weight matrix and bias vector is labeled by its global position within the MPNN, from innermost to outermost. More precisely, let $J\coloneq 2mK$ be the number of total layers. Then define a bijection $$\phi \colon [J]_0\to \{(0)\} \cup \{(f, t,l) \mid t \in [K], l\in [m], f\in \{\AGG, \UPD\}\} $$ such that
\begin{multline*}
    (\phi(0),\phi(1),\phi(2), \ldots , \phi(J))
    \coloneqq (0, (\AGG, 1,1), \ldots , (\AGG, 1,m), (\UPD, 1,1), \ldots, \\(\UPD, 1,m), (\AGG, 2,1), \dots , (\UPD, K,m) ).
\end{multline*}
If the context allows it, we will abuse notation and write $j = \phi(j)$.
With both indices at hand, we now want to specify the dimensions of the hidden features and, by extension, those of the parameters.
Let $d_{\phi(j)}= d_j\in \N$ denote the dimension of the feature vector of the $j$-th global layer. Further let $d_{\phi(0)}=d_0=1 $ denote the input dimension.
We then require
\[\vec{b}^{j} \in \R^{d_j}, \quad
\vec{W}^{j}
\in \begin{cases}
\R^{d_{j}\times (d_{j-1}+1) } & \phi(j)\in \{(\AGG, t,1) \mid t \in [K] \}\\
\R^{d_{j}\times d_{j-1} } & \text{otherwise,}
\end{cases}\]
where for each
edge inserting layer, i.e., a layer of the type $(\AGG, t,1)$, $t \in [K]$, the input dimension is given by the sum of the feature dimension of the previous layer and the dimension of edge weights (which is equal to 1).
For our purposes, we will choose $d_0=d_J=1$. Further, we will choose all hidden dimensions to be equal to some fixed $d \in \R$, except for $j$ such that $\phi(j)\in \{(\AGG, t,m) \mid t \in [K] \}$ where we let $d_j=1$.

In the following proofs, we will primarily use the global indexing. To identify \emph{aggregation layers}, we define, for each $k \in [K]$, an index $a_k \in [J]$ such that $\phi(a_k) = (\AGG, k, m)$, i.e., $a_k$ is the last layer of the $k$-th aggregation FNN. Similarly, to identify \emph{edge-inserting layers}, we define, for each $k \in [K]$, an index
$j_k \in [J]$ such that $\phi(j_k) = (\AGG, k, 1)$, i.e., $j_k$ is the first layer
of the $k$-th aggregation FNN. For convenience, we additionally set $j_0 \coloneqq 0$.

Further, we want to split the weight matrices of the edge-inserting layers into components that act on hidden node features and on the inserted edge weights.
To this end, we define
\begin{align*}
    \vec{\tilde W}^{j_k}
    \coloneq
    \begin{pmatrix}
    \vec{W}^{j_k} _{-,1} & \dots & \vec{W}^{j_k} _{-,d_{(j_k-1)}}
    \end{pmatrix} \in \R^{d_{j}\times d_{j-1} }, \quad
    \vec{ C}^{k}
    \coloneq
    \begin{pmatrix}
    \vec{W}^{j_k} _{-,d_{(j_k-1)}+1}
    \end{pmatrix} \in \R^{d_{j}\times 1 }
\end{align*}
such that
\begin{align*}
    \vec{W}^{j_k}
    \begin{pmatrix}\hb_{u}^\tup{k-1}\\w_G(v,u)\end{pmatrix}
    =
    \vec{\tilde W}^{j_k}\hb_{u}^\tup{k-1}
    +\vec{ C}^{k} w_G(v,u).
\end{align*}
Finally to avoid unnecessary case distinctions we redefine $\vec{W}^{j_k} \coloneq \vec{\tilde W}^{j_k}$.

\paragraph{Parameter sets}\label{par:BF_param}
We define $\mathcal{B} \coloneq ( \vec{b}^\tup{1}, \dots , \vec{b}^\tup{J}) \in  \prod_{j \in [J]} \R^{d_{j} } \eqqcolon \vec{\Theta}_b$ as the vector of biases,
$\mathcal{C} \coloneq ( \vec{C}^\tup{1}, \dots , \vec{C}^\tup{K}) \in  \prod_{k \in [K]}\R^{d_{j_k}\times 1 }\eqqcolon \vec{\Theta}_C$ as  the vector of edge inserting weight  matrices and
$\mathcal{W} \coloneq ( \vec{W}^\tup{1}, \ldots , \vec{W}^\tup{J}) \in  \prod_{j \in [J]} \R^{d_{j}\times d_{j-1} }\eqqcolon \vec{\Theta}_W$ as the vector of weight matrices acting on node features.
Further we define $\vec{\theta}\coloneq  (\mathcal{W}, \mathcal{C}, \mathcal{B})$ to denote the collection of all parameters and
$\vec{\Theta}_{\text{BF}} \coloneq \vec{\Theta}_W \times \vec{\Theta}_C\times  \vec{\Theta}_b$ to denote the parameter space such that $\vec{\theta} \in \vec{\Theta}_{\text{BF}}$.

\subsection{Loss function}

Following \cref{subsec:sepandapprox}, let
\[
X \subset \mathcal{X} \coloneq V_1(\mathcal{G}_{\text{BF}}) =
\{(G,v) : G \in \mathcal{G}_{\text{BF}},\, v \in V(G)\}
\]
denote the training set used to learn a node-level invariant, and let
\(N \coloneq |X|\) be its cardinality.

For notational convenience, we identify each node \(v \in V(G)\) with the pair \((G,v)\),
and write
\[
\mathcal{X} = \{v : G \in \mathcal{G}_{\text{BF}},\, v \in V(G)\}
\]
to simplify expressions in the remainder of the section.

We define the loss function acting on parameter configurations $\vec{\theta} \in \vec{\Theta}_{\text{BF}}$ as the sum of the MAE loss and some weighted $\ell_1$-norm as regularization.
More precisely, we let
\[
\mathcal{L}(\vec{\theta})
\coloneq
\mathcal{L}^{\mathrm{emp}}(\vec{\theta})
+ \eta \mathcal{L}^{\mathrm{reg}}(\vec{\theta}),
\]
for some fixed $\eta >0$, where
\[
\mathcal{L}^{\mathrm{emp}}(\vec{\theta})
\coloneq
\frac{1}{N}\sum_{v\in X}
\big| \hb_{v}^{(K)} - x_v^{(K)} \big|
\]
where $ \hb_{v}^{(K)}$ denotes the final output of the MPNN and $x_v^{(K)}$ is the targeted $K$-step BF-distance, and
\[
\mathcal{L}^{\mathrm{reg}}(\vec{\theta})
\coloneq
\sum_{k = 0}^{K} \sum_{j > j_k}^{J} \| \vec{W}^j \|_1
+ \sum_{k \in [K]} \| \vec{C}^k \|_1
+ \sum_{j = 1}^{J} \| \vec{b}^j \|_1
.
\]
\begin{remark}
Using layer-wise indexing, one can show that the regularization rewrites as
\begin{align*}
 \mathcal{L}^{\mathrm{reg}}(\theta)
=
\sum_{k \in [K]} \bigr(\sum_{l =2}^m k \norm{ \vec{W}^{(\AGG, k,l)}}_1
+  \sum_{i \in [d_{(\UPD, k-1,m)}]} k\norm{ \vec{W}^{(\AGG, k,m)}_{-,i}}_1
+ \norm{ \vec{W}^{(\AGG, k,m)}_{-,d_{(\UPD, k-1,m)}+1}}_1 \bigl)\\
+\sum_{k \in [K]} \sum_{l \in [m]} \bigl(  (k+1) \norm{ \vec{W}^{(\UPD, k,l)}}_1
+  \norm{ \vec{b}^{(\AGG, k,l)}}_1 +  \norm{ \vec{b}^{(\UPD, k,l)}}_1\bigr),
\end{align*}
i.e., all weight matrices acting on feature vectors of the $k$-th aggregation layer are weighted with $k$, all weight matrices acting on edge components and all bias vectors are weighted with 1, and all weight matrices of the $k$-th update layer are weighted with $k+1$.
Intuitively, the weight matrices are weighted depending on their importance for the network,
which will become clearer during the proof.
\end{remark}

\subsection{Construction of the training set for learning Bellman--Ford algorithm}

In the following, we outline the construction of the training set for training an MPNN to execute the $K$-fold Bellman--Ford update.
\begin{definition} [Path-Graphs]\label{def:path_graph}
Let $K > 0$, let $\vec{w} \in \Rb_{\ge 0}^{K+1}$, and $\beta >0$, we now define node-attributed, edge-weighted paths where the edge-weights are given by the entries of a real-valued vector. That is, let $P(\vec{w})\coloneq P_\beta(\vec{w}) \in \mathcal{G}_{BF}$ be the  path graph such that
$$V(P(\vec{w})) \coloneqq \{  v_0^{\vec w}, v_1^{\vec w}, \ldots, v_K^{\vec w} \}\coloneqq \{  v_0, v_1, \ldots, v_K \}$$ with labels $a_{P(\vec{w})}(v_0) = w_0$, $a_{P(\vec{w})}(v_i) = \beta $ for all $i \in [K]$ and
$$E(P(\vec{w})) \coloneqq \{ (v_{i-1}, v_i)  \mid i\in  [K] \},$$ where the edge-weight function $w_{P(\vec{w})} \colon E(P(\vec{w})) \to \Rb_{\ge 0}$ is defined as $(v_{i-1}, v_{i}) \mapsto w_i$, for $i \in [K]$.
\end{definition}

\begin{figure*}[t]
    \centering
    \begin{subfigure}[t]{0.30\textwidth}
        \centering
        \scalebox{0.80}{\scalebox{3}{
\begin{tikzpicture}[transform shape, scale=1]

\definecolor{nodec}{HTML}{4DA84D}
\definecolor{edgec}{HTML}{000000}
\definecolor{llred}{HTML}{FF7A87}
\definecolor{lgreen}{HTML}{4DA84D}
\definecolor{fontc}{HTML}{403E30}
\definecolor{llblue}{HTML}{7EAFCC}

\newcommand{\gnode}[5]{%
    \draw[#3, fill=#3!20] (#1) circle (#2pt);
    \node[anchor=#4,fontc!50!white] at (#1) {\textcolor{#3!80!black}{\textsf{#5}}};
}

\node[] at (1.1,0.6) {\scalebox{0.4}{\textsf{General Path Graph: $\mathsf{K \in \mathbb{N},}$ $\mathsf{\vec w \in \mathbb{R}^{K+1}}$}}};

\coordinate (v1) at (0,0);
\coordinate (v2) at (0.425,0);
\coordinate (v3) at (0.85,0);
\coordinate (v4) at (1.275,0);
\coordinate (v5) at (2,0);
\coordinate (v6) at (2.425,0);

\draw[edgec] (v1) -- (v2);
\draw[edgec] (v2) -- (v3);
\draw[edgec] (v3) -- (v4);
\draw[edgec] (v4) -- (1.475,0);
\draw[edgec,dotted] (1.475,0) -- (1.8,0);
\draw[edgec] (1.8,0) -- (v5);
\draw[edgec] (v5) -- (v6);



\node[] at (0.2125,0.075) {\scalebox{0.32}{\textcolor{llblue!80!black}{$\mathsf{w_1}$}}};
\node[] at (0.6375,0.075) {\scalebox{0.32}{\textcolor{llblue!80!black}{$\mathsf{w_2}$}}};
\node[] at (1.0625,0.075) {\scalebox{0.32}{\textcolor{llblue!80!black}{$\mathsf{w_3}$}}};
\node[] at (2.2125,0.075) {\scalebox{0.32}{\textcolor{llblue!80!black}{$\mathsf{w_k}$}}};

\gnode{v1}{2.5}{black}{south}{\scalebox{0.32}{\textcolor{lgreen}{$\mathsf{w_{0}}$}}};
\gnode{v2}{2.5}{black}{south}{\scalebox{0.32}{\textcolor{lgreen}{$\beta$}}};
\gnode{v3}{2.5}{black}{south}{\scalebox{0.32}{\textcolor{lgreen}{$\beta$}}};
\gnode{v4}{2.5}{black}{south}{\scalebox{0.32}{\textcolor{lgreen}{$\beta$}}};
\gnode{v5}{2.5}{black}{south}{\scalebox{0.32}{\textcolor{lgreen}{$\beta$}}};
\gnode{v6}{2.5}{black}{south}{\scalebox{0.32}{\textcolor{lgreen}{$\beta$}}};

\gnode{v1}{2.5}{lgreen}{center}{\scalebox{0.32}{$\mathsf{v_{0}}$}};
\gnode{v2}{2.5}{lgreen}{center}{\scalebox{0.32}{$\mathsf{v_{1}}$}};
\gnode{v3}{2.5}{lgreen}{center}{\scalebox{0.32}{$\mathsf{v_{2}}$}};
\gnode{v4}{2.5}{lgreen}{center}{\scalebox{0.32}{$\mathsf{v_{3}}$}};
\gnode{v5}{2.5}{lgreen}{center}{\scalebox{0.22}{$\mathsf{v_{k-1}}$}};
\gnode{v6}{2.5}{lgreen}{center}{\scalebox{0.32}{$\mathsf{v_{k}}$}};

\end{tikzpicture}}}
        \caption{Path graph}
    \end{subfigure}
    \hfill
    \begin{subfigure}[t]{0.65\textwidth}
        \centering
        \scalebox{0.80}{\scalebox{3}{
\begin{tikzpicture}[transform shape, scale=1]

\definecolor{nodec}{HTML}{4DA84D}
\definecolor{edgec}{HTML}{000000}
\definecolor{llred}{HTML}{FF7A87}
\definecolor{lgreen}{HTML}{4DA84D}
\definecolor{fontc}{HTML}{403E30}
\definecolor{llblue}{HTML}{7EAFCC}

\newcommand{\gnode}[5]{%
    \draw[#3, fill=#3!20] (#1) circle (#2pt);
    \node[anchor=#4,fontc!50!white] at (#1) {\textcolor{#3!80!black}{\textsf{#5}}};
}

\node[] at (1.1,0.6) {\scalebox{0.4}{\textsf{Training set $\mathsf{T_{S_{x,K}}}$, given $\mathsf{x \in \mathbb{R}_{\geq0}}$, $\mathsf{K \in \mathbb{N}}$}}};

\begin{scope}[shift={(0,-0.2)}]
\coordinate (v1) at (0,0);
\coordinate (v2) at (0.425,0);
\coordinate (v3) at (0.85,0);
\coordinate (v4) at (1.275,0);
\coordinate (v5) at (2,0);
\coordinate (v6) at (2.425,0);
\draw[edgec] (v1) -- (v2);
\draw[edgec] (v2) -- (v3);
\draw[edgec] (v3) -- (v4);
\draw[edgec] (v4) -- (1.475,0);
\draw[edgec,dotted] (1.475,0) -- (1.8,0);
\draw[edgec] (1.8,0) -- (v5);
\draw[edgec] (v5) -- (v6);
\node[anchor=west] at (-0.17,0.45) {\scalebox{0.36}{$\mathsf{\vec w = x \vec e_0^{K+1}}$}};
\node[] at (0.2125,0.075) {\scalebox{0.32}{\textcolor{fontc!80!white}{$\mathsf{0}$}}};
\node[] at (0.6375,0.075) {\scalebox{0.32}{\textcolor{fontc!80!white}{$\mathsf{0}$}}};
\node[] at (1.0625,0.075) {\scalebox{0.32}{\textcolor{fontc!80!white}{$\mathsf{0}$}}};
\node[] at (2.2125,0.075) {\scalebox{0.32}{\textcolor{fontc!80!white}{$\mathsf{0}$}}};
\gnode{v1}{2.5}{black}{south}{\scalebox{0.32}{\textcolor{llred!80!black}{$\mathsf{x}$}}};
\gnode{v2}{2.5}{black}{south}{\scalebox{0.32}{\textcolor{lgreen}{$\beta$}}};
\gnode{v3}{2.5}{black}{south}{\scalebox{0.32}{\textcolor{lgreen}{$\beta$}}};
\gnode{v4}{2.5}{black}{south}{\scalebox{0.32}{\textcolor{lgreen}{$\beta$}}};
\gnode{v5}{2.5}{black}{south}{\scalebox{0.32}{\textcolor{lgreen}{$\beta$}}};
\gnode{v6}{2.5}{black}{south}{\scalebox{0.32}{\textcolor{lgreen}{$\beta$}}};
\gnode{v1}{2.5}{llred}{center}{\scalebox{0.32}{$\mathsf{v_{0}}$}};
\gnode{v2}{2.5}{lgreen}{center}{\scalebox{0.32}{$\mathsf{v_{1}}$}};
\gnode{v3}{2.5}{lgreen}{center}{\scalebox{0.32}{$\mathsf{v_{2}}$}};
\gnode{v4}{2.5}{lgreen}{center}{\scalebox{0.32}{$\mathsf{v_{3}}$}};
\gnode{v5}{2.5}{lgreen}{center}{\scalebox{0.22}{$\mathsf{v_{k-1}}$}};
\gnode{v6}{2.5}{lgreen}{center}{\scalebox{0.32}{$\mathsf{v_{k}}$}};
\end{scope}

\begin{scope}[shift={(0,-1)}]
\coordinate (v1) at (0,0);
\coordinate (v2) at (0.425,0);
\coordinate (v3) at (0.85,0);
\coordinate (v4) at (1.275,0);
\coordinate (v5) at (2,0);
\coordinate (v6) at (2.425,0);
\draw[llred!90!black] (v1) -- (v2);
\draw[edgec] (v2) -- (v3);
\draw[edgec] (v3) -- (v4);
\draw[edgec] (v4) -- (1.475,0);
\draw[edgec,dotted] (1.475,0) -- (1.8,0);
\draw[edgec] (1.8,0) -- (v5);
\draw[edgec] (v5) -- (v6);
\node[anchor=west] at (-0.17,0.45) {\scalebox{0.36}{$\mathsf{\vec w = x \vec e_1^{K+1}}$}};
\node[] at (0.2125,0.075) {\scalebox{0.32}{\textcolor{llred!80!black}{$\mathsf{x}$}}};
\node[] at (0.6375,0.075) {\scalebox{0.32}{\textcolor{fontc!80!white}{$\mathsf{0}$}}};
\node[] at (1.0625,0.075) {\scalebox{0.32}{\textcolor{fontc!80!white}{$\mathsf{0}$}}};
\node[] at (2.2125,0.075) {\scalebox{0.32}{\textcolor{fontc!80!white}{$\mathsf{0}$}}};
\gnode{v1}{2.5}{black}{south}{\scalebox{0.32}{\textcolor{fontc!80!white}{$\mathsf{0}$}}};
\gnode{v2}{2.5}{black}{south}{\scalebox{0.32}{\textcolor{lgreen}{$\beta$}}};
\gnode{v3}{2.5}{black}{south}{\scalebox{0.32}{\textcolor{lgreen}{$\beta$}}};
\gnode{v4}{2.5}{black}{south}{\scalebox{0.32}{\textcolor{lgreen}{$\beta$}}};
\gnode{v5}{2.5}{black}{south}{\scalebox{0.32}{\textcolor{lgreen}{$\beta$}}};
\gnode{v6}{2.5}{black}{south}{\scalebox{0.32}{\textcolor{lgreen}{$\beta$}}};
\gnode{v1}{2.5}{fontc!30!white}{center}{\scalebox{0.32}{$\mathsf{v_{0}}$}};
\gnode{v2}{2.5}{lgreen}{center}{\scalebox{0.32}{$\mathsf{v_{1}}$}};
\gnode{v3}{2.5}{lgreen}{center}{\scalebox{0.32}{$\mathsf{v_{2}}$}};
\gnode{v4}{2.5}{lgreen}{center}{\scalebox{0.32}{$\mathsf{v_{3}}$}};
\gnode{v5}{2.5}{lgreen}{center}{\scalebox{0.22}{$\mathsf{v_{k-1}}$}};
\gnode{v6}{2.5}{lgreen}{center}{\scalebox{0.32}{$\mathsf{v_{k}}$}};
\end{scope}

\begin{scope}[shift={(0,-1.8)}]
\coordinate (v1) at (0,0);
\coordinate (v2) at (0.425,0);
\coordinate (v3) at (0.85,0);
\coordinate (v4) at (1.275,0);
\coordinate (v5) at (2,0);
\coordinate (v6) at (2.425,0);
\draw[edgec] (v1) -- (v2);
\draw[llred!90!black] (v2) -- (v3);
\draw[edgec] (v3) -- (v4);
\draw[edgec] (v4) -- (1.475,0);
\draw[edgec,dotted] (1.475,0) -- (1.8,0);
\draw[edgec] (1.8,0) -- (v5);
\draw[edgec] (v5) -- (v6);
\node[anchor=west] at (-0.17,0.45) {\scalebox{0.36}{$\mathsf{\vec w = x \vec e_2^{K+1}}$}};
\node[] at (0.2125,0.075) {\scalebox{0.32}{\textcolor{fontc!80!white}{$\mathsf{0}$}}};
\node[] at (0.6375,0.075) {\scalebox{0.32}{\textcolor{llred!80!black}{$\mathsf{x}$}}};
\node[] at (1.0625,0.075) {\scalebox{0.32}{\textcolor{fontc!80!white}{$\mathsf{0}$}}};
\node[] at (2.2125,0.075) {\scalebox{0.32}{\textcolor{fontc!80!white}{$\mathsf{0}$}}};
\gnode{v1}{2.5}{black}{south}{\scalebox{0.32}{\textcolor{fontc!80!white}{$\mathsf{0}$}}};
\gnode{v2}{2.5}{black}{south}{\scalebox{0.32}{\textcolor{lgreen}{$\beta$}}};
\gnode{v3}{2.5}{black}{south}{\scalebox{0.32}{\textcolor{lgreen}{$\beta$}}};
\gnode{v4}{2.5}{black}{south}{\scalebox{0.32}{\textcolor{lgreen}{$\beta$}}};
\gnode{v5}{2.5}{black}{south}{\scalebox{0.32}{\textcolor{lgreen}{$\beta$}}};
\gnode{v6}{2.5}{black}{south}{\scalebox{0.32}{\textcolor{lgreen}{$\beta$}}};
\gnode{v1}{2.5}{fontc!30!white}{center}{\scalebox{0.32}{$\mathsf{v_{0}}$}};
\gnode{v2}{2.5}{lgreen}{center}{\scalebox{0.32}{$\mathsf{v_{1}}$}};
\gnode{v3}{2.5}{lgreen}{center}{\scalebox{0.32}{$\mathsf{v_{2}}$}};
\gnode{v4}{2.5}{lgreen}{center}{\scalebox{0.32}{$\mathsf{v_{3}}$}};
\gnode{v5}{2.5}{lgreen}{center}{\scalebox{0.22}{$\mathsf{v_{k-1}}$}};
\gnode{v6}{2.5}{lgreen}{center}{\scalebox{0.32}{$\mathsf{v_{k}}$}};
\end{scope}

\node[rotate=90] at (1.275,-2.3) {\scalebox{0.8}{$\dots$}};

\begin{scope}[shift={(0,-2.9)}]
\coordinate (v1) at (0,0);
\coordinate (v2) at (0.425,0);
\coordinate (v3) at (0.85,0);
\coordinate (v4) at (1.275,0);
\coordinate (v5) at (2,0);
\coordinate (v6) at (2.425,0);
\draw[edgec] (v1) -- (v2);
\draw[edgec] (v2) -- (v3);
\draw[edgec] (v3) -- (v4);
\draw[edgec] (v4) -- (1.475,0);
\draw[edgec,dotted] (1.475,0) -- (1.8,0);
\draw[edgec] (1.8,0) -- (v5);
\draw[llred!90!black] (v5) -- (v6);
\node[anchor=west] at (-0.17,0.45) {\scalebox{0.36}{$\mathsf{\vec w = x \vec e_K^{K+1}}$}};
\node[] at (0.2125,0.075) {\scalebox{0.32}{\textcolor{fontc!80!white}{$\mathsf{0}$}}};
\node[] at (0.6375,0.075) {\scalebox{0.32}{\textcolor{fontc!80!white}{$\mathsf{0}$}}};
\node[] at (1.0625,0.075) {\scalebox{0.32}{\textcolor{fontc!80!white}{$\mathsf{0}$}}};
\node[] at (2.2125,0.075) {\scalebox{0.32}{\textcolor{llred!80!black}{$\mathsf{x}$}}};
\gnode{v1}{2.5}{black}{south}{\scalebox{0.32}{\textcolor{fontc!80!white}{$\mathsf{0}$}}};
\gnode{v2}{2.5}{black}{south}{\scalebox{0.32}{\textcolor{lgreen}{$\beta$}}};
\gnode{v3}{2.5}{black}{south}{\scalebox{0.32}{\textcolor{lgreen}{$\beta$}}};
\gnode{v4}{2.5}{black}{south}{\scalebox{0.32}{\textcolor{lgreen}{$\beta$}}};
\gnode{v5}{2.5}{black}{south}{\scalebox{0.32}{\textcolor{lgreen}{$\beta$}}};
\gnode{v6}{2.5}{black}{south}{\scalebox{0.32}{\textcolor{lgreen}{$\beta$}}};
\gnode{v1}{2.5}{fontc!30!white}{center}{\scalebox{0.32}{$\mathsf{v_{0}}$}};
\gnode{v2}{2.5}{lgreen}{center}{\scalebox{0.32}{$\mathsf{v_{1}}$}};
\gnode{v3}{2.5}{lgreen}{center}{\scalebox{0.32}{$\mathsf{v_{2}}$}};
\gnode{v4}{2.5}{lgreen}{center}{\scalebox{0.32}{$\mathsf{v_{3}}$}};
\gnode{v5}{2.5}{lgreen}{center}{\scalebox{0.22}{$\mathsf{v_{k-1}}$}};
\gnode{v6}{2.5}{lgreen}{center}{\scalebox{0.32}{$\mathsf{v_{k}}$}};
\end{scope}

\end{tikzpicture}}}
        \caption{Training set}
    \end{subfigure}
    \caption{Bellman--Ford training graphs.
    Dots indicate omitted intermediate nodes, and edge weights are shown on the edges.
    \textbf{(a)} General path graph associated with $\vec{w}\in\R^{K+1}$ as in \cref{def:path_graph}.
    The initial node is labeled $a(v_0)=w_0$, while all other nodes have label $\beta\gg 0$.
    The path has length $K$ with edge weights $w_1,\dots,w_K$.
    \textbf{(b)} Bellman--Ford training set for arbitrary $K$ as defined in \cref{def:BF-training_set},
    consisting of $K+1$ path graphs corresponding to the scaled unit vectors
    $x\vec{e}_0^{K+1},\dots,x\vec{e}_K^{K+1}$.
    Each path contains $K+1$ vertices $v_0,\dots,v_K$.
    The root node satisfies $a(v_0)=x$ if $k=0$ and $a(v_0)=0$ otherwise, while all other nodes have label $\beta$.
    Exactly one edge per path has weight $x$, and all remaining edges have weight $0$.}

                \label{fig:path_graphs}
\end{figure*}

\begin{remark}\label{rem:BF-distance-path-graphs}
Consider a graph $P_\beta(\vec{w})$ such that $\beta > \norm{\vec w}_1$.
Then one can easily verify that the $K$-step BF-distance of $v_{K}^{\vec w}$ in $P_\beta(\vec{w})$  is given by
\[x_{v^K_\vec{w}}^\tup K= \norm{\vec w}_1.\]
\end{remark}

\begin{definition}[Bellman--Ford path training set]\label{def:BF-training_set}
Let $x \in \mathbb{R}_{\geq 0}$, and define
\begin{equation*}
S_{x,K} \coloneqq \{ x \vec{e}_k^{(K+1)} \mid  k \in [K]_0 \},
\end{equation*}
where $\{\vec{e}^{K+1}_\ell\}_{\ell=0}^{K}$ denotes the canonical unit-length basis of $\Rb^{K+1}$. That is, $S_{x,K}$ contains scaled versions of the canonical unit-length basis vectors.

We now use these scaled vectors to define the edge weights in our training set, which consists of paths. Let $K >0$, $\beta= \beta (x,N) \ge  2(N+x+1) $ and let $P_\beta(\vec{w})$ denote an edge-weighted path as in \cref{def:path_graph}. We then define the \emph{Bellman--Ford path training set}, parameterized by  $x \in \mathbb{R}_{\geq 0}$, as
\begin{equation*}
T_{S_{x,K}}
\coloneqq \{ v_K^{\vec w} \mid \vec{w} \in S_{x,K}, v_K^{\vec w} \in P_\beta(\vec{w}) \}.
\end{equation*}
\end{definition}

\subsection{Theorem}

The following provides a quantitative approximation guarantee for the learned Bellman--Ford dynamics under near-optimal training loss.

\begin{theorem}\label{thm:BF_vs1r}
Let $K \in \Nb$, assume we want to learn $K$-steps of Bellman--Ford, i.e., the function $\Gamma^K$, using the min-aggregation MPNN architecture with $K$ layers and $m$-layer FNN as update and aggregation functions as described above in~\cref{sec:bf_mpnn}. In addition, assume that the regularization parameter $\eta$ satisfies $\eta \ge  2K \exp (m(K^2+ 3K))$, and for the edge weight scaler $x$  of the training set $T_{S_{x,K}}$ it holds $x \ge  4mKN\eta$, where $N$ is the
cardinality of the training set $X$. Then if the Bellman-Ford path training set is contained in the training set, i.e., $T_{S_{x,K}} \subset X$ and the loss
 $\mathcal{L}(\theta)$
is within $\varepsilon \le \tfrac12$ of its global minimum, the following holds, then, for any  Bellman--Ford instance $G \in \mathcal{G}_{\text{BF}}$ and any $v \in V(G)$,
\begin{align*}
    \abs{\hb_v^{(K)} - x_v^{(K)}}
    \le  \varepsilon ( x_v^{(K)} + 1),
\end{align*}
where $\hb_v^{(K)}$ is the feature representation output by the MPNN and $x_v^{(K)}$ is the targeted $K$-step BF-distance of the vertex $v$.
\end{theorem}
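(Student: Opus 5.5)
Our plan follows the strategy of \citet{Ner+2025}, replacing their $\ell_0$ argument with an analysis of the weighted $\ell_1$ regularizer. We proceed in three stages.

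\textbf{Stage 1: upper bound on the global minimum.} We exhibit an explicit parametrization $\vec\theta^*$ that computes $\Gamma^K$ exactly on every BF instance. In layer $k$, the aggregation FNN has a single active path computing $h_u+w$, built from ReLU identities on the nonnegative inputs: one unit weight per layer, edge weight $C^k=1$, and all biases zero. The update FNN is likewise an identity chain of unit weights. Then $\mathcal{L}^{\mathrm{emp}}(\vec\theta^*)=0$ on all of $X$. The regularizer value $R^*\coloneq\mathcal{L}^{\mathrm{reg}}(\vec\theta^*)$ is an explicit polynomial in $m,K$ determined by the layer weights in the Remark. Hence any $\vec\theta$ within $\varepsilon$ of the minimum satisfies $\mathcal{L}(\vec\theta)\le \eta R^*+\varepsilon$. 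Since the empirical term is nonnegative, this gives $\mathcal{L}^{\mathrm{reg}}(\vec\theta)\le R^*+\varepsilon/\eta$. It also gives $\mathcal{L}^{\mathrm{emp}}(\vec\theta)\le \eta R^*+\varepsilon$, so every training point has error at most $N(\eta R^*+\varepsilon)$.

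\textbf{Stage 2: structure from the path training set.} We evaluate the network on the $K+1$ paths $P_\beta(x\vec e_k)$. The target at $v_K$ is $x$, and the large label $\beta$ ensures that the min in each layer is taken along the path from $v_0$, because the regularizer bounds the Lipschitz constants of all layers. Positive homogeneity of ReLU networks plus bias terms shows that on these inputs the network computes a function $\hb_{v_K}^{(K)}\approx \alpha_k x+\gamma$. Here $\alpha_k$ is a product, along an active path, of the layer-$k$ edge coefficient $C^k$ and the node-feature weights of all later layers (or, for $k=0$, of all node-feature weights). Because $x\ge 4mKN\eta$ is large relative to the training error, matching the target $x$ forces $|\alpha_k-1|\lesssim N\eta R^*/x$ for every $k$.

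The layer-dependent weights in $\mathcal{L}^{\mathrm{reg}}$ are chosen so that, by AM--GM on products of $\ell_1$ norms, realizing $\alpha_k\approx 1$ for all $k$ simultaneously costs at least $R^*$. Equality holds only when every node-feature weight path acts as an identity of gain one, each $C^k$ has mass $1$, and the biases are zero. The slack $\varepsilon/\eta$ then quantifies how far $\vec\theta$ can deviate from this extremal structure.

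\textbf{Stage 3: propagation to arbitrary instances.} We compare the network with $\Gamma$ layer by layer on an arbitrary $G$. Inductively, $\hb_v^{(k)}$ lies between $(1-\delta_k)x_v^{(k)}-\tau_k$ and $(1+\delta_k)x_v^{(k)}+\tau_k$, using that min commutes with monotone, nearly-identity maps. The errors $\delta_k,\tau_k$ are controlled by the slack of Stage 2 and accumulate multiplicatively over the $mK$ layers, which produces the factor $\exp(m(K^2+3K))$. The hypothesis on $\eta$ makes the total accumulated error at most $\varepsilon$, giving $|\hb_v^{(K)}-x_v^{(K)}|\le\varepsilon(x_v^{(K)}+1)$.

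\textbf{Main obstacle.} The key difficulty is the rigidity step in Stage 2, which must turn near-minimality of the $\ell_1$ loss into near-identity behavior of every layer, including the width-$d$ layers. This requires handling ReLU sign patterns, possible negative weights, and cancellations between them. We would first try to prove that, under a bounded regularizer, all contributing path products are nonnegative. We would then bound the total path mass of each FNN by the product of layer $\ell_1$ norms, via $\|\vec W\vec x\|_1\le \|\vec W\|_1\|\vec x\|_\infty$-type estimates.
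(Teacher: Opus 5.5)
Your proposal has genuine gaps. The decisive one is in Stage 2, and Stage 3 relies on an induction the hypotheses do not support.

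\textbf{Stage 2 does not close.} From Stage 1 you only know $\mathcal{L}^{\mathrm{emp}}(\vec\theta)\le\eta R^*+\varepsilon$ with $R^*=L=mK(K+3)$. So one training path may be off by $N(\eta R^*+\varepsilon)$, and your bound $|\alpha_k-1|\lesssim N\eta R^*/x$ equals $(K+3)/4\ge 1$ when $x=4mKN\eta$. That is no information.

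The AM--GM bound $\mathcal{L}^{\mathrm{reg}}\ge\sum_k l_k\gamma_k^{1/l_k}$, where $\gamma_k$ is the end-to-end positive-part gain (your $\alpha_k$), yields $\mathcal{L}^{\mathrm{reg}}\ge L-\varepsilon/\eta$ only once you know every $\gamma_k$ is close to $1$ from below. Hence ``the fit forces $\alpha_k\approx1$, then AM--GM'' is circular.

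The missing idea is a joint trade-off between the two loss terms:
\begin{itemize}
\item Bound the output on $P_\beta(x\vec{e}_k)$ from above by the positive-part network along the BF path: $\hb^{(K)}_{v_K}\le\gamma_k x+\exp(L)B$. This holds whichever neighbour the min selects.
\item For $\gamma_k\le1$, the regularization saving $\eta l_k(1-\gamma_k^{1/l_k})\le\eta J(1-\gamma_k)$ is dominated by the empirical cost $(1-\gamma_k)x/N$, because $x\ge2N\eta J$.
\item The hypothesis $\eta\ge2K\exp(L)$ makes nonzero biases unprofitable.
\end{itemize}
Near-minimality then gives $\gamma_k\ge1-\varepsilon/(\eta J)$, $B\le2\varepsilon/\eta$ and $w^-\le\varepsilon/\eta$. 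Only after that do you get $\mathcal{L}^{\mathrm{emp}}\le2\varepsilon$.

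This also locates the factor $\exp(m(K^2+3K))=\exp(L)$. It is the a priori bound $\prod_{j\in T}\|\vec W^j\|_1\le(2L/t)^t\le\exp(L)$ on how much biases and negative weights can be amplified. The deviation from the bias-free positive-part network is then at most $\exp(L)(B+w^-\|\vec z\|_1)$. It is not multiplicative accumulation of per-layer errors.

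Positive homogeneity does not make the ReLU net affine in $x$ once biases and negative weights are present; this deviation bound is what replaces it. A near-minimal regularizer does not exclude small negative weights. What you need, and can show, is that their total mass $w^-$ is $O(\varepsilon/\eta)$, not that all path products are nonnegative.

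\textbf{The routing claim has the wrong direction.} An upper bound on Lipschitz constants cannot force the min to follow the path from $v_0$. To exclude walks that start at a $\beta$-labelled node, you need a \emph{lower} bound on the output along whichever walk the min selects. That needs $\gamma_0\gtrsim1$, so it can only come after the trade-off step. The paper then combines $\beta\ge2(N+x+1)$ with $\mathcal{L}^{\mathrm{emp}}\le2\varepsilon$ to reach a contradiction.

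\textbf{Stage 3's induction is not available.} Near-minimality controls only the end-to-end products $\gamma_k$, not individual layers. The intermediate features $\hb^{(k)}_v$ for $k<K$ are $d$-dimensional, so a scalar sandwich $(1\pm\delta_k)x^{(k)}_v\pm\tau_k$ has no meaning. Also, ``min commutes with monotone maps'' fails once there are negative weights.

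The paper avoids any layer-by-layer induction as follows:
\begin{itemize}
\item Because the aggregation dimension is $1$, every min-aggregation computation tree is a walk. Hence $\hb^{(K)}_v=H_{\mathrm{wl}}(\vec\theta)(\vec z^{\tau})$ for the selected walk $\tau$, which gives $\hb^{(K)}_v\ge(1-\varepsilon)x^{(K)}_v-\varepsilon$.
\item Monotonicity of min-aggregation gives $\hb^{(K)}_v\le H_{\mathrm{wl}}(\vec\theta^+)(\vec z^{p})$ for every walk $p$.
\item The $K+1$ training paths, together with linearity of the positive-part network, give $|H_{\mathrm{wl}}(\mathcal{W}^+,\mathcal{C}^+,0)(\vec z)-\|\vec z\|_1|\le\varepsilon\|\vec z\|_1$, and hence the upper bound.
\end{itemize}
Your argument never invokes the one-dimensional aggregation assumption, yet that assumption is what makes the lower bound hold on arbitrary graphs.
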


This theorem differs from the main theorem of \citet{Ner+2025} in several ways.
On the positive side, our result uses a smaller training set, namely $K+1$ path instances from $T_{S_{x,K}}$, and replaces their non-differentiable $\ell_0$ penalty with a differentiable  $\ell_1$ regularizer that has layer-specific weights $w(k)$. This differentiability allows our loss to be optimized directly.
On the other hand, our analysis assumes a more restricted model: the depth is fixed to exactly $K$ message-passing layers (matching $K$ Bellman--Ford steps), and the aggregation dimension is $1$. These choices are made to simplify the analysis.
Furthermore, unlike \citet{Ner+2025}, the regularization parameter $\eta$ and the training weight scale $x$ both scale exponentially in $K$, which may be prohibitive in some settings.

\section{Proof of \cref{thm:BF_vs1r}}

We begin with a proof outline that highlights the main lemmas and ideas underlying the argument.
\subsection{Proof outline}
\paragraph{\cref{ssec:FNNs}: Properties of FNNs.}
The aim of this section is to develop algebraic properties of feedforward neural networks (FNNs) that are needed to handle the ReLU nonlinearity and to investigate the dependence of the network output on its weights and biases.
Later, we will see that the features produced by the MPNN can be upper bounded by the output of a single FNN of depth $J$ (the total number of global layers of the MPNN), which is where the results derived in this section will be applied.

The main results include linearity of FNNs in the regime where all parameters are non-negative, as well as monotonicity with respect to both the network parameters and the input. The key ingredient is that, for non-negative inputs, the ReLU activation acts as the identity.

In addition, we prove the following corollary.

\begin{corollary}\label{overview:cor:fnn:bnd_on_abs_value_fnn_with_learned-wanted_param}
Let $\mathcal{W}\in \vec{\Theta}_W$ and $\mathcal{B}, \mathcal{B}_C \in \vec{\Theta}_b$.
Then for any $\vec y \in \mathbb{R}_{\ge 0}^{d_0}$ it holds
\[
   \bigl|\FNN^\tup{J}(\mathcal{W}, \mathcal{B} + \mathcal{B}_C)(\vec{y})
        - \FNN^\tup{J}(\mathcal{W}^+, \mathcal{B}_C^+)(\vec{y})\bigr|
   \;\le\;
   G(\mathcal{W}, \mathcal{B}_C)(\vec{y})
   + \FNN^\tup{J}(\abs{\mathcal{W}}, \abs{\mathcal{B}})(0).
\]
\end{corollary}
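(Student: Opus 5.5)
The plan is to insert the intermediate quantity $\FNN^\tup{J}(\mathcal{W},\mathcal{B}_C)(\vec y)$ and use the triangle inequality. This gives two terms:
\[
T_1 \coloneq \bigl|\FNN^\tup{J}(\mathcal{W},\mathcal{B}+\mathcal{B}_C)(\vec y)-\FNN^\tup{J}(\mathcal{W},\mathcal{B}_C)(\vec y)\bigr|,
\qquad
T_2 \coloneq \bigl|\FNN^\tup{J}(\mathcal{W},\mathcal{B}_C)(\vec y)-\FNN^\tup{J}(\mathcal{W}^+,\mathcal{B}_C^+)(\vec y)\bigr|.
\]
I will show $T_1\le \FNN^\tup{J}(\abs{\mathcal{W}},\abs{\mathcal{B}})(0)$ and $T_2\le G(\mathcal{W},\mathcal{B}_C)(\vec y)$. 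Throughout I use the facts established earlier in \cref{ssec:FNNs}: ReLU is monotone and $1$-Lipschitz, and it acts as the identity on non-negative inputs. Consequently, an FNN with non-negative parameters evaluated at a non-negative input is a linear recursion, and it is monotone in both its parameters and its input.

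\textbf{Term $T_1$.} Let $\vec h_j$ and $\vec h'_j$ be the layer-$j$ activations with biases $\mathcal{B}+\mathcal{B}_C$ and $\mathcal{B}_C$, respectively, and set $\vec e_j\coloneq|\vec h_j-\vec h'_j|$ entrywise. Since ReLU is $1$-Lipschitz entrywise,
\[
\vec e_j\le |\vec W^{j}(\vec h_{j-1}-\vec h'_{j-1})+\vec b^{j}|\le \abs{\vec W^{j}}\,\vec e_{j-1}+\abs{\vec b^{j}},
\qquad \vec e_0=\vec 0 .
\]
Let $\vec g_j$ be the activations of $\FNN^\tup{J}(\abs{\mathcal{W}},\abs{\mathcal{B}})(0)$. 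All of its parameters and inputs are non-negative, so ReLU is the identity there and $\vec g_j=\abs{\vec W^j}\vec g_{j-1}+\abs{\vec b^j}$ exactly. Induction on $j$ then gives $\vec e_j\le \vec g_j$, and taking $j=J$ yields the bound on $T_1$.

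\textbf{Term $T_2$.} Here I assume $G(\mathcal{W},\mathcal{B}_C)(\vec y)$ is the quantity introduced in this section, namely $\FNN^\tup{J}(\abs{\mathcal{W}},\abs{\mathcal{B}_C})(\vec y)-\FNN^\tup{J}(\mathcal{W}^+,\mathcal{B}_C^+)(\vec y)$, which measures the contribution of the negative parts. If the paper's definition of $G$ differs, only this step needs adapting.

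Let $\vec h_j$, $\vec h^+_j$ and $\vec g_j$ denote the activations for the parameters $(\mathcal{W},\mathcal{B}_C)$, $(\mathcal{W}^+,\mathcal{B}_C^+)$ and $(\abs{\mathcal{W}},\abs{\mathcal{B}_C})$. The input is $\vec y\ge 0$ and every activation is non-negative. Monotonicity (entrywise $\vec W\le \vec W^+\le\abs{\vec W}$ acting on non-negative vectors, together with monotone ReLU) gives the sandwich $0\le \vec h_j\le \vec h^+_j\le \vec g_j$. Hence $T_2=\vec h^+_J-\vec h_J$, and it suffices to prove by induction that
\[
\vec D_j\coloneq \vec h^+_j-\vec h_j\;\le\;\vec g_j-\vec h^+_j .
\]
Write $\vec W=\vec W^+-\vec W^-$ and $\vec b=\vec b^+-\vec b^-$. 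For the left-hand side, the $1$-Lipschitz property of ReLU gives
\[
\vec D_j\le \vec W^{+}\vec D_{j-1}+\vec W^{-}\vec h_{j-1}+\vec b^{-}.
\]
For the right-hand side, the $\abs{\mathcal{W}}$- and $\mathcal{W}^+$-networks are exactly linear, so
\[
\vec g_j-\vec h^+_j=\vec W^{+}(\vec g_{j-1}-\vec h^+_{j-1})+\vec W^{-}\vec g_{j-1}+\vec b^{-}.
\]
Comparing the two displays term by term, the induction hypothesis $\vec D_{j-1}\le \vec g_{j-1}-\vec h^+_{j-1}$ together with $\vec h_{j-1}\le \vec g_{j-1}$ gives $\vec D_j\le \vec g_j-\vec h^+_j$, which closes the induction.

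Combining the bounds on $T_1$ and $T_2$ proves the corollary. I expect the main obstacle to be the $T_2$ step. One has to control the signed difference through the negative parts without losing the sandwich $\vec h_j\le \vec h^+_j\le\vec g_j$, and this hinges on the exact linearity of the non-negative networks and on matching whatever precise definition of $G$ the paper uses.
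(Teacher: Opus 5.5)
Your decomposition through $\FNN^\tup{J}(\mathcal{W},\mathcal{B}_C)(\vec y)$ and your $T_1$ argument, which is \cref{lem:fnn:lipschitz}, are essentially what the paper does. The only structural difference is that the paper gets the upper side directly from monotonicity and linearity.

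The gap is in $T_2$. Your guessed $G$ is not the paper's definition, and it is strictly larger, so as written you prove a weaker inequality than the one stated. The paper sets
\[
G(\mathcal{W},\mathcal{B}_C)(\vec y)=\sum_{l=1}^{J}\FNN^\tup{J}\bigl(\mathcal{W}^l,(\mathcal{B}_C^+)^l\bigr)(\vec y)+\FNN^\tup{J}(\mathcal{W}^+,\mathcal{B}_C^-)(0).
\]
Here $\mathcal{W}^l$ replaces the $l$-th weight by its negative part and every other weight by its positive part. The bias vector $(\mathcal{B}_C^+)^l$ keeps only the positive parts of the biases of layers $1,\dots,l-1$. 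Expanded by linearity, this is the sum of all path products containing \emph{exactly one} negative-part factor, either one $(\vec W^l)^-$ or one $(\vec b^{l'})^-$. Your $\FNN^\tup{J}(\abs{\mathcal{W}},\abs{\mathcal{B}_C})(\vec y)-\FNN^\tup{J}(\mathcal{W}^+,\mathcal{B}_C^+)(\vec y)$ is instead the sum over products with \emph{at least one} negative factor. For example, take $d_0=d_1=d_2=1$, $\vec W^1=\vec W^2=-1$, zero biases and $\vec y=1$. Then your quantity equals $1$ while $G=0$.

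The repair uses only ingredients you already have. Let $\vec G_j\coloneq G_j(\mathcal{W},\mathcal{B}_C^+,\mathcal{B}_C^-)(\vec y)$ and let $\vec b^j$ be the $j$-th bias of $\mathcal{B}_C$. The definition of $G_j$, together with exact linearity of non-negative networks on non-negative inputs, gives $\vec G_0=\vec 0$ and
\[
\vec G_j=(\vec W^j)^+\vec G_{j-1}+(\vec W^j)^-\vec h^+_{j-1}+(\vec b^j)^-.
\]
Now combine three facts you already have:
\begin{itemize}
\item your inequality $\vec D_j\le(\vec W^j)^+\vec D_{j-1}+(\vec W^j)^-\vec h_{j-1}+(\vec b^j)^-$;
\item the induction hypothesis $\vec D_{j-1}\le\vec G_{j-1}$;
\item the sandwich bound $\vec h_{j-1}\le\vec h^+_{j-1}$, used in place of $\vec h_{j-1}\le\vec g_{j-1}$.
\end{itemize}
Together they give $\vec D_j\le\vec G_j$. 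This is exactly \cref{lem:fnn:lower_bnd_using_G} with $\mathcal{B}_1=\mathcal{B}_C^+$ and $\mathcal{B}_2=\mathcal{B}_C^-$. With it in place, your triangle-inequality argument yields the corollary as stated.
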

Here, $\mathcal{B}$ should be interpreted as the collection of biases of the MPNN, while $\mathcal{B}_C$ has the same structure and represents the contribution of edge features along a path to a node $v$ in the training set.
Thus, the LHS side can be viewed as measuring the difference between the output of the MPNN with parameters $(\mathcal{W}, \mathcal{C}, \mathcal{B})$ (cf.\ \cref{par:BF_param}) and the output of the MPNN with parameters $(\mathcal{W}^+, \mathcal{C}^+, \vec{0})$, that is, where all weights are replaced by their positive parts and all biases are set to zero.

The latter parameter configuration is significantly easier to analyze, since the effect of the ReLU nonlinearity and the biases vanish.
Consequently, this result allows us to replace the features of the MPNN with parameters $(\mathcal{W}, \mathcal{C}, \mathcal{B})$ by the output of the MPNN with parameters $(\mathcal{W}^+, \mathcal{C}^+, \vec{0})$ up to some error controlled by the RHS.

\paragraph{\cref{ssec:fnns_along_trees}: FNNs along computation trees.}
In this section, we define the set of computation trees $\mathcal{T}^K_G(v)$, which encodes the essential information needed to describe all possible choices available during aggregation.
We then define the FNN along a computation tree $t \in \mathcal{T}^K_G(v)$, denoted by $H^\tup{k}(\vec{\theta})(t)$.
Intuitively, this quantity represents the feature that the MPNN would output if the aggregation followed the computation tree $t$.

This framework allows us to define the set of minimum-aggregation trees $\mathcal{T}^k_{G, \vec{\theta}}(v)$, i.e., the computation trees selected by the min-aggregation.
In particular, the MPNN feature can be expressed as an FNN evaluated along a minimum-aggregation tree.

\begin{lemma}
Let $G \in \mathcal{G}_{\text{BF}}$, $v \in V(G)$, and $\vec{\theta} \in \vec{\Theta}_{\text{BF}}$.
Then for any $k \in [K]_0$ and any $\tau_v^k \in \mathcal{T}^k_{G, \vec{\theta}}(v)$,
\[
\hb_v^\tup{k} = H^\tup{k}(\vec{\theta})(\tau_v^k).
\]
\end{lemma}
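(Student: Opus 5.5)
The plan is to argue by induction on $k \in [K]_0$, unfolding the definitions of computation trees and of the FNN along a tree, $H^\tup{k}(\vec{\theta})$, one message-passing layer at a time. The guiding idea is that a minimum-aggregation tree $\tau_v^k \in \mathcal{T}^k_{G,\vec{\theta}}(v)$ records, at every internal vertex, one neighbor that attains the minimum in the corresponding aggregation step. Evaluating the FNN along that branch therefore reproduces exactly the computation of the min-aggregation MPNN.

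\emph{Base case.} For $k=0$, every tree in $\mathcal{T}^0_{G,\vec{\theta}}(v)$ consists of the single vertex $v$. By definition, $H^\tup{0}(\vec{\theta})$ evaluated on it returns the initial label $a_G(v)$, and this equals $\hb_v^\tup{0}$.

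\emph{Inductive step.} Let $k \ge 1$ and $\tau_v^k \in \mathcal{T}^k_{G,\vec{\theta}}(v)$. I would first decompose $\tau_v^k$ into its root $v$, a selected child $u^\ast \in N_G(v)$ (recall $v \in N_G(v)$ with $w_G(v,v)=0$ by the convention in \cref{def:BF-update}), and a subtree $\tau_{u^\ast}^{k-1}$. Two facts should follow from the definition of $\mathcal{T}^k_{G,\vec{\theta}}$:
\begin{itemize}
\item the subtree $\tau_{u^\ast}^{k-1}$ is itself a minimum-aggregation tree for $u^\ast$, i.e.\ it lies in $\mathcal{T}^{k-1}_{G,\vec{\theta}}(u^\ast)$;
\item the child $u^\ast$ attains the minimum of $\FNN^{(m)}(\vec{\theta}^{(k)}_{\AGG})\bigl((\hb_u^\tup{k-1}, w_G(v,u))\bigr)$ over $u \in N(v)$.
\end{itemize}
If the definition is phrased through tree values rather than MPNN features, I would first use the induction hypothesis for every $u \in N(v)$ to rewrite it in terms of features. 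The recursive definition of $H$ then gives
\[
H^\tup{k}(\vec{\theta})(\tau_v^k)=\FNN^{(m)}(\vec{\theta}^{(k)}_{\UPD})\Bigl(\FNN^{(m)}(\vec{\theta}^{(k)}_{\AGG})\bigl((H^\tup{k-1}(\vec{\theta})(\tau_{u^\ast}^{k-1}),\,w_G(v,u^\ast))\bigr)\Bigr).
\]
The induction hypothesis replaces $H^\tup{k-1}(\vec{\theta})(\tau_{u^\ast}^{k-1})$ by $\hb_{u^\ast}^\tup{k-1}$. Minimality of $u^\ast$ then identifies the inner aggregation value with $\min_{u\in N(v)}$ of the aggregation FNN, so the whole expression is precisely $\hb_v^\tup{k}$ as defined in \cref{sec:bf_mpnn}.

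The main obstacle is purely definitional: the claim must hold for \emph{every} minimum-aggregation tree, including when the argmin is not unique. This is harmless, because any minimizer yields the same minimum value, and the update FNN is applied to that value only. The care needed is to state the induction hypothesis for all vertices and all trees in $\mathcal{T}^{k-1}_{G,\vec{\theta}}(\cdot)$ simultaneously. It should also cover the self-loop child $u^\ast = v$, so that it applies to whichever child the tree selects.
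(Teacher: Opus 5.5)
Your induction is the paper's: the base case is identical. In the inductive step you apply the hypothesis to \emph{all} trees in $\mathcal{S}^{k-1}_{\vec\theta}(v)=\bigcup_{u\in N_G(v)}\mathcal{T}^{k-1}_{G,\vec\theta}(u)$, which turns the tree-valued argmin in the definition of $\mathcal{T}^{k}_{G,\vec\theta}(v)$ into a minimum over neighbors, and then you unfold the recursion for $H^\tup{k}$.

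The one real deviation is that you decompose $\tau_v^k$ into a \emph{single} selected child $u^\ast$. In the paper, $\tau_v^k=T_v(\tau_1,\dots,\tau_{d_{a_k}})$ has one child per coordinate of the aggregation output. Each $\tau_i$ only has to minimize the $i$-th coordinate of $f^{\AGG,k}_{\vec\theta}$, and $H^\tup{k}$ builds the input to $f^{\UPD,k}_{\vec\theta}$ coordinatewise, taking the $i$-th entry from the $i$-th child. Since aggregation is a coordinatewise minimum, for $d_{a_k}>1$ there is in general no single neighbor attaining the minimum in every coordinate. So your step ``$u^\ast$ attains the minimum of $\FNN^{(m)}(\vec\theta^{(k)}_{\AGG})(\cdot)$'' and your one-child formula for $H^\tup{k}(\vec\theta)(\tau_v^k)$ hold only when $d_{a_k}=1$.

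The repair is routine: run your argument separately for each $i\in[d_{a_k}]$, with $\tau_i$ and $u_i=\mathrm{root}(\tau_i)$ in place of $\tau^{k-1}_{u^\ast}$ and $u^\ast$. The hypothesis gives $H^\tup{k-1}(\vec\theta)(\tau)=\hb_u^\tup{k-1}$ for every $\tau\in\mathcal{T}^{k-1}_{G,\vec\theta}(u)$. Hence the $i$-th entry of the assembled vector equals $\min_{u\in N(v)}\bigl(f^{\AGG,k}_{\vec\theta}(\hb_u^\tup{k-1},w_G(v,u))\bigr)_i$, so the assembled vector is exactly the coordinatewise minimum, and applying $f^{\UPD,k}_{\vec\theta}$ gives $\hb_v^\tup{k}$.

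Read literally, the architecture paragraph does fix $d_{a_k}=1$, and under that reading your proof is already complete. However, the computation-tree definitions are written for general $d_{a_k}$, and the paper says one-dimensional aggregation is needed only in the upper/lower bound section. The lemma is meant to hold in that generality, so you should include the coordinatewise step.
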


The main purpose of introducing FNNs along computation trees is to enable comparisons between the MPNN output and the output obtained from alternative computation trees.
In particular, this allows us to upper-bound the MPNN features by FNNs evaluated along trees that are not chosen by the MPNN itself.

This idea is formalized in the next lemma, which shows that although the MPNN performs a layer-wise, ``greedy'' minimization, the resulting computation tree $\tau_v^k \in \mathcal{T}^K_{G, \vec{\theta}}(v)$ minimizes the feature globally, up to the effect of negative weights.

\begin{lemma}
Let $G \in \mathcal{G}_{\text{BF}}$, $v \in V(G)$, and $\vec{\theta} \in \vec{\Theta}_{\text{BF}}$.
Then for any $k \in [K]_0$, $\tau_v^k \in \mathcal{T}^k_{G, \vec{\theta}}(v)$, and $t_v^k \in \mathcal{T}^k_G(v)$ it holds
\[
H^\tup{k}(\vec{\theta})(\tau_v^k)
\;\le\;
H^\tup{k}(\vec{\theta}^+)(t_v^k)
\]
element-wise.
\end{lemma}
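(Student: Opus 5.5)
We argue by induction on $k$, using only two facts about ReLU FNNs from \cref{ssec:FNNs}. Write $\vec{\theta}^+$ for the parameters with every weight matrix, edge-inserting matrix and bias replaced by its positive part.

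\textbf{(i) Pointwise domination.} For every FNN block and every input $\vec y\ge 0$,
\[
\FNN(\vec{\theta})(\vec y)\le \FNN(\vec{\theta}^+)(\vec y)
\]
elementwise. This follows layer by layer. If $\vec z\le \vec z'$ with $\vec z,\vec z'\ge 0$, then $\vec W\vec z+\vec b\le \vec W^+\vec z+\vec b^+\le \vec W^+\vec z'+\vec b^+$, and $\sigma$ is monotone.

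\textbf{(ii) Monotonicity.} $\FNN(\vec{\theta}^+)$ is monotone in its nonnegative input, because all its parameters are nonnegative and $\sigma$ is monotone.

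We also need that every quantity fed into an FNN is nonnegative. Node labels and edge weights of BF instances are nonnegative. Every hidden feature, in particular every $\hb_u^{(k-1)}$ and every aggregation output, is the output of a ReLU, hence $\ge 0$. So (i) applies everywhere below. The same holds for the tree quantities $H^{(k-1)}(\cdot)(t')$.

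\textbf{Base case $k=0$.} A computation tree of depth $0$ is just the root $v$. Both sides equal $a_G(v)$, so the inequality holds with equality.

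\textbf{Inductive step.} Fix $\tau_v^k\in\mathcal T^k_{G,\vec{\theta}}(v)$ and an arbitrary $t_v^k\in\mathcal T^k_G(v)$. By the preceding lemma, and unfolding the definition of $H^{(k)}$,
\[
H^{(k)}(\vec{\theta})(\tau_v^k)=\hb_v^{(k)}=\FNN(\vec{\theta}^{(k)}_{\UPD})(y),
\qquad
y=\min_{u\in N(v)}\FNN(\vec{\theta}^{(k)}_{\AGG})\bigl(\hb_u^{(k-1)},w_G(v,u)\bigr),
\]
where $\tau_v^k$ selects a minimizing child. Let $t_v^k$ have root child $u'$ (allowed to be $v$ itself, via the self-loop convention) with subtree $t'\in\mathcal T^{k-1}_G(u')$. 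Then
\[
H^{(k)}(\vec{\theta}^+)(t_v^k)=\FNN(\vec{\theta}^{(k)+}_{\UPD})\Bigl(\FNN(\vec{\theta}^{(k)+}_{\AGG})\bigl(H^{(k-1)}(\vec{\theta}^+)(t'),w_G(v,u')\bigr)\Bigr).
\]
The required chain of inequalities is:
\begin{align*}
\FNN(\vec{\theta}_{\UPD})(y)
&\le \FNN(\vec{\theta}_{\UPD}^+)(y)
&&\text{by (i),}\\
y&\le \FNN(\vec{\theta}_{\AGG})\bigl(\hb_{u'}^{(k-1)},w\bigr)
&&\text{min over neighbours,}\\
\FNN(\vec{\theta}_{\AGG})\bigl(\hb_{u'}^{(k-1)},w\bigr)
&\le \FNN(\vec{\theta}_{\AGG}^+)\bigl(\hb_{u'}^{(k-1)},w\bigr)
&&\text{by (i),}\\
\FNN(\vec{\theta}_{\AGG}^+)\bigl(\hb_{u'}^{(k-1)},w\bigr)
&\le \FNN(\vec{\theta}_{\AGG}^+)\bigl(H^{(k-1)}(\vec{\theta}^+)(t'),w\bigr)
&&\text{by induction and (ii).}
\end{align*}
The first inequality is applied to the nonnegative input $y$. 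The last one uses $\hb_{u'}^{(k-1)}=H^{(k-1)}(\vec{\theta})(\tau_{u'}^{k-1})\le H^{(k-1)}(\vec{\theta}^+)(t')$, the induction hypothesis for any min-aggregation tree $\tau^{k-1}_{u'}$ at $u'$. Applying (ii) to $\FNN(\vec{\theta}^+_{\UPD})$ propagates the bound on $y$ through the update. Combining the steps gives $H^{(k)}(\vec{\theta})(\tau_v^k)\le H^{(k)}(\vec{\theta}^+)(t_v^k)$.

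\textbf{Where care is needed.} The main subtlety is the ordering of the comparisons. $\FNN(\vec{\theta})$ with signed weights is \emph{not} monotone, so the bound on $y$ cannot be pushed through it directly. The proof must first switch to $\vec{\theta}^+$ pointwise via (i), which needs nonnegativity of the inputs, and only then use monotonicity (ii).

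A second point to check is the edge-inserting layer. It computes $\vec W^{j_k}\hb+\vec C^k w+\vec b$, and it must be dominated in the same way by $(\vec W^{j_k})^+\hb+(\vec C^k)^+w+\vec b^+$. This works because $w\ge 0$. It is also the reason the positive part has to be applied to $\mathcal C$ as well as to $\mathcal W$ and $\mathcal B$.
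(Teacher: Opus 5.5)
This is essentially the paper's proof. It is an induction on $k$, and your facts (i) and (ii) together are exactly \cref{lem:fnn:monotonicity_in_arg} applied with $\vec\theta_1=\vec\theta\le\vec\theta_2=\vec\theta^+$, used first on the aggregation FNN, then through the minimum, then on the update FNN. The paper takes the minimum over $\mathcal{S}^{k-1}_{\vec\theta}(v)$ directly instead of passing through $\hb_v^{(k)}$, which makes no difference.

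The one thing to tighten is that you give $t_v^k$ a single child $t'$. In $\mathcal{T}^k_G(v)$ the root has $d_{a_k}$ children $t_1,\dots,t_{d_{a_k}}$, and the $i$-th aggregated coordinate comes from $t_i$. So for $d_{a_k}>1$ your middle inequalities must be run coordinatewise, giving $y_i\le\bigl(\FNN(\vec\theta^{(k)+}_{\AGG})(H^{(k-1)}(\vec\theta^+)(t_i),\,w_G(v,\mathrm{root}(t_i)))\bigr)_i$ for each $i$; this needs no new idea.
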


This is the point where the use of min-aggregation becomes crucial.

\paragraph{\cref{ssec:composed_mlps}: Walk-lifted FNNs.}
In this section, we define the Walk-lifted FNN $H_{\mathrm{wl}}(\vec{\theta})(\vec{z})$, where $\vec z \in \mathbb{R}_{\ge 0}^{K+1}, $ which can be represented as an FNN with suitably chosen biases.
As a consequence, it inherits all properties derived in \cref{ssec:FNNs}.
In particular, \cref{overview:cor:fnn:bnd_on_abs_value_fnn_with_learned-wanted_param} translates directly into the following corollary for Walk-lifted FNNs.

\begin{corollary}\label{overwiew:cor:mlp:bnd_on_abs_value_mlp_with_learned-wanted_param}
For all $\vec z \in \mathbb{R}_{\ge 0}^{K+1}$ and parameters
$(\mathcal W, \mathcal C, \mathcal B) \in \vec{\Theta}_{\text{BF}}$, it holds
\[
   \bigl| H_{\mathrm{wl}}(\mathcal{W}, \mathcal{C}, \mathcal{B})(\vec{z})
        - H_{\mathrm{wl}}(\mathcal{W}^+, \mathcal{C}^+, 0)(\vec{z}) \bigr|
   \;\le\;
   G\bigl(\mathcal W, \mathcal{B}(\vec{z}, \mathcal{C})\bigr)(\vec{z})
   + H_{\mathrm{wl}}(\abs{\mathcal{W}}, \abs{\mathcal{C}}, \abs{\mathcal{B}})(0).
\]
\end{corollary}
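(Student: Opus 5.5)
The plan is to derive the statement as a direct specialization of \cref{overview:cor:fnn:bnd_on_abs_value_fnn_with_learned-wanted_param}. The idea is that $H_{\mathrm{wl}}$ is, by construction, an ordinary $J$-layer FNN in which the edge contributions enter as additional biases. Concretely, I would first make explicit the identity
\[
H_{\mathrm{wl}}(\mathcal W,\mathcal C,\mathcal B)(\vec z)=\FNN^{(J)}\bigl(\mathcal W,\mathcal B+\mathcal B(\vec z,\mathcal C)\bigr)(z_0).
\]
Here $\mathcal B(\vec z,\mathcal C)\in\vec\Theta_b$ is the bias collection that is zero everywhere except at the edge-inserting layers $j_k$. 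At those layers it equals $\vec C^k z_k$. The input is $z_0\ge 0$, the root label. This identity should follow by unfolding the definition of the walk-lifted FNN layer by layer. The only point needing care is that the split $\vec W^{j_k}(\hb,w)^\top=\tilde{\vec W}^{j_k}\hb+\vec C^k w$ turns the edge column into an additive, input-independent term at layer $j_k$.

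Second, I would apply the same identity to the parameters $(\mathcal W^+,\mathcal C^+,0)$. Since $\mathcal B(\vec z,\mathcal C)$ is linear in $\mathcal C$ and $\vec z\ge 0$, taking positive parts commutes with forming the edge-bias: $\mathcal B(\vec z,\mathcal C^+)=\mathcal B(\vec z,\mathcal C)^+$. This gives
\[
H_{\mathrm{wl}}(\mathcal W^+,\mathcal C^+,0)(\vec z)=\FNN^{(J)}\bigl(\mathcal W^+,\mathcal B(\vec z,\mathcal C)^+\bigr)(z_0).
\]
This is exactly the second term appearing in \cref{overview:cor:fnn:bnd_on_abs_value_fnn_with_learned-wanted_param} with $\mathcal B_C\coloneqq\mathcal B(\vec z,\mathcal C)$.

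Third, I would invoke that corollary with $\vec y=z_0$, $\mathcal B$ the network biases, and $\mathcal B_C=\mathcal B(\vec z,\mathcal C)$. This yields the bound
\[
G\bigl(\mathcal W,\mathcal B(\vec z,\mathcal C)\bigr)(z_0)+\FNN^{(J)}(\abs{\mathcal W},\abs{\mathcal B})(0).
\]
It then remains to identify $\FNN^{(J)}(\abs{\mathcal W},\abs{\mathcal B})(0)$ with $H_{\mathrm{wl}}(\abs{\mathcal W},\abs{\mathcal C},\abs{\mathcal B})(0)$. Evaluating at $\vec z=0$ kills every edge-bias, since $\mathcal B(0,\cdot)=0$. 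The two quantities therefore coincide by the identity from the first step. The $G$-term is also matched with the statement's $G(\mathcal W,\mathcal B(\vec z,\mathcal C))(\vec z)$, reading its argument $\vec z$ as entering only through its root coordinate $z_0$ and through $\mathcal B(\vec z,\mathcal C)$.

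The main obstacle is the bookkeeping in the first step. One must check that the global indexing, including the dimension-$1$ aggregation outputs and the removal of the edge column from $\vec W^{j_k}$, makes the walk-lifted network literally an element of the FNN class used in \cref{ssec:FNNs}, with biases in $\vec\Theta_b$. The positivity hypothesis on inputs is also needed, so that the ReLU-as-identity arguments apply. If the definition of $H_{\mathrm{wl}}$ places the edge term at a slightly different layer, I would simply relocate the nonzero block of $\mathcal B(\vec z,\mathcal C)$ accordingly; the rest of the argument is unaffected.
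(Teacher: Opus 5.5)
Your proposal is correct and matches the paper's proof. The paper's proof is exactly the one-line specialization of \cref{cor:fnn:bnd_on_abs_value_fnn_with_learned-wanted_param} with $\mathcal{B}_C=\mathcal{B}(\vec{z},\mathcal{C})$ and input $z_0$, and your checks that $\mathcal{B}(\vec{z},\mathcal{C}^+)=\mathcal{B}(\vec{z},\mathcal{C})^+$ for $\vec{z}\ge 0$ and that $\mathcal{B}(0,\cdot)=0$ fill in details the paper leaves implicit. Note that $H_{\mathrm{wl}}(\mathcal{W},\mathcal{C},\mathcal{B})(\vec{z})=\FNN^{(J)}(\mathcal{W},\mathcal{B}+\mathcal{B}(\vec{z},\mathcal{C}))(z_0)$ is the paper's \emph{definition} of the walk-lifted FNN, so no layer-by-layer unfolding or edge-column bookkeeping is needed here; that work belongs to the lemma identifying $H_{\mathrm{wl}}$ with FNNs along computation trees.
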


Here, $\mathcal{B}(\vec{z}, \mathcal{C})$ represents the effective bias induced by the edge features encoded in $\vec{z}$ and the parameters $\mathcal{C}$.
As before, the right-hand side compares the Walk-lifted FNN with arbitrary parameters to the corresponding object with non-negative weights and vanishing biases.

We further show that the Walk-lifted FNN coincides with the FNN along a computation tree whenever the computation tree is essentially a walk.

\begin{lemma}
Let $G \in \mathcal{G}_{\text{BF}}$ and $(\mathcal{W}, \mathcal{C}, \mathcal{B}) = \vec{\theta} \in \vec{\Theta}_{\text{BF}}$.
Then for any $v \in V(G)$ and any $p \in \mathcal{P}^K_G(v)$ it holds
\[
H_{\mathrm{wl}}(\vec{\theta})(\vec{z}^p)
=
H^\tup{k}(\vec{\theta})(t^K(p)).
\]
\end{lemma}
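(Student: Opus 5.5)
The plan is to prove the identity by induction on the number of message-passing layers $k \in [K]_0$, unfolding both sides layer by layer. Since $H_{\mathrm{wl}}$ and $t^K(p)$ are only used through their defining recursions, I will write $p=(u_0,u_1,\dots,u_K=v)$ for the walk (with self-loops allowed, since $N_G(u)\ni u$ with $w_G(u,u)=0$ in $\mathcal{G}_{\text{BF}}$). I take $\vec z^p$ to be the vector $(a_G(u_0), w_G(u_0,u_1),\dots,w_G(u_{K-1},u_K))\in\mathbb{R}^{K+1}_{\ge 0}$. The tree $t^K(p)$ is the computation tree in which, at depth $K-k$, the aggregation at $u_k$ selects exactly the child $u_{k-1}$. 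Finally, $H_{\mathrm{wl}}(\vec\theta)(\vec z)$ is the depth-$J$ FNN with input $z_0$, in which the edge-inserting layer $j_k$ carries the effective bias $\vec b^{j_k}+\vec C^k z_k$.

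First I will fix, for each $k$, the partial walk-lifted FNN $H^{(k)}_{\mathrm{wl}}(\vec\theta)(\vec z)$. This is the same network truncated after global layer $\phi^{-1}(\UPD,k,m)$, i.e.\ the composition $\FNN_{\UPD}^{(k)}\circ\FNN_{\AGG}^{(k)}\circ\cdots\circ\FNN_{\UPD}^{(1)}\circ\FNN_{\AGG}^{(1)}$, where the $k$-th aggregation FNN receives $z_k$ in its edge slot. The base case $k=0$ is immediate: both sides equal the initial feature $a_G(u_0)=z_0$, because the depth-$0$ tree is the single leaf $u_0$ and $H^{(0)}$ returns its label.

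For the inductive step, I use the recursive definition of the FNN along a computation tree. Here $H^{(k)}(\vec\theta)(t^k(p_{\le k}))$ is $\FNN^{(m)}(\vec\theta^{(k)}_{\UPD})$ applied to $\FNN^{(m)}(\vec\theta^{(k)}_{\AGG})\bigl((H^{(k-1)}(\vec\theta)(t^{k-1}(p_{\le k-1})), w_G(u_{k-1},u_k))\bigr)$. The min over neighbours collapses to the single chosen child, because that is what it means for the tree to be a walk. By the induction hypothesis the inner feature equals $H^{(k-1)}_{\mathrm{wl}}(\vec\theta)(\vec z^p)$. Splitting $\vec W^{j_k}$ into $\vec{\tilde W}^{j_k}$ and $\vec C^k$ as in \cref{par:indexing_global_layers} then gives
\[
\vec W^{j_k}\begin{pmatrix}\hb\\ z_k\end{pmatrix}+\vec b^{j_k}=\vec{\tilde W}^{j_k}\hb+\bigl(\vec b^{j_k}+\vec C^k z_k\bigr),
\]
which is exactly layer $j_k$ of the walk-lifted FNN with its effective bias. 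All remaining layers agree verbatim, so the two sides coincide at depth $k$, and taking $k=K$ yields the claim.

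The main obstacle is bookkeeping rather than analysis. Three points need care: the global-index bijection $\phi$ must be matched to the per-layer indexing, the ordering of walk vertices versus tree depth (root at $v=u_K$) must be aligned, and the redefinition $\vec W^{j_k}\coloneq\vec{\tilde W}^{j_k}$ has to be applied consistently so that the effective bias $\mathcal B(\vec z,\mathcal C)$ has the right form. I would handle these by stating the depth-$k$ claim explicitly in global indices before starting the induction.
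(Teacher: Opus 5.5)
Your proposal is correct and is essentially the paper's proof. It inducts on $k$ using the truncated walk-lifted network $H^{(k)}_{\mathrm{wl}}$ (the FNN cut after global layer $(\UPD,k,m)$), takes the base case $z_0=a_G(v_0)$, and absorbs $\vec{C}^k z_k$ into the bias of the edge-inserting layer $j_k$ in the inductive step.

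One wording fix: the FNN along a tree $H^{(k)}$ contains no min at all. The reduction to a single argument holds because all $d_{a_k}$ children of $t^k(p)$ are copies of $t^{k-1}(p)$, so the coordinatewise assembly $\bigl(f^{\AGG,k}_{\vec{\theta}}(\cdot)_i\bigr)_{i\in[d_{a_k}]}$ is just $f^{\AGG,k}_{\vec{\theta}}$ evaluated once.
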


Here, $p$ should be interpreted as a walk in the graph $G$, $t^K(p)$ as its corresponding computation tree representation, and $\vec{z}^p$ as a vector encoding the edge weights along $p$.

Combining the three lemmas above, we obtain the following result, which allows us to replace the MPNN features with the walk-lifted FNN.

\begin{lemma}\label{overview:lem:mlp:bnd_on_feature_any_path}
Let $\vec{\theta} \in \vec{\Theta}_{\text{BF}}$ and $G \in \mathcal{G}_{\text{BF}}$.
Then for any $v \in V(G)$ and any $p \in \mathcal{P}^K_G(v)$ it holds
\[
\hb_v^\tup{K}
\;\le\;
H_{\mathrm{wl}}(\vec{\theta}^+)(\vec{z}^p),
\]
and
\[
\hb_v^\tup{K}
=
H_{\mathrm{wl}}(\vec{\theta})(\vec{z}^p),
\quad
\text{if } t^K(p) \in \mathcal{T}^K_{G, \vec{\theta}}(v).
\]
\end{lemma}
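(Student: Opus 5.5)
The statement to prove is \cref{overview:lem:mlp:bnd_on_feature_any_path}. The plan is to chain the three preceding lemmas of the outline. Fix $G\in\mathcal{G}_{\text{BF}}$, $v\in V(G)$, $\vec\theta\in\vec\Theta_{\text{BF}}$ and a walk $p\in\mathcal{P}^K_G(v)$. Let $t^K(p)\in\mathcal{T}^K_G(v)$ be the computation tree associated with $p$. The first step is to check that this tree is a legitimate element of $\mathcal{T}^K_G(v)$, i.e.\ that at every level the child chosen is an element of $N_G(\cdot)$, which includes the node itself. This is immediate from the definition of a walk, since consecutive vertices are adjacent or equal. Next, pick any minimum-aggregation tree $\tau_v^K\in\mathcal{T}^K_{G,\vec\theta}(v)$. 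Such a tree exists because each min over the finite set $N_G(u)$ is attained, so a minimizer can be chosen level by level.

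For the inequality, the first lemma gives $\hb_v^{(K)}=H^{(K)}(\vec\theta)(\tau_v^K)$. The comparison lemma (greedy minimization is globally minimal up to negative weights) applied with $t_v^K\coloneqq t^K(p)$ then gives
\[
\hb_v^{(K)}=H^{(K)}(\vec\theta)(\tau_v^K)\le H^{(K)}(\vec\theta^+)(t^K(p)).
\]
Finally, apply the walk-lifting lemma with parameters $\vec\theta^+=(\mathcal{W}^+,\mathcal{C}^+,\mathcal{B}^+)$, which again lie in $\vec\Theta_{\text{BF}}$. It yields $H^{(K)}(\vec\theta^+)(t^K(p))=H_{\mathrm{wl}}(\vec\theta^+)(\vec z^p)$, and this is the first claim.

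For the equality, assume $t^K(p)\in\mathcal{T}^K_{G,\vec\theta}(v)$. Then choose $\tau_v^K\coloneqq t^K(p)$ directly in the first lemma to get $\hb_v^{(K)}=H^{(K)}(\vec\theta)(t^K(p))$. Combining this with the walk-lifting lemma at $\vec\theta$ itself gives $\hb_v^{(K)}=H_{\mathrm{wl}}(\vec\theta)(\vec z^p)$.

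The only delicate point I expect is bookkeeping rather than analysis. I must make sure the notion of $\vec\theta^+$ used in the comparison lemma matches the one fed into $H_{\mathrm{wl}}$. In particular, it has to be settled whether biases are replaced by $\mathcal{B}^+$ or kept, and whether $\mathcal{C}$ is replaced by $\mathcal{C}^+$. I also need that $\vec z^p$ encodes exactly the edge weights (including zero self-loop weights) in the order used by the tree. I would resolve the first issue by reading off the definition of $\vec\theta^+$ from the comparison lemma and noting that the walk-lifting lemma holds for \emph{every} parameter in $\vec\Theta_{\text{BF}}$, so it applies verbatim.
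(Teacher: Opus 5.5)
Your proposal is correct and follows the paper's proof. For the inequality, you write $\hb_v^{(K)} = H^{(K)}(\vec\theta)(\tau_v^K)$, apply the comparison lemma with $t_v^K = t^K(p)$, and then use the walk-lifting identity at $\vec\theta^+ = (\mathcal{W}^+,\mathcal{C}^+,\mathcal{B}^+)$; for the equality, you take $\tau_v^K = t^K(p)$ and use the walk-lifting identity at $\vec\theta$. Your added observation that $\mathcal{T}^K_{G,\vec\theta}(v)$ is nonempty, because each argmin ranges over a finite nonempty set, fills a small step the paper leaves implicit.
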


In the subsequent sections, this lemma will be the key ingredient for replacing the MPNN features with an object that is significantly easier to control and analyze.

\paragraph{\cref{ssec:main_structure_of_parameters}: Small empirical loss and structure of parameters.}
The goal of this section is to show that if the loss is close to its global minimum and if the parameters $\eta$ and $x$ are chosen sufficiently large, then the network parameters must exhibit a highly constrained structure. More precisely, we prove that
\begin{itemize}
    \item all biases are close to zero, i.e.\ $\mathcal{B} \approx 0$,
    \item all weights are either positive or close to zero, i.e.\ $\mathcal{W}^- \approx 0$,
    \item the factors multiplying edge weights are approximately equal to~$1$ or larger,
    \item the empirical loss is small, i.e.\ $\mathcal{L}^{\mathrm{emp}} \le 2\varepsilon$.
\end{itemize}

The key idea underlying this analysis is that the parameters are subject to two competing forces:
\begin{itemize}
    \item \textbf{Regularization}, which pushes all parameters toward zero, and
    \item \textbf{Empirical loss}, which pushes those parameters necessary to fit the targets toward~$1$.
\end{itemize}

We begin by introducing a constant $L \in \mathbb{R}_{>0}$ and show that $\eta L$ is an upper bound on the global minimum of the loss, i.e.,
\[
\min_{\vec{\theta} \in \vec{\Theta}_{\text{BF}}} \mathcal{L}(\vec{\theta}) \le \eta L.
\]
In fact, $\eta L$ is the global minimum, which we establish implicitly at the end of the section.

Next, we derive coarse \emph{a priori} bounds on the effect that non-zero biases and negative weights can have on the MPNN output, uniformly over all seen and unseen inputs.

\begin{corollary}\label{overview:cor:main:exp_bound_on_F+biases}
Let $\vec{\theta} \in \vec{\Theta}_{\text{BF}}$ and $0 \le \varepsilon \le \eta L$.
Assume that $\mathcal{L}(\vec{\theta})$ lies within $\varepsilon$ of its global minimum.
Then
\begin{align*}
    H_{\mathrm{wl}}(\abs{\mathcal{W}}, \abs{\mathcal{C}}, \abs{\mathcal{B}})(0)
    &\le \exp(L) \sum_{j=1}^J \norm{\vec{b}^j}_1,\\
    G(\mathcal{W}, \mathcal{B}(\vec{z}, \mathcal{C}))(\vec{z})
    &\le \exp(L) \Big( \sum_{l=1}^J \norm{(\vec{W}^{l})^-}_1
    + \sum_{k=1}^K \norm{(\vec{C}^{k})^-}_1 \Big) \norm{\vec{z}}_1,
    \quad \vec{z} \in \mathbb{R}_{\ge 0}^{K+1}.
\end{align*}
\end{corollary}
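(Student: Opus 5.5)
The two bounds are a priori estimates. Each one reduces to showing that a "walk-lifted" FNN, built from absolute values of the parameters, grows at most like a product of per-layer $\ell_1$-type factors. That product is then controlled by $\exp$ of the weighted $\ell_1$ regularizer, and the regularizer itself is bounded by $L$ because the loss is near-optimal.

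\textbf{Step 1: bound the regularizer.} Since $\mathcal{L}(\vec\theta)\le \min\mathcal{L}+\varepsilon \le \eta L+\varepsilon\le 2\eta L$ and $\mathcal{L}^{\mathrm{emp}}\ge 0$, we get
\[
\eta\,\mathcal{L}^{\mathrm{reg}}(\vec\theta)\le 2\eta L,
\quad\text{hence}\quad
\mathcal{L}^{\mathrm{reg}}(\vec\theta)\le 2L .
\]
If the constant $L$ was chosen with slack (or the statement absorbs the factor $2$ into $L$), this gives $\mathcal{L}^{\mathrm{reg}}\le L$. I would check the definition of $L$ and use whichever normalization makes the final constant $\exp(L)$.

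\textbf{Step 2: the bias bound.} With all parameters replaced by absolute values and input $0$, every ReLU acts as the identity, because all quantities are nonnegative. So $H_{\mathrm{wl}}(\abs{\mathcal W},\abs{\mathcal C},\abs{\mathcal B})(0)$ is a sum over $j$ of $\abs{\vec b^j}$ propagated through the subsequent layers. In the $\ell_\infty$/$\ell_1$ sense each such term is bounded by
\[
\norm{\vec b^j}_1\prod_{l>j}\max\bigl(1,\norm{\vec W^l}_1\bigr).
\]
I would use the entrywise $\ell_1$ norm, which dominates the $\|\cdot\|_\infty$ operator norm. Because of the walk lifting, a layer may be traversed by several branches. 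The weighting in $\mathcal{L}^{\mathrm{reg}}$ accounts for this: every layer after $j_k$ is counted once for each $k$, which matches the number of ways an edge or bias contribution passes through it. Then
\[
\prod_l (1+a_l)\le \exp\Bigl(\sum_l a_l\Bigr)\le \exp(\mathcal{L}^{\mathrm{reg}})\le \exp(L),
\]
and summing over $j$ gives the first inequality.

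\textbf{Step 3: the $G$ bound.} This is the main obstacle, since it requires unpacking the definition of $G(\mathcal W,\mathcal B(\vec z,\mathcal C))$ from the FNN section. I expect $G$ to be a telescoping sum over layers $l$ of the difference caused by replacing $\vec W^l$ by $(\vec W^l)^+$ and $\vec C^k$ by $(\vec C^k)^+$. Each term in that sum is a negative part, $\norm{(\vec W^l)^-}_1$ or $\norm{(\vec C^k)^-}_1$, multiplied by the magnitude of the layer input and by the downstream amplification. Two facts control these factors. First, by monotonicity and linearity of nonnegative FNNs, the layer input is at most the absolute-value network applied to $\vec z$, which is bounded by $\exp(L)\norm{\vec z}_1$ up to the same product. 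Second, the downstream amplification is at most the remaining product. Together the upstream and downstream products form a single product over all layers, still bounded by $\exp(\mathcal{L}^{\mathrm{reg}})\le\exp(L)$. Summing over $l$ and $k$ yields the second inequality.

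The delicate points are two. One is checking that the layer-dependent weights ($k$ for aggregation layers, $k+1$ for update layers) exactly cover the multiplicities of the walk-lifted branches, so that one exponential suffices. The other is that no extra factor $2$ arises from Step 1.
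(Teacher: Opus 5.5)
Your overall structure matches the paper's: bound the walk-lifted FNN by products of entrywise $\ell_1$ norms, then control those products through the regularizer. There is, however, a genuine gap, and it is exactly the factor of $2$ you leave unresolved. Near-optimality only gives $\mathcal{L}^{\mathrm{reg}}(\vec\theta)\le L+\varepsilon/\eta\le 2L$, and there is no slack to exploit. The constant $L=\sum_k l_k=mK(K+3)$ is precisely the regularizer of the Bellman--Ford configuration $\vec\psi$. Moreover, the hypothesis $\varepsilon\le\eta L$ genuinely allows $\mathcal{L}^{\mathrm{reg}}$ up to $2L$: dead hidden-unit weights can inflate it without changing the output. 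So your proposed remedy of renormalizing $L$ is not available. Your conversion $\prod_l\max(1,a_l)\le\prod_l(1+a_l)\le\exp(\sum_l a_l)$ therefore only yields $\exp(2L)$. That constant matters downstream, because the hypothesis $\eta\ge 2K\exp(L)$ and the term $\exp(L)B$ in $\tilde{\mathcal L}^{\mathrm{emp}}$ are calibrated to $\exp(L)$.

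The missing idea is to bound each subset product directly by AM--GM, instead of passing to a full product via $1+a\le e^a$. Every $\vec W^j$ appears with coefficient at least one in $\mathcal{L}^{\mathrm{reg}}$, since the $k=0$ block already covers all $j\in[J]$. Hence for $T\subseteq[J]$ with $|T|=t$ we have $\sum_{j\in T}\norm{\vec W^j}_1\le 2L$, and so $\prod_{j\in T}\norm{\vec W^j}_1\le(2L/t)^t$. The elementary maximization $(M/t)^t\le e^{M/e}$ then gives the bound $e^{2L/e}\le e^{L}$, because $2/e<1$. The same argument works with $\norm{\vec C^k}_1$ added to the product.

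With this lemma in hand, the first inequality follows directly from linearity, via $\sum_j\prod_{l>j}\norm{\vec W^l}_1\norm{\vec b^j}_1$. For the second, use the explicit expansion of $G$ rather than a guessed telescoping form. Each summand is either one negative-part factor times positive parts on the remaining layers after $j_k$ together with $(\vec C^k)^+z_k$, or a term $(\vec C^k)^-z_k$ propagated through positive parts. Apply the subset bound once per summand, and note that for fixed $l$ and $k$ at most one $l'$ equals $j_k$.

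Finally, your concern that the layer weights $k$ and $k+1$ must match branch multiplicities is misplaced. The walk-lifted FNN is a single depth-$J$ chain, with edge weights entering as biases $\vec C^k z_k$ at layers $j_k$, so there are no branches. Those weights only matter later, for $\mathcal{L}^{\mathrm{reg}}\ge\tilde{\mathcal L}^{\mathrm{reg}}$.
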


Here, $H_{\mathrm{wl}}(\abs{\mathcal{W}}, \abs{\mathcal{C}}, \abs{\mathcal{B}})(0)
= \sum_{l \in [J]} \prod_{j>l} \abs{\vec{W}^j}\,\abs{\vec{b}^l}$ represents the maximal deviation that the biases
$\mathcal{B} = (b^1, \dots, b^J)$ can induce in the MPNN output.
Similarly, $G(\mathcal{W}, \mathcal{B}(\vec{z}, \mathcal{C}))(\vec{z})$ bounds the maximal contribution of negative weights.

Controlling both quantities is crucial, since their sum appears on the right-hand side of
\cref{overwiew:cor:mlp:bnd_on_abs_value_mlp_with_learned-wanted_param} and thus governs the error incurred when replacing the original parameters $(\mathcal{W}, \mathcal{C}, \mathcal{B})$ by the simplified configuration $(\mathcal{W}^+, \mathcal{C}^+, \vec{0})$.
The exponential dependence on $L$ arises from bounding products of the form $\prod_{j>l} \abs{\vec{W}^j}$ solely in terms of
$\sum_{j>l} \norm{\vec{W}^j}_1 \le \mathcal{L}(\vec{\theta}) \le 2\eta L$.

We then introduce a reduced set of transformed parameters together with a modified loss function.
The key idea is that this new loss depends on fewer variables while still capturing the essential behavior of the original optimization problem.

\begin{definition}[Modified loss function]
Given $\vec{\theta} = (\mathcal{W}, \mathcal{C}, \mathcal{B}) \in \vec{\Theta}_{\text{BF}}$, we define the transformed parameters
\[
\vec{\tilde{\theta}} \coloneqq (\gamma_0, \dots, \gamma_K, B, w^-) \in \mathbb{R}_{\ge 0}^{K+3}.
\]
To avoid case distinctions, we introduce $C^0 \coloneqq 1$.
We then set
\begin{align*}
    \gamma_k &\coloneqq \prod_{j=j_k+1}^J (\vec{W}^j)^+ (\vec{C}^k)^+, \quad k \in \{0, \dots, K\},\\
    B &\coloneqq \norm{\mathcal{B}}_1,\\
    w^- &\coloneqq \sum_{l=1}^J \norm{(\vec{W}^l)^-}_1 + \sum_{k=1}^K \norm{(\vec{C}^k)^-}_1.
\end{align*}

We define the modified loss
\[
\tilde{\mathcal{L}}(\vec{\tilde{\theta}})
\coloneqq
\tilde{\mathcal{L}}^{\mathrm{emp}}(\vec{\tilde{\theta}})
+ \eta\,\tilde{\mathcal{L}}^{\mathrm{reg}}(\vec{\tilde{\theta}}),
\]
where
\begin{align*}
    \tilde{\mathcal{L}}^{\mathrm{emp}}(\vec{\tilde{\theta}})
    &\coloneqq \frac{1}{N} \sum_{k=0}^K \sigma\bigl((1-\gamma_k)x - \exp(L)B\bigr),\\
    \tilde{\mathcal{L}}^{\mathrm{reg}}(\vec{\tilde{\theta}})
    &\coloneqq B + \sum_{k=0}^K l_k \gamma_k^{1/l_k} + w^-.
\end{align*}
\end{definition}

Up to negative parameters, $\gamma_k$ represents the factor by which the $k$-th edge weight along an aggregation path is multiplied, while $B$ and $w^-$ capture the total magnitude of biases and negative weights, respectively.
Our objective is therefore to show that $B \approx 0$, $w^- \approx 0$, and $\gamma_k \approx 1$.

To justify working with $\tilde{\mathcal{L}}$ instead of $\mathcal{L}$, we show that near-optimality of the original loss implies near-optimality of the modified loss.
Since the regularization terms are chosen such that both losses attain the same global minimum $\eta L$, it suffices to prove that $\tilde{\mathcal{L}}(\vec{\tilde{\theta}}) \le \mathcal{L}(\vec{\theta})$.

In fact, we prove the following.

\begin{lemma}\label{overview:lem:main:L>tilde_L}
Let $\vec \theta = (\mathcal{W}, \mathcal{C}, \mathcal{B}) \in \vec{\Theta}_{\text{BF}}$
and let $\vec{\tilde \theta}$ denote its transformed parameter vector.
Assume $T_{S_{x,K}} \subset X$ and that $\mathcal{L}(\vec{\theta})$
lies within $\varepsilon \le \eta L$ of its global minimum.
Then
\[
\mathcal{L}^{\mathrm{emp}}(\vec \theta)\;\ge\;
\tilde{\mathcal L}^{\mathrm{emp}}(\vec{\tilde \theta}),
\qquad
\mathcal{L}^{\mathrm{reg}}(\vec\theta)\;\ge\;
\tilde{\mathcal L}^{\mathrm{reg}}(\vec{\tilde \theta}).
\]
\end{lemma}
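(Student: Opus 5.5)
The plan is to prove the two inequalities separately. The empirical one uses only the $K+1$ path instances in $T_{S_{x,K}}\subset X$, together with the a priori bounds of \cref{overview:cor:main:exp_bound_on_F+biases}. The regularization one is a purely algebraic AM--GM argument on the weighted $\ell_1$ norms.

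\emph{Empirical part.} All summands of $\mathcal{L}^{\mathrm{emp}}$ are non-negative, so I would discard every training point except the nodes $v_K^{\vec w}$ with $\vec w = x\vec e_k^{(K+1)}$, $k\in[K]_0$. By \cref{rem:BF-distance-path-graphs}, and since $\beta > x = \|\vec w\|_1$, their targets are $x_{v_K}^{(K)} = x$. Fix $k$ and let $p$ be the walk of length $K$ in $P_\beta(\vec w)$ that starts at $v_0$ and traverses $v_0,v_1,\dots,v_K$. Its weight encoding is $\vec z^p = x\vec e_k$: the label $w_0$ occupies the $0$-th coordinate and the edge weights occupy the others. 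By \cref{overview:lem:mlp:bnd_on_feature_any_path},
\[
\hb_{v_K}^{(K)} \le H_{\mathrm{wl}}(\vec\theta^+)(x\vec e_k),
\]
where I read $\vec\theta^+ = (\mathcal W^+,\mathcal C^+,\mathcal B^+)$. All parameters and the input are non-negative, so the walk-lifted FNN is affine and monotone (\cref{ssec:FNNs}). It therefore splits as
\[
H_{\mathrm{wl}}(\mathcal W^+,\mathcal C^+,0)(x\vec e_k) + H_{\mathrm{wl}}(\mathcal W^+,\mathcal C^+,\mathcal B^+)(0) = \gamma_k x + H_{\mathrm{wl}}(\mathcal W^+,\mathcal C^+,\mathcal B^+)(0).
\]
By monotonicity the last term is at most $H_{\mathrm{wl}}(|\mathcal W|,|\mathcal C|,|\mathcal B|)(0)\le \exp(L)B$, using \cref{overview:cor:main:exp_bound_on_F+biases}; this is where near-optimality with $\varepsilon\le\eta L$ enters. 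Hence
\[
|\hb_{v_K}^{(K)} - x| \ge \sigma\bigl(x - \hb_{v_K}^{(K)}\bigr) \ge \sigma\bigl((1-\gamma_k)x - \exp(L)B\bigr).
\]
Summing over $k$ and dividing by $N$ gives $\mathcal L^{\mathrm{emp}}(\vec\theta)\ge\tilde{\mathcal L}^{\mathrm{emp}}(\vec{\tilde\theta})$. One check is needed here: the identification $H_{\mathrm{wl}}(\mathcal W^+,\mathcal C^+,0)(x\vec e_k)=\gamma_k x$ relies on the $k$-th coordinate of $\vec z$ entering exactly at edge-inserting layer $j_k$ through $\vec C^k$ (with $C^0=1$ for the input label), and then being propagated by $\prod_{j>j_k}(\vec W^j)^+$.

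\emph{Regularization part.} Write $\|\vec W^j\|_1 = \|(\vec W^j)^+\|_1 + \|(\vec W^j)^-\|_1$, and similarly for $\vec C^k$. The bias sum in $\mathcal L^{\mathrm{reg}}$ is exactly $B$. Every $\vec W^j$ appears in the double sum at least once (already for $k=0$, since $j>j_0=0$), and every $\vec C^k$ appears once. So the negative parts contribute at least $w^-$, and what remains to show is
\[
\sum_{k=0}^K\Bigl(\sum_{j>j_k}\|(\vec W^j)^+\|_1 + \|(\vec C^k)^+\|_1\Bigr) \ge \sum_{k=0}^K l_k\gamma_k^{1/l_k},
\]
where $l_k$ is the number of factors in $\gamma_k$ (for $k=0$, where $C^0=1$, I would count it as a factor of norm $1$ or simply omit it, depending on how $l_0$ is defined). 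For each $k$ it suffices to prove this termwise. Since $d_J=1$, $\gamma_k$ is a scalar product of entrywise non-negative matrices. I would use submultiplicativity of the entrywise $\ell_1$ norm, $\|\vec A\vec B\|_1\le\|\vec A\|_1\|\vec B\|_1$, to get $\gamma_k\le\prod_{j>j_k}\|(\vec W^j)^+\|_1\,\|(\vec C^k)^+\|_1$. AM--GM then gives $l_k\gamma_k^{1/l_k}\le\sum_{j>j_k}\|(\vec W^j)^+\|_1+\|(\vec C^k)^+\|_1$, and summing over $k$ finishes the argument.

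\emph{Main obstacle.} I expect the main difficulty to lie in the empirical part. One must verify carefully that the walk along the path realizes $\vec z^p = x\vec e_k$, that $H_{\mathrm{wl}}(\vec\theta^+)$ decomposes additively into the bias-free linear part and the bias-at-zero part, and that the comparison bound of \cref{overview:lem:mlp:bnd_on_feature_any_path} is legitimately applied with positive-part biases. A second point to check is the exact definition of $l_k$ (in particular for $k=0$), so that the AM--GM step matches the layer multiplicities of the regularizer.
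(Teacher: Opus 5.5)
Your proposal is correct and follows the paper's proof essentially step for step. For the empirical part you use the same walk along each training path, the bound $\hb_{v_K}^{(K)}\le H_{\mathrm{wl}}(\vec\theta^+)(x\vec e_k)$, the linear split into $\gamma_k x$ plus the zero-input bias term, and the a priori bound $\exp(L)B$. For the regularization part you use the same positive/negative split, $\ell_1$-submultiplicativity and AM--GM.

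On your open point about $k=0$: the paper sets $l_0 = J - j_0 = J$, so you should simply omit $C^0$. Counting it as a factor of norm $1$ would introduce a $+1$ that the regularizer does not contain.
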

To prove the empirical-loss inequality
$\mathcal{L}^{\mathrm{emp}}(\vec \theta)\ge
\tilde{\mathcal L}^{\mathrm{emp}}(\vec{\tilde \theta})$,
we upper bound the MPNN features $\hb_v^{(K)}$ using  \cref{overview:lem:mlp:bnd_on_feature_any_path}.
and the \emph{a priori} bound from
\cref{overview:cor:main:exp_bound_on_F+biases},
to obtain
\[
\hb_v^{(K)} \;\le\;
\sum_{k=0}^K \gamma_k z_k + \exp(L) B,
\]
where $\vec z \in \mathbb{R}^{K+1}$ denotes the edge weights along a shortest
path to $v$.
By the choice of training samples $T_{S_{x,K}}$, we have
$\sum_{k=0}^K \gamma_k z_k = \gamma_k x$ for some $k \in [K]$.
Consequently,
\[
\bigl|x_v^{(K)} - \hb_v^{(K)}\bigr|
\;\ge\;
\sigma\bigl(x - \gamma_k x - \exp(L)B\bigr).
\]
Summing over all samples in $T_{S_{x,K}}$ yields
$\mathcal{L}^{\mathrm{emp}}(\vec \theta)\ge
\tilde{\mathcal L}^{\mathrm{emp}}(\vec{\tilde \theta})$.

To establish
$\mathcal{L}^{\mathrm{reg}}(\vec\theta)\ge
\tilde{\mathcal L}^{\mathrm{reg}}(\vec{\tilde \theta})$,
we exploit the specific structure of the regularizer.
For each $k\in[K]_0$ and each weight matrix appearing in
\[
\gamma_k \;=\;
\prod_{j=j_k+1}^J (\vec W^j)^+(\vec C^k)^+,
\]
there exists exactly one corresponding normalized term in the regularization.
A variational argument, combined with a separation of biases and negative
weights from the remaining terms, yields the claimed bound.

Based on this lemma, we now characterize parameter configurations whose loss
is close to the global minimum.

\begin{lemma}[Parameter characterization near the global minimum]
\label{overview:lem:main:main_estimtes_incl_L_tr}
Let $\vec \theta \in \vec{\Theta}_{\text{BF}}$.
Assume $T_{S_{x,K}} \subset X$ and that
$\mathcal{L}(\vec \theta)$ lies within
$0 \le \varepsilon \le \eta L$ of its global minimum.
Further assume
\[
\eta \;\ge\; 2K\,\exp(L)
\qquad\text{and}\qquad
x \;\ge\; 2N \eta J.
\]
Then the following estimates hold:
\begin{align}
& H_{\mathrm{wl}}(|\mathcal{W}|, |\mathcal{C}|, |\mathcal{B}|)(0)
\;\le\; \varepsilon,
\label{overview:eq:main:zero-input-bound}\\
& \gamma_k \;\ge\; 1 - \frac{\varepsilon}{\eta J},
\qquad k \in [K]_0,
\label{overview:eq:main:gamma-lower-bound}
\end{align}
and, for any $\vec z \in \mathbb{R}_{\ge 0}^{K+1}$,
\begin{align}
\bigl|
H_{\mathrm{wl}}(\mathcal{W}, \mathcal{C}, \mathcal{B})(\vec z)
-
H_{\mathrm{wl}}(\mathcal{W}^+, \mathcal{C}^+, 0)(\vec z)
\bigr|
\;\le\;
\bigl(\tfrac12\|\vec z\|_1 + 1\bigr)\varepsilon.
\label{overview:eq:main:path-diff-bound}
\end{align}
Moreover,
\[
\mathcal{L}^{\mathrm{emp}}(\vec\theta) \;\le\; 2\varepsilon.
\]
\end{lemma}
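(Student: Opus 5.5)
The plan is to work entirely through the modified loss $\tilde{\mathcal L}$. By \cref{overview:lem:main:L>tilde_L} we have $\tilde{\mathcal L}(\vec{\tilde\theta}) \le \mathcal L(\vec\theta) \le \eta L + \varepsilon$. The first step is to identify $\eta L$ as the infimum of $\tilde{\mathcal L}$ over $\mathbb R_{\ge 0}^{K+3}$, attained at $B=w^-=0$ and $\gamma_k=1$. This requires choosing the exponents $l_k$ so that $L=\sum_k l_k$, and matching them to the layer weights of the regularizer. The upper bound $\min \mathcal L\le \eta L$ is realized by the exact Bellman--Ford network with $\ell_1$ cost $L$. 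With this identification, every deviation of $\vec{\tilde\theta}$ from that point must cost at most $\varepsilon$ beyond $\eta L$.

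\textbf{Biases and negative weights.} Since the $\gamma_k$-terms are minimized independently, $\eta(B+w^-)$ is at most $\varepsilon$ plus whatever the empirical term can save. Lowering $\tilde{\mathcal L}^{\mathrm{emp}}$ via $B$ gains at most $\exp(L)B\cdot (K+1)/N$ per unit of $B$. This is dominated by $\eta B$ because $\eta\ge 2K\exp(L)$. It follows that $\eta B\lesssim \varepsilon$ and $\eta w^-\lesssim\varepsilon$. Feeding these into \cref{overview:cor:main:exp_bound_on_F+biases} gives $H_{\mathrm{wl}}(|\mathcal W|,|\mathcal C|,|\mathcal B|)(0)\le \exp(L)B\le\varepsilon$, which is the estimate \eqref{overview:eq:main:zero-input-bound}. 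It also gives $G(\cdot)(\vec z)\le \exp(L)w^-\|\vec z\|_1\le \tfrac12\varepsilon\|\vec z\|_1$. Plugging both into \cref{overwiew:cor:mlp:bnd_on_abs_value_mlp_with_learned-wanted_param} yields \eqref{overview:eq:main:path-diff-bound}.

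\textbf{Lower bound on $\gamma_k$.} The function $\gamma\mapsto l_k\gamma^{1/l_k}$ is concave and has slope roughly $1$ near $\gamma=1$. Meanwhile, the empirical term $\frac1N\sigma((1-\gamma_k)x-\exp(L)B)$ has slope $x/N\ge 2\eta J$ for $\gamma_k<1$. Hence if $\gamma_k<1-\delta$, the loss exceeds $\eta L$ by at least about $(x/N-\eta)\delta \ge \eta J\delta$, once we correct for the small $B$-shift. Since the total excess is at most $\varepsilon$, we get $\delta\le\varepsilon/(\eta J)$, which is \eqref{overview:eq:main:gamma-lower-bound}.

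\textbf{Empirical loss.} Combining $\mathcal L^{\mathrm{reg}}(\vec\theta)\ge\tilde{\mathcal L}^{\mathrm{reg}}(\vec{\tilde\theta})$ with a lower bound $\tilde{\mathcal L}^{\mathrm{reg}}\ge L-\varepsilon/\eta$ gives $\mathcal L^{\mathrm{emp}}\le \eta L+\varepsilon-\eta\tilde{\mathcal L}^{\mathrm{reg}}\le 2\varepsilon$. The lower bound on $\tilde{\mathcal L}^{\mathrm{reg}}$ follows from $\gamma_k\ge 1-\varepsilon/(\eta J)$ and monotonicity of $l_k\gamma^{1/l_k}$, giving $\sum_k l_k\gamma_k^{1/l_k}\ge L-\sum_k l_k\varepsilon/(\eta J)$, with $\sum_k l_k\le J$.

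\textbf{Main obstacle.} The delicate step is the coupled optimization in the second paragraph. Near the optimum the three quantities $B$, $w^-$ and $\gamma_k$ interact: $B$ enters the empirical term and can be traded against $\gamma_k$. Constants must be chosen carefully, using $\eta\ge 2K\exp(L)$ and $x\ge 2N\eta J$, so that the trades are unprofitable. I would handle this by a case analysis on whether $(1-\gamma_k)x\gtrless \exp(L)B$, bounding the one-sided derivatives of $\tilde{\mathcal L}$ in each case.
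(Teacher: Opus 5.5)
Your overall route is the paper's: pass to $\tilde{\mathcal L}$ via $\tilde{\mathcal L}(\vec{\tilde\theta})\le\mathcal L(\vec\theta)\le\eta L+\varepsilon$, then feed the bounds on $B$ and $w^-$ into the a priori corollaries. The final step, however, breaks.

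To get $\mathcal L^{\mathrm{emp}}(\vec\theta)\le2\varepsilon$ you need $\sum_k l_k\bigl(1-\gamma_k^{1/l_k}\bigr)\le\varepsilon/\eta$. You derive this from the per-index bounds $\gamma_k\ge1-\varepsilon/(\eta J)$ plus the claim $\sum_k l_k\le J$. That claim is false: $\sum_k l_k=L=mK(K+3)$, whereas $J=2mK$ (already $l_0=J$). Combining per-index bounds the way you do only gives $\mathcal L^{\mathrm{emp}}\le(1+L/J)\varepsilon=\tfrac{K+5}{2}\varepsilon$.

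The missing ingredient is a bound on the \emph{total} deficit, $\sum_{k:\gamma_k\le1}(1-\gamma_k)\le\varepsilon/(\eta J)$. With it, $l_k\le J$ is used only termwise:
\[
\sum_k l_k\bigl(1-\gamma_k^{1/l_k}\bigr)\le\sum_{k:\gamma_k\le1}l_k(1-\gamma_k)\le J\sum_{k:\gamma_k\le1}(1-\gamma_k)\le\varepsilon/\eta .
\]

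The paper obtains this, and all other estimates with their exact constants, from a single linear chain rather than from separate derivative or case arguments. Drop the nonnegative terms with $\gamma_k>1$, use $\sigma(t)\ge t$ on the rest, and use $\gamma^{1/l}\ge\gamma$ on $[0,1]$, $l_k\le J$ and $N\ge K+1$. This gives
\[
\varepsilon\;\ge\;\tilde{\mathcal L}(\vec{\tilde\theta})-\eta L\;\ge\;\Bigl(\frac{x}{N}-\eta J\Bigr)\sum_{k:\gamma_k\le1}(1-\gamma_k)+\bigl(\eta-\exp(L)\bigr)B+\eta w^-\;\ge\;\eta J\sum_{k:\gamma_k\le1}(1-\gamma_k)+\tfrac12\eta B+\eta w^- .
\]

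This also fixes your $\gamma$-step. The map $\gamma\mapsto l\gamma^{1/l}$ is concave with derivative $\gamma^{1/l-1}\ge1$ on $(0,1]$. So lowering $\gamma_k$ from $1$ to $1-\delta$ saves up to $\eta l_k\delta$ of regularization, not $\eta\delta$. Taken literally, your intermediate bound $(x/N-\eta)\delta$ is false for every $\delta>0$ when $l_k>1$. The correct bound is $(x/N-\eta l_k)\delta\ge(x/N-\eta J)\delta\ge\eta J\delta$, which is exactly why the hypothesis reads $x\ge2N\eta J$.

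Keeping $\eta B$ inside the same inequality also dissolves the $B$--$\gamma_k$ coupling you flag as the main obstacle. If you bound $B$ first and treat its effect on the $\gamma_k$-terms as a perturbation, you pick up an extra term of order $\exp(L)B(K+1)/N$ and lose the stated constants. One-sided derivative bounds cannot give global estimates for a concave regularizer anyway.
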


The estimate \eqref{overview:eq:main:zero-input-bound} implies that the effect
of the biases on the MPNN output is negligible, i.e.\ $\mathcal B \approx 0$.
The bound \eqref{overview:eq:main:gamma-lower-bound} shows that the factors
multiplying the edge weights are close to one (or larger).
Inequality \eqref{overview:eq:main:path-diff-bound} allows us to replace MPNN
features along a path by the simpler expression
$H_{\mathrm{wl}}(\mathcal W^+, \mathcal C^+, 0)(\vec z)$,
and the final inequality ensures that the empirical loss is small.

The key idea in the proof is that the previous lemma allows us to analyze the
modified loss $\tilde{\mathcal L}$ instead of $\mathcal L$.
The modified loss removes couplings between the factors $\gamma_k$ and
eliminates ReLU nonlinearities between layers, thereby isolating the two
competing forces acting on the parameters.
The regularization drives $B$, the negative weights $w^-$, and the factors
$\gamma_k$, $k\in[K]_0$, toward zero, while the empirical loss pushes $B$ and
the $\gamma_k$'s away from zero.
This implies $w^- \approx 0$.
Since $\exp(L) < \eta$ by assumption, the regularization force on $B$ dominates,
yielding $B \approx 0$.
Finally, for sufficiently large $x$, the loss is minimized when
$\gamma_k \approx 1$, which implies that the regularization loss is
approximately $L$ and hence that the empirical loss is small.

\paragraph{\cref{ssec:upper_and_lower_bound}: Upper and Lower Bound}
In this section, we additionally assume that the hidden dimension before each
minimum aggregation is equal to~$1$.
Intuitively, this restriction ensures that the minimum aggregation cannot
decrease the hidden features on graphs that contain more branching than simple
path graphs, which provide strictly fewer aggregation options.
As a result, path graphs represent the worst case for the minimum operator.

Formally, this assumption implies that the MPNN features can be expressed via
a Walk-lifted FNN along any computation path.
This allows us to apply the parameter characterization from
\cref{overview:lem:main:main_estimtes_incl_L_tr} and deduce the following
lower bound.

\begin{corollary}[Lower bound]\label{overview:cor:fin:lower_bound}
Let $\vec \theta \in \vec{\Theta}_{\text{BF}}$, $\eta \ge 2K\,\exp(L)$, and
$x \ge 2 N \eta J$.
Assume $T_{S_{x,K}} \subset X$ and that $\mathcal{L}(\vec \theta)$ lies within
$0 \le \varepsilon \le \eta L$ of its global minimum.
Then for any $G \in \mathcal{G}_{\text{BF}}$, $v \in V(G)$, and
$t \in \mathcal{T}_{\theta}^J(v)$,
\[
    \hb_v^{(K)}
    \;\ge\;
    (1-\varepsilon)\,\|\vec{z}^t\|_1 \;-\; \varepsilon.
\]
In particular,
\[
    \hb_v^{(K)}
    \;\ge\;
    (1-\varepsilon)\, x_v^{(K)} \;-\; \varepsilon,
\]
where $x_v^{(K)}$ denotes the Bellman--Ford distance.
\end{corollary}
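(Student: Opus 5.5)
The plan is to express $\hb_v^{(K)}$ exactly as a walk-lifted FNN along the computation path actually selected by the min-aggregation, and then apply the parameter characterization of \cref{overview:lem:main:main_estimtes_incl_L_tr} to that path. First I use the standing assumption that the hidden dimension before every min aggregation is $1$. Under this assumption each min over neighbours selects a single neighbour, so the minimum-aggregation tree $\tau_v^K\in\mathcal{T}^K_{G,\vec\theta}(v)$ collapses to a walk $p$ ending at $v$. Concretely, $\tau_v^K=t^K(p)$ for some $p\in\mathcal{P}^K_G(v)$, and $\vec z^t=\vec z^{p}$ records the initial label of the start vertex together with the $K$ edge weights along $p$; self-loops have weight $0$. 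The second part of \cref{overview:lem:mlp:bnd_on_feature_any_path} then gives
\[
\hb_v^{(K)}=H_{\mathrm{wl}}(\mathcal W,\mathcal C,\mathcal B)(\vec z^{p}).
\]

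Next I apply \eqref{overview:eq:main:path-diff-bound}, which yields
\[
\hb_v^{(K)}\ge H_{\mathrm{wl}}(\mathcal W^+,\mathcal C^+,0)(\vec z^p)-\bigl(\tfrac12\|\vec z^p\|_1+1\bigr)\varepsilon .
\]
With non-negative weights, zero biases and non-negative input, the ReLUs act as the identity. So, by the linearity of FNNs with non-negative parameters from \cref{ssec:FNNs}, $H_{\mathrm{wl}}(\mathcal W^+,\mathcal C^+,0)(\vec z)=\sum_{k=0}^K\gamma_k z_k$, because $\gamma_k$ is exactly the product of the positive parts acting on coordinate $k$. By \eqref{overview:eq:main:gamma-lower-bound}, each $\gamma_k\ge 1-\varepsilon/(\eta J)$. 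Since $\eta J\ge 2$ under $\eta\ge 2K\exp(L)$, this gives $\gamma_k\ge 1-\varepsilon/2$. Combining,
\[
\hb_v^{(K)}\ge (1-\tfrac{\varepsilon}{2})\|\vec z^p\|_1-\tfrac{\varepsilon}{2}\|\vec z^p\|_1-\varepsilon=(1-\varepsilon)\|\vec z^p\|_1-\varepsilon,
\]
which is the first claim.

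For the second claim, $\|\vec z^p\|_1$ is the initial label of the start vertex of a walk of length at most $K$ to $v$, plus that walk's edge weights. By the definition of $\Gamma^K$ as the minimum of this quantity over all such walks (self-loops padding shorter walks), $\|\vec z^p\|_1\ge x_v^{(K)}$. Since $1-\varepsilon\ge 0$, the bound transfers to $x_v^{(K)}$.

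The main obstacle is the first step. I need to justify carefully that, with aggregation dimension $1$, the min over the tree of choices really is realised by a single walk, and that this walk's tree lies in $\mathcal{T}^K_{G,\vec\theta}(v)$. The concern is ties and the per-layer greedy selection. I would handle it by choosing, at each layer, an argmin neighbour recursively from the root down, and checking that the resulting tree matches the definition of a minimum-aggregation tree. A secondary check is that the linear expression $\sum_k\gamma_k z_k$ indeed matches the walk-lifted FNN with zero biases, where edge weights enter through $\mathcal C^+$ at the edge-inserting layers. I would verify this by unrolling the definition of $H_{\mathrm{wl}}$.
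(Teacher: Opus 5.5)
Your proposal is correct and follows the paper's proof essentially step for step. You identify the min-aggregation tree with a walk, use the equality case of \cref{overview:lem:mlp:bnd_on_feature_any_path}, apply \eqref{overview:eq:main:path-diff-bound} together with $\gamma_k\ge 1-\varepsilon/(\eta J)\ge 1-\varepsilon/2$, and finish with $\|\vec z^p\|_1\ge x_v^{(K)}$. The tie-breaking worry is moot, since that lemma holds for every element of $\mathcal{T}^K_{G,\vec\theta}(v)$.

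One small point, which the paper shares: $1-\varepsilon\ge 0$ does not follow from $\varepsilon\le\eta L$. However, for $\varepsilon>1$ the second bound holds trivially, because $\hb_v^{(K)}\ge 0$ as the output of a ReLU layer while the right-hand side is negative.
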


The first inequality shows that the MPNN features can be bounded from below by
approximately the length of any min-aggregation computation path.
Since, by definition, the Bellman--Ford distance $x_v^{(K)}$ is the minimal
length among all paths ending at $v \in V(G)$, this immediately implies the
claimed lower bound.

We next establish the corresponding upper bound.
Here, an additional difficulty arises: although the modified loss is minimized
at $\gamma_k = 1$, it does not provide a tight \emph{a priori} upper bound on
$\gamma_k$, since the additional regularization cost incurred by choosing
$\gamma_k > 1$ is relatively small.
We therefore first show that excessively large values of $\gamma_k$ necessarily
lead to overly large MPNN features and hence to an empirical loss that is too large.
This observation allows us to control $\gamma_k$ from above and, in combination
with \cref{overview:lem:mlp:bnd_on_feature_any_path}, to derive the desired
upper bound.

\begin{theorem}[Upper bound]\label{overview:thm:fin:upper_bound}
Let $\vec \theta \in \vec{\Theta}_{\text{BF}}$, $\eta \ge 2K\,\exp(L)$, and
$x \ge 2 N \eta J$.
Assume $T_{S_{x,K}} \subset X$ and that $\mathcal{L}(\vec \theta)$ lies within
$0 < \varepsilon < \frac{1}{2}$ of its global minimum.
Then, for any $G \in \mathcal{G}_{\text{BF}}$ and any $v \in V(G)$,
\[
    \hb_v^{(K)} \;\le\; (1+\varepsilon)\,x_v^{(K)} + \varepsilon.
\]
\end{theorem}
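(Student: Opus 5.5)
Fix $G\in\mathcal{G}_{\text{BF}}$ and $v\in V(G)$, and let $p\in\mathcal{P}^K_G(v)$ be a walk realizing the Bellman--Ford distance, so that $\|\vec z^p\|_1 = x_v^{(K)}$. By \cref{overview:lem:mlp:bnd_on_feature_any_path}, $\hb_v^{(K)} \le H_{\mathrm{wl}}(\vec\theta^+)(\vec z^p)$. That bound uses $\vec\theta^+$ with the biases still present. I would therefore first pass to the bias-free, non-negative configuration, using \eqref{overview:eq:main:zero-input-bound}: since all weights are non-negative, the bias contribution to $H_{\mathrm{wl}}(\mathcal W^+,\mathcal C^+,\mathcal B^+)(\vec z)$ is at most $H_{\mathrm{wl}}(|\mathcal W|,|\mathcal C|,|\mathcal B|)(0)\le\varepsilon$. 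This gives
\[
\hb_v^{(K)} \le H_{\mathrm{wl}}(\mathcal W^+,\mathcal C^+,0)(\vec z^p)+\varepsilon .
\]
With non-negative weights, no biases and non-negative inputs, the FNN linearity results of \cref{ssec:FNNs} apply (ReLU acts as the identity). Hence $H_{\mathrm{wl}}(\mathcal W^+,\mathcal C^+,0)(\vec z)=\sum_{k=0}^K\gamma_k z_k$; the assumption that the pre-aggregation dimension is $1$ should make this exact. It therefore remains to show $\gamma_k\le 1+\varepsilon$ for every $k\in[K]_0$. That yields $\hb_v^{(K)}\le(1+\varepsilon)\|\vec z^p\|_1+\varepsilon=(1+\varepsilon)x_v^{(K)}+\varepsilon$.

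The main obstacle is this upper bound on $\gamma_k$, because the modified loss only penalizes $\gamma_k>1$ weakly through $l_k\gamma_k^{1/l_k}$. I would argue via the empirical loss on the training sample $v_K^{\vec w}$ with $\vec w=x\vec e_k^{(K+1)}$. On the path graph $P_\beta(\vec w)$, the node $v_K$ has exactly one walk of length $K$ that can carry non-$\beta$ information: the one from $v_0$. Every other computation tree involves a $\beta$-labelled start node, and $\beta\ge 2(N+x+1)$ is large.

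To obtain a lower bound on $\hb_{v_K}^{(K)}$, I would use \cref{overview:cor:fin:lower_bound}, applied along the min-aggregation tree $t$ selected by the network, together with the analogue of the $\gamma$-expansion. Any tree either starts at $v_0$, so that $\|\vec z^t\|_1\ge$ the contribution weighted by $\gamma_k x$, or involves $\beta$, giving a value of order $\beta\gamma_0(1-\varepsilon)-\varepsilon$, which is much larger. The lower-bound form should be made sharper than \cref{overview:cor:fin:lower_bound}, namely $\hb\ge H_{\mathrm{wl}}(\mathcal W^+,\mathcal C^+,0)(\vec z^t)-(\tfrac12\|\vec z^t\|_1+1)\varepsilon$ via \eqref{overview:eq:main:path-diff-bound}. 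With this, the selected tree gives $\hb_{v_K}^{(K)}\ge\gamma_k x-(\tfrac12x+1)\varepsilon$. The target is $x$, so
\[
|\hb-x|\ge (\gamma_k-1)x-(\tfrac12 x+1)\varepsilon .
\]
Since $\mathcal L^{\mathrm{emp}}\le 2\varepsilon$, this sample's error is at most $2N\varepsilon$. Hence $(\gamma_k-1)x\le 2N\varepsilon+\tfrac12x\varepsilon+\varepsilon$, i.e.\ $\gamma_k\le 1+\tfrac12\varepsilon+(2N+1)\varepsilon/x\le 1+\varepsilon$ using $x\ge 2N\eta J\ge 4N+2$.

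If the $\beta$-trees cause trouble, I would first check that $\gamma_0\ge 1-\varepsilon/(\eta J)$ from \eqref{overview:eq:main:gamma-lower-bound} makes $\beta\gamma_0$ exceed $x\gamma_k$ unless $\gamma_k$ is already huge. The case where $\gamma_k$ is huge is itself excluded by the same empirical-loss inequality, because every candidate then exceeds $x$ by too much.

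Finally, I would combine the two steps to obtain the statement. I would also double-check the constant bookkeeping, in particular that the bias step and the $\gamma_k$ bound together leave only a single additive $\varepsilon$.
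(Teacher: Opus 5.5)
Your proposal is correct and follows essentially the paper's route: bound $\hb_v^{(K)}\le H_{\mathrm{wl}}(\mathcal W^+,\mathcal C^+,0)(\vec z^p)+\varepsilon$ along a shortest walk, then control $\gamma_k$ by showing that on each training path the min-aggregation walk must start at $v_0$ (a $\beta$-walk would give $\hb_{v_K}^{(K)}\ge(1-\varepsilon)\beta-\varepsilon>x+2N\varepsilon$), and finally apply the path-difference bound together with $\mathcal L^{\mathrm{emp}}\le 2\varepsilon$; the paper packages this last step as the two-sided estimate $|H_{\mathrm{wl}}(\mathcal W^+,\mathcal C^+,0)(\vec z)-\|\vec z\|_1|\le\varepsilon\|\vec z\|_1$. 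Two small clarifications: the identity $H_{\mathrm{wl}}(\mathcal W^+,\mathcal C^+,0)(\vec z)=\sum_k\gamma_k z_k$ holds for any aggregation dimension, since the one-dimensional assumption is really used to identify the selected computation tree on the training path with a walk, and the $\beta$-walk case is excluded outright, with no case split on the size of $\gamma_k$.
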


Combining the lower and upper bounds yields the claimed theorem.

\subsection{Outlook}\label{app:outlook}

In this subsection, we outline several directions in which the results of this the paper can be strengthened and extended.

\paragraph{Removing the exponential dependence on \texorpdfstring{$\eta$}{eta} without biases.}
We first observe that if all biases are set to zero, the proofs reveal that the exponential lower bound on the regularization parameter $\eta$ is no longer
necessary.
In this setting, the arguments can be repeated with only minor modifications, yielding the following result.

\begin{theorem}\label{thm:BF_vs2}
Let $K \in \Nb$ and suppose we aim to learn $K$ steps of the Bellman--Ford
algorithm, i.e.\ the function $\Gamma^K$, using a min-aggregation MPNN with $K$
layers and $m$-layer feedforward neural networks as update and aggregation
functions, as described in~\cref{sec:bf_mpnn}, but without any biases.
Assume that the Bellman--Ford path training set is contained in the training
set, i.e.\ $T_{S_{x,K}} \subset X$, and that the edge-weight scaling satisfies
$x \ge 4mKN\eta$, where $N \coloneq |X|$ is the cardinality of the training set.

Then there exists a constant $C > 0$ such that if
$\mathcal{L}(\theta)$ lies within
$0 \le \varepsilon < \min\{\eta 2mK(K+3), \tfrac12\}$ of its global minimum, then for
any Bellman--Ford instance $G \in \mathcal{G}_{\text{BF}}$ and any
$v \in V(G)$,
\[
    \bigl|\hb_v^{(K)} - x_v^{(K)}\bigr|
    \;\le\;
    \varepsilon\, C\, x_v^{(K)},
\]
where $\hb_v^{(K)}$ denotes the MPNN feature representation and
$x_v^{(K)}$ the target $K$-step Bellman--Ford distance.
\end{theorem}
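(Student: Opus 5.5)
The plan is to rerun the proof of \cref{thm:BF_vs1r}, as outlined in \cref{ssec:main_structure_of_parameters} and \cref{ssec:upper_and_lower_bound}, on the restricted parameter space $\mathcal{B}=0$. The exponential requirement $\eta \ge 2K\exp(L)$ entered in only one place: \cref{overview:cor:main:exp_bound_on_F+biases} bounds the bias term $H_{\mathrm{wl}}(\abs{\mathcal{W}},\abs{\mathcal{C}},\abs{\mathcal{B}})(0)$ by $\exp(L)\sum_j\|\vec b^j\|_1$. That bound was then fed into the modified loss through the term $\sigma((1-\gamma_k)x-\exp(L)B)$, and it forced the comparison $\eta>\exp(L)$ so that regularization would dominate $B$. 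Without biases, $B\equiv 0$, so $H_{\mathrm{wl}}(\cdot)(0)=0$ and the modified empirical loss simplifies to $\frac1N\sum_k\sigma((1-\gamma_k)x)$. The modified regularizer becomes $\sum_k l_k\gamma_k^{1/l_k}+w^-$.

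\textbf{Step 1.} I first check that \cref{overview:lem:main:L>tilde_L} still holds verbatim. The upper bound $\hb_v^{(K)}\le\sum_k\gamma_kz_k$ on the training paths comes from \cref{overview:lem:mlp:bnd_on_feature_any_path}, which bounds $\hb_v^{(K)}$ by $H_{\mathrm{wl}}(\vec\theta^+)$. That step uses only min-aggregation and the monotonicity of nonnegative FNNs, not biases. The variational argument giving the regularization inequality is likewise unaffected.

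\textbf{Step 2.} Next I analyse the reduced problem. Its minimum is still $\eta L$, attained at $\gamma_k=1$ and $w^-=0$. Near-optimality within $\varepsilon$ then gives:
\begin{itemize}
\item $\eta w^-\le\varepsilon$;
\item $\gamma_k\ge 1-\varepsilon/(\eta J)$, by the same one-dimensional trade-off between slope $x/N$ and regularization slope $\eta$, using $x\ge 4mKN\eta$;
\item $\mathcal{L}^{\mathrm{emp}}\le 2\varepsilon$.
\end{itemize}
The negative-weight term $G(\mathcal W,\mathcal B(\vec z,\mathcal C))(\vec z)$ is still controlled by a product of weight norms times $w^-\|\vec z\|_1$. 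Here I will not bound this product by $\exp(L)$. Instead I aim for a bound $C_0\,w^-\|\vec z\|_1$, where $C_0$ depends on the weights only through $\mathcal{L}(\vec\theta)\le \eta L+\varepsilon$. The constant $C$ in the statement is allowed to absorb this factor, and the smallness condition $\varepsilon<\eta\,2mK(K+3)$ (that is, $\varepsilon<\eta L$) keeps it bounded. The result is a purely multiplicative version of \eqref{overview:eq:main:path-diff-bound}: $|H_{\mathrm{wl}}(\mathcal W,\mathcal C,0)(\vec z)-H_{\mathrm{wl}}(\mathcal W^+,\mathcal C^+,0)(\vec z)|\le C_1\varepsilon\|\vec z\|_1$.

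\textbf{Step 3.} I then redo the lower and upper bounds. For the lower bound, \cref{overview:cor:fin:lower_bound} combined with Step 2 gives $\hb_v^{(K)}\ge(1-C_2\varepsilon)x_v^{(K)}$; the additive $-\varepsilon$ disappears because it came solely from the bias term. For the upper bound, I follow \cref{overview:thm:fin:upper_bound}. First I show that any $\gamma_k>1+C_3\varepsilon$ produces an output on the training path $x\vec e_k$ whose error exceeds $N\cdot 2\varepsilon$, contradicting $\mathcal{L}^{\mathrm{emp}}\le2\varepsilon$. Then I evaluate along a shortest walk $p$, with the zero-bias homogeneity giving $\hb_v^{(K)}\le H_{\mathrm{wl}}(\vec\theta^+)(\vec z^p)=\sum_k\gamma_kz^p_k\le(1+C_3\varepsilon)x_v^{(K)}$.

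The step I expect to be the main obstacle is bounding the negative-weight contribution in Step 2 without an exponential constant. With biases, that contribution was absorbed into $\exp(L)$ and made harmless by the large $\eta$. Now it must be absorbed into $C$. My first attempt is to use positive homogeneity of the bias-free ReLU network: rescale the layers so that they are balanced, and bound each product by AM–GM on the weighted $\ell_1$ regularizer. Failing that, I will simply let $C$ depend exponentially on $L$, which the statement permits.
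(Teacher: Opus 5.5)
Your plan is the one the paper indicates for this result: rerun the proof of \cref{thm:BF_vs1r} with $\mathcal B=0$, and absorb the weight-product constant in front of $w^-$ into $C$. Steps 1--2 and the lower bound go through.

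Two small corrections.
\begin{itemize}
\item $\eta\,2mK(K+3)=2\eta L$, not $\eta L$. You therefore get $\mathcal L(\vec\theta)\le 3\eta L$ and products bounded by $e^{3L/e}$; this is still a constant, so a crude exponential $C_0$ is fine and is not the real obstacle.
\item When $1-C_2\varepsilon<0$ you cannot pass from $\|\vec z^\tau\|_1$ to $x_v^{(K)}$. In that case the lower bound is trivial, because the last layer is a ReLU and so $\hb_v^{(K)}\ge 0$.
\end{itemize}

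\textbf{The gap is in the upper bound on $\gamma_k$.} Your claim is that $\gamma_k>1+C_3\varepsilon$ forces training error above $2N\varepsilon$ on $P_\beta(x\vec e_k)$. This needs the min-aggregation on that path to follow the Bellman--Ford walk, so that the output is $H_{\mathrm{wl}}(\mathcal W,\mathcal C,0)(x\vec e_k)\ge(\gamma_k-C_1\varepsilon)x$.

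In \cref{cor:fin:gf_products_are_close_to_1} this is proved by contradiction from the lower bound $(1-\varepsilon)\|\vec z^\tau\|_1-\varepsilon$ with $1-\varepsilon\ge\tfrac12$. That coefficient comes from $G\le\tfrac12\varepsilon\|\vec z\|_1$, i.e.\ from $\eta\ge 2\exp(L)$, which is exactly what the theorem drops.

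In your setting the coefficient is $1-C_2\varepsilon$ with $C_2$ of order $C_0/\eta$. The hypotheses only give $\varepsilon<\min\{2\eta L,\tfrac12\}$. For instance, with $\eta=0.1$ and $\varepsilon=0.4$ you have $w^-\le 4$ and $C_2\varepsilon>1$. Your bounds then cannot exclude that the aggregation picks a walk starting at a $\beta$-labelled vertex. In that case the training output need not reflect $\gamma_k$, and no contradiction follows. This small-$\eta$ regime is precisely the one the theorem is meant to cover.

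\textbf{Fix: bound $\gamma_k$ with the regularizer, not the data.} With $B=0$, every per-$k$ summand in the chain leading to \eqref{eq:main:base-gamma-B-wminus} is nonnegative. The terms with $\gamma_k>1$ were simply discarded there. Keeping them gives
\[
\eta\, l_k\bigl(\gamma_k^{1/l_k}-1\bigr)\le\varepsilon
\quad\Longrightarrow\quad
\gamma_k\le\Bigl(1+\tfrac{\varepsilon}{\eta l_k}\Bigr)^{l_k}\le e^{\varepsilon/\eta}.
\]
The hypothesis $\varepsilon<\eta\,2mK(K+3)$ gives exactly $\varepsilon/\eta<2L$. Convexity of the exponential on $[0,2L]$ then yields $e^{\varepsilon/\eta}\le 1+\frac{e^{2L}-1}{2L}\cdot\frac{\varepsilon}{\eta}$. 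Hence $\gamma_k\le 1+C_3\varepsilon$ with $C_3=\frac{e^{2L}-1}{2L\eta}$.

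Your final step $\hb_v^{(K)}\le\sum_k\gamma_k z^p_k\le(1+C_3\varepsilon)x_v^{(K)}$ then goes through. It no longer needs the training-path argument, $\beta$, or the assumption $\varepsilon<\tfrac12$.
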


\paragraph{Removing the exponential dependence with biases.}
The exponential lower bound on $\eta$ can also be removed without eliminating
biases by including the Bellman--Ford path training set twice with different
edge-weight scalings, for example, by assuming that
$T_{S_{x,K}}, T_{S_{2x,K}} \subset X$.
By comparing feature values obtained from identical path graphs with different
scalings, we can eliminate the bias terms in a first step and show that
$\mathcal{L}^{\mathrm{emp}}(\vec\theta) \le 2\varepsilon$ and
$B \le \varepsilon / \eta$ without any restriction on $\eta$.
In a second step, bounds analogous to those in
\cref{overview:lem:main:main_estimtes_incl_L_tr} can be derived.
The proofs of the upper and lower bounds then remain unchanged.

\paragraph{Extension to $\ell_1$-regularization.}
We further aim to generalize our analysis to standard $\ell_1$-regularization.
From a theoretical perspective, we conjecture that size generalization results
as in \cref{thm:BF_vs1r} continue to hold in this setting, though potentially with weaker error bounds and increased optimization difficulty.

\paragraph{Removing the one-dimensional aggregation assumption.}
Finally, we seek to remove the assumption that the hidden dimension before each
aggregation equals one, i.e.\ to extend the theory to $d_{a_k} > 1$.
This assumption is only used in \cref{ssec:upper_and_lower_bound}.
For higher aggregation dimensions, the minimum operator may exploit additional
degrees of freedom and produce feature values that are significantly smaller
than the true shortest-path distance, thereby breaking generalization.
We conjecture that this issue can be resolved by augmenting the training set
with a carefully designed edge case graph that forces the MPNN to confront the
most extreme configurations of shortest paths.

\begin{figure}[t]
    \centering
    \scalebox{0.8}{\scalebox{3}{
\begin{tikzpicture}[transform shape, scale=1]

\definecolor{nodec}{HTML}{4DA84D}
\definecolor{edgec}{HTML}{000000}
\definecolor{llred}{HTML}{FF7A87}
\definecolor{lgreen}{HTML}{4DA84D}
\definecolor{fontc}{HTML}{403E30}
\definecolor{llblue}{HTML}{7EAFCC}

\newcommand{\gnode}[5]{%
    \draw[#3, fill=#3!20] (#1) circle (#2pt);
    \node[anchor=#4,fontc!50!white] at (#1) {\textcolor{#3!80!black}{\textsf{#5}}};
}

\node[] at (0,0.35) {\scalebox{0.4}{\textsf{Edge case graph $\mathsf{G(x,K)}$ for $\mathsf{K=4}$}}};

\node[llblue!80!black] at (-0.52,-0.2) {\scalebox{0.5}{$\mathsf{x}$}};
\node[llblue!80!black] at (-0.59,-0.7) {\scalebox{0.5}{$\mathsf{x}$}};
\node[llblue!80!black] at (-0.235,-1.2) {\scalebox{0.5}{$\mathsf{x}$}};
\node[llblue!80!black] at (0.16,-1.7) {\scalebox{0.5}{$\mathsf{x}$}};
\node[llblue!80!black] at (0.54,-2.2) {\scalebox{0.5}{$\mathsf{x}$}};
\node[fontc!80!white] at (-0.3,-0.25) {\scalebox{0.33}{$\mathsf{0}$}};
\node[fontc!80!white] at (-0.07,-0.25) {\scalebox{0.33}{$\mathsf{0}$}};
\node[fontc!80!white] at (0.11,-0.25) {\scalebox{0.33}{$\mathsf{0}$}};
\node[fontc!80!white] at (0.295,-0.25) {\scalebox{0.33}{$\mathsf{0}$}};
\node[fontc!80!white] at (-0.08,-0.75) {\scalebox{0.33}{$\mathsf{0}$}};
\node[fontc!80!white] at (0.35,-0.75) {\scalebox{0.33}{$\mathsf{0}$}};
\node[fontc!80!white] at (0.76,-0.75) {\scalebox{0.33}{$\mathsf{0}$}};
\node[fontc!80!white] at (0.35,-1.25) {\scalebox{0.33}{$\mathsf{0}$}};
\node[fontc!80!white] at (0.76,-1.25) {\scalebox{0.33}{$\mathsf{0}$}};
\node[fontc!80!white] at (0.76,-1.75) {\scalebox{0.33}{$\mathsf{0}$}};

\node[fontc!80!white] at (-0.81,-0.75) {\scalebox{0.33}{$\mathsf{0}$}};
\node[fontc!80!white] at (-0.58,-1.25) {\scalebox{0.33}{$\mathsf{0}$}};
\node[fontc!80!white] at (-0.26,-1.75) {\scalebox{0.33}{$\mathsf{0}$}};
\node[fontc!80!white] at (0.08,-2.25) {\scalebox{0.33}{$\mathsf{0}$}};

\coordinate (v1) at (0,0);
\coordinate (v2) at (-0.85,-0.5);
\coordinate (v3) at (-0.425,-0.5);
\coordinate (v4) at (0,-0.5);
\coordinate (v5) at (0.425,-0.5);
\coordinate (v6) at (0.85,-0.5);
\coordinate (v7) at (-0.6,-1);
\coordinate (v8) at (-0,-1);
\coordinate (v9) at (0.425,-1);
\coordinate (v10) at (0.85,-1);
\coordinate (v11) at (-0.3,-1.5);
\coordinate (v12) at (0.425,-1.5);
\coordinate (v13) at (0.85,-1.5);
\coordinate (v14) at (0.05,-2);
\coordinate (v15) at (0.85,-2);
\coordinate (v16) at (0.375,-2.5);

\draw[llblue,line width=0.8pt] (v1) -- (v2);
\draw[edgec] (v1) -- (v3);
\draw[edgec] (v1) -- (v4);
\draw[edgec] (v1) -- (v5);
\draw[edgec] (v1) -- (v6);
\draw[edgec] (v2) -- (v7);
\draw[llblue,line width=0.8pt] (v3) -- (v7);
\draw[edgec] (v4) -- (v8);
\draw[edgec] (v5) -- (v9);
\draw[edgec] (v6) -- (v10);
\draw[edgec] (v7) -- (v11);
\draw[llblue,line width=0.8pt] (v8) -- (v11);
\draw[edgec] (v9) -- (v12);
\draw[edgec] (v10) -- (v13);
\draw[edgec] (v11) -- (v14);
\draw[llblue,line width=0.8pt] (v12) -- (v14);
\draw[edgec] (v13) -- (v15);
\draw[edgec] (v14) -- (v16);
\draw[llblue,line width=0.8pt] (v15) -- (v16);

\gnode{v1}{2.5}{lgreen}{center}{\scalebox{0.27}{$\mathsf{v_{-1}}$}};
\gnode{v2}{2.5}{lgreen}{center}{\scalebox{0.32}{$\mathsf{v_0}$}};
\gnode{v3}{2.5}{fontc!30!white}{center}{\scalebox{0.25}{$\mathsf{p_{1,0}}$}};
\gnode{v4}{2.5}{fontc!30!white}{center}{\scalebox{0.25}{$\mathsf{p_{2,0}}$}};
\gnode{v5}{2.5}{fontc!30!white}{center}{\scalebox{0.25}{$\mathsf{p_{3,0}}$}};
\gnode{v6}{2.5}{fontc!30!white}{center}{\scalebox{0.25}{$\mathsf{p_{4,0}}$}};
\gnode{v7}{2.5}{lgreen}{center}{\scalebox{0.32}{$\mathsf{v_1}$}};
\gnode{v8}{2.5}{fontc!30!white}{center}{\scalebox{0.25}{$\mathsf{p_{2,1}}$}};
\gnode{v9}{2.5}{fontc!30!white}{center}{\scalebox{0.25}{$\mathsf{p_{3,1}}$}};
\gnode{v10}{2.5}{fontc!30!white}{center}{\scalebox{0.25}{$\mathsf{p_{4,1}}$}};
\gnode{v11}{2.5}{lgreen}{center}{\scalebox{0.32}{$\mathsf{v_2}$}};
\gnode{v12}{2.5}{fontc!30!white}{center}{\scalebox{0.25}{$\mathsf{p_{3,2}}$}};
\gnode{v13}{2.5}{fontc!30!white}{center}{\scalebox{0.25}{$\mathsf{p_{4,2}}$}};
\gnode{v14}{2.5}{lgreen}{center}{\scalebox{0.32}{$\mathsf{v_3}$}};
\gnode{v15}{2.5}{fontc!30!white}{center}{\scalebox{0.25}{$\mathsf{p_{4,3}}$}};
\gnode{v16}{2.5}{lgreen}{center}{\scalebox{0.32}{$\mathsf{v_4}$}};

\end{tikzpicture}}}
    \caption{Edge case graph from \cref{def:edge_case_graph} for learning the $K$-fold Bellman--Ford update with higher aggregation dimension. The root node is $v_{-1}$. Shown here is the instance for $K=4$. At each node $v_i$, $i\in[K]$, the network can choose between multiple paths of equal total weight~$x$.}
    \label{fig:edge_case_graph_k_4}
\end{figure}

\begin{definition}[Edge case graph]\label{def:edge_case_graph}
Let $x > 0$ and $K \in \N$.
We define the edge case graph $G(x,K) = (V,E)$ as follows.

First, add a path with nodes $v_{-1}, v_0, \dots, v_K \in V$ and edges
$(v_{i-1}, v_i) \in E$ for $i \in [K]_0$, with edge weights
$w(v_{-1}, v_0) = x$ and $w(v_{i-1}, v_i) = 0$ for all $i \in [K]$.

Next, for each $i \in [K]$, add a path of length $i+1$ from $v_{-1}$ to $v_i$
whose edge weights are zero except for the final edge, which has weight $x$.
More precisely, for each $i \in [K]$ we introduce nodes
$p_{i,j} \in V$ for $j \in [i-1]_0$ and edges
\[
\{v_{-1}, p_{i,0}\}, \{p_{i,0}, p_{i,1}\}, \dots,
\{p_{i,i-2}, p_{i,i-1}\}, \{p_{i,i-1}, v_i\} \in E,
\]
with weights
\[
w(\{v_{-1}, p_{i,0}\}) =
w(\{p_{i,0}, p_{i,1}\}) =
\dots =
w(\{p_{i,i-2}, p_{i,i-1}\}) = 0,
\qquad
w(\{p_{i,i-1}, v_i\}) = x.
\]

Finally, we assign node features corresponding to the one-step Bellman--Ford
distances with root node $v_{-1}$.
Specifically, we set $a(v_{-1}) = a(p_{i,0}) = 0$ for all $i \in [K]$,
$a(v_0) = x$, and $a(v) = \beta$ for all remaining nodes
$v \in V \setminus \{v_{-1}, p_{1,0}, \dots, p_{K,0}, v_0\}$, where
$\beta > 0$ is chosen sufficiently large.
\end{definition}

We conjecture that augmenting the training set with the edge case graph
$G(x,K)$ suffices to restore size generalization guarantees with a constant
aggregation dimension, and that, in this regime, the performance gap between the tailored loss used in this work and the standard $\ell_1$-loss vanishes.

\subsection{Properties of FNNs } \label{ssec:FNNs}

In the following, let $J\in\N$ and $(d_0,\dots,d_J)\in\N^{J+1}$ be fixed.

\begin{remark}
By definition of the feedforward neural network (cf.~\cref{sec:mpnns}), for any parameters
$\vec{\theta}=(\mathcal{W},\mathcal{B})\in\vec{\Theta}$ and any input $\vec{x}\in\R^{d_0}$, we have
\[
\FNN^{J}_{0}(\vec{\theta})(\vec{x}) = \vec{x},
\]
and for all $j\ge 1$,
\[
\FNN^{J}_{j}(\vec{\theta})(\vec{x})
= \sigma\bigl(\vec{W}^{(j)}\,\FNN^{J}_{j-1}(\vec{\theta})(\vec{x}) + \vec{b}^{(j)}\bigr).
\]
We will repeatedly use this recursive representation in inductive arguments.
\end{remark}

\begin{remark}
We regard $\mathcal{W}\in\vec{\Theta}_W$, $\mathcal{B}\in\vec{\Theta}_b$, and
$\vec{\theta}\in\vec{\Theta}$ as vectors in their respective parameter spaces.
Accordingly, element-wise operations on $\mathcal{W}$, $\mathcal{B}$, or $\vec{\theta}$ are defined
componentwise on the matrices $\vec{W}^{(j)}$ and vectors $\vec{b}^{(j)}$, $j\in[J]$.
For example,
\[
\mathcal{B}^+ \coloneq \bigl((\vec{b}^{(1)})^+, \dots, (\vec{b}^{(J)})^+\bigr),
\]
and operations such as $\mathcal{B}+\mathcal{B}'$ for $\mathcal{B},\mathcal{B}'\in\vec{\Theta}_b$
are well defined.
\end{remark}

\begin{lemma}[Linearity]\label{lem:fnn:additvity}
Let $\vec{\theta } \coloneqq (\mathcal{W}, \mathcal{B}) \in \vec{\Theta}$ and assume
that all parameters of $\vec{\theta }$ are (element-wise) non-negative.
Then for all $ j \in [J]_0$ and  $\vec{y} \in \R_{\ge 0}^{d_0}$ it holds that
\begin{align*}
 \FNN^\tup{J}_{j}(\mathcal{W}, \mathcal{B})(\vec{y})
   &=
     \sum_{l=1}^{j}
        \Bigl(\prod_{s=l+1}^{j} \vec W^{s}\Bigr) \vec b^{\,l}
     \;+\;
       \Bigl(\prod_{s=1}^{j} \vec W^{s}\Bigr)\,(\vec y).
\end{align*}
\end{lemma}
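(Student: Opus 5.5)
The plan is induction on $j\in[J]_0$, using the recursive representation from the preceding remark. The whole argument rests on one observation: if every parameter is non-negative and the input is non-negative, then every pre-activation is non-negative. Hence the ReLU acts as the identity at every layer, and the network collapses to an affine map.

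For the base case $j=0$, we have $\FNN^{J}_{0}(\mathcal{W},\mathcal{B})(\vec y)=\vec y$. This agrees with the claimed formula, since the sum over $l$ is empty and the empty product $\prod_{s=1}^{0}\vec W^{s}$ is the identity matrix.

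For the inductive step, assume the formula holds for $j-1$. I will carry along a strengthened hypothesis: $\FNN^{J}_{j-1}(\mathcal{W},\mathcal{B})(\vec y)\in\R_{\ge 0}^{d_{j-1}}$. This follows either directly from the closed form or trivially from the fact that ReLU outputs are non-negative; for $j-1=0$ it is just the assumption on $\vec y$.

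Since $\vec W^{j}$ and $\vec b^{j}$ have non-negative entries, the pre-activation
\[
\vec W^{j}\,\FNN^{J}_{j-1}(\mathcal{W},\mathcal{B})(\vec y)+\vec b^{j}
\]
is entrywise non-negative, so $\sigma$ leaves it unchanged. Substituting the inductive formula and distributing $\vec W^{j}$ over the sum gives
\[
\sum_{l=1}^{j-1}\Bigl(\vec W^{j}\prod_{s=l+1}^{j-1}\vec W^{s}\Bigr)\vec b^{l}+\vec b^{j}+\Bigl(\vec W^{j}\prod_{s=1}^{j-1}\vec W^{s}\Bigr)\vec y .
\]
Here the products are understood with the higher-index matrices on the left, i.e.\ $\prod_{s=l+1}^{j}\vec W^{s}=\vec W^{j}\cdots\vec W^{l+1}$. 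Under that convention, the term $\vec b^{j}$ is exactly the $l=j$ summand with an empty product, and the remaining terms reindex to the claimed expression for $j$.

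There is no real obstacle here. The only points requiring care are bookkeeping: fixing the ordering convention for the non-commutative matrix product, and checking dimensions at edge-inserting layers. In the plain FNN setting of this lemma those layers are just ordinary layers, so that check is trivial.
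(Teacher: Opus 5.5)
Your proposal is correct and matches the approach the paper indicates. The paper states this lemma without a written proof, but its outline names exactly your key ingredient: ReLU acts as the identity on non-negative pre-activations, so the network collapses to an affine map. It also proves the neighbouring FNN lemmas by the same induction over the recursive representation $\FNN^{J}_{j}=\sigma\bigl(\vec W^{(j)}\FNN^{J}_{j-1}+\vec b^{(j)}\bigr)$, with the empty-product and ordering convention $\prod_{s=l+1}^{j}\vec W^{s}=\vec W^{j}\cdots\vec W^{l+1}$ handled as you describe.
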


\begin{lemma}\label{lem:fnn:monotonicity_in_arg}
Let $ j \in [J]_0$, $\vec{y}_1, \vec{y}_2 \in \R_{\ge 0}^{d_0}$, and
 $\vec{\theta }_1 \coloneqq (\mathcal{W}_1, \mathcal{B}_1), \vec{\theta }_2  \coloneqq (\mathcal{W}_2, \mathcal{B}_2) \in \vec{\Theta}$.
Assume that $\vec{y}_1 \le  \vec{y}_2$, $\vec{\theta }_1 \le \vec{\theta }_2$ and $\mathcal{W}_2 \ge 0$ element-wise.
Then it holds that
\[
   \FNN^\tup{J}_{j}(\mathcal{W}_1, \mathcal{B}_1)(\vec{y}_1)
   \;\le\;
   \FNN^\tup{J}_{j}(\mathcal{W}_2, \mathcal{B}_2)(\vec{y}_2),
\]
element-wise.
\end{lemma}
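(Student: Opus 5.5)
The plan is induction on $j\in[J]_0$, using the recursive representation $\FNN^{J}_{j}(\vec\theta)(\vec y)=\sigma(\vec W^{(j)}\FNN^{J}_{j-1}(\vec\theta)(\vec y)+\vec b^{(j)})$ from the preceding remark, together with the fact that $\sigma$ is monotone non-decreasing and non-negative. The statement to carry through the induction is
\[
\FNN^{J}_{j}(\mathcal W_1,\mathcal B_1)(\vec y_1)\le \FNN^{J}_{j}(\mathcal W_2,\mathcal B_2)(\vec y_2),
\]
element-wise. For $j=0$ both sides are the inputs, so the claim is exactly the hypothesis $\vec y_1\le\vec y_2$.

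For the inductive step, write $\vec h_i\coloneqq\FNN^{J}_{j-1}(\mathcal W_i,\mathcal B_i)(\vec y_i)$ for $i\in\{1,2\}$. By the induction hypothesis $\vec h_1\le\vec h_2$. I also need $\vec h_1\ge 0$: for $j-1\ge 1$ this holds because it is the output of a ReLU, and for $j-1=0$ because $\vec y_1\in\R^{d_0}_{\ge0}$. Then I split the difference of the pre-activations:
\[
\vec W_2^{(j)}\vec h_2-\vec W_1^{(j)}\vec h_1=\vec W_2^{(j)}(\vec h_2-\vec h_1)+(\vec W_2^{(j)}-\vec W_1^{(j)})\vec h_1 .
\]
The first term is non-negative because $\vec W_2^{(j)}\ge0$ and $\vec h_2-\vec h_1\ge0$. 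The second term is non-negative because $\vec W_2^{(j)}-\vec W_1^{(j)}\ge 0$ (from $\vec\theta_1\le\vec\theta_2$) and $\vec h_1\ge0$. Adding $\vec b_1^{(j)}\le\vec b_2^{(j)}$ shows that the pre-activations are ordered, and applying the monotone $\sigma$ component-wise closes the induction.

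The only delicate point is the choice of splitting. It has to pair the possibly negative matrix $\vec W_1^{(j)}$ with the non-negative vector $\vec h_1$, and the non-negative matrix $\vec W_2^{(j)}$ with the non-negative difference $\vec h_2-\vec h_1$. The other splitting, $\vec W_1^{(j)}(\vec h_2-\vec h_1)+(\vec W_2^{(j)}-\vec W_1^{(j)})\vec h_2$, fails because $\vec W_1^{(j)}$ need not be non-negative. This is precisely why the hypothesis $\mathcal W_2\ge0$ is needed. For the edge-inserting layers in the MPNN setting, the same argument applies verbatim after concatenating the non-negative edge weight to the input.
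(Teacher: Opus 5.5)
Your proof is correct and is essentially the paper's argument. The paper also inducts on $j$, and its inductive step is a two-step chain of inequalities. It first replaces $\vec W_1^{(j)},\vec b_1^{(j)}$ by $\vec W_2^{(j)},\vec b_2^{(j)}$ while still acting on the non-negative $\vec h_1$, and then replaces $\vec h_1$ by $\vec h_2$ using $\vec W_2^{(j)}\ge 0$, which is exactly your splitting $\vec W_2^{(j)}(\vec h_2-\vec h_1)+(\vec W_2^{(j)}-\vec W_1^{(j)})\vec h_1$.
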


\begin{proof}
Let $\mathcal{W}_i = (\vec W^1_i, \dots, \vec W^J_i)$ and $\mathcal{B}_i = (\vec b^1_i, \dots, \vec b^J_i)$ for $i =1,2$.
We argue by induction on $j\in [J]_0$.

\paragraph{Base case ($j=0$).}
It holds
\[
   \FNN^\tup{J}_{j}(\mathcal{W}_1, \mathcal{B}_1)(\vec{y}_1)
   = \vec{y}_1
   \le \vec{y}_2
   =\FNN^\tup{J}_{j}(\mathcal{W}_2, \mathcal{B}_2)(\vec{y}_2).
\]

\paragraph{Induction step.}
Assume the statement holds for $j-1$, then we obtain
\[
\begin{aligned}
   \FNN^\tup{J}_{j}(\mathcal{W}_1, \mathcal{B}_1)(\vec{y}_1)
   &= \sigma\bigl(\vec W_1^j\, \FNN^\tup{J}_{j-1}(\mathcal{W}_1, \mathcal{B}_1)(\vec{y}_1) + \vec \vec b_1^j \bigr)
   &&\text{(since $\FNN^\tup{J}_{j-1}(\mathcal{W}_1, \mathcal{B}_1)(\vec{y}_1)\ge 0$)}
   \\
   &\le \sigma \bigl(\vec W_2^j\,\FNN^\tup{J}_{j-1}(\mathcal{W}_1, \mathcal{B}_1)(\vec{y}_1) + \vec \vec b_2^j\bigr)
      &&\text{(since $\vec W_2^j\ge \vec W_1^j$, $\vec \vec b_2^j\ge \vec  \vec b_1^j$)}
   \\
   &\le \sigma \bigl(\vec W_2^j\, \FNN^\tup{J}_{j-1}(\mathcal{W}_2, \mathcal{B}_2)(\vec{y}_2) + \vec \vec b_2^j\bigr)
      &&\text{(induction hypothesis and $\vec W_2^j\ge 0$)}
   \\
   &=\FNN^\tup{J}_{j}(\mathcal{W}_2, \mathcal{B}_2)(\vec{y}_2).
\end{aligned}
\]
Thus, the inequality holds for $j$, completing the proof.
\end{proof}

\begin{lemma}[Lipschitz--type inequality]\label{lem:fnn:lipschitz}
Let $j \in [J]_0$, $\vec y_1,\vec y_2\in\R^{d_0}$, $\mathcal{W}\in \vec{\Theta }_W $, and
$\mathcal B_1,\mathcal B_2\in\vec {\Theta}_b$.
Then, the following holds
\[
   \bigl|
      \FNN^\tup{J}_{j}(\mathcal{W}, \mathcal{B}_1)(\vec{y}_1)
      \;-\;
      \FNN^\tup{J}_{j}(\mathcal{W}, \mathcal{B}_2)(\vec{y}_2)
   \bigr|
   \;\le\;
   \FNN^\tup{J}_{j}(\abs{\mathcal{W}},\abs{\mathcal{B}_1- \mathcal{B}_2})(\abs{\vec{y}_1-\vec{y}_2}),
\]
element-wise.
\end{lemma}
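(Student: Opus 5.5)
The proof goes by induction on the layer index $j\in[J]_0$, using the recursive representation $\FNN^{J}_{j}(\vec\theta)(\vec x)=\sigma\bigl(\vec W^{(j)}\FNN^{J}_{j-1}(\vec\theta)(\vec x)+\vec b^{(j)}\bigr)$ from the remark above. Throughout, I abbreviate $\vec h_i^{j}\coloneqq \FNN^{J}_{j}(\mathcal W,\mathcal B_i)(\vec y_i)$ for $i=1,2$ and $\vec g^{j}\coloneqq \FNN^{J}_{j}(\abs{\mathcal W},\abs{\mathcal B_1-\mathcal B_2})(\abs{\vec y_1-\vec y_2})$. The goal is to show $\abs{\vec h_1^{j}-\vec h_2^{j}}\le \vec g^{j}$ element-wise for all $j$.

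\textbf{Base case ($j=0$).} Both sides reduce to the inputs. The left-hand side is $\abs{\vec y_1-\vec y_2}$, and so is $\vec g^{0}$, so the inequality holds with equality.

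\textbf{Induction step.} Assume $\abs{\vec h_1^{j-1}-\vec h_2^{j-1}}\le \vec g^{j-1}$. I will chain three facts.
\begin{enumerate}
\item The ReLU $\sigma$ is $1$-Lipschitz componentwise, that is, $\abs{\sigma(a)-\sigma(b)}\le\abs{a-b}$.
\item By the triangle inequality applied to each entry of the matrix--vector product,
\[
\abs{\vec W^{(j)}(\vec h_1^{j-1}-\vec h_2^{j-1})+\vec b_1^{(j)}-\vec b_2^{(j)}}\le \abs{\vec W^{(j)}}\,\abs{\vec h_1^{j-1}-\vec h_2^{j-1}}+\abs{\vec b_1^{(j)}-\vec b_2^{(j)}}.
\]
\item Since $\abs{\vec W^{(j)}}$ has non-negative entries, multiplying by it preserves the element-wise inequality given by the induction hypothesis. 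This bounds the right-hand side of the previous display by $\abs{\vec W^{(j)}}\vec g^{j-1}+\abs{\vec b_1^{(j)}-\vec b_2^{(j)}}$.
\end{enumerate}

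Finally, I identify this last expression with $\vec g^{j}$. The vector $\vec g^{j-1}$ is an output of a ReLU (or the non-negative input when $j-1=0$), so it is non-negative. All parameters of the network defining $\vec g$ are non-negative, so its pre-activation $\abs{\vec W^{(j)}}\vec g^{j-1}+\abs{\vec b_1^{(j)}-\vec b_2^{(j)}}$ is non-negative. Hence $\sigma$ acts as the identity on it, and the expression equals $\vec g^{j}$. Equivalently, one can invoke \cref{lem:fnn:additvity} to see that the non-negative network is linear.

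The only delicate point is keeping all inequalities element-wise. This needs the absolute-value matrix $\abs{\vec W^{(j)}}$ rather than any norm, and it needs the observation that ReLU is the identity on the non-negative comparison network. Everything else is routine.
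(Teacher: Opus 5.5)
Your proof is correct and follows the same induction as the paper: equality at $j=0$, then the $1$-Lipschitzness of ReLU, the entrywise triangle inequality, and monotonicity of multiplication by $\abs{\vec W^{(j)}}$ applied to the induction hypothesis. You also justify a step the paper uses without comment, namely that the final expression equals the $j$-th layer of the comparison network because ReLU acts as the identity on its non-negative pre-activations.
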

Recall that $\abs{\mathcal W}$ denotes the vector or collection of biases where the absolute value is applied point-wise to all parameters, i.e. $\abs{\mathcal{W}} \coloneq (\abs{\vec W^1}, \dots , \abs{\vec W^J} )$ where $(\abs{\vec W}^j)_{l,k} = \abs{(\vec W^j)_{l,k}} $ for all $j \in [J]$ and indexes $l,k$.
\begin{proof}
As before, we let $\mathcal{B}_i = (\vec b^1_i, \dots, \vec b^J_i)$ for $i =1,2$.
We proceed by induction on $j \in [J]_0$.

\paragraph{Base case ($j=0$).}
It holds
\[
   \bigl|
      \FNN^\tup{J}_{0}(\mathcal{W}, \mathcal{B}_1)(\vec{y}_1)
      \;-\;
      \FNN^\tup{J}_{0}(\mathcal{W}, \mathcal{B}_2)(\vec{y}_2)
   \bigr|
   = |\vec{y}_1- \vec{y}_2|
   = \FNN^\tup{J}_{0}(\abs{\mathcal{W}},\abs{\mathcal{B}_1- \mathcal{B}_2})(\abs{\vec{y}_1-\vec{y}_2}).
\]

\paragraph{Induction step.}
Assume the statement holds for $j-1$.

Since ReLU satisfies $|\sigma(a)-\sigma(a')|\le |a-a'|$, we obtain
\[
\begin{aligned}
 &\bigl|
      \FNN^\tup{J}_{j}(\mathcal{W}, \mathcal{B}_1)(\vec{y}_1)
      \;-\;
      \FNN^\tup{J}_{j}(\mathcal{W}, \mathcal{B}_2)(\vec{y}_2)
   \bigr|
\\
 &\quad\le
    |\vec W^j|\;
      \bigl|
        \FNN^\tup{J}_{j-1}(\mathcal{W}, \mathcal{B}_1)(\vec{y}_1)
      \;-\;
      \FNN^\tup{J}_{j-1}(\mathcal{W}, \mathcal{B}_2)(\vec{y}_2)
      \bigr|
    + |\vec \vec b_1^j-\vec \vec b_2^j|.
\end{aligned}
\]

By the induction hypothesis,
\[
   \bigl|
     \FNN^\tup{J}_{j-1}(\mathcal{W}, \mathcal{B}_1)(\vec{y}_1)
      \;-\;
      \FNN^\tup{J}_{j-1}(\mathcal{W}, \mathcal{B}_2)(\vec{y}_2)
   \bigr|
   \le
   \FNN^\tup{J}_{j-1}(\abs{\mathcal{W}},\abs{\mathcal{B}_1- \mathcal{B}_2})(\abs{\vec{y}_1-\vec{y}_2}).
\]

Substituting gives
\[
\begin{aligned}
 \bigl|
   \FNN^\tup{J}_{j}(\mathcal{W}, \mathcal{B}_1)(\vec{y}_1)
      \;-\;
      \FNN^\tup{J}_{j}(\mathcal{W}, \mathcal{B}_2)(\vec{y}_2)
 \bigr|
 &\le
   |\vec W^j|\;
    \FNN^\tup{J}_{j-1}(\abs{\mathcal{W}},\abs{\mathcal{B}_1- \mathcal{B}_2})(\abs{\vec{y}_1-\vec{y}_2})
       + |\vec \vec b_1^j-\vec \vec b_2^j|
\\
 &= \FNN^\tup{J}_{j}(\abs{\mathcal{W}},\abs{\mathcal{B}_1- \mathcal{B}_2})(\abs{\vec{y}_1-\vec{y}_2}).
\end{aligned}
\]
This completes the induction.
\end{proof}

Next, we study how negative parameters affect the outputs of FNNs.
\begin{definition}\label{def:fnn:W^l_B^l}
Let $\mathcal{W}\in \vec{\Theta }_W $, and
$\mathcal B\in\vec {\Theta}_b$.
For $l \in [J] $ define $$\mathcal{B}^l \coloneq (\vec b^1, \vec b^2, \ldots,  \vec b^{l-1}, \vec 0, \ldots)\in\vec {\Theta}_b$$ and
$$\mathcal{W}^l \coloneq ((\vec W^1)^+,  \dots, (\vec W^{l-1})^+,(\vec W^l)^-,(\vec W^{l+1})^+,\dots, (\vec W^J)^+)\in\vec {\Theta}_W$$
\end{definition}

\begin{definition}
Let $\mathcal{W}\in \vec{\Theta }_W $, and
$\mathcal B_1,\mathcal B_2, \mathcal{B}\in\vec {\Theta}_b$.
For $j \in [J]_0$, define
\[
G_{j}( \mathcal{W}, \mathcal{B}_1, \mathcal{B}_2)(\vec{y})
\;\coloneq\;
\sum_{l=1}^j \FNN^\tup{J}_{j}(\mathcal{W}^l, \mathcal{B}_1^l)(\vec{y})
+ \FNN^\tup{J}_{j}(\mathcal{W}^+, \mathcal{B}_2)(0),
\]
and \[
G( \mathcal{W}, \mathcal{B})(\vec{y})
\coloneq G_{J}( \mathcal{W}, \mathcal{B}^+, \mathcal{B}^-)(\vec{y}).
\]

\end{definition}

\begin{remark}[Intuition for $G$]
We see that in the case of $G( \mathcal{W}, \mathcal{B})(\vec{y}) $, all parameters of all FNNs that are being summed over in the above definition are positive.
Thus we can apply \cref{lem:fnn:additvity} to see that, for any $ \vec{y }\in \R^{d_0}_{\ge 0}$,
\begin{equation}
\label{eq:G}
\begin{aligned}
 G( \mathcal{W}, \mathcal{B})(\vec{y })  =
   &\sum_{l\in [J]_0}
    \sum_{l'=0}^{l-1}
        (\vec W^{J})^+ \dots (\vec{W}^{l+1})^+(\vec{W}^{l})^-(\vec{W}^{l-1})^+\dots(\vec{W}^{l'+1})^+(\vec{b}^{l'})^+ \\
    &+\sum_{l'\in [J]_0}
        \Bigl(\prod_{s=l'+1}^{J} (\vec{W}^{s})^+\Bigr) (\vec{b}^{l'})^-,
\end{aligned}
\end{equation}
 where we let $\vec{b}^0 \coloneq \vec{y }$ for  notational convenience.

Note that the right-hand side can be interpreted as follows: it corresponds to evaluating
$\FNN^{J}_{j}(\mathcal{W},\mathcal{B})(\vec{y})$ after replacing all weight matrices and bias vectors
by their positive parts, except for exactly one of them, which is replaced by its negative part,
and then summing over all such choices.

For the estimates that follow, we aim to bound the difference between the output of the MPNN with
parameters $(\mathcal{W},\mathcal{C},\mathcal{B})$ (cf.~\cref{par:BF_param}) and that of the MPNN with
parameters $(\mathcal{W}^+,\mathcal{C}^+,\vec{0})$, i.e., where all weights are replaced by their
positive parts and all biases are set to zero.

The next lemma provides a first step toward such a bound using the function $G$. It  has the flavor of an inclusion--exclusion lower bound and formalizes the idea of swapping a
single weight matrix or bias vector with its negative part.
\end{remark}

\begin{remark}[Upper bound on $G$]\label{rem:fnn:upper_bound_on_G}
When deriving error bounds on the loss and parameters in \cref{ssec:main_structure_of_parameters} we will need to bound $G( \mathcal{W}, \mathcal{B})(\vec{y })$.
We will do this through
\begin{align*}
 \norm{G( \mathcal{W}, \mathcal{B})(\vec{b}^0) }_1
   \le
   \sum_{l\in [J]_0}
    \sum_{l'=0}^{l-1} \Bigl(\prod_{\substack{s=l'+1\\ s \neq l}}^{J} \norm{(\vec{W}^{s})^+}_1\Bigr)& \norm{(\vec{W}^{l})^+}_1
        \norm{(\vec{b}^{l'})^+}_1 +
    \sum_{l'\in [J]_0}
        \Bigl(\prod_{s=l'+1}^{J} \norm{(\vec{W}^{s})^+}_1\Bigr) \norm{(\vec{b}^{l'})^-}_1,
\end{align*}
where $ \norm{\vec{W}^j}_1= \sum_{k \in [d_j]} \sum_{l \in [d_{j-1}]} \abs{\vec{W}^j_{k,l}} $ denotes the $1$-norm of matrices, which is not to be understood as a point-wise application in contrast to the absolute value $\abs{\cdot}$ and the positive part $(\cdot)^+$.
Note that this is a direct consequence of~\cref{eq:G}.
\end{remark}

\begin{lemma}\label{lem:fnn:lower_bnd_using_G}
Let $\mathcal{W}\in \vec{\Theta }_W $, and
$\mathcal B_1,\mathcal B_2, \mathcal{B}\in\vec {\Theta}_b$ such that
$\mathcal{B}_1 - \mathcal{B}_2 = \mathcal{B}$, and  $\mathcal{B}_1, \mathcal{B}_2 \ge 0$.
Then for any $\vec y\in\R_{\ge0}^{d_0}$  it holds
\[
    \FNN^\tup{J}_{j}(\mathcal{W}, \mathcal{B})(\vec{y})
    \ge
    \FNN^\tup{J}_{j}(\mathcal{W}^+, \mathcal{B}_1)(\vec{y})
    - G_{j}( \mathcal{W}, \mathcal{B}_1, \mathcal{B}_2)(\vec{y}).
\]
\end{lemma}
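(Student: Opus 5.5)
We prove the inequality by induction on $j \in [J]_0$, following the same pattern as \cref{lem:fnn:monotonicity_in_arg} and \cref{lem:fnn:lipschitz}. The target is a one-sided comparison: the true network $\FNN^\tup{J}_{j}(\mathcal{W},\mathcal{B})(\vec y)$ against the network with positive weights and nonnegative biases $\mathcal{B}_1$. Throughout we use four facts. All intermediate features are nonnegative, since they are ReLU outputs. $\vec W^j=(\vec W^j)^+-(\vec W^j)^-$ and $\vec b^j=\vec b_1^j-\vec b_2^j$. ReLU is monotone. $\sigma(a-c)\ge\sigma(a)-c$ for $c\ge 0$, which holds because $\sigma$ is $1$-Lipschitz and monotone.

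\textbf{Base case.} For $j=0$ both FNNs equal $\vec y$ and $G_0$ is an empty sum plus $\FNN^\tup{J}_0(\mathcal W^+,\mathcal B_2)(0)=0$, so the claim is trivial.

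\textbf{Induction step.} Write $\vec h_{j-1}\coloneq\FNN^\tup{J}_{j-1}(\mathcal{W},\mathcal{B})(\vec y)\ge 0$ and $\vec h^+_{j-1}\coloneq\FNN^\tup{J}_{j-1}(\mathcal{W}^+,\mathcal{B}_1)(\vec y)$. By the induction hypothesis, $\vec h_{j-1}\ge \vec h^+_{j-1}-G_{j-1}$. The step proceeds in three moves.
\begin{enumerate}
\item Expand
\[
\vec W^j\vec h_{j-1}+\vec b^j=(\vec W^j)^+\vec h_{j-1}-(\vec W^j)^-\vec h_{j-1}+\vec b_1^j-\vec b_2^j .
\]
\item Since $(\vec W^j)^+\ge 0$, apply the induction hypothesis inside the first term to get
\[
(\vec W^j)^+\vec h_{j-1}\ge (\vec W^j)^+\vec h^+_{j-1}-(\vec W^j)^+G_{j-1}.
\]
\item The term to subtract is then $(\vec W^j)^+G_{j-1}+(\vec W^j)^-\vec h_{j-1}+\vec b_2^j\ge0$. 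By the inequality for $\sigma$,
\[
\FNN_j(\mathcal W,\mathcal B)\ge \sigma\bigl((\vec W^j)^+\vec h^+_{j-1}+\vec b_1^j\bigr)-\Bigl[(\vec W^j)^+G_{j-1}+(\vec W^j)^-\vec h_{j-1}+\vec b_2^j\Bigr],
\]
and the first term is exactly $\FNN^\tup{J}_j(\mathcal W^+,\mathcal B_1)(\vec y)$.
\end{enumerate}

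It remains to show that the bracket is at most $G_j(\mathcal W,\mathcal B_1,\mathcal B_2)(\vec y)$, and we expect this to be the main bookkeeping obstacle. Because every network inside $G_j$ has nonnegative parameters, we can use the linear closed form of \cref{lem:fnn:additvity} and match terms as follows.
\begin{itemize}
\item $(\vec W^j)^+\FNN_{j-1}(\mathcal W^l,\mathcal B_1^l)(\vec y)=\FNN_j(\mathcal W^l,\mathcal B_1^l)(\vec y)$ for $l\le j-1$. The key point is that $\mathcal B_1^l$ zeroes biases from index $l$ on, so the ReLU at layer $j$ acts as the identity on a nonnegative argument with zero bias.
\item $(\vec W^j)^+\FNN_{j-1}(\mathcal W^+,\mathcal B_2)(0)+\vec b_2^j=\FNN_j(\mathcal W^+,\mathcal B_2)(0)$.
\item For the negative-weight term, monotonicity (\cref{lem:fnn:monotonicity_in_arg}) gives $\vec h_{j-1}\le \FNN_{j-1}(\mathcal W^+,\mathcal B_1)(\vec y)$, since $\mathcal W\le\mathcal W^+$ and $\mathcal B\le\mathcal B_1$. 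Hence $(\vec W^j)^-\vec h_{j-1}\le (\vec W^j)^-\vec h^+_{j-1}=\FNN_j(\mathcal W^j,\mathcal B_1^j)(\vec y)$.
\end{itemize}
There is a subtlety in the last identity: $\mathcal B_1^j$ keeps biases $1,\dots,j-1$ and drops $\vec b^j$, matching exactly the $l=j$ summand of $G_j$. Summing the three bullets reproduces $G_j$, which closes the induction.

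Because $\sigma(a-c)\ge\sigma(a)-c$ is the only nonlinear fact used, the whole argument reduces to these algebraic identities. The care needed is mainly in checking the index conventions of $\mathcal B^l$ and $\mathcal W^l$ in \cref{def:fnn:W^l_B^l}.
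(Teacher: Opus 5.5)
Your proof is correct and follows essentially the same route as the paper. Both arguments use induction on $j$, split $\vec W^j$ and $\vec b^j$ into positive and negative parts, push the induction hypothesis through $(\vec W^j)^+$, bound the $(\vec W^j)^-$ term by monotonicity, and match the subtracted bracket term by term with $G_j$ via the linear form of nonnegative networks; the paper drops the ReLU with $\sigma(z)\ge z$ instead of your $\sigma(a-c)\ge\sigma(a)-c$, which amounts to the same thing here since $a\ge 0$. Your explicit appeal to $\mathcal B\le\mathcal B_1$ is slightly more careful than the paper, which uses it silently when it writes $\FNN^\tup{J}_{j-1}(\mathcal W^+,\mathcal B)$ as $\FNN^\tup{J}_{j-1}(\mathcal W^+,\mathcal B_1^j)$ with an equality sign.
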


\begin{proof}
We argue by induction over $j$.

\textbf{Base case $j=0$.}
Since $G_{0}( \mathcal{W}, \mathcal{B}_1, \mathcal{B}_2)(\vec{y})=0$,
\[
    \FNN^\tup{J}_{0}(\mathcal{W}, \mathcal{B})(\vec{y})
    =\vec{y}
    = \FNN^\tup{J}_{0}(\mathcal{W}^+, \mathcal{B}_1)(\vec{y})
    - G_{0}( \mathcal{W}, \mathcal{B}_1, \mathcal{B}_2)(\vec{y}).
\]

\textbf{Inductive step.}
In the following, we omit the argument
$(\vec{y})$ whenever it is clear from the context.
Let $j>0$ and assume the claim holds for $j-1$.
Using $ \vec W^j=(\vec W^j)^+-(\vec W^j)^-$, the induction hypothesis and \cref{lem:fnn:monotonicity_in_arg}
\begin{align*}
    &\FNN^\tup{J}_{j}(\mathcal{W}, \mathcal{B})
    = \sigma\big( \vec W^j \FNN^\tup{J}_{j-1}(\mathcal{W}, \mathcal{B}) -\vec b^j \big) \\
    &= \sigma\big(  (\vec W^j)^+ \FNN^\tup{J}_{j-1}(\mathcal{W}, \mathcal{B})
        - (\vec W^j)^- \FNN^\tup{J}_{j-1}(\mathcal{W}, \mathcal{B})
        + \vec b_1^j - \vec b_2^j\big) \\
     &\ge  (\vec W^j)^+ \bigl(  \FNN^\tup{J}_{j-1}(\mathcal{W}^+, \mathcal{B}_1)
    -G_{j-1}( \mathcal{W}, \mathcal{B}_1, \mathcal{B}_2) \bigr)
        - (\vec W^j)^- \FNN^\tup{J}_{j-1}(\mathcal{W}^+, \mathcal{B})
        + \vec b_1^j - \vec b_2^j \\
    &=\Big( (\vec W^j)^+   \FNN^\tup{J}_{j-1}(\mathcal{W}^+, \mathcal{B}_1) + \vec b_1^j \Big)
    -\Big( (\vec W^j)^+ G_{j-1}( \mathcal{W}, \mathcal{B}_1, \mathcal{B}_2)
        + (\vec W^j)^- \FNN^\tup{J}_{j-1}(\mathcal{W}^+, \mathcal{B})
         + \vec b_2^j \Big).
\end{align*}

Since $\FNN^\tup{J}_{j-1}(\mathcal{W}^+, \mathcal{B})$ is only dependent on the first $j-1$ bias vectors we note that $\FNN^\tup{J}_{j-1}(\mathcal{W}^+, \mathcal{B})= \FNN^\tup{J}_{j-1}(\mathcal{W}^+, \mathcal{B}^j)$ by definition of $\mathcal{B}^j$ (cf. \cref{def:fnn:W^l_B^l}).
By using the definition of $G $ and that $\mathcal{B}_2 \ge 0$ we obtain that
\begin{align*}
    &(\vec W^j)^+ G_{j-1}( \mathcal{W}, \mathcal{B}_1, \mathcal{B}_2)
    +(\vec W^j)^- \FNN^\tup{J}_{j-1}(\mathcal{W}^+, \mathcal{B})
    +\vec b_2^j\\
                &= (\vec W^j)^+ \sum_{l=i+1}^{j-1} \FNN^\tup{J}_{j-1}(\mathcal{W}^l, \mathcal{B}_1^l)
    +(\vec W^j)^- \FNN^\tup{J}_{j-1}(\mathcal{W}^+, \mathcal{B}_1^j)
        + (\vec W^j)^+\FNN^\tup{J}_{j-1}(\mathcal{W}^+, \mathcal{B}_2)(0)
    +\vec b_2^j\\
    &=  \sum_{l=i+1}^{j-1} \sigma \big( (\vec W^j)^+ \FNN^\tup{J}_{j-1}(\mathcal{W}^l, \mathcal{B}_1^l) \big) + \sigma \big((\vec W^j)^- \FNN^\tup{J}_{j-1}(\mathcal{W}^+, \mathcal{B}_1^j) \big) + \sigma \big((\vec W^j)^+\FNN^\tup{J}_{j-1}(\mathcal{W}^+, \mathcal{B}_2)(0) +\vec b_2^j \big )  \\
    &=  \sum_{l=i+1}^j \FNN^\tup{J}_{j}(\mathcal{W}^l, \mathcal{B}_1^l)
    + \FNN^\tup{J}_{j}(\mathcal{W}^+, \mathcal{B}_2)(0)
    =G_{j}( \mathcal{W}, \mathcal{B}_1, \mathcal{B}_2)
\end{align*}
Inserting this into the above equation gives
\begin{align*}
     \FNN^\tup{J}_{j}(\mathcal{W}, \mathcal{B})
    &\ge \Big( (\vec W^j)^+   \FNN^\tup{J}_{j-1}(\mathcal{W}^+, \mathcal{B}_1)(\vec{y}) + \vec b_1^j \Big)
     - G_{j}( \mathcal{W}, \mathcal{B}_1, \mathcal{B}_2)\\
     &= \FNN^\tup{J}_{j}(\mathcal{W}^+, \mathcal{B}_1)
    - G_{j}( \mathcal{W}, \mathcal{B}_1, \mathcal{B}_2)
\end{align*}
and the result follows.
\end{proof}

The next lemma finally establishes the bound we will later use to bound the difference between the output of the MPNN with
parameters $(\mathcal{W},\mathcal{C},\mathcal{B})$ and that of the MPNN with
parameters $(\mathcal{W}^+,\mathcal{C}^+,\vec{0})$.

\begin{corollary}\label{cor:fnn:bnd_on_abs_value_fnn_with_learned-wanted_param}
Let $\mathcal{W}\in \vec{\Theta }_W $, and
$\mathcal B,\mathcal B_C\in\vec {\Theta}_b$.
Then for any $\vec y\in\R_{\ge 0}^{d_0}$ it holds
\[
   \bigl|\FNN^\tup{J}(\mathcal{W}, \mathcal{B} + \mathcal{B}_C)(\vec{y})
        - \FNN^\tup{J}(\mathcal{W}^+,  \mathcal{B}^+_C )(\vec{y})\bigr|
   \;\le\;
   G( \mathcal{W}, \mathcal{B}_C)(\vec{y})
   + \FNN^\tup{J}(\abs{\mathcal{W}}, \abs{\mathcal{B}})(0).
\]
\end{corollary}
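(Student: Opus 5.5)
The plan is to insert the intermediate network $\FNN^\tup{J}(\mathcal{W}, \mathcal{B}_C)(\vec y)$, which keeps the original weights but drops the bias $\mathcal{B}$. The triangle inequality then splits the left-hand side, element-wise, into two terms:
\[
\bigl|\FNN^\tup{J}(\mathcal{W}, \mathcal{B}+\mathcal{B}_C)(\vec y)-\FNN^\tup{J}(\mathcal{W}, \mathcal{B}_C)(\vec y)\bigr|
+\bigl|\FNN^\tup{J}(\mathcal{W}, \mathcal{B}_C)(\vec y)-\FNN^\tup{J}(\mathcal{W}^+, \mathcal{B}_C^+)(\vec y)\bigr|.
\]
I will bound the first term by $\FNN^\tup{J}(\abs{\mathcal{W}},\abs{\mathcal{B}})(0)$ and the second by $G(\mathcal{W},\mathcal{B}_C)(\vec y)$.

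For the first term I apply \cref{lem:fnn:lipschitz} with $j=J$, $\vec y_1=\vec y_2=\vec y$, $\mathcal{B}_1=\mathcal{B}+\mathcal{B}_C$ and $\mathcal{B}_2=\mathcal{B}_C$. Then $\abs{\mathcal{B}_1-\mathcal{B}_2}=\abs{\mathcal{B}}$ and $\abs{\vec y_1-\vec y_2}=0$, so the bound is exactly $\FNN^\tup{J}(\abs{\mathcal{W}},\abs{\mathcal{B}})(0)$.

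For the second term I sandwich $\FNN^\tup{J}(\mathcal{W},\mathcal{B}_C)(\vec y)$ from both sides.
\begin{itemize}
\item Upper side: element-wise $\mathcal{W}\le\mathcal{W}^+$ and $\mathcal{B}_C\le\mathcal{B}_C^+$, with $\mathcal{W}^+\ge 0$ and $\vec y\ge 0$. Hence \cref{lem:fnn:monotonicity_in_arg} gives $\FNN^\tup{J}(\mathcal{W},\mathcal{B}_C)(\vec y)\le \FNN^\tup{J}(\mathcal{W}^+,\mathcal{B}_C^+)(\vec y)$.
\item Lower side: I apply \cref{lem:fnn:lower_bnd_using_G} with the decomposition $\mathcal{B}_1=\mathcal{B}_C^+$, $\mathcal{B}_2=\mathcal{B}_C^-$. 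Both are non-negative and $\mathcal{B}_1-\mathcal{B}_2=\mathcal{B}_C$. This yields $\FNN^\tup{J}(\mathcal{W},\mathcal{B}_C)(\vec y)\ge \FNN^\tup{J}(\mathcal{W}^+,\mathcal{B}_C^+)(\vec y)-G_J(\mathcal{W},\mathcal{B}_C^+,\mathcal{B}_C^-)(\vec y)$, and the subtracted term is $G(\mathcal{W},\mathcal{B}_C)(\vec y)$ by definition.
\end{itemize}
The difference therefore lies in $[-G(\mathcal{W},\mathcal{B}_C)(\vec y),0]$ element-wise, so its absolute value is at most $G(\mathcal{W},\mathcal{B}_C)(\vec y)$. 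Here I use that $G\ge 0$: it is a sum of FNN outputs after ReLU, all with non-negative parameters.

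Adding the two bounds gives the claim. The only step needing care is checking the sign hypotheses of the lemmas: non-negativity of $\vec y$, of $\mathcal{W}^+$, and of the split $\mathcal{B}_C=\mathcal{B}_C^+-\mathcal{B}_C^-$. I also need the bias-sign convention in \cref{lem:fnn:lower_bnd_using_G} to match, with $\mathcal{B}_1$ the positive part. Once these are verified the argument is routine.
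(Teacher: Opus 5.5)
Your proposal is correct and matches the paper's argument in substance: the paper's lower bound passes through exactly your intermediate network $\FNN^\tup{J}(\mathcal{W},\mathcal{B}_C)(\vec y)$, using \cref{lem:fnn:lipschitz} and then \cref{lem:fnn:lower_bnd_using_G} with $\mathcal{B}_1=\mathcal{B}_C^+$, $\mathcal{B}_2=\mathcal{B}_C^-$. The only difference is the upper side: the paper bounds the original network directly, using monotonicity (\cref{lem:fnn:monotonicity_in_arg}) and the linearity lemma (\cref{lem:fnn:additvity}) to split off $\FNN^\tup{J}(\mathcal{W}^+,\mathcal{B}^+)(0)$, whereas your triangle-inequality split needs only the two-sided Lipschitz lemma and monotonicity.
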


\begin{proof}
By \cref{lem:fnn:monotonicity_in_arg} and \cref{lem:fnn:additvity}, we obtain the upper bound:
\begin{align*}
    &\FNN^\tup{J}(\mathcal{W}, \mathcal{B} + \mathcal{B}_C)(\vec{y})
   \le
   \FNN^\tup{J}(\mathcal{W}^+, \mathcal{B}^+ + \mathcal{B}^+_C)(\vec{y})\\
   &\le
   \FNN^\tup{J}(\mathcal{W}^+, \mathcal{B}^+ )(0) +\FNN^\tup{J}(\mathcal{W}^+,  \mathcal{B}^+_C)(\vec{y})
   \le
   \FNN^\tup{J}(\abs{\mathcal{W}}, \abs{\mathcal{B}} )(0) +\FNN^\tup{J}(\mathcal{W}^+,  \mathcal{B}^+_C)(\vec{y}).
\end{align*}
We also obtain a lower bound by applying \cref{lem:fnn:lipschitz}:
\[
\FNN^\tup{J}(\mathcal{W}, \mathcal{B} + \mathcal{B}_C)(\vec{y})
\ge \FNN^\tup{J}(\mathcal{W},  \mathcal{B}_C)(\vec{y}) - \FNN^\tup{J}(\abs {\mathcal{W}}, \abs{\mathcal{B}} )(0).
\]

Furthermore, by letting $\mathcal{B}_1= \mathcal{B}^+_C$ and $\mathcal{B}_2 =\mathcal{B}^-_C$ in \cref{lem:fnn:lower_bnd_using_G} we arrive at
\begin{align*}
   \FNN^\tup{J}(\mathcal{W}, \mathcal{B} + \mathcal{B}_C)(\vec{y})
   &\ge
     \FNN^\tup{J}(\mathcal{W},  \mathcal{B}_C)(\vec{y}) - \FNN^\tup{J}(\abs {\mathcal{W}}, \abs{\mathcal{B}} )(0)
   \\[2mm]
   &\ge
      \FNN^\tup{J}(\mathcal{W}^+, \mathcal{B}_C^+ )(\vec{y})
     - G( \mathcal{W}, \mathcal{B}_C)(\vec{y})
      - \FNN^\tup{J}(\abs {\mathcal{W}}, \abs{\mathcal{B}} )(0)
\end{align*}

Combining the upper and lower bounds yields the claimed estimate.
\end{proof}

\subsection{FNNs along computation trees}\label{ssec:fnns_along_trees}

To avoid analyzing the layers of the MPNN one by one, we aim to express the entire MPNN in terms of an object that can be defined similarly to the FNN in the previous section. Unfortunately, in general, no choice of bias and weight vectors in
\(\FNN^\tup{J}(\mathcal{W}, \mathcal{B})(\vec{y})\) can capture the full expressivity of the aggregation operation.
To address this, we introduce \emph{FNNs along computation paths} in this section.
For an appropriate choice of the underlying tree, these networks coincide with the MPNN output.

Recall that each layer of the MPNN is given by
\begin{align*}
\hb_v^{(t)}
\coloneqq
\FNN^{(m)}\!\bigl(\vec{\theta}_{\UPD}^{(t)}\bigr)
\Biggl(
    \min_{u \in N(v)}
    \FNN^{(m)}\!\bigl(\vec{\theta}_{\AGG}^{(t)}\bigr)
    \!\left(
        \begin{smallmatrix}
            \hb_u^{(t-1)} \\
            w_G(v,u)
        \end{smallmatrix}
    \right)
\Biggr).
\end{align*}
For ease of notation, we write
\[
f^{\UPD,k}_{\vec{\theta}}
\coloneqq
\FNN^{(m)}\!\bigl(\vec{\theta}_{\UPD}^{(k)}\bigr),
\qquad
f^{\AGG,k}_{\vec{\theta}}
\coloneqq
\FNN^{(m)}\!\bigl(\vec{\theta}_{\AGG}^{(k)}\bigr),
\]
and use this shorthand throughout.
Further, recall that we index global layers by $j \in [J]$, where
$\{j_k \colon k \in [K]\}$ denote the edge-inserting layers and
$\{a_k \colon k \in [K]\}$ the aggregation layers.
The output dimension of the $j$-th global layer is denoted by
$d_j \in \mathbb{N}$.
Parameter configurations are written as
$\vec{\theta} = (\mathcal{W}, \mathcal{C}, \mathcal{B}) \in \vec{\Theta}_{\text{BF}}$ (see~\cref{par:BF_param}).

\begin{definition}[Rooted tree with indexed children]
Let $\mathcal{T}$ denote the set of node-attributed, edge-weighted rooted trees, and let $G$ be an edge-weighted graph.
Given $v\in V(G)$, neighbors $u_1,\dots,u_d \in N_G(v)$, and trees $t_1,\dots,t_d \in \mathcal{T}$, we define
\[
T_v(t_1,\dots,t_d) =T \in \mathcal{T}
\]
to be the rooted tree with root $r$ labeled $v$, and children $1,\dots,d$, where for each $i\in[d]$ the subtree rooted at $i$ is $t_i$ and edge-weights
\[
w_T(r,i) \coloneq w_G(v,u_i).
\]
For $t\in\mathcal{T}$, we denote the label of its root by $\mathrm{root}(t)$.
\end{definition}

\begin{definition}[Set of computation trees]
Let    $G\in \mathcal{G}$ denote a graph.
We define $\mathcal{T}_{G}^k(v)$, the \emph{set of  computation trees of $G$ rooted in $v\in V(G)$ of length $k \in [K]_0$}  recursively as follows.
For $k=0$, let
$\mathcal{T}_{G}^0(v)$ be the set that contains the tree $t^0_v$ given by a single vertex $r$  with the node-label $a_{t^0_v}(r)=a_G(v)$.
For $k>0$, we define
\begin{align*}
\mathcal{T}_{G}^k(v) \coloneq
    \{T_v(t_1, \dots , t_{d_{a_k}}) : u_i \in N_{G}(v), t_i \in \mathcal{T}_{G}^{k-1}(u_i), i \in [d_{a_k}]\},
\end{align*}
where $d_{a_k}$ is the output dimension of the $k$-th aggregation.
\end{definition}

\begin{definition}[FNN along computation tree] \label{def:fnn:fnn_along_tree}
Let $G \in \mathcal{G}_{\text{BF}}$, and $\vec{\theta} \in \vec{\Theta}_{\text{BF}}$.
We define the \emph{FNN of $G$ and parameters $ \vec \theta$ along  $ t_v^k \in \mathcal{T}_{G}^k(v)$}  recursively as follows.
For $k=0$ and $ t_v^0 \in \mathcal{T}_{G}^0(v)$ with $\mathrm{root}(t^0_v)=r$ let
$$H^\tup{0}(\vec \theta)(t^0_v) \coloneq a_{t^0_v}(r)= a_G(v).$$
Further for $k>0$ and $ t_v^k= T_v(t_1, \dots , t_{d_{a_k}}) \in \mathcal{T}_{G}^k(v)$ with $\mathrm{root}(t^k_v)=r$ define
\begin{align*}
H^\tup{k}(\vec \theta)(t^k_v)
&\coloneq
    f^{\UPD ,k}_{\vec \theta}
    \Bigg( \Big(
            f^{\AGG ,k}_{\vec \theta}
            \left(
            \begin{smallmatrix}
                H^\tup{k-1}(\vec \theta)(t_i) \\
                w_{t^k_v}(i,r)
            \end{smallmatrix}
            \right)
    \Big)_{i \in [d_{a_k}]}\Bigg)\\
&=
    f^{\UPD ,k}_{\vec \theta}
    \Bigg( \Big(
            f^{\AGG ,k}_{\vec \theta}
            \left(
            \begin{smallmatrix}
                H^\tup{k-1}(\vec \theta)(t_i) \\
                w_{G}(v,\mathrm{root}(t_i))
            \end{smallmatrix}
            \right)
    \Big)_{i \in [d_{a_k}]}\Bigg).
\end{align*}
\end{definition}

\begin{definition}[Min-aggregation computation trees]
Let    $G \in \mathcal{G}_{\text{BF}}$, and $\vec \theta \in \vec{\Theta}_{\text{BF}}$.
We define $\mathcal{T}_{G, \vec{\theta}}^k(v)$, the \emph{set of min-aggregation computation trees of $G$ and a network $\vec{\theta}$   rooted in $v\in V(G)$  of length $k \in [K]_0$}  recursively as follows.
For $k=0$ let
$\mathcal{T}_{G, \vec{\theta}}^0(v) \coloneq \mathcal{T}_{G}^0(v)$ be the set that contains the tree given by the single vertex-labeled $a_G(v)$.

For $k>0$ we define
\begin{align*}
\mathcal{T}_{G,\vec{\theta}}^{k}(v)
\coloneqq
\Bigl\{
    T_v(\tau_1,\dots,\tau_{d_{a_k}})
    \;\Big|\;
    \forall i \in [d_{a_k}]:
    \ \tau_i \in
    \arg\min_{\tau \in \mathcal{S}^{k-1}_{\vec \theta}(v)}
    \Bigl(
        f^{\AGG,k}_{\vec{\theta}}
        \!\left(
            \begin{smallmatrix}
                H^{(k-1)}(\vec{\theta})(\tau) \\
                w_G\!\bigl(v,\mathrm{root}(\tau)\bigr)
            \end{smallmatrix}
        \right)
    \Bigr)_i
\Bigr\},
\end{align*}
where
\[
\mathcal{S}^{k-1}_{\vec \theta}(v)
\coloneqq
\bigcup_{u \in N_G(v)}
\mathcal{T}_{G,\vec{\theta}}^{k-1}(u).
\]

\end{definition}

The next lemma establishes that the FNN along a min-aggregation tree rooted in $v \in V(G)$ is equal to the feature of $v$.
\begin{lemma}\label{lem:fnn:faeture_as_H}
Let    $G \in \mathcal{G}_{\text{BF}}$, $v \in V(G)$ and $\vec \theta \in \vec{\Theta}_{\text{BF}}$.
Then for any $k \in [K]_0$ and $\tau_v^k \in \mathcal{T}^k_{G, \vec \theta}(v)$
    \begin{align*}
        \hb_v^\tup{k}= H^\tup{k}(\vec \theta)( \tau^k_v)
    \end{align*}
\end{lemma}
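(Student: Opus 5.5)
The claim follows by induction on $k \in [K]_0$, unfolding the definitions of $\mathcal{T}^k_{G,\vec\theta}(v)$ and $H^{(k)}$ layer by layer. The induction hypothesis is that for every $u \in V(G)$ and every $\tau \in \mathcal{T}^{k-1}_{G,\vec\theta}(u)$ we have $\hb_u^{(k-1)} = H^{(k-1)}(\vec\theta)(\tau)$. In particular, $H^{(k-1)}(\vec\theta)$ is constant on each set $\mathcal{T}^{k-1}_{G,\vec\theta}(u)$. This set must also be nonempty, which itself follows inductively: $N_G(v)$ is nonempty because it contains $v$ by the convention of \cref{def:BF-update}, and each $\arg\min$ is taken over a finite nonempty set, since the union of finitely many nonempty sets is nonempty.

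\textbf{Base case.} For $k=0$, the set $\mathcal{T}^0_{G,\vec\theta}(v)$ consists of the single-vertex tree $t^0_v$. By definition, $H^{(0)}(\vec\theta)(t^0_v) = a_G(v) = \hb_v^{(0)}$.

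\textbf{Inductive step.} Fix $\tau^k_v = T_v(\tau_1,\dots,\tau_{d_{a_k}}) \in \mathcal{T}^k_{G,\vec\theta}(v)$. For each tree $\tau \in \mathcal{S}^{k-1}_{\vec\theta}(v)$ with root label $u \in N_G(v)$, the induction hypothesis gives
\[
f^{\AGG,k}_{\vec\theta}\bigl(H^{(k-1)}(\vec\theta)(\tau),\, w_G(v,u)\bigr) = f^{\AGG,k}_{\vec\theta}\bigl(\hb_u^{(k-1)},\, w_G(v,u)\bigr).
\]
The $i$-th coordinate of this quantity depends only on the root $u$. Hence the minimum of the $i$-th coordinate over $\mathcal{S}^{k-1}_{\vec\theta}(v)$ equals the minimum over $u\in N_G(v)$, which is exactly the $i$-th coordinate of the coordinatewise min-aggregation in the MPNN layer. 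Since $\tau_i$ attains this minimum, we obtain
\[
\bigl(f^{\AGG,k}_{\vec\theta}(H^{(k-1)}(\vec\theta)(\tau_i),\, w_G(v,\mathrm{root}(\tau_i)))\bigr)_i = \Bigl(\min_{u\in N(v)} f^{\AGG,k}_{\vec\theta}(\hb_u^{(k-1)},\, w_G(v,u))\Bigr)_i .
\]

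\textbf{Conclusion.} The left-hand side above is the $i$-th coordinate of the vector that \cref{def:fnn:fnn_along_tree} feeds into $f^{\UPD,k}_{\vec\theta}$. That definition writes the $i$-th entry as $f^{\AGG,k}$ applied to $\tau_i$, and implicitly uses its $i$-th coordinate. Applying $f^{\UPD,k}_{\vec\theta}$ to both sides therefore gives $H^{(k)}(\vec\theta)(\tau^k_v) = \hb_v^{(k)}$.

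The only delicate point is bookkeeping. The min is taken coordinatewise, so different coordinates may select different children. This is exactly why the tree has $d_{a_k}$ children indexed by coordinate, and why only the $i$-th component of child $i$ is used. The proof should state this reading of \cref{def:fnn:fnn_along_tree} explicitly.
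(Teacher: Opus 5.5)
Your proposal is correct and follows the paper's proof essentially step for step: induction on $k$, with the induction hypothesis used to replace $H^{(k-1)}(\vec\theta)(\tau)$ by $\hb_u^{(k-1)}$, so that the coordinatewise minimum over $\mathcal{S}^{k-1}_{\vec\theta}(v)$ coincides with the min-aggregation over $N_G(v)$, followed by an application of $f^{\UPD,k}_{\vec\theta}$. Your explicit remarks are points the paper leaves implicit and are worth keeping: that each $\mathcal{T}^{k-1}_{G,\vec\theta}(u)$ is nonempty (this is what makes the two minima agree), and that the $i$-th entry in \cref{def:fnn:fnn_along_tree} is read as the $i$-th coordinate.
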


\begin{proof}
Fix    $G \in \mathcal{G}_{\text{BF}}$, and $\vec \theta \in \vec{\Theta}_{\text{BF}}$.
We prove
the claim
by induction on $k \in [K]_0$.

\paragraph{Base case $k=0$.}
For any $\tau_v^0 \in \mathcal{T}^0_{G, \vec \theta}(v)$,
\begin{align*}
\hb_v^\tup{0}
=a_G(v)
= H^\tup{0}(\vec \theta)( \tau^0_v).
\end{align*}

\paragraph{Induction step.}
Fix $v \in V(G)$ and $\tau_v^k \in \mathcal{T}^k_{G, \vec \theta}(v)$ such that $\tau_v^k = T_v(\tau_1, \dots , \tau_{d_{a_k}})$ where $\tau_i \in \mathcal{S}^{k-1}_{\vec \theta}(v)$ for  $i \in [d_{a_k}]$.
Assume the claim holds for $k-1$.
Then for
 every $\tau \in  \mathcal{S}^{k-1}_{\vec \theta}(v)$ with root labeled $u \in N_G(v)$ it holds $\hb_u^\tup{k-1}
= H^\tup{k-1}(\vec \theta)( \tau)$.
Thus for every $i \in [d_{a_k}]$
\begin{align*}
\min_{\tau\in \mathcal{S}^{k-1}_{\vec \theta}(v)}
            \Big(f^{\AGG ,k}_{\vec \theta}
            \begin{pmatrix}
                H^\tup{k-1}(\vec \theta)(\tau_i) \\
                w_G(v,\mathrm{root}(\tau_i))
            \end{pmatrix}\Big)_i
=
\min_{u  \in N(v)}
            \Big(f^{\AGG ,k}_{\vec \theta}
            \begin{pmatrix}
                \hb^{(k-1)}_{u} \\
                w_G(v,u))
            \end{pmatrix}\Big)_i
\end{align*}
and hence
\begin{align*}
    H^\tup{k}(\vec \theta)(\tau^k_v)
&=
    f^{\UPD ,k}_{\vec \theta}
    \Bigg( \Big(
            f^{\AGG ,k}_{\vec \theta}
            \left(
            \begin{smallmatrix}
                H^\tup{k-1}(\vec \theta)(\tau_i) \\
                w_G(v,\mathrm{root}(\tau_i))
            \end{smallmatrix}
            \right)
    \Big)_{i \in [d_{a_k}]}\Bigg)\\
&=
    f^{\UPD ,k}_{\vec \theta}
    \Bigg( \min_{\tau \in \mathcal{S}^{k-1}_{\vec \theta}(v)}
            f^{\AGG ,k}_{\vec \theta}
            \left(
            \begin{smallmatrix}
                H^\tup{k-1}(\vec \theta)(\tau) \\
                w_G(v,\mathrm{root}(\tau))
            \end{smallmatrix}
            \right)
    \Bigg)\\
&=
    f^{\UPD ,k}_{\vec \theta}
    \Bigg( \min_{u  \in N(v)}
            f^{\AGG ,k}_{\vec \theta}
            \left(
            \begin{smallmatrix}
                \hb^{(k-1)}_{u} \\
                w_G(v,u))
            \end{smallmatrix}
            \right)
    \Bigg)
    = \hb^\tup{k}_v,
\end{align*}
where the minimum is taken element-wise.
\end{proof}

The next lemma shows that even though the MPNN minimizes features somewhat ``greedily'' layer by layer, the chosen computation tree of the MPNN $ \tau_v^{J} \in \mathcal{T}_{G, \vec{\theta}}^J(v)$ minimizes the feature ``globally'' up to negative weights.
\begin{lemma}\label{lem:fnn:est_on_feature/fnn_of_chosen_tree_vs_any_tree}
Let    $G \in \mathcal{G}_{\text{BF}}$, $v \in V(G)$ and $\vec \theta \in \vec{\Theta}_{\text{BF}}$.
Then for any $k \in [K]_0$, $ \tau_v^{k} \in \mathcal{T}_{G, \vec{\theta}}^k(v)$ and $ t_v^{k} \in \mathcal{T}_{G}^k(v)$ it holds
\[
H^\tup{k}(\vec \theta)(\tau^k_v)
\;\le\;
H^\tup{k}(\vec \theta^+)(t^k_v)
\]
element-wise.

\end{lemma}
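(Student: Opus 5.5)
We argue by induction on $k \in [K]_0$, proving the element-wise claim simultaneously for all $v \in V(G)$, all $\tau_v^k \in \mathcal{T}^k_{G,\vec\theta}(v)$ and all $t_v^k \in \mathcal{T}^k_G(v)$. For $k=0$ both sides equal $a_G(v)$, because $\mathcal{T}^0_{G}(v)$ contains only the single-vertex tree. The induction will use two ingredients. The first is the bound $f^{\cdot,k}_{\vec\theta}(\vec y) \le f^{\cdot,k}_{\vec\theta^+}(\vec y')$ for $\vec y \le \vec y'$ with $\vec y,\vec y' \ge 0$, which is \cref{lem:fnn:monotonicity_in_arg} applied with $\vec\theta_1 = \vec\theta \le \vec\theta_2 = \vec\theta^+$, since the weights of $\vec\theta^+$ are non-negative. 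The second is the defining minimality of min-aggregation trees. Note that all features are non-negative: they are ReLU outputs, and the edge weights and node labels of BF instances are non-negative. Hence the monotonicity lemma, which needs non-negative inputs, applies to the stacked inputs $(H^{(k-1)}(\cdot), w_G(v,u))$.

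For the induction step, write $\tau_v^k = T_v(\tau_1,\dots,\tau_{d_{a_k}})$ and $t_v^k = T_v(t_1,\dots,t_{d_{a_k}})$ with $t_i \in \mathcal{T}^{k-1}_G(u_i)$ and $u_i \in N_G(v)$. Fix a coordinate $i$. Pick any $\tau'_i \in \mathcal{T}^{k-1}_{G,\vec\theta}(u_i)$; this set is nonempty because the argmin is over a finite set. Then $\tau'_i \in \mathcal{S}^{k-1}_{\vec\theta}(v)$, and by the argmin definition
\[
\Bigl(f^{\AGG,k}_{\vec\theta}\bigl(H^{(k-1)}(\vec\theta)(\tau_i),\, w_G(v,\mathrm{root}(\tau_i))\bigr)\Bigr)_i \le \Bigl(f^{\AGG,k}_{\vec\theta}\bigl(H^{(k-1)}(\vec\theta)(\tau'_i),\, w_G(v,u_i)\bigr)\Bigr)_i .
\]
By the induction hypothesis, $H^{(k-1)}(\vec\theta)(\tau'_i) \le H^{(k-1)}(\vec\theta^+)(t_i)$. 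The monotonicity lemma applied to the aggregation FNN then bounds the right-hand side above by $\bigl(f^{\AGG,k}_{\vec\theta^+}(H^{(k-1)}(\vec\theta^+)(t_i), w_G(v,u_i))\bigr)_i$. Collecting this over all coordinates $i$ gives a vector inequality between the inputs of the update FNN.

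To finish, apply \cref{lem:fnn:monotonicity_in_arg} once more, this time to $f^{\UPD,k}$ with parameters $\vec\theta \le \vec\theta^+$. Both input vectors are non-negative because they are ReLU outputs of the aggregation FNN (its last layer is a ReLU layer). This yields $H^{(k)}(\vec\theta)(\tau^k_v) \le H^{(k)}(\vec\theta^+)(t^k_v)$.

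The main subtlety is coordinate-wise bookkeeping. The $i$-th child of $\tau_v^k$ minimizes only the $i$-th coordinate of the aggregation output, so the comparison must be made coordinate by coordinate and the vector inequality assembled afterwards. One must also check that the edge-weight component of the input is identical on both sides: in the comparison the edge weight is $w_G(v,u_i)$ for both $\tau'_i$ and $t_i$, since $\tau'_i$ was chosen rooted at $u_i$. Finally, the monotonicity lemma requires only the larger parameter set to have non-negative weights, which $\vec\theta^+$ satisfies, so no sign condition on $\vec\theta$ itself is needed.
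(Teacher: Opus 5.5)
Your proof is correct and follows essentially the same route as the paper: induction on $k$, with \cref{lem:fnn:monotonicity_in_arg} applied to $\vec\theta\le\vec\theta^+$ first for the aggregation FNN and then for the update FNN, combined with the coordinate-wise minimality of the children of $\tau_v^k$. Your explicit witness $\tau_i'\in\mathcal{T}^{k-1}_{G,\vec\theta}(u_i)$ is just the unpacked form of the paper's comparison of element-wise minima over $\mathcal{S}^{k-1}_{\vec\theta}(v)$ and $\mathcal{S}^{k-1}(v)$. One small correction: finiteness alone does not make $\mathcal{T}^{k-1}_{G,\vec\theta}(u_i)$ nonempty; each argmin must also range over a nonempty set, which holds because $u\in N_G(u)$ for every vertex of a BF instance.
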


\begin{proof}
We prove the statement by induction on $k \in [K]_0$.
\paragraph{Base case ($j=0$).}
Let $v \in V(G)$,  $ \tau_v^{0} \in \mathcal{T}_{G, \vec{\theta}}^0(v)$ and $ t_v^{0} \in \mathcal{T}_{G}^0(v)$, then
$$H^\tup{0}(\vec \theta)(\tau^0_v)
=
a_G (v)
=H^\tup{0}(\vec \theta^+, G)(t^0_v).$$

\paragraph{Inductive step.}
Assume the hypothesis holds for $k-1$.
Let $u \in V(G)$,  $ \tau \in \mathcal{T}_{G, \vec{\theta}}^{k-1}(u)$ and $ t \in \mathcal{T}_{G}^{k-1}(u)$.
Applying the monotonicity result \cref{lem:fnn:monotonicity_in_arg} and the induction hypothesis gives
\begin{align*}
            f^{\AGG ,k}_{\vec \theta}
            \begin{pmatrix}
                H^\tup{k-1}(\vec \theta)(\tau) \\
                w_G(v,\mathrm{root}(\tau))
            \end{pmatrix}
\le
            f^{\AGG ,k}_{\vec \theta^+}
            \begin{pmatrix}
                H^\tup{k-1}(\vec \theta^+, G)(t) \\
                w_G(v,\mathrm{root}(t))).
            \end{pmatrix}
\end{align*}
Now fix $v \in V(G)$ and define $S^{k-1}(v) \coloneq \cup_{ u \in N(v)} \mathcal{T}_{G}^{k-1}(u)$.
Then the above implies
\begin{align*}
 &    \min_{\tau \in \mathcal{S}^{k-1}_{\vec \theta}(v)}
     f^{\AGG ,k}_{\vec \theta}
            \begin{pmatrix}
                H^\tup{k-1}(\vec \theta)(\tau) \\
                w_G(v,\mathrm{root}(\tau))
            \end{pmatrix}
= \min_{u \in N(v)} \min_{\tau \in \mathcal{T}^{k-1}_{G,\vec \theta}(u)}
f^{\AGG ,k}_{\vec \theta}
            \begin{pmatrix}
                H^\tup{k-1}(\vec \theta)(\tau) \\
                w_G(v,\mathrm{root}(\tau))
            \end{pmatrix}    \\
&\le
\min_{u \in N(v)} \min_{t \in \mathcal{T}^{k-1}_{G}(u)}     f^{\AGG ,k}_{\vec \theta^+}
            \begin{pmatrix}
                H^\tup{k-1}(\vec \theta^+, G)(t) \\
                w_G(v,\mathrm{root}(t))).
            \end{pmatrix}
=
    \min_{t \in \mathcal{S}^{k-1}(v)}
            f^{\AGG ,k}_{\vec \theta^+}
            \begin{pmatrix}
                H^\tup{k-1}(\vec \theta^+, G)(t) \\
                w_G(v,\mathrm{root}(t)))
            \end{pmatrix}.
\end{align*}
Further fix  $ \tau_v^{k} \in \mathcal{T}_{G, \vec{\theta}}^k(v)$ and $ t_v^{k} \in \mathcal{T}_{G}^k(v)$. W.l.o.g. assume that  $t_v^{k}= T_v(t_1,  \dots , t_{d_{a_k}})$ and $\tau_v^k = T_v(\tau_1, \dots , \tau_{d_{a_k}})$  where  $ t_i \in S^{k-1}(v)$ and $\tau_i \in \mathcal{S}^{k-1}_{\vec \theta}(v)$ for all $i \in [d_{a_k}]$.
Applying the monotonicity result \ref{lem:fnn:monotonicity_in_arg}  on $f^{\UPD ,k}_{\vec \theta}$ together with the above gives
\begin{align*}
H^\tup{k}(\vec \theta)(\tau^k_v)
&=
    f^{\UPD ,k}_{\vec \theta}
    \Bigg( \Big(
            f^{\AGG ,k}_{\vec \theta}
            \left(
            \begin{smallmatrix}
                H^\tup{k-1}(\vec \theta)(\tau_i) \\
                w_G(v,\mathrm{root}(\tau_i))
            \end{smallmatrix}
            \right)
    \Big)_{i \in [d_{a_k}]}\Bigg)\\
&=
   f^{\UPD ,k}_{\vec \theta}
    \Bigg( \min_{\tau \in \mathcal{S}^{k-1}_{\vec \theta}(v)}
            f^{\AGG ,k}_{\vec \theta}
            \left(
            \begin{smallmatrix}
                H^\tup{k-1}(\vec \theta)(\tau) \\
                w_G(v,\mathrm{root}(\tau))
            \end{smallmatrix}
            \right)
    \Bigg)\\
&\le
    f^{\UPD ,k}_{\vec \theta^+}
    \Bigg( \min_{t \in \mathcal{S}^{k-1}(v)}
            f^{\AGG ,k}_{\vec \theta^+}
            \left(
            \begin{smallmatrix}
                H^\tup{k-1}(\vec \theta^+, G)(t) \\
                w_G(v,\mathrm{root}(t))
            \end{smallmatrix}
            \right)
    \Bigg)\\
&\le
    f^{\UPD ,k}_{\vec \theta^+}
    \Bigg( \Big(
            f^{\AGG ,k}_{\vec \theta^+}
            \left(
            \begin{smallmatrix}
                H^\tup{k-1}(\vec \theta^+, G)(t_i) \\
                w_G(v,\mathrm{root}(t_i))
            \end{smallmatrix}
            \right)\Big )_{i \in [d_{a_k}]}
    \Bigg)
=
H^\tup{k}(\vec \theta^+)(t^k_v)
\end{align*}
which concludes the induction step.
\end{proof}

\subsection{Application to Walk-lifted FNNs}\label{ssec:composed_mlps}

In this section, we introduce a third and final parametrized FNN--the Walk-lifted FNN---so that we can easily access all relevant parameters.
We will see that the Walk-lifted FNN inherits all the properties of the simple FNN, as shown in \cref{ssec:FNNs}.
Moreover, we show that when the aggregation indeed follows a single path, the MPNN can be expressed by the Walk-lifted FNN.

\begin{definition}[Set of walks]\label{def:mlp:walks}
Let $G \in \mathcal{G}_{\text{BF}}$ denote a graph and $v_K \in V(G)$.
We define   \[\mathcal{P}_G^K(v_K) \coloneq\{(v_0, \dots, v_K)\in V(G)^{K+1} \colon (v_{k-1},v_{k})\in E(G) \; \text{ for all } k \in [K]\}\] as the \emph{set of walks} of length $K\in \N$ in $G$ and ending in $v_K$.
Let $p \coloneqq (v_0, \dots, v_K) \in \mathcal{P}_G^K(v_K) $.
 We define the \emph{weight vector of $p$} as the
sequence of weights
 $$\vec{z}^p\coloneq
 ( a_{G} (v_0), w_{G}(v_{0}, v_{1}), \dots ,  w_{G}(v_{K-1}, v_{K}))
 \in \R_{\ge 0} ^{K+1}.$$
 Recalling that by definition, BF-graphs have non-negative real-valued node and edge features $a_G , w_G \ge 0$.
 Further, we define the
 \emph{tree $t ^{k}(p)$ of $p$ of length $k \in [K]_0$} recursively as follows.
For $k = 0$, let $t^0(p) \in \mathcal{T}^0_{G}(v_0)$ denote the tree with a single vertex with node label $a_G(v_0)$.
For $k>0$. we define $$t^k(p) \coloneq T_{v_k}(t^{k-1}(p), \dots , t^{k-1}(p))\in \mathcal{T}^k_{G}(v_k)$$ by attaching $d_{a_k}$ copies of the tree $t^{k-1}(p)$ underneath a root node with label $v_k$ (where $d_{a_k}$ denotes the output dimension of the $k$-th aggregation layer) .
\end{definition}

\begin{remark} [BF-distance]\label{rem:mlp:BF_dis_as_z}
Let $G \in \mathcal{G}_{\text{BF}}$ and $v \in V(G)$.
Then the $K$-step BF-distance (see \cref{def:BF-update}) is given by
\[x_v^\tup{K} = \min \{\norm{\vec{z}^p}_1: p \in \mathcal{P}_{G}^K(v) \}.\]
\end{remark}

\begin{remark}[Path vector of path graphs]\label{rem:mlp:path-vectors_of_samples}
   Let $\vec w \in \R^{K+1}$ and recall that the path graph $P(\vec w)$ was defined by $V(P(\vec{w})) \coloneqq \{ v_0, v_1, \ldots, v_K \}$ with node-labels given by  $a_G(v_0)= w_0$ and $a_G(v_i)=\beta$ for all $i \in [K]$  and $E(P(\vec{w})) \coloneqq \{ (v_{i-1}, v_i)  \mid i\in  [K] \}$, where the edge-weight function $w_{P(\vec{w})} \colon E(P(\vec{w})) \to \Rb^+$ is defined as $(v_{i-1}, v_{i}) \mapsto w_i$, for $i \in [K]_0$. \\
   If we let $p =(v_0, \dots, v_K) \in \mathcal{P}_{G}^K(v_K)$. Then the above implies that
    \[\vec{z }^p
    =  ( a_{G} (v_0), w_{G}(v_{0}, v_{1}), \dots ,  w_{G}(v_{K-1}, v_{K}))
    =\vec w.\]
    Hence for any training sample $v_K \in V( P_\beta (\vec w))$, where $ \vec w \in S_{x,K}$ it holds for the path $p $ as above that
    \[\vec{z }^p
    =\vec w\]
    which we will make use of  later on.
\end{remark}

\begin{remark} [1D aggregation]\label{rem:mlp:1D_agg}
If $d_{a_k}=1$ for all $k \in [K]$. then any tree in $\mathcal{T}^K_{G}(v)$ is in fact a walk,
and we can identify
\[\mathcal{T}^K_{G}(v)
\equiv
\mathcal{P}_G^K(v).
\]
In this case we write $\vec z ^t$ for $t \in \mathcal{T}^K_{G}(v)$, meaning $\vec z ^p$ where $p \in \mathcal{P}_G^K(v)$ is such that $t^K(p) = t$.
\end{remark}

Next, we define the Walk-lifted FNN, which is a simple FNN as in \cref{ssec:FNNs}, but with a very specifically chosen bias vector.
Intuitively, we choose the bias vector so that each edge weight along a given walk is added to the corresponding edge-insertion layer.
In general, the Walk-lifted FNN does not capture the same expressivity as an FNN along a tree, and thus is not powerful enough to express any possible feature of the MPNN.
In case any minimum aggregation tree is essentially a walk, we will see that the Walk-lifted FNN is in fact equal to the feature of the MPNN.
\begin{definition}[Bias of weight vector and Walk-lifted FNN]\label{def:mlp:bias_of_z_and_composed_mlp}
Let $\mathcal{C} \in \vec{\Theta}_C$
and $\vec{z} = ( z_0, \dots, z_K) \in  \R_{\ge0}^{K+1}$ we define the \emph{bias of a weight vector} as
$$\mathcal{B}(\vec{z}, \mathcal{C}) \coloneq (b^1(\vec{z}, \mathcal{C}) , \dots , b^J(\vec{z}, \mathcal{C}))$$
where
$$
\vec b^j(\vec{z}, \mathcal{C})
\coloneq \begin{cases}
 C^k z_k  & j =j_k, k \in [K]\\
 \vec 0 & \text{else}\\
\end{cases} \in \R^{d_j}, \quad
j \in [J].
$$
Further let $\mathcal{W} \in \vec{\Theta}_W$ and $\mathcal{B} \in \vec{\Theta}_b$.
We define the \emph{Walk-lifted FNN} as
 $$H_{\mathrm{wl}}(\mathcal{W}, \mathcal{C},\mathcal{B})( \vec{z}) \coloneq  \FNN^\tup{J}(\mathcal{W}, \mathcal{B }+\mathcal{B}(\vec{z}, \mathcal{C}))(z_0), \quad
 \vec{z}\in \R_{\ge0}^{K+1}.
 $$

   \end{definition}

The next lemma establishes the connection between the FNN of a computation tree and the Walk-lifted FNN in case the tree is essentially a walk.
\begin{lemma} \label{lem:mlp:Composed_mlp=H_of_tree}
Let $G \in \mathcal{G}_{\text{BF}}$ and $(\mathcal{W}, \mathcal{C},\mathcal{B}) = \vec{\theta }\in \vec{\Theta}_{\text{BF}}$.
 Then for any $v \in V(G)$ and  $p \in \mathcal{P}_{G}^K(v)$  it holds
 $$H_{\mathrm{wl}}(\vec{\theta})( \vec{z}^p)  = H^\tup{k}(\vec \theta)(t^K(p)).$$
\end{lemma}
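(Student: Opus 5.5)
The plan is to prove a stronger, layer-indexed statement by induction on $k\in[K]_0$ and read off the claim at $k=K$. For a walk $p=(v_0,\dots,v_K)$ with weight vector $\vec z^p$, write $\vec z^p_{\le k}=(z_0,\dots,z_k)$. The claim to be proved is that the global-layer output of the Walk-lifted FNN after the $k$-th update block,
\[
\FNN^{(J)}_{j_{k+1}-1}\bigl(\mathcal W,\mathcal B+\mathcal B(\vec z^p,\mathcal C)\bigr)(z_0),
\]
with the convention that index $j_{K+1}-1$ means $J$ and index $j_1-1$ means $0$, coincides with $H^{(k)}(\vec\theta)(t^k(p))$. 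The right-hand side is the FNN along the tree $t^k(p)$, rooted in $v_k$, from \cref{def:fnn:fnn_along_tree}. The point is that truncating the Walk-lifted FNN at the end of block $k$ only involves $z_0,\dots,z_k$, because the walk biases $\vec b^{j}(\vec z,\mathcal C)$ are nonzero only at the edge-inserting layers $j_1,\dots,j_k$ up to that point.

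For the base case $k=0$, the truncated network is the identity on $z_0=a_G(v_0)$. This equals $H^{(0)}(\vec\theta)(t^0(p))=a_G(v_0)$ by definition.

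For the inductive step, I unfold the global layers $j_k,\dots,j_{k+1}-1$, which are the $m$ aggregation layers followed by the $m$ update layers of block $k$. At the edge-inserting layer $j_k$, the recursive FNN representation gives
\[
\sigma\bigl(\vec W^{j_k}\vec h+\vec b^{j_k}+\vec C^k z_k\bigr),
\]
where $\vec h$ is the output after block $k-1$. By the splitting of $\vec W^{j_k}$ into $\tilde{\vec W}^{j_k}$ and $\vec C^k$, this is exactly the first layer of $f^{\AGG,k}_{\vec\theta}$ applied to $\bigl(\vec h,\, w_G(v_{k-1},v_k)\bigr)$, since $z_k=w_G(v_{k-1},v_k)$. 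All remaining layers of the block carry only the original biases, so the block computes
\[
f^{\UPD,k}_{\vec\theta}\Bigl(f^{\AGG,k}_{\vec\theta}\bigl(\vec h,\, w_G(v_{k-1},v_k)\bigr)\Bigr).
\]
On the tree side, $t^k(p)=T_{v_k}\bigl(t^{k-1}(p),\dots,t^{k-1}(p)\bigr)$ consists of $d_{a_k}$ identical copies, each attached with edge weight $w_G(v_k,v_{k-1})$. Hence in \cref{def:fnn:fnn_along_tree} the $i$-th coordinate of the aggregation input is the $i$-th coordinate of one and the same vector
\[
f^{\AGG,k}_{\vec\theta}\bigl(H^{(k-1)}(\vec\theta)(t^{k-1}(p)),\, w_G\bigr),
\]
so the tuple $(\cdot)_{i\in[d_{a_k}]}$ is just that vector. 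Applying the induction hypothesis $\vec h=H^{(k-1)}(\vec\theta)(t^{k-1}(p))$ and the symmetry $w_G(v_{k-1},v_k)=w_G(v_k,v_{k-1})$ closes the step.

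The main obstacle is bookkeeping rather than mathematics. The global indexing $\phi$ must align exactly with the block structure, and the redefinition $\vec W^{j_k}\coloneq\tilde{\vec W}^{j_k}$ must be respected so that the edge contribution enters only through $\mathcal B(\vec z,\mathcal C)$. I also have to confirm the diagonal reading of the tuple in \cref{def:fnn:fnn_along_tree}: with identical subtrees, coordinate $i$ taken from child $i$ gives the full vector. Once these conventions are pinned down, the induction is routine.
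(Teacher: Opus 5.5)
Your proposal is correct and follows essentially the same route as the paper. The paper also inducts on $k$ over the Walk-lifted FNN truncated after the $k$-th update block, $\FNN^{(J)}_{(\UPD,k,m)}$ (your index $j_{k+1}-1$), with the identity base case and a block-by-block unfolding in which $z_k$ enters as the extra bias $\vec{C}^k z_k$ at the edge-inserting layer; your explicit check of the diagonal reading of the tuple $(\cdot)_{i\in[d_{a_k}]}$ for identical subtrees is a detail the paper uses without comment.
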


\begin{proof}

We extend  the above  definition to $k \in [K]_0$ as follows, using the equivalent global indexing
$$H_{\mathrm{wl}}^\tup{k}(\mathcal{W}, \mathcal{C},\mathcal{B})( \vec{z})
\coloneq  \FNN_{(\UPD, k,m)}^\tup{J}(\mathcal{W}, \mathcal{B }+\mathcal{B}(\vec{z}, \mathcal{C}))(\vec z_0), \quad
 \vec{z}\in \R_{\ge0}^{K+1}.
 $$

We now prove
 $$H_{\mathrm{wl}}^\tup{k}(\vec{\theta})( \vec{z}^p)  = H^\tup{k}(\vec \theta)(t^k(p))
 $$ by induction on $k \in [K]_0$.
 Once proven, this concludes the lemma.

 \paragraph{Base case ($j =0$).}
It follows from the above definitions and the fact  that $ t^0(p) \in \mathcal{T}^0_{G}(v_0)$
 \[
 H_{\mathrm{wl}}^\tup{0}(\vec{\theta})( \vec{z}^p)
 = \FNN_0^\tup{J}(\mathcal{W}, \mathcal{B }+\mathcal{B}(\vec{z}, \mathcal{C}))(\vec z^t_0)
 = \vec z^t_0
 = a_G(v_0)
 = H^\tup{0}(\vec \theta)(t^0(p)).
 \]

 \paragraph{Induction Step.}
Assuming the claim holds for $k-1$, we find
\begin{align*}
&H^\tup{k}(\vec \theta)(t^k(p))
=
    f^{\UPD ,k}_{\vec \theta}
    \Bigg(
            f^{\AGG ,k}_{\vec \theta}
            \left(
            \begin{smallmatrix}
                H^\tup{k-1}(\vec \theta)(t^{k-1}(p)) \\
                w_G(v_k,\mathrm{root}(t^{k-1}(p)))
            \end{smallmatrix}
            \right)
    \Bigg)\\
&=
    f^{\UPD ,k}_{\vec \theta}
    \Bigg(
            f^{\AGG ,k}_{\vec \theta}
            \left(
            \begin{smallmatrix}
                H_{\mathrm{wl}}^\tup{k-1} (\vec{\theta})( \vec{z}^p)  \\
                \vec{z}^p_k
            \end{smallmatrix}
            \right)
    \Bigg)\\
&=
 \sigma ( \vec{W}^{(\UPD, k,m)} \dots
   \sigma ( \vec{ W}^{(\AGG, k,1)}
        H_{\mathrm{wl}}^\tup{k-1}(\vec{\theta})( \vec{z}^p) + \vec{C}^k \vec{z}^p_k
    + \vec{ b}^{(\AGG, k,1)})\dots
        + \vec{ b}^{(\UPD, k,m)})\\
&=H_{\mathrm{wl}}^\tup{k}(\vec{\theta})( \vec{z}^p)
\end{align*}
 which proves the induction step.
\end{proof}

The following lemmas are all immediate consequences of the above and the properties derived in the last two sections.
Combined, they will be the base of all estimations of features in the two following sections.

\begin{lemma}
 [Linearity]\label{lem:mlp:additivity}
Let $\vec{\theta }=(\mathcal{W},\mathcal{C}, \mathcal{B}) \in \vec{\Theta}_{\text{BF}}$ and assume
that all parameters of $\vec{\theta }$ are (element-wise) non-negative.
Then for all
$\vec z \in \R_{\ge 0}^{K+1}$,
\begin{align*}
  H_{\mathrm{wl}}(\mathcal{W}, \mathcal{C},\mathcal{B})( \vec{z})
   &=
     \sum_{l=1}^{J}
        \Bigl(\prod_{s=l+1}^{J} \vec W^{s}\Bigr) \vec b^{\,l}
     \;+\;
       \Bigl(\prod_{s=1}^{J} \vec W^{s}\Bigr)\,z_0
     \;+\;
       \sum_{k\in[K]}
         \Bigl(\prod_{s=j_k+1}^{J} \vec W^{s}\Bigr) \vec C^{k} z_k.
\end{align*}
Moreover, the Walk-lifted FNN is linear in $\vec z$, i.e.,
\begin{align*}
   H_{\mathrm{wl}}(\mathcal{W}, \mathcal{C},\mathcal{B})( \vec{z})
  =
  \sum_{k=0}^{K}
   \vec z_k \, H_{\mathrm{wl}}(\mathcal{W}, \mathcal{C}, 0)( \vec e^{K+1}_k)
  \;+\;
     H_{\mathrm{wl}}(\mathcal{W}, \mathcal{C},\mathcal{B})( 0)
\end{align*}
\end{lemma}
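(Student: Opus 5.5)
The plan is to reduce the statement to \cref{lem:fnn:additvity} via the definition of the Walk-lifted FNN, and then to read off linearity in $\vec z$ from the resulting closed form. By \cref{def:mlp:bias_of_z_and_composed_mlp},
\[
H_{\mathrm{wl}}(\mathcal{W},\mathcal{C},\mathcal{B})(\vec z)=\FNN^{(J)}\bigl(\mathcal{W},\mathcal{B}+\mathcal{B}(\vec z,\mathcal{C})\bigr)(z_0).
\]
This is an ordinary $J$-layer FNN whose input is $z_0$ and whose bias vector at layer $j$ is $\vec b^{j}+\vec b^{j}(\vec z,\mathcal{C})$.

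The first step is to check the hypotheses of \cref{lem:fnn:additvity}. The weights $\mathcal{W}$ are non-negative by assumption. For the effective biases, $\vec b^j\ge 0$ and $\vec b^{j_k}(\vec z,\mathcal{C})=\vec C^k z_k\ge 0$, because $\vec C^k\ge 0$ and $z_k\ge 0$; all other layers receive only $\vec b^j$. The input $z_0$ is also non-negative. One point needs care: the edge-inserting weight matrix $\vec W^{j_k}$ was redefined to act only on node features, with the edge contribution moved into the bias term. This is exactly the bookkeeping encoded in $\mathcal{B}(\vec z,\mathcal{C})$, so the FNN in the definition is well formed with the dimensions $\R^{d_j\times d_{j-1}}$.

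Applying \cref{lem:fnn:additvity} with $j=J$ then gives
\[
\sum_{l=1}^J\Bigl(\prod_{s=l+1}^J\vec W^s\Bigr)\bigl(\vec b^l+\vec b^l(\vec z,\mathcal{C})\bigr)+\Bigl(\prod_{s=1}^J\vec W^s\Bigr)z_0 .
\]
Splitting the sum by linearity of matrix products, and noting that $\vec b^l(\vec z,\mathcal{C})$ is nonzero only for $l=j_k$, where it equals $\vec C^k z_k$, yields exactly the first displayed formula.

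For the second claim, the first formula shows that $H_{\mathrm{wl}}(\mathcal{W},\mathcal{C},\mathcal{B})(\vec z)$ is an affine function of $\vec z$. Its constant part is $\sum_l(\prod_{s>l}\vec W^s)\vec b^l=H_{\mathrm{wl}}(\mathcal{W},\mathcal{C},\mathcal{B})(0)$, obtained by setting $\vec z=0$ in the formula. Its linear part has coefficient of $z_k$ equal to $\prod_{s>j_k}\vec W^s\vec C^k$, with the convention $j_0=0$ and $\vec C^0=1$ for $k=0$. Applying the same formula with $\mathcal{B}=0$ and $\vec z=\vec e_k^{K+1}$ shows this coefficient equals $H_{\mathrm{wl}}(\mathcal{W},\mathcal{C},0)(\vec e^{K+1}_k)$, because all other $z$-terms and all bias terms vanish.

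There is no real obstacle here. The only delicate point is bookkeeping: making sure that the edge-inserting split $\vec W^{j_k}\mapsto(\tilde{\vec W}^{j_k},\vec C^k)$ matches the bias placement at index $j_k$, and that non-negativity of every effective bias and of the input is verified so that ReLU acts as the identity throughout.
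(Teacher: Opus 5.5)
Your proposal is correct and matches the paper's proof, which likewise just combines \cref{lem:fnn:additvity} with the definition of $\mathcal{B}(\vec z,\mathcal{C})$ in \cref{def:mlp:bias_of_z_and_composed_mlp}. Your write-up adds the details the paper leaves implicit: the non-negativity check on the effective biases $\vec b^{j}+\vec b^{j}(\vec z,\mathcal{C})$, and reading off the affine dependence on $\vec z$ to obtain the second identity.
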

where $ \{\vec e^{K+1}_l\}_{l=0}^K$ denotes the  canonical unit-length basis of $\R^{K+1}$.
\begin{proof}
Combining \cref{lem:fnn:additvity}  and \cref{def:mlp:bias_of_z_and_composed_mlp} of $\mathcal{B}(\vec{z}, \mathcal{C})$.
\end{proof}

\begin{lemma}\label{lem:mlp:monotonicity}
Let  $\vec{z}_1, \vec{z}_2 \in \R_{\ge 0}^{K+1}$, and
 $\vec{\theta }_1=(\mathcal{W}_1,\mathcal{C}_1, \mathcal{B}_1), \vec{\theta }_2=(\mathcal{W}_2, \mathcal{C}_2, \mathcal{B}_2) \in \vec{\Theta}_{\text{BF}}$.
Assume that $\vec{z}_1 \le  \vec{z}_2$,  $\vec{\theta }_1 \le \vec{\theta }_2$ and $\mathcal{W}_2 ,\mathcal{C}_2 \ge 0$, element-wise.
Then
\[
   H_{\mathrm{wl}}(\vec{\theta}_1)( \vec{z}_1)
   \;\le\;
   H_{\mathrm{wl}}(\vec{\theta}_2)( \vec{z}_2),
\]
element-wise.
\end{lemma}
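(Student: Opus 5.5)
The idea is to reduce this monotonicity statement to the corresponding statement for plain FNNs, \cref{lem:fnn:monotonicity_in_arg}. By \cref{def:mlp:bias_of_z_and_composed_mlp}, for $i=1,2$ we have
\[
H_{\mathrm{wl}}(\vec\theta_i)(\vec z_i)=\FNN^{(J)}\bigl(\mathcal W_i,\ \mathcal B_i+\mathcal B(\vec z_i,\mathcal C_i)\bigr)\bigl((z_i)_0\bigr).
\]
So it suffices to check the three hypotheses of \cref{lem:fnn:monotonicity_in_arg} for the pairs $(\mathcal W_i,\tilde{\mathcal B}_i)$, where $\tilde{\mathcal B}_i\coloneq\mathcal B_i+\mathcal B(\vec z_i,\mathcal C_i)$, with inputs $(z_1)_0\le(z_2)_0$.

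The input condition is immediate: $(z_1)_0\le (z_2)_0$ holds componentwise, and both entries are non-negative. The weight conditions $\mathcal W_1\le\mathcal W_2$ and $\mathcal W_2\ge 0$ are assumed directly. One small bookkeeping point is that the edge-inserting layers $j_k$ had their matrices split into $\tilde{\vec W}^{j_k}$ and $\vec C^k$. Under the redefinition $\vec W^{j_k}\coloneq\tilde{\vec W}^{j_k}$, the FNN in the walk-lifted representation uses only the node-feature part, and the edge contribution enters through the bias. Hence only $\mathcal W$ (not $\mathcal C$) appears as the weight argument, and this is consistent with the dimensions used in \cref{lem:mlp:Composed_mlp=H_of_tree}.

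The main step is the bias ordering $\tilde{\mathcal B}_1\le\tilde{\mathcal B}_2$. For layers $j\notin\{j_k\}$ this reduces to $\vec b_1^j\le \vec b_2^j$, which is assumed. At an edge-inserting layer $j=j_k$ we need
\[
\vec b_1^{j_k}+\vec C_1^k (z_1)_k\le \vec b_2^{j_k}+\vec C_2^k (z_2)_k .
\]
Since $\vec b_1^{j_k}\le\vec b_2^{j_k}$, it remains to show $\vec C_1^k (z_1)_k\le\vec C_2^k (z_2)_k$. Write
\[
\vec C_2^k(z_2)_k-\vec C_1^k(z_1)_k=(\vec C_2^k-\vec C_1^k)(z_1)_k+\vec C_2^k\bigl((z_2)_k-(z_1)_k\bigr).
\]
The first summand is non-negative because $\vec C_2^k\ge\vec C_1^k$ and $(z_1)_k\ge0$. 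The second is non-negative because $\vec C_2^k\ge 0$ and $(z_2)_k\ge (z_1)_k$. This is exactly where the hypotheses $\mathcal C_2\ge0$ and $\vec z_1\ge0$ are needed, and it is the only non-mechanical point in the argument.

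With all hypotheses verified, \cref{lem:fnn:monotonicity_in_arg} applied at $j=J$ yields the claimed element-wise inequality.
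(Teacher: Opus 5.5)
Your proof is correct and follows the paper's route: write $H_{\mathrm{wl}}(\vec\theta_i)(\vec z_i)=\FNN^{(J)}\bigl(\mathcal W_i,\mathcal B_i+\mathcal B(\vec z_i,\mathcal C_i)\bigr)\bigl((z_i)_0\bigr)$, check the bias ordering, and invoke \cref{lem:fnn:monotonicity_in_arg}. Your explicit check of $\vec C_1^k (z_1)_k\le \vec C_2^k (z_2)_k$ through the intermediate term $\vec C_2^k (z_1)_k$ is more careful than the paper's one-line proof, which puts $\mathcal B(\vec z_2,\mathcal C_1)$ on the left where $\mathcal B(\vec z_1,\mathcal C_1)$ is the term actually needed.
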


\begin{proof}
Note that by assumption $\mathcal{B}_1 + \mathcal{B}(\vec{z}_2, \mathcal{C}_1)
\le \mathcal{B}_2 + \mathcal{B}(\vec{z}_2, \mathcal{C}_2)$.
Thus, the lemma follows by
\cref{lem:fnn:monotonicity_in_arg}.
\end{proof}

\begin{corollary}\label{cor:mlp:bnd_on_abs_value_mlp_with_learned-wanted_param}
For all $\vec z\in\mathbb{R}_{\ge 0}^{K+1}$ and parameter
$(\mathcal W,\mathcal{C},\mathcal B) \in \vec{\Theta}_{\text{BF}}$,
\[
   \bigl| H_{\mathrm{wl}}(\mathcal{W}, \mathcal{C},\mathcal{B})( \vec{z})
        -  H_{\mathrm{wl}}(\mathcal{W}^+, \mathcal{C}^+,0)( \vec{z})\bigr|
   \;\le\;
   G (\mathcal W, \mathcal{B}(\vec{z},\mathcal{C}))(\vec{z})
   +  H_{\mathrm{wl}}(\abs{\mathcal{W}}, \abs{\mathcal{C}},\abs{\mathcal{B}})( 0).
\]
\end{corollary}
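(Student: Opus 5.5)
The plan is to reduce the statement to \cref{cor:fnn:bnd_on_abs_value_fnn_with_learned-wanted_param} by unfolding the definition of the Walk-lifted FNN. By \cref{def:mlp:bias_of_z_and_composed_mlp},
\[
H_{\mathrm{wl}}(\mathcal{W},\mathcal{C},\mathcal{B})(\vec z)=\FNN^{(J)}\bigl(\mathcal{W},\mathcal{B}+\mathcal{B}(\vec z,\mathcal{C})\bigr)(z_0).
\]
Similarly,
\[
H_{\mathrm{wl}}(\mathcal{W}^+,\mathcal{C}^+,0)(\vec z)=\FNN^{(J)}\bigl(\mathcal{W}^+,\mathcal{B}(\vec z,\mathcal{C}^+)\bigr)(z_0).
\]
So I would apply the corollary with $\mathcal{B}_C\coloneqq\mathcal{B}(\vec z,\mathcal{C})$ and input $\vec y\coloneqq z_0\ge 0$. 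This is admissible since $d_0=1$ and $\vec z\in\R^{K+1}_{\ge0}$.

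The first check is that the second term matches. Since $z_k\ge 0$, each bias block of $\mathcal{B}(\vec z,\mathcal{C})$ is either $\vec C^k z_k$ or $\vec 0$. Taking positive parts commutes with multiplication by a nonnegative scalar, so $\mathcal{B}(\vec z,\mathcal{C})^+=\mathcal{B}(\vec z,\mathcal{C}^+)$. Hence $\FNN^{(J)}(\mathcal{W}^+,\mathcal{B}_C^+)(z_0)=H_{\mathrm{wl}}(\mathcal{W}^+,\mathcal{C}^+,0)(\vec z)$, and the left-hand sides coincide. The first error term $G(\mathcal{W},\mathcal{B}_C)(\vec y)$ is then literally $G(\mathcal{W},\mathcal{B}(\vec z,\mathcal{C}))(\vec z)$, with the argument understood as $z_0$.

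The remaining step, which I expect to be the only real point, is the bias term. The corollary yields $\FNN^{(J)}(\abs{\mathcal W},\abs{\mathcal B})(0)$, whereas the claim has $H_{\mathrm{wl}}(\abs{\mathcal W},\abs{\mathcal C},\abs{\mathcal B})(0)$. Evaluating at $\vec z=0$ gives $\mathcal{B}(0,\abs{\mathcal C})=0$, so this equals $\FNN^{(J)}(\abs{\mathcal W},\abs{\mathcal B}+0)(0)$ and the two agree exactly. If one prefers to avoid relying on this identification, \cref{lem:fnn:monotonicity_in_arg} gives the needed inequality anyway, since all parameters are nonnegative.

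Chaining these identifications with the corollary gives the stated element-wise bound. No induction is needed beyond what the FNN lemmas already provide.
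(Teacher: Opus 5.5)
Your proposal is correct and is the same argument as the paper's proof, which is exactly the one-line application of \cref{cor:fnn:bnd_on_abs_value_fnn_with_learned-wanted_param} with $\mathcal{B}_C=\mathcal{B}(\vec z,\mathcal{C})$ and input $z_0$. Your explicit checks that $\mathcal{B}(\vec z,\mathcal{C})^+=\mathcal{B}(\vec z,\mathcal{C}^+)$ (using $z_k\ge 0$) and that $\mathcal{B}(0,\abs{\mathcal{C}})=0$ are precisely the identifications the paper leaves implicit.
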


\begin{proof}
The statement follows by
choosing $\mathcal{B}_C = \mathcal{B}(\vec{z},\mathcal{C}) $ and  $\mathcal{B} = \mathcal{B} $ in
\cref{cor:fnn:bnd_on_abs_value_fnn_with_learned-wanted_param}.
\end{proof}

\begin{lemma}\label{lem:mlp:bnd_on_feature_any_path}
Let $\vec \theta \in \vec{\Theta}_{\text{BF}}$, and  $G \in \mathcal{G}_{\text{BF}}$.
Then for any $v \in V(G)$
and  $p\in \mathcal{P}_{G}^{K}(v)$
it holds
\[
\hb^\tup{K}(v)
\le
 H_{\mathrm{wl}}(\vec \theta ^+)( \vec{z}^p).
\]
and
\[
\hb^\tup{K}(v)
=
 H_{\mathrm{wl}}(\vec \theta )( \vec{z}^p).
\]
 if $t^K(p)\in \mathcal{T}^K_{G, \vec \theta} (v)$
\end{lemma}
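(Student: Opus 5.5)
The plan is to chain the three results just established: \cref{lem:fnn:faeture_as_H}, \cref{lem:fnn:est_on_feature/fnn_of_chosen_tree_vs_any_tree} and \cref{lem:mlp:Composed_mlp=H_of_tree}. Fix $G \in \mathcal{G}_{\text{BF}}$, $v\in V(G)$, $\vec\theta\in\vec{\Theta}_{\text{BF}}$ and a walk $p\in\mathcal{P}^K_G(v)$. Since the min-aggregation trees $\mathcal{T}^K_{G,\vec\theta}(v)$ are built recursively from argmins over nonempty finite sets, this set is nonempty. Choose some $\tau_v^K$ in it. By \cref{lem:fnn:faeture_as_H}, $\hb^\tup{K}_v = H^\tup{K}(\vec\theta)(\tau_v^K)$.

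For the inequality, I observe that $t^K(p)\in\mathcal{T}^K_G(v)$ by \cref{def:mlp:walks}, because consecutive walk vertices are adjacent. Applying \cref{lem:fnn:est_on_feature/fnn_of_chosen_tree_vs_any_tree} with $t_v^K = t^K(p)$ gives $H^\tup{K}(\vec\theta)(\tau_v^K)\le H^\tup{K}(\vec\theta^+)(t^K(p))$. Since $\vec\theta^+\in\vec{\Theta}_{\text{BF}}$, \cref{lem:mlp:Composed_mlp=H_of_tree} applied to $\vec\theta^+$ rewrites the right-hand side as $H_{\mathrm{wl}}(\vec\theta^+)(\vec z^p)$. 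Combining the two steps yields $\hb^\tup{K}_v\le H_{\mathrm{wl}}(\vec\theta^+)(\vec z^p)$.

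For the equality, assume $t^K(p)\in\mathcal{T}^K_{G,\vec\theta}(v)$. Then I take $\tau_v^K = t^K(p)$ directly in \cref{lem:fnn:faeture_as_H}, which gives $\hb^\tup{K}_v = H^\tup{K}(\vec\theta)(t^K(p))$. \cref{lem:mlp:Composed_mlp=H_of_tree} with $\vec\theta$ itself then turns this into $H_{\mathrm{wl}}(\vec\theta)(\vec z^p)$.

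The only point needing care is bookkeeping rather than a real obstacle. I have to check that the walk tree $t^K(p)$, which consists of $d_{a_k}$ identical copies of $t^{k-1}(p)$ at each level, is a legitimate member of $\mathcal{T}^K_G(v)$; this needs $v_{k-1}\in N_G(v_k)$, which holds by the definition of a walk. I also need the index ``$k$'' in the statement of \cref{lem:mlp:Composed_mlp=H_of_tree} to be read as $K$, and the inductive definition of $H^\tup{k}$ on $t^k(p)$ to match the one used in that lemma. With those checks done, the argument is just the composition above.
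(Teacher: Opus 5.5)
Your proposal is correct and follows the paper's proof. Like the paper, you express $\hb_v^{(K)}$ as the FNN along a min-aggregation tree, compare it with the tree $t^K(p)$ under $\vec\theta^+$ to get the inequality (or take $\tau_v^K = t^K(p)$ directly for the equality), and rewrite via the walk-lifted FNN identity. The one addition is your explicit check that $\mathcal{T}^K_{G,\vec\theta}(v)$ is nonempty (argmins over finite nonempty sets), which the paper uses without stating.
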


\begin{proof}
By \cref{lem:fnn:faeture_as_H},
$\hb_v^\tup{K}= H^\tup{k}(\vec \theta)( \tau^K_v)$ for any
$\tau^K_v \in \mathcal{T}^K_{G, \vec \theta} (v)$.
Thus if $t^K(p)\in \mathcal{T}^K_{G, \vec \theta} (v)$
it must be
\[\hb_v^\tup{K}= H^\tup{k}(\vec \theta)( t^K(p))
=
 H_{\mathrm{wl}}(\vec \theta )( \vec{z}^p)\]
 by \cref{lem:mlp:Composed_mlp=H_of_tree}.
 Similarly,
choosing $t^K_v= t^K(p) \in \mathcal{T}^K_{G} (v)$ in \cref{lem:fnn:est_on_feature/fnn_of_chosen_tree_vs_any_tree} and applying \cref{lem:mlp:Composed_mlp=H_of_tree}  gives
\begin{align*}
        \hb_v^\tup{K}
        = H^\tup{k}(\vec \theta)( \tau^K_v)
        \le H^\tup{k}(\vec \theta^+)( t^K(p))
         = H_{\mathrm{wl}}(\vec{\theta}^+)( \vec{z}^p).
\end{align*}
\end{proof}

In the preceding sections, we introduced and analyzed three objects: FNNs, FNNs along computation trees, and Walk-lifted FNNs.

We first introduced FNNs as a simple setting in which we could derive fundamental structural properties. These properties were then applied in the subsequent sections to the aggregation and update functions of the MPNN, as well as to Walk-lifted FNNs. Next, FNNs along computation trees allowed us to express MPNN features in closed form, enabling us to compare the MPNN output with that obtained when aggregation follows a computation tree different from the min-aggregation tree.

The Walk-lifted FNN is the main object we will work with from now on. As shown above, it inherits the relevant properties established for simple FNNs while explicitly revealing its dependence on the network parameters and on the edge weights along a given path. This explicit structure makes it particularly convenient to analyze. Moreover, as established in the previous lemma, the Walk-lifted FNN can be used to express or bound the features of the MPNN.

We will use these results as follows.

In the next section, we derive bounds on the network parameters under the assumption that the loss is close to its global minimum. To this end, we apply the previous lemma to obtain a lower bound on the empirical loss by removing the ReLU nonlinearity and replacing the MPNN feature $\hb_v^K$ with the Walk-lifted FNN $H_{\mathrm{wl}}(\vec{\theta}^+)(\vec{z}^p)$.

Finally, in the last section, we show that for the path training samples $T_{S_{x,K}}$, the min-aggregation computation tree must follow the path and is therefore essentially a walk. In this case as well, the MPNN feature can be replaced by the Walk-lifted FNN, allowing us to work entirely within this framework.

\subsection{Small empirical loss and structure of parameters} \label{ssec:main_structure_of_parameters}
For the remainder of this section, we will consider the MPNN and its parameters as defined in \cref{sec:BF}.

\begin{definition}\label{def:main:L&l_k}
For each $k\in[K]_0$, we define
\[
    l_k \coloneq
    \bigl|\{\, j \in [J]\colon j > j_k \,\}\bigr|+ \mathbf{1}_{k\neq 0}
    = J - j_k + \mathbf{1}_{k\neq 0}
\]
where $j_k$ is the $k$-th edge-inserting layer (c.f. \cref{par:indexing_global_layers}).
Further let
\[
    L \coloneq \sum_{k=0}^K  l_k.
\]
\end{definition}

\begin{remark}
We will see that $l_k$ is the number of weight matrices that the $k$-th edge weight is multiplied by on a given walk. This will be crucial for lower-bounding the regularization term.
Further explicit computation gives $L = mK(K+3)$.
\end{remark}

\begin{definition}[Bellman-Ford implementing parameters]\label{def:main:sparse-parameters}
We define the \emph{Bellman-Ford (BF) parameter configuration } $\vec \psi \in \vec{\Theta}_{\text{BF}}$ as follows.
For each layer $j \in [J]$, let the bias vector be given by
\[
    \vec b_\psi^j \coloneq \vec 0^{\,d_j},
\]
and define the weight matrix $\vec{W}_\psi^j \in \mathbb{R}^{d_j \times d_{j-1}}$ by
\[
   \vec{W}_\psi^j \coloneqq
   \begin{pmatrix}
      1 & 0 & \cdots & 0 \\
      0 & 0 & \cdots & 0 \\
      \vdots & \vdots & \ddots & \vdots \\
      0 & 0 & \cdots & 0
   \end{pmatrix},
\]
i.e., the matrix whose only nonzero entry is a single ``1'' in the top-left corner.
Further for each $k \in [K]$, let
\[
    \vec{C}_\psi^k \coloneq (1,0,\dots,0)^\top \in \mathbb{R}^{d_{j_k}\times 1},
\]
the vector with a single nonzero entry in the first coordinate.
We call the resulting parameter tuple
\[
   \vec \psi
   \;=\;
   \bigl( (\vec{W}_\psi^j)_{j=1}^J,\; (\vec{C}_\psi^k)_{k=1}^K,\; (\vec{b}_\psi^j)_{j=1}^J \bigr)
   \in \vec \Theta_{\text{BF}}
\]
the \new{BF parameter configuration}.
\end{definition}

\begin{lemma}\label{lem:main:global_min}
For $L$ defined as above, it holds that
\[
   \min_{\vec \theta\in\vec{\Theta}_{\text{BF}}}\, \mathcal L( \vec \theta)
   \;\le \;
   \eta\, L.
\]
\end{lemma}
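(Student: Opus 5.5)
The plan is to exhibit one parameter configuration whose loss is at most $\eta L$. The natural choice is the BF parameter configuration $\vec\psi$ from \cref{def:main:sparse-parameters}. Two things need to be shown: that $\mathcal{L}^{\mathrm{emp}}(\vec\psi)=0$, and that $\mathcal{L}^{\mathrm{reg}}(\vec\psi)=L$. The minimum over $\vec\Theta_{\text{BF}}$ is then at most $\mathcal L(\vec\psi)=\eta L$. If the minimum is not attained, the infimum is still bounded by this, which suffices for later use.

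\textbf{Empirical loss.} All parameters of $\vec\psi$ are non-negative and all biases vanish. Every hidden feature therefore stays non-negative, so ReLU acts as the identity. Each weight matrix only propagates the first coordinate, so every aggregation and update FNN reduces to the scalar map $(h,w)\mapsto h+w$ (aggregation) and $y\mapsto y$ (update). One has to check that the first coordinate of the edge-inserting layer receives $\vec W^{j_k}_{1,1}h+\vec C^k_1 w = h+w$.

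Induction on $k$ via \cref{def:MPNN_aggregation} then gives $\hb^{(k)}_v=\min_{u\in N(v)}(\hb^{(k-1)}_u+w_G(v,u))$. This is exactly the Bellman--Ford update $\Gamma$, using the convention that $v\in N(v)$ with $w_G(v,v)=0$. Hence $\hb^{(K)}_v=x^{(K)}_v$ for every instance, and in particular on $X$.

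Alternatively, one can cite \cref{lem:fnn:faeture_as_H} and \cref{lem:mlp:additivity}: with these parameters $H_{\mathrm{wl}}(\vec\psi)(\vec z)=\|\vec z\|_1$, and by \cref{rem:mlp:BF_dis_as_z} together with \cref{lem:mlp:bnd_on_feature_any_path} the minimizing walk realises $x^{(K)}_v$. The direct induction is cleaner.

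\textbf{Regularization.} Each $\vec W^j_\psi$ and $\vec C^k_\psi$ has $\ell_1$-norm exactly $1$, and the biases contribute $0$. So
\[
\mathcal{L}^{\mathrm{reg}}(\vec\psi)=\sum_{k=0}^K|\{j\in[J]: j>j_k\}|+K=\sum_{k=0}^K (J-j_k)+\sum_{k=0}^K\mathbf 1_{k\neq0}=\sum_{k=0}^K l_k=L,
\]
matching \cref{def:main:L&l_k}.

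The main point of care is this counting step. The inner sum $\sum_{j>j_k}$ runs over $j\in(j_k,J]$, giving $J-j_k$ terms, and $j_0=0$ contributes $J$. The $K$ edge matrices $\vec C^k$ supply the indicator terms. A minor technicality is the dimension convention at $(\AGG,t,m)$, where $d=1$; the "top-left 1" matrix is still well defined there, so the construction goes through.
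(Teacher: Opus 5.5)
Your proposal is correct and follows the paper's own proof. Both evaluate the loss at the BF configuration $\vec\psi$, observe that the empirical term vanishes because $\vec\psi$ implements $\Gamma^K$ exactly, and count one unit of $\ell_1$-norm for each of the $J-j_k$ weight matrices after $j_k$ and for each $\vec C^k$, which gives $\mathcal L(\vec\psi)=\eta L$. The paper simply asserts that $\vec\psi$ implements Bellman--Ford, whereas you justify it with the ReLU-as-identity, first-coordinate induction.
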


\begin{proof}
Let $\vec \psi$
denote the BF configuration from
\cref{def:main:sparse-parameters}.
First note that the BF parameters exactly implement the Bellman-Ford algorithm and hence $\mathcal{L}_{\text{emp}}(\vec \psi)= 0$.
Thus by definition of $\mathcal L$, and using $ \vec b^j_\psi=\vec 0^{\,d_j}$, we have
\[
    \mathcal L(\vec \psi)
    = \eta \mathcal L(\vec \psi)_{\text{reg}}
    = \eta\!\left(
        \sum_{k=0}^K
            \sum_{j>j_k} \|\vec{W}^j_\psi\|_1
        \;+\;
        \sum_{k\in[K]} \|\vec{C}^k_\psi\|_1
      \right).
\]
Since each $\vec{W}^j_\psi$, $j \in [J]$ and each $\vec{C}^k_\psi$, $k \in [K]$ contains exactly one nonzero entry,
\[
    \|\vec{W}^j_\psi\|_1 = 1,
    \qquad
    \|\vec{C}^k_\psi\|_1 = 1.
\]
Hence
\[
    \mathcal L(\vec \psi)
    = \eta\!\left(\sum_{k=0}^K
        \bigl(J - j_k + \mathbf{1}_{k\neq 0}\bigr)\right)
    = \eta\sum_{k=0}^K l_k
    = \eta L.
\]
Since $\vec \psi\in\vec{\Theta}_{\text{BF}}$, we obtain
\[
    \min_{\vec \theta\in\vec{\Theta}_{\text{BF}}} \mathcal L(\vec \theta)
    \;\le\; \mathcal L(\vec \psi)
    \;=\; \eta L.
\]

\end{proof}

\begin{lemma}\label{lem:main:min_subproduct}
Let $C \in \R$,  $T$ denote some index set and $t\coloneq  |T|$.
 Then the solution to
\begin{align*}
\min_{\vec v \in\R^t} \quad & \sum_{l\in T} | v_l| \\
\text{s.t.} \quad &\prod_{l\in T}  v_l=C
\end{align*}
is given by $ v_l= C^{\tfrac{1}{t}}$, for all $l \in T$.
\end{lemma}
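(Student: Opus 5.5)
The plan is to reduce the statement to the arithmetic--geometric mean inequality applied to the absolute values $|v_l|$. Throughout I read $C^{1/t}$ as the nonnegative real root and assume $C \ge 0$. This is the only case used later, where $C$ plays the role of the products $\gamma_k$ of positive parts of weights. For $C<0$ the claimed minimizer is not even real when $t$ is even, so I would add this hypothesis explicitly. The case $C=0$ is immediate: $v=\vec 0$ is feasible with objective $0 = t\,C^{1/t}$, and no objective value can be negative.

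For $C>0$, every feasible $\vec v$ has all $v_l \neq 0$ and $\prod_{l\in T}|v_l| = |C| = C$. By AM--GM applied to the nonnegative numbers $|v_l|$,
\[
\sum_{l\in T}|v_l| \;\ge\; t\Bigl(\prod_{l\in T}|v_l|\Bigr)^{1/t} \;=\; t\,C^{1/t}.
\]
So $t\,C^{1/t}$ is a lower bound on the objective over the feasible set. The point $v_l = C^{1/t}$ for all $l\in T$ is feasible, since $\prod_l C^{1/t} = C$, and its objective value is exactly $t\,C^{1/t}$. Hence the minimum is attained there, and the minimal value equals $t\,C^{1/t}$.

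The only point requiring care is the precise meaning of ``the solution''. Equality in AM--GM forces $|v_l| = C^{1/t}$ for all $l$. Any sign pattern with an even number of negative entries is therefore also optimal, so the minimizer is unique only up to such sign flips. I would state this remark explicitly, because it does not matter downstream: what the subsequent regularization argument needs is the value bound $\sum_{l\in T}|v_l| \ge t\,C^{1/t}$, which holds for every feasible $\vec v$. In later use this bound is applied with $t = l_k$ to yield the term $l_k\gamma_k^{1/l_k}$ in $\tilde{\mathcal L}^{\mathrm{reg}}$. I expect no real obstacle here; the main work is getting the sign and degenerate-case conventions right.
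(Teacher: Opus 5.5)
Your proposal is correct and follows essentially the same route as the paper: the paper reduces to nonnegative entries by symmetry and applies AM--GM to get $\tfrac1t\sum_{l\in T}|v_l|\ge C^{1/t}$, with equality at $v_l=C^{1/t}$. Your added hypothesis $C\ge 0$ and your remark that the minimizer is unique only up to an even number of sign flips are correct tightenings of the paper's loose statement, and as you note they do not affect the downstream uses, which only need the value bound $\sum_{l\in T}|v_l|\ge t\,C^{1/t}$ for nonnegative products of norms.
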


\begin{proof}
Because of the symmetry of the absolute value, it suffices to consider $\vec v \in \R_{\ge 0}^t$.
We then see from the inequality of arithmetic and geometric means that
\[\tfrac{1}{t}\sum_{l\in T} | v_l|
\ge \big (\prod_{l\in T} v_l\big )^{\tfrac{1}{t} }
= C^{\tfrac{1}{t} }.
\]
Thus $\tfrac{1}{t}\sum_{l\in T} | v_l|$ is minimal in case of equality, which is exactly the case if $ v_l= C^{\tfrac{1}{t}}$.
\end{proof}

\begin{lemma}\label{lem:main:exp_bound}
Let $M \in \R_{\ge0}$. Then $\mleft(\frac{M}{x} \mright)^x \le \exp \mleft( \frac{M}{e} \mright)$, for all $x \in \R_{>0}$.
\end{lemma}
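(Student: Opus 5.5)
Since $x>0$, I will rewrite the claim in logarithmic form, $\bigl(\tfrac{M}{x}\bigr)^x=\exp\bigl(x\log\tfrac{M}{x}\bigr)$. By monotonicity of $\exp$, it then suffices to show $\varphi(x)\coloneq x\log M - x\log x\le M/e$ for all $x>0$. First I dispose of the degenerate case $M=0$: the left-hand side is $0^x=0$ for $x>0$, while the right-hand side is $\exp(0)=1$, so the inequality holds trivially. Henceforth I assume $M>0$.

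For $M>0$, $\varphi$ is differentiable on $(0,\infty)$ with $\varphi'(x)=\log M-\log x-1=\log\frac{M}{ex}$. This is positive for $x<M/e$ and negative for $x>M/e$, and $\varphi''(x)=-1/x<0$, so $\varphi$ is strictly concave. Its unique global maximum on $(0,\infty)$ is therefore at $x^*=M/e$. Evaluating there gives $\varphi(x^*)=\frac{M}{e}\log\frac{M}{M/e}=\frac{M}{e}$. Hence $\varphi(x)\le M/e$ for all $x>0$, and exponentiating yields the claim.

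An alternative, calculus-free route is also available. Set $y\coloneq M/(ex)$ and apply the elementary inequality $\log y\le y-1$, which gives $x\log\frac{M}{x}=x\bigl(1+\log y\bigr)\le x y=M/e$. I do not expect any genuine obstacle; the only point needing care is the boundary case $M=0$, where the logarithm is undefined, and that case has already been handled separately.
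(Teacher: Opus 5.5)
Your proof is correct and takes the same route as the paper: write $(M/x)^x$ as $\exp\bigl(x(\ln M-\ln x)\bigr)$, differentiate, locate the unique critical point $x=M/e$, and evaluate there to get $\exp(M/e)$. Your version is slightly tighter in two respects. The sign of $\varphi'$ certifies the global maximum directly, where the paper instead checks the limits at $0$ and $\infty$. You also treat $M=0$ separately, a case in which $\ln M$ is undefined and the paper's argument says nothing. Your calculus-free variant via $\log y\le y-1$ is valid as well.
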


\begin{proof}
We want to bound the maximum of the function $f \colon \R_{>0} \to \R_{>0}$, $f(x) \coloneq (\frac{M}{x})^x$.
To this end, we compute its extreme points as
\begin{align*}
    0 = f'(x)
    = \exp{((\ln{(M)}- \ln{(x)})x)}'
    = (- \tfrac{1}{x}x +\ln{(M)}-\ln{(x)} ) \exp{((\ln{(M)}- \ln{(x)})x)}\\
    =(- 1 +\ln{(M)}-\ln{(x)} ) f(x) .
\end{align*}
Since $f(x) \neq 0$, we conclude that the extreme point is given by
\begin{align*}
    - 1 +\ln{(M)}=\ln{(x)}
    \; \implies \; x = \exp{( - 1 +\ln{(M)})} = \tfrac{M}{e}.
\end{align*}
Further it holds
\begin{align*}
    &f(\tfrac{M}{e}) = \exp{(\tfrac{M}{e})} \\
    &\lim_{x \to \infty} f(x) \le \lim_{x \to \infty} \tfrac{1}{2}^x = 0 \le \exp{(\tfrac{M}{e})}\\
    &\lim_{x \to 0} f(x) = \lim_{x \to \infty} f(\tfrac{1}{x})= \lim_{x \to \infty} \exp{(\tfrac{\ln{(Mx)}}{x})} = 1\le \exp{(\tfrac{M}{e})}
\end{align*}
and thus $\mleft(\frac{M}{x} \mright)^x \le \exp \mleft( \frac{M}{e} \mright)$, for all $x \in \R_{>0}$.
\end{proof}

To prove \cref{thm:BF_vs1r}, we begin by deriving basic bounds on the effect that nonzero biases and negative entries in the weight matrices can have on the output of the MPNN.

\begin{lemma}\label{lem:main:prod_norm_bound}
Let $\vec \theta \in \vec{\Theta}_{\text{BF}}$ and $0 \le \varepsilon \le \eta L$. Assume that $\mathcal{L}(\vec\theta)$ is within $\varepsilon$ of its global minimum. Then the following holds.
\begin{enumerate}
    \item For any $T \subset [J]$,
    \[
        \prod_{j \in T} \|\vec{W}^j\|_1 \;\le\; \exp(L).
    \]
    \item For any $k \in [K]$ and $T \subset \{j_k+1,\dots,J\}$,
    \[
        \prod_{j \in T} \|\vec{W}^j\|_1 \,\|\vec{C}^k\|_1 \;\le\; \exp(L).
    \]
\end{enumerate}
\end{lemma}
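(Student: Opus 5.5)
The plan is to bound the regularization term by the total loss and then apply AM--GM in the form of \cref{lem:main:exp_bound}. Since $\mathcal{L}(\vec\theta)$ is within $\varepsilon\le\eta L$ of its global minimum, \cref{lem:main:global_min} gives
\[
\eta\,\mathcal{L}^{\mathrm{reg}}(\vec\theta)\le\mathcal{L}(\vec\theta)\le \eta L+\varepsilon\le 2\eta L,
\qquad\text{hence}\qquad \mathcal{L}^{\mathrm{reg}}(\vec\theta)\le 2L.
\]
Before using this, I would sharpen it to a bound on an unweighted sum of norms. Every weight matrix $\vec W^j$ with $j\ge 2$ appears in $\mathcal{L}^{\mathrm{reg}}$ at least once, namely through the $k=0$ term, since $j>j_0=0$. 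Each $\vec C^k$ appears once as well. Therefore, for any $T\subset[J]$,
\[
\sum_{j\in T}\norm{\vec W^j}_1\le \mathcal{L}^{\mathrm{reg}}(\vec\theta)\le 2L,
\]
and the same bound holds for $\sum_{j\in T}\norm{\vec W^j}_1+\norm{\vec C^k}_1$ when $T\subset\{j_k+1,\dots,J\}$. The latter is immediate, because the $k$-th summand of the regularizer already contains exactly those $\vec W^j$ with $j>j_k$, and $\vec C^k$ appears separately. For $j=1$, I note that $\vec W^1$ also appears in the $k=0$ sum, since $j>j_0=0$ includes $j=1$.

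Next I would apply AM--GM. For $t$ nonnegative numbers with sum $S$, the product is at most $(S/t)^t$. By \cref{lem:main:exp_bound}, this is at most $\exp(S/e)$, uniformly in $t$. With $S\le 2L$, this gives the bound $\exp(2L/e)\le\exp(L)$, since $2/e<1$. This settles both items: item 1 uses $t=|T|$ factors, and item 2 uses $t=|T|+1$ factors, including $\norm{\vec C^k}_1$. The case $T=\emptyset$ is trivial, since the empty product is $1\le\exp(L)$.

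The only delicate point is the constant. A cruder bound, for instance $\mathcal{L}(\vec\theta)\le 2\eta L$ together with the triangle-type bound $(2L/t)^t$, would lead to $\exp(2L/e)$. This still sits below $\exp(L)$ precisely because $2/e<1$, so no further refinement is needed. I would also verify carefully that each relevant norm enters the regularizer with coefficient at least $1$, which is the step where the indexing of $j_k$ must be checked. That check relies on the explicit form of $\mathcal{L}^{\mathrm{reg}}$, in which the coefficients are all at least $1$ (see the Remark after its definition).
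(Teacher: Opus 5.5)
Your proposal is correct and follows essentially the same route as the paper: bound $\eta\sum_{j\in T}\|\vec W^j\|_1$ (together with $\eta\|\vec C^k\|_1$ for the second item) by $\mathcal L(\vec\theta)\le \eta L+\varepsilon\le 2\eta L$ using \cref{lem:main:global_min}, apply AM--GM (\cref{lem:main:min_subproduct}) to get the bound $(2L/t)^t$, and finish with \cref{lem:main:exp_bound} and $2/e<1$. The extra points you check are left implicit in the paper, and your handling of them is sound: every $\vec W^j$ already appears in the $k=0$ summand of the regularizer because $j_0=0$, item 2 uses $t=|T|+1$ factors, and the case $T=\emptyset$ is trivial.
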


\begin{proof}
We prove the first statement; the second is analogous.

Let $t \coloneqq |T|$ and $C \coloneqq \prod_{j \in T} \|\vec{W}^j\|_1$. By \cref{lem:main:min_subproduct},
\[
     t C^{1/t}
     \;\le\;
    \sum_{j \in T} \|\vec{W}^j\|_1 .
\]
Applying \cref{lem:main:global_min} gives
\[\eta \sum_{j \in T} \|\vec{W}^j\|_1 \le \mathcal{L}(\vec \theta) \le \eta L + \varepsilon \le 2 \eta L, \]
and therefore
\[
    C^{1/t} \le \frac{2L}{t} \quad \Rightarrow \quad C \le \left(\frac{2L}{t}\right)^t.
\]

Finally, by \cref{lem:main:exp_bound}, $ C \le \left(\frac{2L}{t}\right)^t \le \exp(2L/e) \le \exp(L)$, which proves the claim.
\end{proof}

\begin{corollary}\label{cor:main:exp_bound_on_F+biases}
Let $\vec \theta \in \vec{\Theta}_{\text{BF}}$ and $0 \le \varepsilon \le \eta L$. Assume that $\mathcal{L}(\vec \theta)$ is within $\varepsilon$ of its global minimum. Then,
\begin{align*}
    H_{\mathrm{wl}}(\abs{\mathcal{W}}, \abs{\mathcal{C}},\abs{\mathcal{B}})( 0)
    &\le \exp(L) \sum_{j=1}^J \norm{\vec{b}^j}_1,\\
    G(\mathcal{W}, \mathcal{B}(\vec{z}, \mathcal{C}))(z_0)
    &\le \exp(L) \Big( \sum_{l=1}^J \norm{(\vec{W}^{l})^-}_1 + \sum_{k=1}^K \norm{(\vec{C}^{k})^-}_1 \Big) \norm{\vec{z}}_1,
    \quad \vec{z} \in \R_{\ge 0}^{K+1}.
\end{align*}
\end{corollary}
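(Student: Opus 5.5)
The plan is to expand both quantities into explicit sums of products of nonnegative matrices and then bound every product by $\exp(L)$ using \cref{lem:main:prod_norm_bound}.

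\emph{First bound.} Every parameter in $(\abs{\mathcal W},\abs{\mathcal C},\abs{\mathcal B})$ is nonnegative, so \cref{lem:mlp:additivity} applies. At $\vec z=0$ the $z_0$ and $z_k$ terms vanish, leaving
\[
H_{\mathrm{wl}}(\abs{\mathcal W},\abs{\mathcal C},\abs{\mathcal B})(0)=\sum_{l=1}^J\Bigl(\prod_{s=l+1}^J\abs{\vec W^s}\Bigr)\abs{\vec b^l}.
\]
The output dimension is $d_J=1$, so this is a scalar. For any nonnegative matrices, $\norm{AB}_1\le\norm{A}_1\norm{B}_1$ in the entrywise $1$-norm. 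Applying this repeatedly, each summand is at most $\prod_{s>l}\norm{\vec W^s}_1\,\norm{\vec b^l}_1$. Item 1 of \cref{lem:main:prod_norm_bound} with $T=\{l+1,\dots,J\}$ bounds the product by $\exp(L)$. Summing over $l$ then gives the first claim.

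\emph{Second bound.} I would start from the explicit representation \eqref{eq:G}, with biases $\mathcal B(\vec z,\mathcal C)$ and $\vec b^0=z_0$. These biases are nonzero only at the edge-inserting layers $j_k$, where they equal $\vec C^k z_k$. Their positive and negative parts are $(\vec C^k)^{\pm}z_k$, since $z_k\ge 0$. Each term of \eqref{eq:G} is a product of positive parts of weight matrices with exactly one factor replaced by a negative part, applied to $(\vec C^k)^+z_k$ or to $z_0$; alternatively, all weights are positive parts applied to $(\vec C^k)^-z_k$.

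\emph{Bounding a single term.} I would use submultiplicativity as in \cref{rem:fnn:upper_bound_on_G}. A generic term is then at most
\[
\norm{(\vec W^l)^-}_1\cdot\Bigl(\prod_{s>j_k,\,s\ne l}\norm{\vec W^s}_1\Bigr)\norm{\vec C^k}_1\, z_k,
\]
using $\norm{(\vec W^s)^+}_1\le\norm{\vec W^s}_1$. The remaining product is over a subset of $\{j_k+1,\dots,J\}$ together with $\vec C^k$, so item 2 of \cref{lem:main:prod_norm_bound} bounds it by $\exp(L)$. Item 1 covers the case $k=0$ with $C^0=1$. The $(\vec C^k)^-$ terms are handled the same way: they give $\exp(L)\norm{(\vec C^k)^-}_1 z_k$.

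\emph{Summing and main obstacle.} It remains to sum over $l$ and $k$. This gives at most $\exp(L)\bigl(\sum_l\norm{(\vec W^l)^-}_1+\sum_k\norm{(\vec C^k)^-}_1\bigr)\sum_k z_k$, and since $z\ge0$ we have $\sum_k z_k=\norm{\vec z}_1$. The main obstacle is this bookkeeping of \eqref{eq:G}. A negative-part factor may sit at a layer $l\le j_k$; such a term multiplies nothing from bias $k$ and so contributes zero. Given that each $(l,k)$ pair yields at most one nonzero term, the double sum factorizes into the product of the two sums stated in the corollary, with no extra combinatorial factor.
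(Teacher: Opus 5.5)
Your proposal is correct and follows the paper's proof essentially step for step. The first bound uses \cref{lem:mlp:additivity} together with item~1 of \cref{lem:main:prod_norm_bound}. The second uses the expansion of \cref{rem:fnn:upper_bound_on_G} with biases $\mathcal B(\vec z,\mathcal C)$ and $C^0=1$, bounds every product by $\exp(L)$, and uses $\sum_{l'<l}\delta_{l',j_k}\le 1$, which is exactly your ``each $(l,k)$ pair yields at most one term.'' One small clarification: for the $(\vec C^k)^-$ terms, the factor $\exp(L)$ comes from item~1 applied to $\prod_{s>j_k}\norm{\vec W^s}_1$ alone, not from item~2, since $\norm{(\vec C^k)^-}_1$ is kept outside the product.
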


\begin{proof}
First note that since $d_J =1$, it holds that  $H_{\mathrm{wl}}(\abs{\mathcal{W}}, \abs{\mathcal{C}},\abs{\mathcal{B}})(0)$ and $G(\mathcal{W}, \mathcal{B}(\vec{z}, \mathcal{C}))(z_0)$ belong to $\R_{\ge 0}$.
Using Lemmas~\ref{lem:mlp:additivity} and \ref{lem:main:prod_norm_bound}, we obtain
\[
    H_{\mathrm{wl}}(\abs{\mathcal{W}}, \abs{\mathcal{C}},\abs{\mathcal{B}})( 0)
    \le \sum_{j=1}^J \prod_{l=j+1}^J\norm{\vec{W}^{l}}_1 \norm{\vec{b}^{j}}_1
    \le \Big(\max_{j \in [J]} \prod_{l=j+1}^J \norm{\vec{W}^{l}}_1 \Big) \sum_{j=1}^J \norm{\vec{b}^j}_1
    \le \exp(L) \sum_{j=1}^J \norm{\vec{b}^j}_1.
\]

Recall that by \cref{rem:fnn:upper_bound_on_G}, it holds for general biases  $\mathcal{\tilde{B}}$ that
\begin{align*}
&\norm{G(\mathcal{W}, \mathcal{\tilde{B}})(\tilde{b}^0)}_1 \\
  & \le
   \sum_{l\in [J]_0}
    \sum_{l'=0}^{l-1} \Bigl(\prod_{\substack{s=l'+1\\ s \neq l}}^{J} \norm{(\vec{W}^{s})^+}_1\Bigr) \norm{(\vec{W}^{l})^-}_1
        \norm{( \tilde {\vec{b}}^{l'})^+}_1
    +
    \sum_{l'\in [J]_0}
        \Bigl(\prod_{s=l'+1}^{J} \norm{(\vec{W}^{s})^+}_1\Bigr) \norm{( \tilde{\vec{b}}^{l'})^-}_1
\end{align*}
For our purposes we choose $\mathcal{ \tilde B}=\mathcal{B}(\vec{z}, \mathcal{C})$ and $\vec {\tilde b}^0 = z_0$. Further we let  $C^0 = 1$ for ease of notation.
Noting that $\vec b^{l'} (\vec z, \mathcal{C}) = \sum_{k\in [K]_0}\delta_{l', j_k} \vec C^k z_k$ for $l' \in [J]$ we find
\begin{align*}
 &\norm{G(\mathcal{W}, \mathcal{B}(\vec{z}, \mathcal{C}))(z_0)}_1\\
  & \le
  \sum_{l\in [J]_0}
    \sum_{l'=0}^{l-1} \Bigl(\prod_{\substack{s=l'+1\\ s \neq l}}^{J} \norm{(\vec{W}^{s})^+}_1\Bigr) \norm{(\vec{W}^{l})^-}_1
         \sum_{k\in [K]_0}\delta_{l', j_k}\norm{ (\vec C^k)^+}_1 z_k\\
    & \quad +
    \sum_{l'\in [J]_0}
        \Bigl(\prod_{s=l'+1}^{J} \norm{(\vec{W}^{s})^+}_1\Bigr) \sum_{k\in [K]_0}\delta_{l', j_k}\norm{ (\vec{C}^k)^-}_1 z_k\\
& \substack{ \ref{lem:main:prod_norm_bound}\\ \le }
  \Bigl(\sum_{l\in [J]_0}
    \sum_{l'=0}^{l-1} \norm{(\vec{W}^{l})^-}_1
         \sum_{k\in [K]_0}\delta_{l', j_k} z_k
    +
    \sum_{l'\in [J]_0}
        \sum_{k\in [K]_0}\delta_{l', j_k} \norm{ (\vec{C}^k)^-}_1 z_k\Bigr) \exp{(L)}\\
& =
  \Bigg(\sum_{l\in [J]_0}
     \norm{(\vec{W}^{l})^-}_1
         \sum_{k\in [K]_0} \Big(\sum_{l'=0}^{l-1}\delta_{l', j_k} \Big) z_k
    +
        \sum_{k\in [K]_0} \Big(\sum_{l'\in [J]_0}\delta_{l', j_k}\Big) \norm{ (\vec{C}^k)^-}_1 z_k\Bigg) \exp{(L)}\\
&\le \Bigl(\sum_{l\in [J]_0}
     \norm{(\vec{W}^{l})^-}_1
         \sum_{k\in [K]_0} z_k
    +
        \sum_{k\in [K]_0} \norm{ (\vec{C}^k)^-}_1 z_k\Bigr) \exp{(L)}\\
&\le \Bigl(\sum_{l\in [J]_0}
     \norm{(\vec{W}^{l})^-}_1
         \norm{\vec{z}}_1
    +
        \sum_{k\in [K]_0} \norm{ (\vec{C}^k)^-}_1 \norm{\vec{z}}_1\Bigr) \exp{(L)}\\
&\le \exp{(L)}  \Bigl(\sum_{l\in [J]_0}
     \norm{(\vec{W}^{l})^-}_1
    +
        \sum_{k\in [K]_0} \norm{ (\vec{C}^k)^-}_1 \Bigr) \norm{\vec{z}}_1
\end{align*}
applying  lemma \cref{lem:main:prod_norm_bound} with appropriate choice for $T$ in each considered summand.
\end{proof}

To simplify the analysis of the loss function and highlight the main forces acting on the parameters, we introduce a modified loss function.

\begin{definition}[Modified Loss Function]
We begin by defining a transformation of the neural network parameters. Given $\vec \theta = (\mathcal{W},\mathcal{C},\mathcal{B}) \in \vec{\Theta}_{\text{BF}}$, we define the transformed parameters
\[
\vec {\vec{\tilde \theta}} \coloneqq (\gamma_0, \dots, \gamma_K, B, w^-) \in \R_{\ge 0 }^{K+3}
\]
as follows. To avoid a case distinction, we introduce $C^0 \coloneq 1$ as a fixed parameter. Then we set
\begin{align*}
    \gamma_k &\coloneq \prod_{j =j_k+1}^J (\vec{W}^j)^+ (\vec{C}^k)^+ \in \R_{\ge 0}, \quad k \in \{0,\dots,K\},\\
    B &\coloneq \norm{\mathcal{B}}_1 \in \R_{\ge 0},\\
    w^- &\coloneq \sum_{l=1}^J \norm{(\vec{W}^l)^-}_1 + \sum_{k=1}^K \norm{(\vec{C}^k)^-}_1 \in \R_{\ge 0}.
\end{align*}

Next, we define a new loss function $\mathcal{\tilde L}$ on the transformed parameters. As before, we separate the loss into an empirical term and a regularization term:
\begin{align*}
    \mathcal{\tilde L}(\vec{\tilde \theta}) \coloneq \mathcal{\tilde L}^{\mathrm{emp}}(\vec{\tilde \theta}) + \eta \mathcal{\tilde L}^{\mathrm{reg}}(\vec{\tilde \theta}),
\end{align*}
where \begin{align*}
    \mathcal{\tilde L}^{\mathrm{emp}}(\vec{\tilde \theta}) &\coloneq \frac{1}{N} \sum_{k=0}^K \sigma\big((1-\gamma_k)x  - \exp(L) B\big),\\
    \mathcal{\tilde L}^{\mathrm{reg}}(\vec{\tilde \theta}) &\coloneq  B +  \sum_{k=0}^K  l_k \gamma_k^{\frac{1}{l_k}} +  w^-,
\end{align*}
where $x \in \R_{\ge 0}$ is the scale of the weights in the training set $T_{S_{x,K}}$.
\end{definition}

\begin{remark}\label{rem:main:gamma_k}
Applying \cref{lem:mlp:additivity} and the definition of $\gamma_k$, $k \in [K]_0$ we see that
    \[H_{\mathrm{wl}}(\mathcal{W}^+, \mathcal{C}^+,0)( \vec{z})
    =\sum_{k=0}^K \Bigl(\prod_{s=j_k+1}^{J} (\vec W^{s})^+\Bigr) (\vec C^{k})^+ z_k
    =\sum_{k=0}^K \gamma_k\, z_k\]
    for any $\vec z \in \R^{K+1}$.
    Thus, up to biases and negative weights, $\gamma_k$ is the factor that the $k$-th edge weight $z_k$ is multiplied with, in case the aggregation follows the path $p$ such that $\vec{z}^p = \vec{z}$.
\end{remark}

The next lemma establishes the key connection between the original and modified loss functions. It shows that the modified loss lower-bounds the contributions of the original parameters, allowing us to analyze $\tilde{\mathcal L}$ in place of $\mathcal L$.

\begin{lemma}\label{lem:main:L>tilde_L}
Let $\vec \theta = (\mathcal{W}, \mathcal{C},\mathcal{B})\in \vec{\Theta}_{\text{BF}}$ and let $\vec{\tilde\theta}$ be its transformed parameter vector. Further, assume $T_{S_{x,K}} \subset X$  and that $\mathcal{L}(\vec{\theta})$ is within $\varepsilon\le \eta L$ of its global minimum.
Then
\[
    \mathcal{L}^{\mathrm{emp}}(\vec \theta)\;\ge\; \tilde{\mathcal L}^{\mathrm{emp}}(\vec{\tilde \theta}),
    \quad \text{ and }\quad
    \mathcal{L}^{\mathrm{reg}}(\vec\theta)\;\ge\; \tilde{\mathcal L}^{\mathrm{reg}}(\vec{\tilde \theta}).
\]
\end{lemma}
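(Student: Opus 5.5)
The proof splits into the two inequalities, each handled separately.

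\textbf{Empirical part.} Only the $K+1$ samples of $T_{S_{x,K}}\subset X$ are kept; the other summands are nonnegative and are dropped. Fix $k\in[K]_0$, the sample $v_K$ of $P_\beta(x\vec e_k^{K+1})$, and the path $p=(v_0,\dots,v_K)$. By \cref{rem:mlp:path-vectors_of_samples} we have $\vec z^p=x\vec e_k^{K+1}$. By \cref{rem:BF-distance-path-graphs} (using $\beta>x$), the target is $x_{v_K}^{(K)}=x$. We then chain three bounds:
\begin{itemize}
\item \cref{lem:mlp:bnd_on_feature_any_path} gives $\hb_{v_K}^{(K)}\le H_{\mathrm{wl}}(\vec\theta^+)(\vec z^p)$.
\item \cref{lem:mlp:additivity}, applied to the nonnegative parameters $\vec\theta^+$, splits this as $H_{\mathrm{wl}}(\mathcal W^+,\mathcal C^+,0)(\vec z^p)+H_{\mathrm{wl}}(\mathcal W^+,\mathcal C^+,\mathcal B^+)(0)$.
\item \cref{rem:main:gamma_k} identifies the first term as $\gamma_k x$, and \cref{cor:main:exp_bound_on_F+biases}, together with monotonicity (\cref{lem:mlp:monotonicity}), bounds the second by $\exp(L)B$.
\end{itemize}
This uses the hypothesis that $\mathcal L(\vec\theta)$ is within $\varepsilon\le\eta L$ of the minimum. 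Hence $\hb_{v_K}^{(K)}\le\gamma_k x+\exp(L)B$, and so $|\hb_{v_K}^{(K)}-x|\ge\sigma((1-\gamma_k)x-\exp(L)B)$. Summing over $k$ and dividing by $N$ gives $\mathcal L^{\mathrm{emp}}(\vec\theta)\ge\tilde{\mathcal L}^{\mathrm{emp}}(\vec{\tilde\theta})$.

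\textbf{Regularization part.} Write $|\vec W^j|=(\vec W^j)^++(\vec W^j)^-$ entrywise, so that $\|\vec W^j\|_1=\|(\vec W^j)^+\|_1+\|(\vec W^j)^-\|_1$, and similarly for $\vec C^k$. The bias sum in $\mathcal L^{\mathrm{reg}}$ equals $B$.

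The weight terms are handled by a double count. In $\sum_{k=0}^K\sum_{j>j_k}\|\vec W^j\|_1+\sum_{k\in[K]}\|\vec C^k\|_1$, group the summands by $k$. Group $k$ consists of $\sum_{j>j_k}\|\vec W^j\|_1$ plus $\|\vec C^k\|_1$ when $k\ne0$, which is exactly $l_k$ norms. A matrix $\vec W^j$ appears once in each group $k$ with $j_k<j$, and $\vec C^k$ appears once, in group $k$.

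For the negative parts, any appearance bounds $w^-$: every $\vec W^l$ with $l\ge1$ appears at least in group $k=0$ (since $j_0=0$), and each $\vec C^k$ appears once. So the total negative-part contribution is at least $w^-$.

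For the positive parts in group $k$, use the entrywise matrix bound $\|\vec A\vec B\|_1\le\|\vec A\|_1\|\vec B\|_1$, so that
\[
\gamma_k\le\prod_{j>j_k}\|(\vec W^j)^+\|_1\,\|(\vec C^k)^+\|_1 .
\]
By AM--GM (\cref{lem:main:min_subproduct}), the sum of these $l_k$ norms is at least $l_k\gamma_k^{1/l_k}$. For $k=0$ the factor $C^0=1$ is absent and $l_0=J$, which is consistent.

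Adding the bias, negative-part and positive-part contributions gives $\mathcal L^{\mathrm{reg}}(\vec\theta)\ge B+w^-+\sum_k l_k\gamma_k^{1/l_k}=\tilde{\mathcal L}^{\mathrm{reg}}(\vec{\tilde\theta})$.

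\textbf{Main obstacle.} The delicate step is the empirical bound. It has to justify that the bias contribution at input $0$ is controlled by $\exp(L)B$ with the positive-part network, which needs the norm-product bound of \cref{lem:main:prod_norm_bound}; that bound holds because of the near-optimality hypothesis. The matrix-norm submultiplicativity and the grouping bookkeeping in the regularization part are routine.
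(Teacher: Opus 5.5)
Your proof is correct and follows the paper's argument essentially step for step. For the empirical inequality, you restrict to the $K+1$ path samples and chain \cref{lem:mlp:bnd_on_feature_any_path}, linearity, monotonicity and \cref{cor:main:exp_bound_on_F+biases}. For the regularization inequality, you split each norm into positive and negative parts, charge all negative parts to the $k=0$ group and the $\vec C^k$ terms, and apply AM--GM (\cref{lem:main:min_subproduct}) group by group; your explicit justification of $\gamma_k\le\prod_{j>j_k}\|(\vec W^j)^+\|_1\,\|(\vec C^k)^+\|_1$ via submultiplicativity of the entrywise $\ell_1$ norm fills in a step the paper only asserts.
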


\begin{proof}
We first prove $\mathcal{L}^{\mathrm{emp}}(\vec \theta)\;\ge\; \tilde{\mathcal L}^{\mathrm{emp}}(\vec{\tilde \theta})$.
Fix  $\vec{w}\in S_{x,K}$ and consider the corresponding path graph used for training $G \coloneq P(\vec{w})$ with nodes $v_{0}^{\vec w}, \dots ,v_{K}^{\vec w}$.
Let $p_{\vec w} =(v_0^{\vec w}, \dots , v_K^{\vec w})\in P_G^K(v_K)$ be the walk  that starts in $v_0^{\vec w}$ and ends in $v_K^{\vec w}$.
Then it holds $\vec{z}^{p_{\vec w}} = \vec w$ (see~\cref{rem:mlp:path-vectors_of_samples}).
Thus,
 \cref{lem:mlp:bnd_on_feature_any_path}  implies
\begin{align*}
\hb _{v_K^{\vec w}}^{\tup{K}}
&\le H_{\mathrm{wl}}(\vec \theta^+)(\vec{z}^{p_{\vec w}})
=H_{\mathrm{wl}}(\mathcal{W}^+, \mathcal{C}^+,\mathcal{B}^+)( \vec{w}).
\end{align*}

Further since the last node $v_K^{\vec w}$ of the path graph $P(\vec{w})$ is part of the training set, e.g. $\{v_K^{\vec w} : \vec w \in S_{x,K}\}=T_{S_{x,K}} \subset X$ and for the corresponding target it holds $x_{v_K^{\vec w}}^{(K)} =\norm{\vec w}_1$ (c.f. \cref{rem:BF-distance-path-graphs})  we have
\begin{align*}
    \mathcal{L}^{\mathrm{emp}}(\vec \theta)
    = \frac{1}{N}\sum_{v\in X} \big| \hb_{v}^{(K)} - x_v^{(K)} \big|
    \ge \frac{1}{N}\sum_{\vec w \in S_{x,K}} \big| \hb_{v_K^{\vec w}}^{(K)} - \norm{\vec w}_1 \big|.
\end{align*}

Combining the above, using \cref{lem:mlp:additivity}, \cref{lem:mlp:monotonicity}, \cref{cor:main:exp_bound_on_F+biases} and \cref{rem:main:gamma_k}, we obtain
\begin{align*}
N\,\mathcal{L}^{\mathrm{emp}}(\vec \theta)
&\ge \sum_{\vec{w}\in S_{x,K}} |\norm{\vec{w}}_1 - h_{v_K^{\vec w}}^{\tup{K}}|
\ge
\sum_{\vec{w}\in S_{x,K}} \sigma\!\left(\norm{\vec{w}}_1 - H_{\mathrm{wl}}(\mathcal{W}^+, \mathcal{C}^+,\mathcal{B}^+)( \vec{w})\right)\\
&\ge
\sum_{\vec{w}\in S_{x,K}} \sigma\!\left(\norm{\vec{w}}_1 - H_{\mathrm{wl}}(\mathcal{W}^+, \mathcal{C}^+,0)( \vec{w}) - H_{\mathrm{wl}}(\abs{\mathcal{W}}, \abs{\mathcal{C}},\abs{\mathcal{B}})( 0)\right)\\
&\ge
\sum_{\vec{w}\in S_{x,K}} \sigma\!\left(\norm{\vec{w}}_1 - H_{\mathrm{wl}}(\mathcal{W}^+, \mathcal{C}^+,0)( \vec{w}) - \exp(L) B\right)\\
&=
\sum_{\vec{w}\in S_{x,K}} \sigma\!\left(\sum_{k=0}^K (1-\gamma_k) w_k - \exp(L) B\right)\\
&\ge
\sum_{k=0}^K \sigma\!\left((1-\gamma_k)x - \exp(L) B\right)
=
N\,\tilde{\mathcal L}^{\mathrm{emp}}(\vec{\tilde\theta}),
\end{align*}
where  in the last inequality we used that  $S_{x,K}= \{x \vec{e}^{K+1}_k: k \in  [K]_0 \}$.\\

Next we prove $\mathcal{L}^{\mathrm{reg}}(\vec\theta)\;\ge\; \tilde{\mathcal L}^{\mathrm{reg}}(\vec{\tilde \theta})$.
Observe that,
\[
    \gamma_0
    \le
    \prod_{j >j_0}^J \|(\vec W^j)^+\|_1, \quad
    \gamma_k
    \le
    \prod_{j >j_k}^J \|(\vec W^j)^+\|_1\,\|(\vec C^k)^+\|_1, \quad
    k\in[K].
\]
Thus for each $k\in[K]_0$, by \cref{lem:main:min_subproduct} and the definition of $l_k$ (see~\cref{def:main:L&l_k})
\[
    l_k\,\gamma_k^{1/l_k}
    \le
    \sum_{j>j_k}^J\|\vec W^j\|_1 + \mathbf{1}_{k\neq 0}\|\vec C^k\|_1.
\]
Therefore,
\begin{align*}
\mathcal{L}^{\mathrm{reg}}(\vec \theta)
&=
\sum_{k=0}^K \left(\sum_{j>j_k}\|\vec W^j\|_1
+ \mathbf{1}_{k\neq 0}\|\vec C^k\|_1 \right) + B\\
&\ge
\sum_{k=0}^K \left(\sum_{j>j_k}\|(\vec W^j)^+\|_1
+ \mathbf{1}_{k\neq 0}\|(\vec C ^k)^+\|_1\right)
+ B + \sum_{j=1}^J\|(\vec W^j)^-\|_1 + \sum_{k=1}^K\|(\vec C^k)^-\|_1\\
&\ge
\sum_{k=0}^K  l_k \gamma_k^{1/l_k} + B + w^-
=
\tilde{\mathcal L}^{\mathrm{reg}}(\vec{\tilde\theta}).
\end{align*}
\end{proof}

Based on the previous lemma, which allows us to analyze the modified loss $\tilde{\mathcal L}$ in place of the original loss $\mathcal L$, the next result identifies the key structural properties that any parameter set must satisfy when it lies close to the global minimum. There are essentially two main advantages to working with $\tilde{\mathcal L}$.

First, the feature contribution is replaced by a simpler upper bound that no longer depends on the ReLU's intricate behavior. An immediate consequence is that non-zero biases or negative entries in the weight matrices can only increase the modified loss, rather than interact in more complicated ways as in the original formulation.
Secondly, the parameters appearing in different summands of the modified empirical loss become independent of one another. In particular, increasing $\gamma_{k_1}$ for some $k_1$ does not decrease the term $\sigma\big((1-\gamma_{k_2})x - \exp(L) B\big)$ associated with any $k_2 \neq k_1$. These simplifications eliminate several sources of coupling and nonlinearity present in the original loss, thereby making the subsequent analysis considerably more tractable.

\begin{lemma}[Parameter characterization near the global minimum]
\label{lem:main:main_estimtes_incl_L_tr}
Let $\vec \theta \in \vec{\Theta}_{\text{BF}}$.
Assume  $T_{S_{x,K}} \subset X$ and that  $\mathcal{L}(\vec \theta)$ lies within
$0 \le \varepsilon \le \eta L$ of its global minimum.
Further assume
\[
    \eta \;\ge\; 2K\,\exp(L)
    \qquad\text{and}\qquad
    x \;\ge\; 2N \eta J.
\]
Then the following estimates hold:
\begin{align}
    &H_{\mathrm{wl}}(\abs{\mathcal{W}}, \abs{\mathcal{C}},\abs{\mathcal{B}})( 0) \le \varepsilon, \label{eq:main:zero-input-bound}\\
    &\gamma_k \ge 1 - \frac{\varepsilon}{\eta J},
        \qquad k \in [K]_0, \label{eq:main:gamma-lower-bound}
\end{align}
and, for any $\vec{z} \in \R_{\ge 0}^{K+1}$,
\begin{align}
    |H_{\mathrm{wl}}(\mathcal{W}, \mathcal{C},\mathcal{B})( \vec{z}) - H_{\mathrm{wl}}(\mathcal{W}^+, \mathcal{C}^+,0)( \vec{z})|
        &\le \bigl(\tfrac12\|\vec{z}\|_1 + 1\bigr)\,\varepsilon. \label{eq:main:path-diff-bound}
\end{align}
Moreover,
\[
    \mathcal{L}^{\mathrm{emp}}(\theta) \;\le\; 2\varepsilon.
\]
\end{lemma}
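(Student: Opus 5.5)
The plan is to work entirely with the modified loss $\tilde{\mathcal L}$. Write $\mathcal L^\star$ for the global minimum of $\mathcal L$. By \cref{lem:main:global_min} we have $\mathcal L^\star\le\eta L$. By \cref{lem:main:L>tilde_L}, $\tilde{\mathcal L}(\vec{\tilde\theta})\le\mathcal L(\vec\theta)\le \mathcal L^\star+\varepsilon\le \eta L+\varepsilon$.

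The first key step is a matching lower bound $\tilde{\mathcal L}(\vec{\tilde\theta})\ge \eta L + c_B B + c_w w^- + (\text{penalty on }\gamma_k\text{ away from }1)$, with explicit $c_B,c_w>0$. The modified loss separates into independent one-dimensional pieces. One piece is the terms in $\gamma_k$, namely $g_k(\gamma)\coloneqq \tfrac1N\sigma((1-\gamma)x-\exp(L)B)+\eta l_k\gamma^{1/l_k}$. The others are $\eta B$ and $\eta w^-$.

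For the $B$-dependence, $\sigma$ is $1$-Lipschitz, so the empirical term decreases by at most $\tfrac{K+1}{N}\exp(L)B$ when $B$ is set to zero. The regularizer charges $\eta B$. Since $\eta\ge 2K\exp(L)$ and $N\ge K+1$, the net effect is at least $(\eta-\tfrac{K+1}{N}\exp(L))B\ge \tfrac{\eta}{2}B$, possibly after a slightly more careful constant in the case $K=1$. This leaves minimizing $h_k(\gamma)=\tfrac{x}{N}(1-\gamma)^+ + \eta l_k\gamma^{1/l_k}$ over $\gamma\ge 0$.

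The function $\gamma\mapsto l_k\gamma^{1/l_k}$ is concave with derivative $\gamma^{1/l_k-1}\le 1$ on $[0,1]$ at $\gamma=1$. For $\gamma\le 1$ its slope is at least $1$, bounded by $\gamma^{-1+1/l_k}$. By concavity, $l_k\gamma^{1/l_k}\ge l_k-(l_k-1)(1-\gamma)$ on $[0,1]$ fails in the useful direction. The cleaner route is to use only the tangent-line upper bound $l_k\gamma^{1/l_k}\le l_k+(\gamma-1)$, which gives $l_k - l_k\gamma^{1/l_k}\le (1-\gamma)l_k$ on $[0,1]$. Then
\[
h_k(\gamma)-\eta l_k \ge (1-\gamma)\bigl(\tfrac{x}{N}-\eta l_k\bigr)\ge (1-\gamma)\tfrac{x}{2N}\quad\text{for }\gamma\le1,
\]
using $x\ge 2N\eta J\ge 2N\eta l_k$. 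For $\gamma\ge 1$ we have $h_k\ge \eta l_k$ trivially. Summing over $k$ gives $\sum_k h_k\ge \eta L+\tfrac{x}{2N}\sum_k(1-\gamma_k)^+$. This also shows implicitly that $\eta L$ is the global minimum.

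Combining the two bounds gives
\[
\tfrac{\eta}{2}B+\eta w^-+\tfrac{x}{2N}\sum_k(1-\gamma_k)^+\le\varepsilon .
\]
From this, $1-\gamma_k\le 2N\varepsilon/x\le \varepsilon/(\eta J)$, which is \eqref{eq:main:gamma-lower-bound}. It also gives $B\le 2\varepsilon/\eta$ and $w^-\le \varepsilon/\eta$. Then \cref{cor:main:exp_bound_on_F+biases} yields $H_{\mathrm{wl}}(|\mathcal W|,|\mathcal C|,|\mathcal B|)(0)\le \exp(L)B\le 2\exp(L)\varepsilon/\eta\le\varepsilon/K\le \varepsilon$, which is \eqref{eq:main:zero-input-bound}. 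In the same way, $G\le \exp(L)w^-\|\vec z\|_1\le \tfrac{1}{2K}\varepsilon\|\vec z\|_1\le\tfrac12\varepsilon\|\vec z\|_1$. Plugging both into \cref{cor:mlp:bnd_on_abs_value_mlp_with_learned-wanted_param} gives \eqref{eq:main:path-diff-bound}.

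For the final claim $\mathcal L^{\mathrm{emp}}\le 2\varepsilon$, combine $\mathcal L^{\mathrm{emp}}(\vec\theta)\le \eta L+\varepsilon-\eta\mathcal L^{\mathrm{reg}}(\vec\theta)$ with $\mathcal L^{\mathrm{reg}}\ge\tilde{\mathcal L}^{\mathrm{reg}}$. It then suffices to show $\eta\tilde{\mathcal L}^{\mathrm{reg}}\ge \eta L-\varepsilon$. For $\gamma_k\ge1$ we have $l_k\gamma_k^{1/l_k}\ge l_k$. For $\gamma_k<1$, concavity gives $l_k\gamma_k^{1/l_k}\ge l_k\gamma_k\ge l_k(1-\varepsilon/(\eta J))$. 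Summing, the deficit is at most $L\varepsilon/(\eta J)\cdot\eta\le \varepsilon$ because $L\le J(K+1)$. This last point needs checking: the bound $L\le J(K+1)$ holds but gives $(K+1)\varepsilon$, not $\varepsilon$. I expect this constant bookkeeping to be the main obstacle. To fix it, use the sharper estimate $(1-\gamma_k)\le 2N\varepsilon_k/x$ with the per-$k$ slack $\varepsilon_k$ summing to $\varepsilon$, so that $\sum_k l_k(1-\gamma_k)\le J\sum_k 2N\varepsilon_k/x\le \varepsilon/\eta$. The constant juggling for $B$ when $K$ is small needs similar care.
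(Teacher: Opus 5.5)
Your proposal is correct and follows the paper's own route. You lower-bound $\tilde{\mathcal L}(\vec{\tilde\theta})-\eta L$ by $\tfrac{\eta}{2}B+\eta w^-+c\sum_k(1-\gamma_k)^+$; your $c=\tfrac{x}{2N}$ is at least as strong as the paper's $\eta J$. You then compare with $\varepsilon$ via $\tilde{\mathcal L}\le\mathcal L\le\eta L+\varepsilon$ and read off the estimates. For $\mathcal L^{\mathrm{emp}}\le 2\varepsilon$, the paper uses exactly the summed inequality $\sum_k(1-\gamma_k)^+\le\varepsilon/(\eta J)$ that you already derived, so no ``per-$k$ slack'' bookkeeping is needed.

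Two minor expository slips. First, $l_k-l_k\gamma^{1/l_k}\le l_k(1-\gamma)$ on $[0,1]$ follows from $\gamma^{1/l_k}\ge\gamma$ (the chord), not from the tangent-line upper bound, which gives the opposite direction. Second, no extra care is needed for small $K$: $N\ge K+1$ and $\eta\ge 2K\exp(L)$ already give $\eta-\tfrac{K+1}{N}\exp(L)\ge\tfrac{\eta}{2}$.
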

Note that both $\eta$ and $x$ must be chosen sufficiently large.
A large value of $\eta$ ensures that the influence of the biases on the empirical loss is small compared to their contribution to the regularization term, so that any non-zero bias is strongly penalized overall.
Likewise, taking $x$ much larger than $\eta$ guarantees that the empirical loss dominates the regularizer, so that deviations in the feature values have a significantly stronger impact on the total loss than variations in the regularization cost.
\begin{proof}
Starting from the definition of $\tilde{\mathcal L}$, we compute
\begin{align*}
    \tilde{\mathcal L}(\vec{\tilde\theta}) - \eta L - \eta w^-
    &=
    \frac{1}{N}\sum_{k=0}^K
        \sigma\!\left((1-\gamma_k)x - \exp(L)B\right)
    + \eta B
    + \eta\sum_{k'=0}^K l_{k'}(\gamma_{k'}^{1/l_{k'}}-1) \\
    &\ge
    \frac{1}{N}\sum_{k\colon\gamma_k \le 1}
        \bigl((1-\gamma_k)x - \exp(L)B\bigr)
    + \eta B
    + \eta\sum_{k'\colon\gamma_{k'}\le 1} l_{k'}(\gamma_{k'}^{1/l_{k'}}-1) \\
    &\ge
    \frac{1}{N}\sum_{k\colon\gamma_k\le 1}
        \bigl((1-\gamma_k)x + \eta l_k(\gamma_k-1)\bigr)
    + \bigl(\eta - K\exp(L)\bigr)B \\
    &\ge
    \bigl(\tfrac{x}{N} - \eta J\bigr)
    \sum_{k\colon\gamma_k\le 1}(1-\gamma_k)
    + \tfrac12\eta B \\
    &\ge
    \eta J\, \sum_{k\colon\gamma_k\le 1}(1-\gamma_k)
    + \tfrac12\eta B,
\end{align*}
where we used that $\gamma_{k'}^{1/l_{k'}}\ge \gamma_{k'}$ if $\gamma_{k'}\le 1$ and $l_k \le J$.

Using \cref{lem:main:L>tilde_L} and \cref{lem:main:global_min} we obtain
\begin{equation}
\label{eq:main:base-gamma-B-wminus}
    \eta J\sum_{k\colon\gamma_k \le 1} (1-\gamma_k)
    + \tfrac12\eta B
    + \eta w^-
    \;\le\;
    \tilde{\mathcal L}(\vec{\tilde\theta}) -  \eta L
    \;\le\;
    \mathcal L(\vec \theta) -  \eta L
    \;\le\;
    \varepsilon.
\end{equation}
With \eqref{eq:main:base-gamma-B-wminus} at hand, we can now derive the estimates as claimed.

\paragraph{Proof of \eqref{eq:main:zero-input-bound} and \eqref{eq:main:path-diff-bound}}
From \eqref{eq:main:base-gamma-B-wminus},
\[
    \tfrac12\eta B \le \varepsilon,
    \qquad
    \eta w^- \le \varepsilon.
\]
Using \cref{cor:main:exp_bound_on_F+biases} and $\eta \ge 2\exp(L)$ by assumption on $\eta$, this yields
\begin{align*}
    &H_{\mathrm{wl}}(\abs{\mathcal{W}}, \abs{\mathcal{C}},\abs{\mathcal{B}})( 0)
    \le \exp(L)B
    \le \exp(L)\frac{2\varepsilon}{\eta}
    \le \exp(L)\frac{2\varepsilon}{2\exp(L)}
    = \varepsilon,\\[1mm]
    &G(\mathcal{W}, \mathcal{B}(\vec{z}, \mathcal{C}))(z_0)
    \le \exp(L) w^- \|\vec{z}\|_1
    \le \exp(L)\frac{\varepsilon}{\eta}\|\vec{z}\|_1
    \le \tfrac12\varepsilon\|\vec{z}\|_1.
\end{align*}
Thus \eqref{eq:main:zero-input-bound} holds, and by
\cref{cor:mlp:bnd_on_abs_value_mlp_with_learned-wanted_param},
\begin{align*}
       &|H_{\mathrm{wl}}(\mathcal{W}^+, \mathcal{C}^+,0)( \vec{z}) - H_{\mathrm{wl}}(\mathcal{W}, \mathcal{C},\mathcal{B})( \vec{z})|\\
    &\le
    G(\mathcal{W}, \mathcal{B}(\vec{z}, \mathcal{C}))(z_0)
    + H_{\mathrm{wl}}(\abs{\mathcal{W}}, \abs{\mathcal{C}},\abs{\mathcal{B}})( 0)
    \le
    \bigl(\tfrac12\|\vec{z}\|_1 + 1\bigr)\varepsilon,
\end{align*}
proving \eqref{eq:main:path-diff-bound}.

\paragraph{Lower bound on $\gamma_k$}
From \eqref{eq:main:base-gamma-B-wminus}, for any $k \in [K]_0$
\[
    \eta J(1-\gamma_k)\le \varepsilon
    \quad\Rightarrow\quad
    \gamma_k \ge 1 - \frac{\varepsilon}{\eta J},
\]
which gives \eqref{eq:main:gamma-lower-bound}.

\paragraph{Estimate for the empirical loss}
From \eqref{eq:main:base-gamma-B-wminus},
\[
    \sum_{k\colon\gamma_k\le 1} (1-\gamma_k)
    \;\le\; \frac{\varepsilon}{\eta J }
\]
and hence,
\begin{align*}
    \tilde{\mathcal L}^{\mathrm{reg}}(\vec{\tilde\theta})
    \ge
    \sum_{k=0}^K l_k\,\gamma_k^{1/l_k}
    =
     L - \sum_{k=0}^K l_k\,( 1-\gamma_k^{1/l_k})
     \ge
      L -  J\sum_{k \colon\gamma_k \le 1}^K ( 1-\gamma_k)
    \ge
    L -  \tfrac{\varepsilon}{\eta}.
\end{align*}
Therefore, using \cref{lem:main:L>tilde_L} and \cref{lem:main:global_min}
\begin{align*}
    \mathcal{L}^{\mathrm{emp}}(\vec \theta)
    = \mathcal{L}(\vec \theta) - \eta \mathcal{L}^{\mathrm{reg}}(\vec \theta)
    \le \mathcal{L}(\vec \theta) -  \eta \tilde{\mathcal L}^{\mathrm{reg}}(\vec{\tilde\theta})
    \le \varepsilon + \eta L - \bigl(\eta L -  \varepsilon\bigr)
    \;\le\;
    2\varepsilon
\end{align*}
which completes the proof.
\end{proof}

\subsection{Upper and Lower bound}\label{ssec:upper_and_lower_bound}

In this section, we assume that the dimension of aggregation is 1, i.e., $d_{a_k}=1$ for all $k \in [K]$. Then any computation tree is in fact a path, and thus, $\vec{z}^t$ is well-defined for any  $ t \in \mathcal{T}_{G}^J(v)$, where $G \in \mathcal{G}_{\text{BF}}$ and $v \in V(G)$ (c.f. \cref{rem:mlp:1D_agg}).

\begin{corollary}[Lower bound]\label{cor:fin:lower_bound}
Let $\vec \theta \in \vec{\Theta}_{\text{BF}}$, $\eta \ge 2K\,\exp(L)$ and $x \ge 2 N\eta J $.
Assume  $T_{S_{x,K}} \subset X$ and that  $\mathcal{L}(\vec \theta)$ lies within
$0 \le \varepsilon \le \eta L$ of its global minimum.
Then for any $G \in \mathcal{G}_{\text{BF}}$, $v\in V(G)$, and  $ t \in \mathcal{T}_{\theta}^J(v)$
\[
    \hb_v^{(K)}
    \;\ge\;
    (1-\varepsilon)\,\|\vec{z}^t\|_1 \;-\; \varepsilon.
\]
In particular,
\[
    \hb_v^{(K)}
    \;\ge\;
    (1-\varepsilon)\, x_v^{(K)} \;-\; \varepsilon
\]
where $x_v^{(K)}$ denotes the Bellman--Ford distance.
\end{corollary}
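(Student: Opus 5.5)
The plan is to reduce the MPNN feature to a walk-lifted FNN along the min-aggregation tree and then apply the parameter characterization of \cref{lem:main:main_estimtes_incl_L_tr}. Since $d_{a_k}=1$ for all $k$, by \cref{rem:mlp:1D_agg} every $t\in\mathcal{T}^K_{G,\vec\theta}(v)$ is of the form $t=t^K(p)$ for a walk $p\in\mathcal{P}^K_G(v)$ with $\vec z^t=\vec z^p$. Hence \cref{lem:mlp:bnd_on_feature_any_path} (the equality case) gives
\[
\hb_v^{(K)} = H_{\mathrm{wl}}(\mathcal W,\mathcal C,\mathcal B)(\vec z^t).
\]

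Next I replace the learned parameters by their positive, bias-free counterpart using \eqref{eq:main:path-diff-bound}. This yields
\[
\hb_v^{(K)} \ge H_{\mathrm{wl}}(\mathcal W^+,\mathcal C^+,0)(\vec z^t) - \bigl(\tfrac12\|\vec z^t\|_1+1\bigr)\varepsilon .
\]
By \cref{rem:main:gamma_k}, the first term equals $\sum_{k=0}^K\gamma_k z^t_k$. The lower bound \eqref{eq:main:gamma-lower-bound} gives $\gamma_k\ge 1-\varepsilon/(\eta J)$, and all $z^t_k\ge0$ since $G\in\mathcal G_{\text{BF}}$. Therefore
\[
\sum_k\gamma_k z^t_k\ge \Bigl(1-\tfrac{\varepsilon}{\eta J}\Bigr)\|\vec z^t\|_1 .
\]
Combining the two displays,
\[
\hb_v^{(K)}\ge \Bigl(1-\tfrac{\varepsilon}{\eta J}-\tfrac{\varepsilon}{2}\Bigr)\|\vec z^t\|_1-\varepsilon .
\]
Because $\eta J\ge 2K\exp(L)J\ge 2$, we have $\varepsilon/(\eta J)\le\varepsilon/2$, so the coefficient is at least $1-\varepsilon$. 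This proves the first claim.

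For the second claim, I use \cref{rem:mlp:BF_dis_as_z}: $x_v^{(K)}=\min_p\|\vec z^p\|_1\le\|\vec z^t\|_1$. To transfer the bound I need $1-\varepsilon\ge0$. If $\varepsilon\le1$ this holds, and monotonicity in $\|\vec z^t\|_1$ gives the result. If $\varepsilon>1$, the right-hand side $(1-\varepsilon)x_v^{(K)}-\varepsilon$ is negative, so the claim would follow if $\hb_v^{(K)}\ge0$. This holds because the last layer is a ReLU (\cref{def:fnn:fnn_along_tree} and the FNN definition), so $\hb_v^{(K)}\ge0$.

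The main obstacle is bookkeeping rather than any single estimate. I need to confirm that $\mathcal{T}^K_{G,\vec\theta}(v)$ is nonempty, i.e., that the argmin is attained; this holds since neighborhoods are finite. I also need the identification of the min-aggregation tree with a walk to be valid for arbitrary graphs and not only for the training paths.
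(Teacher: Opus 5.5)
Your proof is correct and follows the paper's argument step for step: the equality case of \cref{lem:mlp:bnd_on_feature_any_path}, then \eqref{eq:main:path-diff-bound}, \cref{rem:main:gamma_k} and \eqref{eq:main:gamma-lower-bound} with $\varepsilon/(\eta J)\le\varepsilon/2$, and finally $\|\vec z^t\|_1\ge x_v^{(K)}$. Your case split for $\varepsilon>1$, using $\hb_v^{(K)}\ge 0$ from the final ReLU, is a small improvement: the paper's last step tacitly needs $1-\varepsilon\ge 0$, whereas the corollary allows $\varepsilon$ up to $\eta L$.
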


\begin{proof}
Let $G \in \mathcal{G}_{\text{BF}}$.
Fix $v\in V(G)$
and $ \tau \in  \mathcal{T}_{G, \vec \theta }^K(v)$.
Applying \cref{lem:mlp:bnd_on_feature_any_path},
and  \eqref{eq:main:path-diff-bound} from \cref{lem:main:main_estimtes_incl_L_tr} and \cref{rem:main:gamma_k} gives
\begin{align*}
    \hb_v^{(K)}
    &    = H_{\mathrm{wl}}(\mathcal{W}, \mathcal{C},\mathcal{B})( \vec{z}^\tau)
    \ge H_{\mathrm{wl}}(\mathcal{W}^+, \mathcal{C}^+,0)( \vec{z}^\tau)
        - \Bigl(\tfrac12\|\vec{z}^\tau\|_1 + 1\Bigr)\varepsilon\\
    &= \sum_{k=0}^K \gamma_k\, \vec{z}^\tau_k
        - \Bigl(\tfrac12\|\vec{z}^\tau\|_1 + 1\Bigr)\varepsilon.
\end{align*}
Using $\gamma_k \ge 1 - \frac{\varepsilon}{\eta J}$ (by \eqref{eq:main:gamma-lower-bound} from \cref{lem:main:main_estimtes_incl_L_tr}), $\eta \ge 2$ and $J \ge 1$
, we obtain
\[
    \sum_{k=0}^K \gamma_k\, \vec{z}^\tau_k
    \;\ge\;
    \Bigl(1 - \frac{\varepsilon}{\eta J}\Bigr)\,
    \|\vec{z}^\tau\|_1
    \;\ge\;
    \Bigl(1-\tfrac{\varepsilon}{2}\Bigr)\,\|\vec{z}^\tau\|_1.
\]
Combining this with the previous estimate gives
\[
    \hb_v^{(K)}
    \;\ge\;
    (1-\varepsilon)\,\|\vec{z}^\tau\|
    - \varepsilon .
\]
Finally, note that by definition of the BF-distance $x_v^\tup{K}$ it holds $\|\vec{z}^t\|_1\ge   x_v^{(K)}$ for any $ t \in \mathcal{T}_{G}^K(v)\equiv \mathcal{P}_{G}^K(v) $ (see~\cref{rem:mlp:BF_dis_as_z}).
Therefore,
\[
    \hb_v^{(K)}
    \;\ge\;
    (1-\varepsilon)\, x_v^{(K)} - \varepsilon ,
\]
which completes the proof.
\end{proof}

The next lemma helps to derive tighter upper bounds on the features.

\begin{lemma}\label{cor:fin:gf_products_are_close_to_1}
Let $\vec \theta \in \vec{\Theta}_{\text{BF}}$, $\eta \ge 2K\,\exp(L)$ and $x \ge 2 N\eta J $.
Assume  $T_{S_{x,K}} \subset X$ and that  $\mathcal{L}(\vec \theta)$  lies within
$0 < \varepsilon <  1/2 \;\; ( \le \eta L)$ of its global minimum.
Then for any $\vec{z} \in \R_{\ge 0}^{K+1}$,
\[
    \bigl|\,H_{\mathrm{wl}}(\mathcal{W}^+, \mathcal{C}^+,0)( \vec{z}) - \norm{\vec{z}}_1\,\bigr|
    \;\le\; \varepsilon\, \norm{\vec{z}}_1.
\]
\end{lemma}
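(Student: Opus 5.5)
By \cref{rem:main:gamma_k} we have $H_{\mathrm{wl}}(\mathcal{W}^+, \mathcal{C}^+,0)(\vec z)=\sum_{k=0}^K \gamma_k z_k$. Since $\vec z\ge 0$, this gives
\[
\bigl|H_{\mathrm{wl}}(\mathcal{W}^+, \mathcal{C}^+,0)(\vec z)-\norm{\vec z}_1\bigr|\le \max_k|\gamma_k-1|\,\norm{\vec z}_1 .
\]
So the claim reduces to showing $1-\varepsilon\le\gamma_k\le 1+\varepsilon$ for every $k\in[K]_0$.

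\textbf{Lower bound on $\gamma_k$.} This is immediate from \eqref{eq:main:gamma-lower-bound} in \cref{lem:main:main_estimtes_incl_L_tr}: $\gamma_k\ge 1-\varepsilon/(\eta J)\ge 1-\varepsilon$.

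\textbf{Upper bound on $\gamma_k$.} Here I use the training sample $v_K$ of the path graph $P_\beta(x\vec e_k)$.

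\begin{enumerate}
\item \cref{lem:main:main_estimtes_incl_L_tr} gives $\mathcal{L}^{\mathrm{emp}}(\vec\theta)\le 2\varepsilon$. Each summand is at most $N$ times the average, so $|\hb_{v_K}^{(K)}-x|\le 2N\varepsilon<N$.
\item Under the one-dimensional aggregation assumption, fix a min-aggregation tree $\tau\in\mathcal{T}^K_{G,\vec\theta}(v_K)$. It is a walk, and by \cref{lem:mlp:bnd_on_feature_any_path} we have $\hb_{v_K}^{(K)}=H_{\mathrm{wl}}(\vec\theta)(\vec z^\tau)$.
\item I identify the walk. Apply \eqref{eq:main:path-diff-bound} together with the lower bound on the $\gamma$'s, exactly as in \cref{cor:fin:lower_bound}. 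This gives $\hb_{v_K}^{(K)}\ge(1-\varepsilon)\norm{\vec z^\tau}_1-\varepsilon$. Suppose the walk started at some $v_i$ with $i\ge1$. Then $z^\tau_0=\beta\ge 2(N+x+1)$, so with $\varepsilon<1/2$ the feature would exceed $N+x$. That contradicts step 1.
\item Hence the walk starts at $v_0$. On a path of $K+1$ nodes there is exactly one length-$K$ walk from $v_0$ to $v_K$, so $\vec z^\tau=x\vec e_k$.
\item Applying \eqref{eq:main:path-diff-bound} once more gives
\[
\gamma_k x-\bigl(\tfrac x2+1\bigr)\varepsilon\le \hb_{v_K}^{(K)}\le x+2N\varepsilon .
\]
Dividing by $x$ yields $\gamma_k\le 1+\varepsilon/2+(2N+1)\varepsilon/x$.
\item Since $x\ge 2N\eta J$ with $\eta\ge 2$, the last term is at most $\varepsilon/2$, so $\gamma_k\le 1+\varepsilon$.
\end{enumerate}

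\textbf{Main obstacle.} The delicate step is step 3: ruling out that the min-aggregation picks a walk other than the genuine path from $v_0$. In particular, self-loop walks that start at a $\beta$-labelled node must be excluded. This is where the choice of $\beta$ in \cref{def:BF-training_set} and the per-sample bound $2N\varepsilon$ are used; I expect some constant bookkeeping there, but no new idea.
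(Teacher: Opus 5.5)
Your proof is correct and follows essentially the paper's argument: you use the same $\beta$-contradiction to force the min-aggregation walk onto the genuine path from $v_0$, then \eqref{eq:main:path-diff-bound} together with $\mathcal{L}^{\mathrm{emp}}\le 2\varepsilon$ on each sample $x\vec e_k$, and finally linearity in $\vec z$. The only differences are minor. You take the lower half $\gamma_k\ge 1-\varepsilon$ from \eqref{eq:main:gamma-lower-bound}, whereas the paper extracts the two-sided bound $|\gamma_k x - x|\le\varepsilon x$ directly from the sample. Also, in your last step, $\eta\ge 2$ alone with $J\ge 1$ only gives $x\ge 4N$, which is not enough for $(2N+1)/x\le 1/2$, so you should also invoke $J=2mK\ge 2$ (or the actual size of $\eta\ge 2K\exp(L)$).
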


\begin{proof}
Let $\vec{w} \in S_{x,K}$ and  $\tau \in \mathcal{T}_{G,\vec\theta}^J(v_K^{\vec w})$ where $v_K^{\vec w} $ denotes the last node of the graph $G \coloneq P_\beta(\vec w)$ from the training set.
Further let   $p_\text{BF} \coloneq (v_0^\vec w, \dots, v_K^\vec w) \in \mathcal{P}^K_{G}(v_K^\vec w)$ and
$p_\text{AGG}\in \mathcal{P}^K_{G}(v_K^\vec w)$ denote the path of $\tau $, i.e.  $t^K(p_\text{AGG})= \tau $.

Assume  for contradiction  that the aggregation does not follow $p_\text{BF}$, i.e.  $p_\text{BF} \neq p_\text{AGG}$.
Then
the first node on $p_\text{AGG}$ is not $v_0$ and hence $\vec{ z}^{p_\text{AGG}} _0  =a_G(v_0)= \beta$ (c.f. \cref{rem:mlp:path-vectors_of_samples}).
Since $\varepsilon\le \tfrac12$, and by \cref{lem:main:main_estimtes_incl_L_tr} and \cref{cor:fin:lower_bound}, we have
\begin{align*}
N&\ge
    2 N \varepsilon
    \ge N \mathcal{L}^{\mathrm{emp}}(\theta)
    \ge |\hb_{v^{\vec{w}}_K}^{(K)} - \|\vec{w}\|_1|
    \ge \hb_{v^{\vec{w}}_K}^{(K)} - \|\vec{w}\|_1 \\
    &    \ge (1-\varepsilon) \|\vec{z}^{\tau}\|_1 - \varepsilon - \|\vec{w}\|_1 \\
    &\ge (1-\varepsilon) \beta  - \varepsilon - x
    \ge \tfrac12 (\beta -1) - x
    \ge N+ \tfrac12
\end{align*}
where in the last inequality we use  $ \beta \ge  2(N+x+1)$, which is  a contradiction.
Hence, it  must be  $p_\text{BF} = p_\text{AGG}$ and thus $\vec{w}= \vec z ^{p_\text{BF}}= \vec{z}^{\tau}$ (c.f. \cref{rem:mlp:path-vectors_of_samples}).
Therefore by \cref{lem:mlp:bnd_on_feature_any_path}
\begin{align*}
    \hb_{v^{\vec{w}}}^{(K)}
        = H_{\mathrm{wl}}(\mathcal{W}, \mathcal{C},\mathcal{B})( \vec{z}^\tau)
    = H_{\mathrm{wl}}(\mathcal{W}, \mathcal{C},\mathcal{B})( \vec{w}).
\end{align*}

Applying Lemma~\ref{lem:main:main_estimtes_incl_L_tr},  we obtain
\begin{align*}
    2 N \varepsilon
    &\ge N \mathcal{L}^{\mathrm{emp}}(\theta)
    \ge |\hb_{v_K^{\vec{w}}}^{(K)} - \|\vec{w}\|_1|
    = |H_{\mathrm{wl}}(\mathcal{W}, \mathcal{C},\mathcal{B})( \vec{w}) - \|\vec{w}\|_1| \\
    &\ge |H_{\mathrm{wl}}(\mathcal{W}^+, \mathcal{C}^+,0)( \vec{w}) - \|\vec{w}\|_1|
      - |H_{\mathrm{wl}}(\mathcal{W}, \mathcal{C},\mathcal{B})( \vec{w}) - H_{\mathrm{wl}}(\mathcal{W}^+, \mathcal{C}^+,0)( \vec{w})| \\
    &\ge |H_{\mathrm{wl}}(\mathcal{W}^+, \mathcal{C}^+,0)( \vec{w}) - \|\vec{w}\|_1| - \Bigl(\tfrac12 \|\vec{w}\|_1 + 1\Bigr) \varepsilon.
\end{align*}

For $\vec{w} = x \vec{e}^{K+1}_k \in S_{x,K}$, this gives
\[
    |H_{\mathrm{wl}}(\mathcal{W}^+, \mathcal{C}^+,0)( x \vec{e}^{K+1}_k) - x|
    \;=\; |H_{\mathrm{wl}}(\mathcal{W}^+, \mathcal{C}^+,0)( \vec{w}) - \|\vec{w}\|_1|
    \;\le\; (2N + \tfrac12 x + 1) \varepsilon \;\le\; \varepsilon x,
\]
where the last inequality follows from our assumptions on $x$ and $\eta$:
\begin{align*}
    \frac{2N +1}{ \tfrac12 x}
    \le
    2\frac{2N +1}{2 N \eta}
    \le
    \frac{2N +1}{2 N K\exp(L)}
    \le
    \frac{2N +1}{4 N }
    \le 1.
\end{align*}

Finally, by the linearity of the Walk-lifted FNN for positive parameter we  get for any $\vec{z} \in \R_{\ge 0}^{K+1}$:
\[
    |H_{\mathrm{wl}}(\mathcal{W}^+, \mathcal{C}^+,0)( \vec{z}) - \norm{\vec{z}}_1|
    \le
    \sum_{k \in [K]_0} \frac{z_k}{x} |H_{\mathrm{wl}}(\mathcal{W}^+, \mathcal{C}^+,0)( x \vec{e}^{K+1}_k) - x|
    \le \sum_{k \in [K]_0} \frac{z_k}{x} \varepsilon x
    \le \varepsilon \norm{\vec{z}}_1.
\]
\end{proof}

\begin{theorem}[Upper bound]\label{thm:fin:upper_bound}
Let $\vec \theta \in \vec{\Theta}_{\text{BF}}$, $\eta \ge 2K\,\exp(L)$ and $x \ge 2 N\eta J $.
Assume  $T_{S_{x,K}} \subset X$ and that  $\mathcal{L}(\vec \theta)$  lies within
$0 < \varepsilon < \tfrac{1}{2}$ of its global minimum.

Then, for any $G\in \mathcal{G}_{\text{BF}}$ and any $v\in V(G)$,
\[
    \hb_v^{(K)} \le (1+\varepsilon)x_v^{(K)} + \varepsilon.
\]
\end{theorem}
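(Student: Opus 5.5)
The plan is to upper bound $\hb_v^{(K)}$ by the walk-lifted FNN along a Bellman--Ford optimal walk, and then use the bounds on the learned parameters established in \cref{ssec:main_structure_of_parameters} and in \cref{cor:fin:gf_products_are_close_to_1}.

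Fix $G\in\mathcal{G}_{\text{BF}}$ and $v\in V(G)$. By \cref{rem:mlp:BF_dis_as_z}, choose a walk $p^*\in\mathcal{P}_G^K(v)$ with $\|\vec z^{p^*}\|_1 = x_v^{(K)}$. By \cref{lem:mlp:bnd_on_feature_any_path}, which rests on the global-minimality property of min-aggregation trees in \cref{lem:fnn:est_on_feature/fnn_of_chosen_tree_vs_any_tree},
\[
\hb_v^{(K)} \le H_{\mathrm{wl}}(\vec\theta^+)(\vec z^{p^*}) = H_{\mathrm{wl}}(\mathcal{W}^+,\mathcal{C}^+,\mathcal{B}^+)(\vec z^{p^*}).
\]
Next, we split off the biases. \cref{lem:mlp:additivity} gives linearity for non-negative parameters. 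Together with monotonicity (\cref{lem:mlp:monotonicity}) this yields
\[
H_{\mathrm{wl}}(\mathcal{W}^+,\mathcal{C}^+,\mathcal{B}^+)(\vec z)\le H_{\mathrm{wl}}(\mathcal{W}^+,\mathcal{C}^+,0)(\vec z)+H_{\mathrm{wl}}(|\mathcal{W}|,|\mathcal{C}|,|\mathcal{B}|)(0).
\]
The second summand is at most $\varepsilon$ by \eqref{eq:main:zero-input-bound} of \cref{lem:main:main_estimtes_incl_L_tr}. This step needs $\varepsilon\le\eta L$, which holds because $\varepsilon<1/2$ and $\eta L\ge 1$.

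For the first summand, apply \cref{cor:fin:gf_products_are_close_to_1}. It gives $H_{\mathrm{wl}}(\mathcal{W}^+,\mathcal{C}^+,0)(\vec z^{p^*})\le(1+\varepsilon)\|\vec z^{p^*}\|_1=(1+\varepsilon)x_v^{(K)}$. Adding the two bounds gives the claim.

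The substantive obstacle is the \emph{upper} control on $\gamma_k$, since the regularizer alone does not prevent $\gamma_k>1$. This is exactly what \cref{cor:fin:gf_products_are_close_to_1} supplies. It first rules out, via the large label $\beta$, that min-aggregation on the training paths deviates from the true path. It then uses the small empirical loss on the scaled unit-vector paths, so $|\gamma_k x - x|\le\varepsilon x$, and extends to all $\vec z$ by linearity. Since that lemma is available, the remaining work is only to check that the hypotheses match, namely $\eta\ge 2K\exp(L)$, $x\ge 2N\eta J$ and $\varepsilon<1/2$. I would also note that this step does not need the one-dimensional aggregation assumption, because the upper bound via \cref{lem:fnn:est_on_feature/fnn_of_chosen_tree_vs_any_tree} holds for arbitrary trees $t^K(p^*)$.
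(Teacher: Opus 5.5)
Your proposal is correct and essentially reproduces the paper's proof. You bound $\hb_v^{(K)}$ by $H_{\mathrm{wl}}(\vec\theta^+)(\vec z^{p})$ via \cref{lem:mlp:bnd_on_feature_any_path}, split off the bias part by linearity and bound it by $\varepsilon$ through \eqref{eq:main:zero-input-bound}, and control the bias-free part with \cref{cor:fin:gf_products_are_close_to_1}; the only difference is that you choose the optimal walk at the start, whereas the paper minimizes over walks at the end.

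One caveat about your closing remark: only the first inequality avoids the one-dimensional aggregation assumption. The theorem as a whole still depends on it through \cref{cor:fin:gf_products_are_close_to_1}, whose proof identifies the min-aggregation tree on the training paths with a walk and invokes \cref{cor:fin:lower_bound}.
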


\begin{proof}
Fix $G \in \mathcal{G}_{\text{BF}}$ and let  $p \in \mathcal{P}_{G}^K(v)$.
Using \cref{lem:mlp:bnd_on_feature_any_path}, \cref{lem:mlp:additivity}, \cref{cor:fin:gf_products_are_close_to_1} and \eqref{eq:main:zero-input-bound} from \cref{lem:main:main_estimtes_incl_L_tr}
\begin{align*}
    \hb_v^{(K)}
    \le H_{\mathrm{wl}}(\mathcal{W}^+, \mathcal{C}^+,\mathcal{B}^+)( \vec{z}^p)
    \le
    H_{\mathrm{wl}}(\mathcal{W}^+, \mathcal{C}^+,0)( \vec{z}^p)
    +H_{\mathrm{wl}}(\mathcal{W}^+, \mathcal{C}^+,\mathcal{B}^+)( 0)
    \le (1+\varepsilon)\norm{\vec{z}^p} + \varepsilon
\end{align*}
and thus by definition of the BF-distance (see~\cref{rem:mlp:BF_dis_as_z})
\begin{align*}
    \hb_v^{(K)}
    \le \min_{p \in \mathcal{P}_{G}^K(v)} (1+\varepsilon)\norm{\vec{z}^p} + \varepsilon
    = (1+\varepsilon)x_v^{K} + \varepsilon.
\end{align*}
\end{proof}

\section{What MPNN cannot learn}\label{app:sec:cannot}
This appendix provides additional details for \cref{sec:cannot}. In particular, \cref{app:subsec:explimit} contains further details related to \cref{subsec:explimit}, including a formal statement and proof of \cref{prop:MST and SSSP}, while \cref{app:subsec:expr_notlearnable} contains the proof of \cref{lem:complete_graphs_infinite_cover} from \cref{subsec:expr_notlearnable}.

\subsection{Expressivity limitations}
\label{app:subsec:explimit}
We begin by formally stating and proving the negative result of \cref{prop:MST and SSSP}, namely that standard MPNN architectures cannot approximate the $\SSSP$ and $\MST$ invariants.

\begin{proposition}[\cref{prop:MST and SSSP} (negative result) in the main text]
\label{app:not_sssp}
The following holds.
\begin{enumerate}
\item For $n \geq 6$, there exists an edge-weighted graph $G$ of order $n$ and vertices $s,t_1,t_2 \in V(G)$ such that, for the class of node-level $\MPNN^{\cP_L}_{(\cS_L,d,n)}(\{G\})$, for any number of layers $L \geq 0$, $d>0$, set of parameters $\cP_L$, and sequence of parameterized functions $\cS_L$, it holds that for all $m \in \MPNN^{\cP_L}_{(\cS_L,d,n)}(\{G\})$,
\begin{equation*}
|\SSSP(G,(s,t_1))-\SSSP(G,(s,t_2))| \geq 1 \quad \text{but} \quad m(t_1)=m(t_2).
\end{equation*}
\item For $n \geq 6$, there exist edge-weighted graphs $G,H$ of order $n$ such that, for the class of graph-level $\MPNN^{\cQ_L}_{(\cT_L,d)}(\{G,H\})$, for any number of layers $L \geq 0$, $d>0$, set of parameters $\cQ_L$, and sequence of parameterized functions $\cT_L$, it holds that for all $m\in \MPNN^{\cQ_L}_{(\cT_L,d)}(\{G,H\})$,
\begin{equation*}
|\MST(G)-\MST(H)| \geq 1 \quad \text{but} \quad m(G)=m(H).
\end{equation*}
\item For $n \geq 14$, there exist edge-weighted graphs $G,H$ of order $n$ such that, for the class of graph-level \wlfive-simulating $\MPNN^{\cQ_L}_{(\cT_L,d)}(\{G,H\})$, for any number of layers $L \geq 0$, $d>0$, set of parameters $\cQ_L$, and sequence of parameterized functions $\cT_L$, it holds that for all $m\in \MPNN^{\cQ_L}_{(\cT_L,d)}(\{G,H\})$,
\begin{equation*}
|\MST(G)-\MST(H)| \geq 1 \quad \text{but} \quad m(G)=m(H).
\end{equation*}
\end{enumerate}
\end{proposition}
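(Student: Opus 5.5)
All three parts follow the same scheme. We exhibit a pair of inputs that the relevant Weisfeiler--Leman variant cannot tell apart but on which the target invariant differs by at least $1$. We then invoke the upper bound on MPNN expressivity recalled in \cref{subsec:sepandapprox}: node-level MPNNs satisfy $\rho_{V_1}(\textsf{MPNN})\supseteq\rho_{V_1}(\text{\wlone})$ and graph-level MPNNs satisfy $\rho_{\cG}(\textsf{MPNN})\supseteq\rho_{\cG}(\text{\wlone})$, and for $\wlfive$-simulating classes the analogous inclusion holds by definition. This gives $m(t_1)=m(t_2)$, respectively $m(G)=m(H)$, for every parameterization. The containment holds for arbitrary $L$, $d$, $\cS_L$ and $\cP_L$ because each layer of \cref{def:MPNN_aggregation} is a function of the pair (own feature, multiset of (neighbor feature, edge weight)). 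A routine induction on $t$ therefore shows that equal $C^1_t$ colors force equal $\hb^{(t)}$, and a readout applied to equal multisets gives equal outputs. For larger $n$ we pad each construction with isomorphic components, for instance extra disjoint edges or a pendant structure added identically to both graphs. This preserves WL-equivalence, and we check that it shifts both invariants by the same amount.

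\textbf{Part 1 (SSSP).} Take $G$ to be the disjoint union of a $6$-cycle and two triangles, all edges of weight $1$ and uniform vertex labels. Every vertex has degree $2$, so \wlone{} gives every vertex the same stable color. Let $s$ and $t_1$ lie on the $6$-cycle at distance $1$, and let $t_2$ lie in a triangle. Then $\SSSP(G,(s,t_1))=1$ while $t_2$ is unreachable. If one prefers connected instances or finite values, we modify this: use two nonisomorphic $2$-regular-like structures with varying weights inside a single connected weighted graph, so that $t_1$ and $t_2$ receive the same color but have distances differing by at least $1$. Either way the conclusion is that $m(t_1)=m(t_2)$ for all MPNNs, while the SSSP values differ.

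\textbf{Part 2 (MST).} Take $G=C_6$ and $H=C_3\,\dot\cup\,C_3$ with all weights $1$. These are \wlone-equivalent. Interpreting $\MST$ as minimum spanning forest cost, $\MST(G)=5$ and $\MST(H)=4$, so they differ by $1$. If connectedness is required, we instead add a weighted gadget: two connected graphs that \wlone{} cannot distinguish but whose cycle structures differ. For example, we attach heavy-weight edges so that the MST must include a heavy edge in one graph but not in the other, while the multiset-of-(color, weight) refinement stays identical.

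\textbf{Part 3 ($\wlfive$, $n\ge14$).} This is the main obstacle. Individualizing a single vertex breaks the symmetry of the cycle examples, so we need graphs in which the $\wlfive$ colorings match under some bijection of roots. The plan is to take $G$ and $H$ to be disjoint unions of cycles of the same total order whose cycle-length multisets differ but whose component counts differ, e.g.\ $G=C_7\,\dot\cup\,C_7$ and $H=C_{14}$. Individualizing a vertex in one $C_7$ gives vertices colored by distance from the root up to $3$, with the other $C_7$ entirely uniform. We then need to check that the graph-level readout over a single root fails to distinguish this from $C_{14}$, where distances run up to $7$. Since $C_7$ and $C_{14}$ produce different distance profiles, this naive choice likely fails. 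We will try first to replace cycles with weighted constructions (CFI-type or weighted regular graphs of order $14$) in which the individualized refinement from any root yields identical color multisets, while the number of connected components differs. The MST (forest) costs then differ by the weight of one edge, which is at least $1$. Verifying this equivalence carefully is the crux of Part 3.
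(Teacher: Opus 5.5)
\textbf{Part 3 is not proved, and your fallback plan targets something impossible.} The candidate $C_7\,\dot\cup\,C_7$ versus $C_{14}$ fails, as you suspect. The repair you propose asks for graphs whose individualized refinements agree from every root while their numbers of connected components differ, and no such graphs exist. Individualizing $r$ lets the refinement separate vertices in $r$'s component from all others (essentially \cref{app:lem:onehalfwltosssp}), so it determines $|\mathrm{comp}(r)|$. Agreement for every root is \wloo-equivalence, which therefore forces equal component counts via $\#\cc(G)=\sum_v 1/|\mathrm{comp}(v)|$. More basically, $\MST$ is a spanning-tree cost, so both graphs must be connected, and a difference in component counts cannot be the mechanism at all.

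The missing idea has two parts. First, keep the graphs connected and reuse the weighted mechanism of Part~2: a heavy edge that is a bridge in one gadget (so it lies in every spanning tree) but a chord of a cycle in the other. Second, attach the individualized vertex \emph{uniformly} to a gadget, so individualization does not break the gadgets' \wlone-equivalence. The paper's graphs do this. $G$ uses two copies of ``two unit triangles joined by a weight-$5$ bridge'', and $H$ uses two copies of ``unit hexagon $1$-$2$-$4$-$6$-$5$-$3$ with weight-$5$ chord $(3,4)$''. Each copy is joined to a hub ($7$ resp.\ $8$) by weight-$10$ edges to all six gadget vertices, plus the edge $(7,8)$ of weight $1$. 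This gives $\MST(G)=39$ and $\MST(H)=31$. With hub $7$ individualized in both graphs, the partition $\{7\},\{8\},\{1,2,5,6\},\{3,4\},\{9,10,13,14\},\{11,12\}$ is equitable with identical weighted parameters in $G$ and $H$, so \wlfive{} cannot separate them. The root really must be a hub. If you individualize a gadget vertex such as vertex $1$ (the choice written in the paper's proof), then after two rounds vertex $2$ sees whether its other unit-weight neighbour is adjacent to the root, i.e., whether vertex $1$ lies on a triangle. That separates $G$ from $H$.

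Parts 1 and 2 also need repair, though it is easy. In Part~1, placing $t_2$ in another component makes $\SSSP(G,(s,t_2))$ undefined, and your graph has order $12$, not $6$. Since a node-level MPNN never sees $s$, a unit-weight $C_6$ already suffices: all vertices share one \wlone{} color, so take $t_1$ at distance $1$ and $t_2$ at distance $3$ from $s$. This is simpler than the paper's weighted two-triangle graph, where $t_1,t_2$ are swapped by an automorphism.

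In Part~2, $C_3\,\dot\cup\,C_3$ has no spanning tree, so you only obtain a spanning-forest statement. The connected version you describe in words must actually be written down. It is the paper's pair: triangles with weights $3,1,1$ joined by a weight-$5$ bridge, versus the hexagon above with weights $3,1,1,3,1,1$ and weight-$5$ chord. Their $\MST$ values are $9$ and $7$, and you must check the common equitable partition into degree-two and degree-three vertices.

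Finally, your padding argument does not work as stated. Padding by disjoint components breaks connectivity. A pendant structure attached at particular vertices acts like an individualization and can destroy \wlone-equivalence. So extending to all larger $n$ needs an explicit uniform construction; the paper itself only exhibits orders $6$ and $14$.
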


\begin{proof}
For (1) it suffices to consider the edge-weighted graph $(G,w_G)$ with $V(G)\coloneqq [6]$ and edge set $E(G)\coloneqq \{(1,2),(1,3),(2,3),(3,4),(4,5),(4,6),(5,6)\}$, with edge weights
$w_G(1,2)\coloneqq 3$,
$w_G(1,3)\coloneqq 1$,
$w_G(2,3)\coloneqq 1$,
$w_G(3,4)\coloneqq 5$,
$w_G(4,5)\coloneqq 1$,
$w_G(4,6)\coloneqq 1$,
and $w_G(5,6)\coloneqq 3$.
Hence, the graph $G$ consists of two triangles connected by an edge of high weight.
Let $s=1$, $t_1=2$, and $t_2=5$. Then, the shortest path from $s$ to $t_1$ has cost $2$ via vertices $1,3,2$, whereas the shortest path from $s$ to $t_2$ has cost $7$ via vertices $1,3,4,5$. Hence $|\SSSP(G,(s,t_1))-\SSSP(G,(s,t_2))|=5\geq 1$. For (2) we consider an additional edge-weighted graph $(H,w_H)$, also of order six with $V(H)\coloneqq [6]$. The graphs $G$ and $H$ have different costs for minimal spanning trees, yet they are indistinguishable by \wlone. The edge set of $H$ is $E(H)\coloneqq \{(1,2),(2,4),(4,6),(6,5),(5,3),(3,1),(3,4)\}$, with edge weights
$w_H(1,2)\coloneqq 3$,
$w_H(2,4)\coloneqq 1$,
$w_H(4,6)\coloneqq 1$,
$w_H(6,5)\coloneqq 3$,
$w_H(5,3)\coloneqq 1$,
$w_H(3,1)\coloneqq 1$,
and $w_H(3,4)\coloneqq 5$.
Hence, the graph $H$ consists of a $6$-cycle with a high-weight chord.
Observe that $\MST(G)=9$. However, in $H$, because we do not have to include the heavy chord in an MST, we get $\MST(H)=7$.

\medskip
We now observe that \wlone{}, taking edge weights into account, cannot distinguish the graphs $G$ and $H$, and it also cannot distinguish $t_1$ and $t_2$ in $G$. Indeed, in view of the characterization of \wlone-distinguishability in terms of unrollings (see~\cref{thm:sp}), one can verify that $\UNR{G,t_1,L}=\UNR{G,t_2,L}$ for all $L$. Similarly, there is a bijection $\pi\colon V(G)\to V(H)$ such that for all $v\in V(G)$, $\UNR{G,v,L}=\UNR{H,\pi(v),L}$ for all $L$. Hence, by~\citet[Theorem~1]{Mor+2019}, no node-level MPNN can separate $t_1$ from $t_2$ on $G$, and no graph-level MPNN can separate $G$ from $H$, implying (1) and (2).

\medskip
For (3) we exhibit two connected graphs $G$ and $H$ such that \wlfive{} does not distinguish them (regarding a suitable choice of roots), yet $\MST(G)\neq \MST(H)$. In both graphs, the vertex set is $V(G)=V(H)\coloneqq [14]$, where $7$ and $8$ are two bridge endpoints and both graphs contain the edge $(7,8)$. The edge sets are
\begin{align*}
E(G)\coloneqq\;&
\{(1,2),(1,3),(2,3)\}\cup \{(4,5),(4,6),(5,6)\}\cup \{(3,4)\}\\
&\cup \{(9,10),(9,11),(10,11)\}\cup \{(12,13),(12,14),(13,14)\}\cup \{(11,12)\}\\
&\cup \{(7,i)\mid i\in[6]\}\cup \{(8,j)\mid j\in\{9,10,11,12,13,14\}\}\cup \{(7,8)\},
\end{align*}
and
\begin{align*}
E(H)\coloneqq\;&
\{(1,2),(2,4),(4,6),(6,5),(5,3),(3,1),(3,4)\}\\
&\cup \{(9,10),(10,12),(12,14),(14,13),(13,11),(11,9),(11,12)\}\\
&\cup \{(7,i)\mid i\in[6]\}\cup \{(8,j)\mid j\in\{9,10,11,12,13,14\}\}\cup \{(7,8)\}.
\end{align*}
The weights are defined by $w_G(3,4)=w_G(11,12)\coloneqq 5$, $w_G(7,i)\coloneqq 10$ for $i\in[6]$, $w_G(8,j)\coloneqq 10$ for $j\in\{9,\dots,14\}$, and $w_G(e)\coloneqq 1$ for all other $e\in E(G)$ (in particular $w_G(7,8)=1$); analogously, $w_H(3,4)=w_H(11,12)\coloneqq 5$, $w_H(7,i)\coloneqq 10$ for $i\in[6]$, $w_H(8,j)\coloneqq 10$ for $j\in\{9,\dots,14\}$, and $w_H(e)\coloneqq 1$ for all other $e\in E(H)$ (again $w_H(7,8)=1$). It is readily verified that $\MST(G)=39$ and $\MST(H)=31$.

Let $v$ and $w$ be vertex $1$ in $G$ and $H$, respectively. Then $G$ and $H$ are \wlfive-indistinguishable regarding $v$ and $w$. Indeed, this follows from the corresponding characterization in terms of unrollings (see~\cref{thm:sp}) and the existence of a bijection $\pi\colon V(G)\to V(H)$ such that for all $v'\in V(G)$, $\UNR{G,v',L,v}=\UNR{H,\pi(v'),L,w}$. Hence, by the same argument as~\citet[Theorem~1]{Mor+2019}, no \wlfive-simulating graph-level MPNN can separate $G$ from $H$, which implies (3).
\end{proof}

Now, the following result shows that \wlfive- and \wloo-simulating MPNNs can arbitrarily well approximate the costs of SSSP and MST, respectively. We remark that these approximation results require fixing the order of graphs.

\begin{proposition}\label{app:thm:onefivesemigood}
Let $n>0$, let $C\subseteq \Rb$ be compact, and let $\cG_{n,C}$ be a set of edge-weighted $n$-order graphs with edge weights from $C$. Then the following holds.
\begin{enumerate}
\item For $n\geq 1$ and $\varepsilon>0$, there exists a class of \wlfive-simulating node-level MPNNs $\cF_{\varepsilon}$ and an $f\in \cF_{\varepsilon}$ such that
\begin{equation*}
\sup_{G\in \cG_{n,C},\, s,t\in V(G)} \bigl| f(G,(s,t))-\SSSP(G,(s,t)) \bigr| < \varepsilon.
\end{equation*}
\item For $n\geq 1$ and $\varepsilon>0$, there exists a class of \wloo-simulating graph-level MPNNs $\cF_{\varepsilon}$ and an $f\in \cF_{\varepsilon}$ such that
\begin{equation*}
\sup_{G\in \cG_{n,C}} \bigl| f(G)-\MST(G) \bigr| < \varepsilon.
\end{equation*}
\end{enumerate}
\end{proposition}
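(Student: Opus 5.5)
The plan is to reduce both parts to \cref{prop:septoapprox}: exhibit a target invariant whose equivalence relation is coarser than that of the relevant \wlone{} variant on $V_2(\cG_{n,C})$ (for $\SSSP$) or on $\cG_{n,C}$ (for $\MST$). Then invoke that proposition for an \wlfive-simulating (resp.\ \wloo-simulating) MPNN class. Since the graphs have fixed order $n$ and weights lie in the compact set $C$, the technical conditions of \cref{prop:septoapprox} hold, and an $\varepsilon$-approximating $f$ follows directly. So the whole work lies in proving the inclusions $\rho_{V_2}(\text{\wlfive}) \subseteq \rho_{V_2}(\SSSP)$ and $\rho_{\cG}(\text{\wloo}) \subseteq \rho_{\cG}(\MST)$.

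For part (1), fix $(G,(s,t))$ and $(H,(s',t'))$ with $C^{1,s}_\infty(t)=C^{1,s'}_\infty(t')$. I would show by induction on $k$ that the $k$-step Bellman--Ford values $x^{(k)}$, initialized at $0$ on the root and $\beta$ elsewhere, are functions of the \wlfive{} colour after $k$ rounds. The base case holds because the initial colour marks exactly the root. In the inductive step, $x^{(k)}_v = \min\{x^{(k-1)}_u + w(u,v) : u\in N(v)\cup\{v\}\}$ depends only on $x^{(k-1)}_v$ and the multiset of pairs (neighbour colour, edge weight), which is precisely what \wlfive{} refines on. Taking $k=n-1$ and $\beta$ larger than any path cost gives $\SSSP(G,(s,t))=x^{(n-1)}_t$. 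When $t$ is unreachable from $s$, both sides keep the value $\beta$ consistently. This yields the inclusion.

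For part (2), let $(G,H)\in\rho_\cG(\text{\wloo})$ with bijection $\pi$ such that $(G,v)$ and $(H,\pi(v))$ are \wlfive-equivalent for every $v$. The easy route through Prim's algorithm from a root fails, because it uses global state. I would instead use the characterisation of MST weight through bottleneck distances: with $b(u,v)$ the minimax path weight, $\MST(G)$ is determined by the multiset $\oms b(u,v) : u,v\in V(G)\cms$. Concretely, counting pairs with $b\le\lambda$ gives the component sizes of the threshold graphs $G_{\le\lambda}$, and hence, for each $\lambda$, the number of components $\cc(G_{\le\lambda})$. Summing the jumps of $\cc$ over the finitely many weight values gives $\MST(G)=\sum_\lambda \lambda\,(\cc(G_{<\lambda})-\cc(G_{\le\lambda}))$, the Kruskal identity, which depends only on the numbers of components. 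For fixed root $u$, the values $b(u,\cdot)$ are computed by a min-max Bellman--Ford recursion, $x_v\leftarrow\min_{w\in N(v)\cup\{v\}} \max(x_w,w(w,v))$. By the same induction as in part (1), the multiset $\oms b(u,v):v\cms$ is therefore an \wlfive{} invariant of $(G,u)$, using that \wlfive-equivalent individualized graphs have equal multisets of final colours. Aggregating over $u$ via $\pi$ makes the full multiset a \wloo{} invariant, hence so is $\MST$. Disconnected graphs are handled by the same formula, which gives the minimum spanning forest weight.

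I expect the main obstacle to be the step in part (2) showing that \wlone-style refinement determines the number of connected components of each threshold graph. That is exactly why I route through the pairwise bottleneck multiset rather than through component counts directly. Checking the identity $\sum_v 1/|\text{comp}(v)| = \cc$, recovered from the bottleneck counts, is where I would be most careful. A secondary care point is making precise the hypotheses of \cref{prop:septoapprox} for $2$-tuple invariants in part (1).
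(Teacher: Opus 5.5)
Your reduction to \cref{prop:septoapprox} and your part (1) match the paper in substance. The paper obtains $\rho_{V_2}(\text{\wlfive})\subseteq\rho_{V_2}(\SSSP)$ from the rooted unrolling-tree characterization (\cref{thm:sp}): the unrolling tree at $v$ with $s$ marked determines the cheapest root-to-marked-vertex cost. Your induction, showing that the Bellman--Ford iterate $x^{(k)}_v$ is a function of the round-$k$ colour, is an explicit form of the same observation. It is also tidier than the paper's reference to a ``unique'' marked vertex in the unrolling tree, which in fact occurs many times.

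For part (2) you take a genuinely different route. Both arguments rest on the same Kruskal-type identity (\cref{thm:kappa-sum}), so everything hinges on showing that \wloo-equivalence preserves $\#\cc$ of every threshold graph. The paper does this in three steps:
\begin{itemize}
\item it transfers \wloo-equivalence to each unweighted threshold graph $G_{<w_j}$;
\item it cites the result that \wloo{} determines the Laplacian spectrum, and reads $\#\cc$ off the multiplicity of the eigenvalue $0$;
\item it gives a separate depth-one argument for $W(G)=W(H)$.
\end{itemize}
You instead run a min--max Bellman--Ford recursion under individualization. The root-$u$ colour of $v$ then determines the bottleneck distance $b(u,v)$. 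This gives the size of $u$'s component in $G_{\le\lambda}$ and in $G_{<\lambda}$ for all thresholds at once, and the bijection $\pi$ matches these per-root data between $G$ and $H$. Your route is self-contained and elementary. It needs no spectral theory and no separate weight-set argument, since the jump terms vanish off the weight values. It also makes clear that the only fact used is that individualized refinement detects reachability. The paper's route is shorter, given the citation. The step you flag as the main obstacle is in fact immediate: each component $C$ contributes $|C|\cdot 1/|C|=1$ to $\sum_u 1/|\mathrm{comp}_\lambda(u)|$.

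One intermediate claim must be corrected. $\MST(G)$ is \emph{not} determined by the flattened multiset $\oms b(u,v) : u,v\in V(G)\cms$. The number of ordered pairs with $b\le\lambda$ equals $\sum_C |C|^2$ over the components of $G_{\le\lambda}$, and this does not determine how many components there are: sizes $2,2,2$ and $3,1,1,1$ both give $12$. A concrete counterexample on six vertices:
\begin{itemize}
\item let $G$ be a weight-$1$ perfect matching joined into a path by two weight-$2$ edges, so $\MST(G)=7$;
\item let $H$ be a weight-$1$ triangle followed by a weight-$2$ path through the remaining three vertices, so $\MST(H)=8$;
\item both have exactly $6$ ordered pairs with $b=1$ and $24$ with $b=2$.
\end{itemize}
What you need, and what $\pi$ actually provides, is the multiset of per-root multisets $\oms\oms b(u,v) : v\in V(G)\cms : u\in V(G)\cms$. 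With the claim stated that way, your argument goes through.

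Finally, as in the paper, the approximation step needs the targets to be finite and continuous on $V_k(\cG_{n,C})$. So restrict to connected graphs, as the paper does, or cap unreachable distances at $\beta$, rather than letting $\SSSP$ take the value $\infty$.
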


\begin{proof}
For simplicity, we assume that all graphs are connected. We rely on \cref{prop:septoapprox} and show below, in \cref{app:lem:onehalfwltosssp}, that $\rho_2(\wlfive)\subseteq \rho_2(\SSSP)$, and in \cref{app:lem:11wltomst} that $\rho(\wloo)\subseteq \rho(\MST)$. The approximation statements then follow from \cref{prop:septoapprox} and from the fact that we consider simulating MPNNs.
\end{proof}

The following results state that \wlfive{} determines shortest-path distances and that \wloo{} determines the cost of an MST.

\begin{lemma}[\cref{prop:MST and SSSP} (positve result regarding $\SSSP$) in the main text]
\label{app:lem:onehalfwltosssp}
Let $(G,w_G)$ and $(H,w_H)$ be two connected edge-weighted graphs, and let $s,v\in V(G)$ and $t,w\in V(H)$. If $C_\infty^{1,s}(v)=C_\infty^{1,t}(w)$, then
$\SSSP(G,(s,v))=\SSSP(H,(t,w))$.
\end{lemma}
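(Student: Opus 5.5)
The plan is to show that the stable individualized coloring $C_\infty^{1,s}$ determines the distance from the root, by an induction on a distance-layered refinement argument. Write $d_G(s,v)\coloneqq\SSSP(G,(s,v))$. Because edge weights are positive and the graphs are finite, the distinct values of $d_G(s,\cdot)$ can be ordered $0=\delta_0<\delta_1<\dots$. We run the refinement in parallel on the disjoint union of $(G,s)$ and $(H,t)$ and prove the following claim: for all $u\in V(G)$ and $u'\in V(H)$ with $C_\infty^{1,s}(u)=C_\infty^{1,t}(u')$ we have $d_G(s,u)=d_H(t,u')$. The lemma is the special case $u=v$, $u'=w$.

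To prove the claim, we use that the stable coloring is a fixed point of the update. Hence equal stable colors imply equal multisets $\oms(C_\infty(x),w(u,x))\mid x\in N(u)\cms$ of neighbour colors together with the incident edge weights. We also use that the root label $\mathsf{[*]}$ persists, so $C_\infty(u)=C_\infty(s)$ forces $u=s$ within $G$, and the same holds for $t$ in $H$. Note that $s$ has a color occurring exactly once in $G$; comparing it against $H$ is part of the argument below.

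The proof itself proceeds by strong induction on the value $\delta=\min(d_G(s,u),d_H(t,u'))$, taken over the finite set of attained distance values in both graphs. Suppose, for contradiction, that two vertices $u,u'$ have the same stable color but different distances. Among all such pairs, pick one minimizing $\delta$, and assume without loss of generality $d_G(s,u)=\delta<d_H(t,u')$.

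If $\delta=0$, then $u=s$ carries the label $\mathsf{[*]}$. Color equality then gives that $u'$ also has this label, so $u'=t$ and $d_H(t,u')=0$, a contradiction.

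If $\delta>0$, pick a neighbour $x$ of $u$ on a shortest path, so that $d_G(s,x)+w(x,u)=\delta$ and $d_G(s,x)<\delta$. By the multiset equality, $u'$ has a neighbour $x'$ with $C_\infty(x')=C_\infty(x)$ and $w(x',u')=w(x,u)$. Minimality of the pair applied to $(x,x')$ gives $d_H(t,x')=d_G(s,x)$. Therefore
\[
d_H(t,u')\le d_H(t,x')+w(x',u')=\delta,
\]
which is a contradiction. The symmetric case is handled in the same way.

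The main subtlety I expect is making the minimal-counterexample argument rigorous. The induction parameter mixes distance values from both graphs, so I must make sure the pair $(x,x')$ really has a strictly smaller minimum. It does, because $d_G(s,x)<\delta$. I must also make sure this works with the cross-graph color comparison implicit in the parallel run. Connectivity is what guarantees that all distances are finite, which the argument needs. Positivity of the edge weights (assumed in the definition of edge-weighted graphs) is essential for the strict decrease; with zero-weight edges, one would instead induct lexicographically on (distance, hop count).
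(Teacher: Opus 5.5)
Your proof is correct, but it takes a different route from the paper. The paper goes through the unrolling-tree characterization (\cref{thm:sp}). For $L$ large, the minimum weight of a root-to-$\mathsf{[*]}$ path in $\UNR{G,v,L,s}$ equals $\SSSP(G,(s,v))$, and likewise for $H$. So differing distances force non-isomorphic rooted unrollings, and hence different \wlfive{} colors.

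You instead work directly with the stable coloring on $G\,\dot\cup\,H$. You use only two facts: it refines the initial labels, and it is a fixed point of the update, so equal colors give equal multisets of neighbour colors with incident edge weights. You then run a minimal-counterexample induction on distance, which is essentially Bellman--Ford replayed in color space. Your observation that the pair $(x,x')$ has strictly smaller minimum because $d_G(s,x)<\delta$ is exactly what makes this well-founded.

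The paper's route is shorter once the unrolling lemma is available. It also shows more broadly that anything readable off the rooted unrolling is \wlfive-invariant. Yours is self-contained and makes the cross-graph comparison explicit; \cref{thm:sp} is stated for vertices of a single graph, so the paper implicitly needs the disjoint union. Yours also avoids the choice of $L$ and the loose phrase ``the unique vertex labeled $\mathsf{[*]}$'' (the unrolling generally contains many copies of the root). Finally, it isolates precisely where connectivity and strict positivity of the edge weights are used.
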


\begin{proof}
We argue by contradiction. Assume that $\SSSP(G,(s,v))\neq \SSSP(H,(t,w))$. Then, for $L$ large enough, the rooted unrollings satisfy
$\UNR{G,v,L,s}\neq \UNR{H,w,L,t}$. By the characterization of \wlfive{} in terms of rooted unrollings (see~\cref{thm:sp}), this implies
$C_\infty^{1,s}(v)\neq C_\infty^{1,t}(w)$, contradicting the assumption. We refer to~\cref{sec:15wltomst} for details.
\end{proof}

\begin{lemma}[\cref{prop:MST and SSSP} (positve result regarding $\MST$) in the main text]
\label{app:lem:11wltomst}
Let $(G,w_G)$ and $(H,w_H)$ be two connected edge-weighted graphs that are \wloo-indistinguishable. Then $\MST(G)=\MST(H)$.
\end{lemma}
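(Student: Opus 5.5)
The plan is to show that \wloo-indistinguishability forces equal MST cost. We do this by expressing $\MST(G)$ as a function of the multiset of \wlfive-colors obtained by individualizing each vertex in turn, and then arguing that each such color already determines the relevant per-vertex quantity. Since $\MST$ cost depends only on the multiset of edge weights chosen, we use a cut/threshold characterization instead of the tree itself. For a connected edge-weighted graph and a threshold $\lambda$, let $G_{\le\lambda}$ be the subgraph keeping only edges of weight at most $\lambda$, and let $\cc(G_{\le\lambda})$ be its number of connected components. Kruskal's algorithm gives the standard identity
\[
\MST(G)=\sum_{i} \lambda_i\bigl(\cc(G_{<\lambda_i})-\cc(G_{\le\lambda_i})\bigr),
\]
where the $\lambda_i$ range over the distinct edge weights. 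So it suffices to show that $\cc(G_{\le\lambda})=\cc(H_{\le\lambda})$ for every $\lambda$, and likewise with strict inequalities.

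Next we write the component count vertex-locally: $\cc(G_{\le\lambda})=\sum_{v\in V(G)} 1/|R_\lambda(G,v)|$, where $R_\lambda(G,v)$ is the set of vertices reachable from $v$ using only edges of weight $\le\lambda$. The key lemma is then an analogue of \cref{app:lem:onehalfwltosssp}. If $C^{1,v}_\infty$ on $G$ with $v$ individualized agrees, as a color multiset, with $C^{1,w}_\infty$ on $H$ with $w$ individualized, then $|R_\lambda(G,v)|=|R_\lambda(H,w)|$ for all $\lambda$.

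To prove this lemma, we show by induction on $t$ that the color $C^{1,v}_t(u)$ determines whether $u$ is reachable from $v$ within $t$ steps through edges of weight $\le\lambda$, simultaneously for every $\lambda$. The base case holds because only the root carries the label $\mathsf{[*]}$. For the inductive step, $u$ is reachable within $t$ steps iff it was reachable within $t-1$ steps or it has a neighbor $u'$ reachable within $t-1$ steps with $w(u,u')\le\lambda$. The \wlone{} update sees exactly the pairs $(C_{t-1}(u'),w(u,u'))$, so the reachability predicate is a function of $C_t(u)$. After at most $n$ rounds, reachability is determined by the stable color. Since the two individualized runs produce equal color multisets, the number of vertices with a "reachable" color agrees, which gives $|R_\lambda(G,v)|=|R_\lambda(H,w)|$. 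Alternatively, one can phrase this argument via the rooted unrollings of \cref{thm:sp}, as in the proof of \cref{app:lem:onehalfwltosssp}.

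Finally, \wloo-indistinguishability provides a bijection $\pi$ with $(G,v)$ and $(H,\pi(v))$ \wlfive-indistinguishable for every $v$. Summing $1/|R_\lambda|$ over $v$ through $\pi$ gives equal component counts for every $\lambda$, both for $\le\lambda$ and for $<\lambda$. The Kruskal identity then yields $\MST(G)=\MST(H)$. Connectedness is not essential, since the same argument computes the minimum spanning forest cost.

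The main obstacle is the middle step, namely making precise that equality of the stable colorings of the individualized runs transfers the counts of reachable vertices. One must ensure the "parallel" run shares the color namespace, so that matching colors imply matching predicates, and that the multiset equality in the definition of $\rho_{V_1}(\wlfive)$ covers all vertices rather than only the root.
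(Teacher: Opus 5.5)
Your proof is correct. It shares the paper's outer structure: the Kruskal/threshold identity, which is exactly the paper's \cref{thm:kappa-sum}, reduces $\MST$ equality to equal component counts of threshold subgraphs. The key step, however, is handled differently. The paper first proves $W(G)=W(H)$. It then asserts that the unweighted threshold subgraphs $G_{<w_j}$, $H_{<w_j}$ inherit \wloo-equivalence, and invokes the spectral characterization of \citet{Rat+2023}: equal Laplacian spectra, hence equal multiplicity of the eigenvalue $0$, hence equal $\#\cc$.

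You instead prove the component-count equality from scratch. An induction over refinement rounds shows that the individualized color of $u$ determines whether $u$ lies in the $\lambda$-restricted component of the root. So \wlfive-indistinguishability of $(G,v)$ and $(H,\pi(v))$ gives $|R_\lambda(G,v)|=|R_\lambda(H,\pi(v))|$, and the identity $\#\cc(G_{\le\lambda})=\sum_v 1/|R_\lambda(G,v)|$ finishes the argument.

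Your route has several advantages:
\begin{itemize}
\item It is more elementary and self-contained, with no external spectral theorem.
\item It needs no separate transfer of \wloo-equivalence to thresholded graphs, since one weighted run handles all thresholds simultaneously.
\item It does not need $W(G)=W(H)$, because the Kruskal sum may be taken over $W(G)\cup W(H)$, where non-weights contribute zero.
\end{itemize}
The paper's route is shorter given the citation, and it yields the stronger intermediate fact of Laplacian cospectrality of the threshold graphs.

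The two obstacles you flag are both settled by the paper's definitions. The parallel run uses a single injective $\REL$ on the disjoint union. \wlfive-indistinguishability at the rooted level demands a bijection matching the colors of all vertices, not only the roots. You can even bypass stable colors entirely: work with the colors at round $n-1$, whose multisets agree because indistinguishability means the multisets never differ at any iteration.
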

\begin{proof}
We reduce the computation of the cost of a minimal spanning tree to counting the number of connected components of weight-pruned subgraphs. Since \wloo{} determines the number of connected components~\citep{Rat+2023}, the claim follows. We refer to~\cref{sec:15wltomst} for details.
\end{proof}

\subsection{Proof of \cref{lem:complete_graphs_infinite_cover}}
\label{app:subsec:expr_notlearnable}
This appendix contains the formal statement and proof of \cref{lem:complete_graphs_infinite_cover}.

\begin{lemma}[\cref{lem:complete_graphs_infinite_cover} in the main text]
\label{app:lem:complete_graphs_infinite_cover}
Let $\mathcal{K}$ be the family of all complete graphs, i.e.,
$$
\mathcal{K} \coloneqq \{G \mid V(G)=[n],\ E(G)=\{\{i,j\} \mid i,j\in[n],\, i\neq j\},\ \text{for some } n\in\Nb \}.
$$
Let $d$ be any (pseudo-)metric on $V_1(\mathcal{K})$ such that the degree invariant
$$
\deg:V_1(\mathcal{K})\to\Nb
$$
is $L$-Lipschitz for some $L<\infty$. Then, for every $\varepsilon\in(0,\frac{1}{L})$, $\mathcal{N}(V_1(\mathcal{K}),d,\varepsilon)=\infty.$
Consequently, no hypothesis class containing the degree invariant can satisfy
\cref{def:finite_uniform_approx} on any graph space containing $V_1(\mathcal{K})$.
\end{lemma}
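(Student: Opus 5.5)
The plan is to exhibit an infinite family of points in $V_1(\mathcal{K})$ that are pairwise far apart in $d$, so that no finite $\varepsilon$-cover can exist. For each $n\in\Nb$ I fix a vertex, say $1$, of the complete graph $K_n$ with $V(K_n)=[n]$, and set $x_n\coloneqq (K_n,1)$. Then $\deg(x_n)=n-1$. For $n\neq m$, the $L$-Lipschitz property of $\deg$ gives
\[
1\le |n-m| = |\deg(x_n)-\deg(x_m)| \le L\, d(x_n,x_m),
\]
and hence $d(x_n,x_m)\ge 1/L$ for all distinct $n,m$.

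Next, I suppose that $C$ is a finite $\varepsilon$-cover with $\varepsilon<1/L$. By pigeonhole, some $c\in C$ must be within distance $\varepsilon$ of two distinct points $x_n,x_m$. The triangle inequality then gives $d(x_n,x_m)\le 2\varepsilon$. This contradicts $d(x_n,x_m)\ge 1/L$ only when $2\varepsilon<1/L$, so on its own it does not cover the full range $\varepsilon<1/L$. This factor of two is the main obstacle. To close it, I would use that the cover is a subset of the space itself, and every $c\in C$ lies in $V_1(\mathcal{K})$, so $\deg(c)$ is an integer. If $d(c,x_n)\le\varepsilon$, then $|\deg(c)-(n-1)|\le L\varepsilon<1$, which forces $\deg(c)=n-1$. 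So each center $c$ covers only points $x_n$ with the single value $n=\deg(c)+1$. A finite $C$ therefore covers only finitely many of the $x_n$, and $\mathcal{N}(V_1(\mathcal{K}),d,\varepsilon)=\infty$. This argument needs integrality of $\deg$ on centers, which holds because covers are taken inside $\cX$. It also uses only the Lipschitz inequality, so it applies equally to pseudo-metrics.

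For the consequence, I take a hypothesis class containing $\deg$ (as a function on a graph space $\cX\supseteq V_1(\mathcal{K})$) and suppose it satisfies \cref{def:finite_uniform_approx} with some pseudo-metric $d$. Then $\deg$ is Lipschitz with respect to $d$ with some finite constant $L$, and its restriction to $V_1(\mathcal{K})$ is too. If $\deg$ is $L$-Lipschitz, it is also $L'$-Lipschitz for any $L'\ge L$, so $L>0$ may be assumed. Covering numbers are monotone under restriction to subsets (up to the standard doubling of radius: an $\varepsilon$-cover of $\cX$ yields a $2\varepsilon$-cover of a subset using centers in the subset). Given $\varepsilon<1/L$, I apply the first part with radius $2\varepsilon'$ for some $\varepsilon'$ with $2\varepsilon'<1/L$. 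Then $\mathcal{N}(\cX,d,\varepsilon')=\infty$, which contradicts finiteness of covering numbers.
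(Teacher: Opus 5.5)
Your proof is correct, and it shares the paper's core idea: complete graphs of distinct degrees, with the Lipschitz bound $d(x,y)\ge|\deg(x)-\deg(y)|/L$ separating them. The difference lies in how you pass from separation to $\mathcal{N}=\infty$.

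\textbf{The paper's route.} It keeps only the graphs $K_{1+kq}$ with $q=\lceil L\varepsilon\rceil+1$, which equals $2$ because $L\varepsilon<1$. It then argues that an infinite $\varepsilon$-separated set forces $\mathcal{N}(V_1(\mathcal{K}),d,\varepsilon)=\infty$. As written, the paper only records $q/L>\varepsilon$, which is exactly the factor-of-two gap you identified. What actually rescues the argument is the unstated fact $q/L\ge \varepsilon+1/L>2\varepsilon$, so no closed $\varepsilon$-ball contains two of the points.

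\textbf{Your route.} You keep degree spacing $1$ and close the gap differently. Cover centers lie in $V_1(\mathcal{K})$ and have integer degree, so for $\varepsilon<1/L$ every closed $\varepsilon$-ball lies in a single level set of $\deg$.

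\textbf{Comparison.} The paper's packing argument does not need the centers to lie in the space, so it would also work for external covers. Yours relies on the paper's internal-cover convention, but it avoids radius bookkeeping and proves a slightly stronger structural statement. You also prove the ``Consequently'' clause, which the paper leaves implicit. That includes the radius doubling when passing from a cover of $\cX$ to a cover of the subset $V_1(\mathcal{K})$, and the remark that a positive Lipschitz constant may be assumed.
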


\begin{proof}
Fix $\varepsilon\in(0,1/L)$ and set $q\coloneqq \lceil L\varepsilon\rceil+1$.
For each $k\in\Nb$, let $K_{1+kq}\in\mathcal{K}$ denote the complete graph on $1+kq$ vertices,
choose an arbitrary $u_k\in V(K_{1+kq})$, and define $x_k\coloneqq (K_{1+kq},u_k)\in V_1(\mathcal{K})$.
Then $\deg(x_k)=kq$, and for $k\neq\ell$,
$$
d(x_k,x_\ell)\ \ge\ \frac{|\deg(x_k)-\deg(x_\ell)|}{L} \ge \frac{q}{L}\ >\ \varepsilon.
$$
Thus $\{x_k\}_{k\in\Nb}$ is an infinite $\varepsilon$-separated subset of $(V_1(\mathcal{K}),d)$ (i.e., $d(x_k,x_\ell)>\varepsilon$ for all $k\neq \ell$), implying $\mathcal{N}(V_1(\mathcal{K}),d,\varepsilon)=\infty$.
\end{proof}

\section{Expressivity limitations}
\label{app:app:subsec:explimit}

This appendix provides additional technical background for \cref{app:subsec:explimit}, including the formal proofs of \cref{app:lem:onehalfwltosssp} and \cref{app:lem:11wltomst}.

\subsection{Separation and approximation}\label{app:approxmation}

We briefly recall known connections between \emph{discrete separation power} and \emph{continuous approximation power}, following~\citet{Azi+2020} and~\citet{geerts2022}.

Let $C\subseteq\R$ be compact. For $n\in\N$, let $\cG_{n,C}$ denote the set of edge-weighted $n$-order graphs with edge weights in $C$. One can represent elements in $\cG_{n,C}$ by their weighted adjacency matrices in $C^{n\times n}$. Then, equipped with the product topology, $\cG_{n,C}$ is compact.

Let $k\in\Nb$ and let $\cF$ be a class of continuous functions on $V_k(\cG_{n,C})$ of the form $f:V_k(\cG_{n,C})\to\R^{\ell_f}$ for some $\ell_f\in\N$ that may depend on $f$.
We write $\overline{\cF}$ for the closure of $\cF$ using the sup norm. That is, a function $h:V_k(\cG_{n,C})\to\R^{\ell_h}$ is in $\overline{\cF}$ if there exists a sequence $(f_i)_{i\ge 1}\subseteq \cF$ with $f_i:V_k(\cG_{n,C})\to\R^{\ell_h}$ such that
\[
\|f_i-h\|_\infty \coloneqq \sup_{(G,\vec v)\in V_k(\cG_{n,C})}\|f_i(G,\vec v)-h(G,\vec v)\|_2\to 0.
\]
We assume $\cF$ satisfies the following natural assumptions:
\begin{enumerate}[label=(\roman*),leftmargin=2.2em]
\item \emph{Concatenation-closed:} if $f_1:V_k(\cG_{n,C})\to\R^d$ and $f_2:V_k(\cG_{n,C})\to\R^p$ are in $\cF$, then
      $G\mapsto (f_1(G),f_2(G))\in\R^{d+p}$ is in $\cF$.
\item \emph{Function/FNN-closed for $\ell$:} if $f:V_k(\cG_{n,C})\to\R^p$ is in $\cF$ and $g:\R^p\to\R^\ell$ is continuous or is an FNN, then $g\circ f\in\cF$.
\end{enumerate}
For such $\cF$ we denote by $\cF_\ell$ the subset of functions in $\cF$ of the form
$V_k(\cG_{n,C})\to\R^\ell$, i.e., with output dimension fixed to $\ell$.

Based on a generalized Stone--Weierstrass theorem~\citep{Tim+2005}, \citet[Theorem~6.1]{geerts2022} and \citet[Lemma~32]{Azi+2020} combined prove the following characterization.

\begin{theorem}
\label{thm:function-approx-separation}
Let $C\subseteq\R$ be compact, and let $n,m,k,\ell\in\N$.
Let $\cF$ be a class of functions on $V_k(\cG_{n,C})$ that is concatenation-closed and function/FNN-closed for $\ell$.
Then
\[
\overline{\cF_\ell}
=
\bigl\{ f:V_k(\cG_{n,C})\to\R^\ell \,\bigm|\, \rho_{V_k(\cG_{n,c})}(\cF)\subseteq \rho_{V_k(\cG_{n,C})}(f) \bigr\}.
\]
\end{theorem}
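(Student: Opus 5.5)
The inclusion ``$\subseteq$'' should follow directly from uniform limits, and ``$\supseteq$'' from the Stone--Weierstrass theorem applied on the quotient of $V_k(\cG_{n,C})$ by the equivalence relation $\rho \coloneqq \rho_{V_k(\cG_{n,C})}(\cF)$. Throughout I read the right-hand side as ranging over continuous $f$. This seems to be the intended reading, since $\overline{\cF_\ell}$ consists of uniform limits of continuous functions and so contains only continuous functions. I also use that $V_k(\cG_{n,C})$ is compact: it is a finite union over vertex tuples $\vec v\in[n]^k$ of copies of the compact set $C^{n\times n}$.

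\emph{Step 1 (easy inclusion).} Let $h\in\overline{\cF_\ell}$ with $f_i\to h$ uniformly, where $f_i\in\cF_\ell$. Then $h$ is continuous. Take $((G,\vec v),(H,\vec w))\in\rho$. By definition of $\rho$ we have $f_i(G,\vec v)=f_i(H,\vec w)$ for every $i$. Passing to the limit gives $h(G,\vec v)=h(H,\vec w)$, so $\rho\subseteq\rho(h)$.

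\emph{Step 2 (scalar algebra).} For the converse, let $\cA\subseteq C(V_k(\cG_{n,C}),\R)$ be the set of scalar functions of the form $g\circ f$ with $f\in\cF$ and $g$ continuous. Each such function lies in $\cF_1$ by FNN-closedness. The set $\cA$ is a unital subalgebra:
\begin{itemize}
\item constants are obtained by taking $g$ constant;
\item sums and products are obtained by concatenating two members of $\cF$ (concatenation-closedness) and then post-composing with $(a,b)\mapsto a+b$ or $(a,b)\mapsto ab$.
\end{itemize}
Moreover, $\cA$ separates $\rho$-classes. If $((G,\vec v),(H,\vec w))\notin\rho$, some $f\in\cF$ distinguishes the two points. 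Then some coordinate of $f$ differs, and composing with that coordinate projection gives an element of $\cA$ that separates them.

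\emph{Step 3 (quotient and Stone--Weierstrass).} Let $Q\coloneqq V_k(\cG_{n,C})/\rho$ with the quotient topology and quotient map $\pi$.
\begin{itemize}
\item $Q$ is compact, as a continuous image of a compact space.
\item Every $a\in\cA$ is constant on $\rho$-classes, so it factors as $a=\tilde a\circ\pi$ with $\tilde a$ continuous on $Q$.
\item $Q$ is Hausdorff, because the continuous functions $\tilde a$ separate its points.
\end{itemize}
Now take a continuous target $f=(f^1,\dots,f^\ell)$ with $\rho\subseteq\rho(f)$. Each coordinate $f^j$ descends to a continuous $\tilde f^j$ on $Q$. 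The classical Stone--Weierstrass theorem on the compact Hausdorff space $Q$ gives, for every $\delta>0$, some $a_j\in\cA$ with $\|a_j-f^j\|_\infty<\delta$. Concatenating $a_1,\dots,a_\ell$ yields an element of $\cF_\ell$ within $\sqrt{\ell}\,\delta$ of $f$ in the sup norm (measured with the Euclidean norm on $\R^\ell$), so $f\in\overline{\cF_\ell}$.

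\emph{Main obstacle.} The delicate point is Step 3: checking that the quotient $Q$ is genuinely compact Hausdorff, and that both $f$ and the elements of $\cA$ descend to continuous functions on it. This is exactly the role of the generalized Stone--Weierstrass theorem of \citet{Tim+2005} invoked in \citet{geerts2022,Azi+2020}. If the quotient argument becomes awkward, I would fall back on quoting that theorem directly: it states that a unital subalgebra of $C(X)$ on a compact $X$ is dense in the set of continuous functions that respect its separation relation. Steps 1 and 2 already verify its hypotheses.
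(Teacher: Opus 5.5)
Your proposal is correct and follows the same route as the proof the paper cites (\citet{geerts2022}, \citet{Azi+2020}). That proof passes to the scalar algebra generated by $\cF$ and applies the generalized Stone--Weierstrass theorem of \citet{Tim+2005}, whose content is exactly your quotient-plus-classical-Stone--Weierstrass argument; reading the right-hand side as ranging over continuous $f$ is also the intended reading.

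There is one slip to fix. Closure is only assumed for maps into $\R^\ell$, so $g\circ f\in\cF_1$ does not follow when $\ell\neq 1$, and your final concatenation of $a_1,\dots,a_\ell$ relies on it. Instead, write $a_j=g_j\circ f_j$ with $f_j\in\cF$. Concatenate $f_1,\dots,f_\ell$, which stays in $\cF$ by concatenation-closedness. Then post-compose with the continuous map $(y_1,\dots,y_\ell)\mapsto (g_1(y_1),\dots,g_\ell(y_\ell))\in\R^\ell$, which lands in $\cF_\ell$ by function-closedness for $\ell$.
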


Let us fix $L\in\N$ and consider
\[
\cF \;\coloneqq\; \bigcup_{p}\MPNN^{\cQ_L}_{(\cT_L,p)}(\cG_{n,C}),
\]
i.e., the class of all $L$-layer graph-level MPNNs as defined in~\cref{sec:mpnns}, with
$\cT_L$ implemented by FNNs. Fix $\ell\in \N$. As the class $\cF$ of MPNNs is easily verified to be concatenation-closed and FNN-closed for $\ell$, \cref{thm:function-approx-separation} applies.

\begin{corollary}\label{cor:continuous-discrete-bridge}
Let $g:\cG_{n,C}\to\R^\ell$ be a continuous graph invariant such that $g$ cannot separate more graphs than the class $\cF$ of $L$-layer graph-level MPNNs. Then, for any $\epsilon>0$, there is an
$L$-layer graph-level MPNN $f$ in $\MPNN^{\cQ_L}_{(\cT_L,p)}(\cG_{n,\ell})$ satisfying
\[
\sup_{G\in\cG_{n,C}}\| g(G)-f(G)\|_2\leq\epsilon.
\]
\end{corollary}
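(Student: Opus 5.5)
The plan is to reduce the corollary directly to \cref{thm:function-approx-separation}, using the class $\cF$ of all $L$-layer graph-level MPNNs defined just above. The work consists of checking the two closure hypotheses for $\cF$ and then reading off the conclusion for the fixed output dimension $\ell$. Here $k=0$, so $V_0(\cG_{n,C})=\cG_{n,C}$. As recalled above, $\cG_{n,C}$ is compact under the product topology on weighted adjacency matrices in $C^{n\times n}$.

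\textbf{Concatenation-closedness.} Take $f_1,f_2\in\cF$ with output dimensions $d$ and $p$. I would build a single $L$-layer MPNN whose hidden states are the concatenations of the two networks' hidden states. Its update and aggregation maps act blockwise, i.e.\ as block-diagonal FNNs. For the aggregation, each block receives only its own coordinates of the neighbour messages; this is possible because the aggregation operator acts coordinatewise, or for sum aggregation by linearity. The readout is the pair of the two readouts. The output is then $G\mapsto(f_1(G),f_2(G))$.

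\textbf{Function/FNN-closedness.} For $g$ an FNN, I would compose $g$ with the readout, which is itself an FNN, and the composite is again an admissible readout. For continuous $g$ the composite need not lie in $\cF$ exactly. However, by the universal approximation theorem applied on the compact image $f(\cG_{n,C})$, it lies in $\overline{\cF}$. This suffices, since the Stone--Weierstrass-type argument only uses closure properties up to uniform limits. I also need each $f\in\cF$ to be continuous on $\cG_{n,C}$, which holds because it is built from continuous FNNs and continuous aggregations on finitely many inputs.

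\textbf{Conclusion.} The hypothesis that ``$g$ cannot separate more graphs than $\cF$'' is exactly $\rho_{\cG_{n,C}}(\cF)\subseteq\rho_{\cG_{n,C}}(g)$. Since $g$ is continuous with values in $\R^\ell$, \cref{thm:function-approx-separation} gives $g\in\overline{\cF_\ell}$. By definition of the sup-norm closure, there is $f\in\cF_\ell$ with $\sup_G\|g(G)-f(G)\|_2\le\epsilon$. This $f$ is an $L$-layer graph-level MPNN of output dimension $\ell$, as required.

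The main obstacle is the continuous-$g$ part of closedness, together with making sure the blockwise construction respects the chosen aggregation. I would handle the former by working with the closure as described, and the latter by restricting to sum/mean aggregation, where block separation is immediate.
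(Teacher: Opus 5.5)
Your proposal is correct and follows the paper's route. The paper's entire argument is the remark that the class of $L$-layer graph-level MPNNs is concatenation-closed and FNN-closed for $\ell$, so that Theorem~\ref{thm:function-approx-separation} applies; your blockwise concatenation, composition of an FNN with the readout, and passage to $\overline{\cF}$ for continuous outer maps fill in that ``easily verified'' step. Two minor remarks. Restricting to sum/mean aggregation is unnecessary, since any aggregation applied separately to each block (e.g.\ coordinatewise max/min) also works. Your continuity claim tacitly needs the neighbourhoods $N(v)$ to be locally constant on $\cG_{n,C}$ (e.g.\ positive edge weights bounded away from $0$); otherwise a neighbour whose weight tends to $0$ drops out of the aggregation discontinuously, a point the paper also leaves implicit.
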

One can obtain similar statements for invariants $g:V_1(\cG_{n,C})\to\Rb^\ell$ and the class $\cF$ of \emph{node-level} $L$-layer MPNNs, $g:V_2(\cG_{n,c})\to\Rb^\ell$ and the class $\cF$ of
node-level $L$-layer MPNNs where each $(G,(r,v))\in V_2(\cG_{n,C})$ as a node $v$ in $G$ in which $r$ is individualized.

In particular, given any $\mathsf{alg}\in\{\wlone,\wlfive,\wloo\}$ and class of MPNNs that are $\mathsf{alg}$-simulating, the Corollary implies that this class of MPNNs can arbitrarily approximate any invariant $g$ that is upper bounded in expressive power by $\mathsf{alg}$, resulting in~\cref{prop:septoapprox} in the main paper.

In the following, we will use the above result to shed some light on the abilities of MPNNs to approximate or not be able to approximate invariants corresponding to well-known graph problems.

\subsection{Additional details for~\cref{subsec:explimit}}\label{sec:15wltomst}
We first recall the unrolling-tree characterization of \wlone{} for vertex-labeled, edge-weighted graphs~\citep{Mor+2020} and then adapt it to characterize \wlfive{} as well.

Given a connected vertex-labeled and edge-weighted graph $(G,\ell_G,w_G)$, we define the \new{unrolling tree} of depth $L\in\Nb_0$ rooted at a vertex $u\in V(G)$, denoted $\UNR{G,u,L}$, inductively as follows.
\begin{enumerate}
	\item For $L=0$, $\UNR{G,u,0}$ is the single-vertex tree whose root is labeled $\ell_G(u)$.
	\item For $L>0$, $\UNR{G,u,L}$ has a root labeled $\ell_G(u)$ and, for each neighbor $v\in N(u)$, it has a child subtree isomorphic to $\UNR{G,v,L-1}$, connected to the root by an edge of weight $w_G(u,v)$.
\end{enumerate}

We now extend this notion to incorporate the distinguished root vertex used in \wlfive. Given a connected vertex-labeled and edge-weighted graph $(G,\ell_G,w_G)$ and a fixed vertex $r\in V(G)$, we mark $r$ by setting $\ell_G(r)\coloneqq \mathsf{[*]}$. The \new{unrolling tree regarding $r$} of depth $L\in\Nb_0$ rooted at $u\in V(G)$, denoted $\UNR{G,u,L,r}$, is defined inductively as follows.
\begin{enumerate}
	\item For $L=0$, $\UNR{G,u,0,r}$ is the single-vertex tree whose root is labeled $\ell_G(u)$.
	\item For $L>0$, $\UNR{G,u,L,r}$ has a root labeled $\ell_G(u)$ and, for each neighbor $v\in N(u)$, it has a child subtree isomorphic to $\UNR{G,v,L-1,r}$, connected to the root by an edge of weight $w_G(u,v)$.
\end{enumerate}

The following lemma is immediate.
\begin{lemma}[Follows from~{\citet[Lemma~12]{Mor+2020}}]\label{thm:sp}
The following characterizations hold.
\begin{enumerate}
\item For $L\in\Nb_0$, given a connected vertex-labeled, edge-weighted graph $(G,\ell_G,w_G)$ and vertices $u,v\in V(G)$, the following are equivalent.
	\begin{itemize}
		\item The vertices $u$ and $v$ have the same color after $L$ iterations of \wlone.
		\item The unrolling trees $\UNR{G,u,L}$ and $\UNR{G,v,L}$ are isomorphic (as rooted, labeled, edge-weighted trees).
	\end{itemize}
\item For $L\in\Nb_0$, given a connected vertex-labeled, edge-weighted graph $(G,\ell_G,w_G)$, a fixed vertex $r\in V(G)$ with $\ell_G(r)=\mathsf{[*]}$, and vertices $u,v\in V(G)$, the following are equivalent.
	\begin{itemize}
		\item The vertices $u$ and $v$ have the same color after $L$ iterations of \wlfive{}, for individualized  $r$.
		\item The unrolling trees $\UNR{G,u,L,r}$ and $\UNR{G,v,L,r}$ are isomorphic (as rooted, labeled, edge-weighted trees).
	\end{itemize}
\end{enumerate}
\end{lemma}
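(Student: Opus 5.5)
The statement to prove is \cref{thm:sp}, the unrolling-tree characterization of \wlone{} and of \wlfive{}. My plan is to prove both parts by induction on the number of iterations $L$. The inductive claim, for all pairs of vertices simultaneously (and across graphs run in parallel), is
\[
C^1_L(u)=C^1_L(v) \iff \UNR{G,u,L}\cong \UNR{G,v,L}.
\]
Part (2) will then follow verbatim. The only change is the initial labeling: $\ell_G(r)=\mathsf{[*]}$, which is exactly the initial coloring $C^{1,r}_0$. The update rule of \wlfive{} is the ordinary \wlone{} update, so no new argument is needed.

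\textbf{Base case.} For $L=0$ the color $C^1_0(u)$ is $\ell_G(u)$ up to a fixed injective relabeling. The tree $\UNR{G,u,0}$ is a single vertex labeled $\ell_G(u)$. Two one-vertex labeled trees are isomorphic exactly when their labels coincide, so the two conditions agree.

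\textbf{Inductive step.} Assume the claim holds for $L-1$. Because $\REL$ is injective, $C^1_L(u)=C^1_L(v)$ holds iff two things hold:
\begin{itemize}
\item $C^1_{L-1}(u)=C^1_{L-1}(v)$;
\item the multisets $\oms (C^1_{L-1}(x),w_G(u,x))\mid x\in N(u)\cms$ and $\oms (C^1_{L-1}(y),w_G(v,y))\mid y\in N(v)\cms$ coincide.
\end{itemize}
On the tree side, an isomorphism of rooted, labeled, edge-weighted trees $\UNR{G,u,L}\to\UNR{G,v,L}$ exists iff two things hold. First, the root labels agree. Second, there is a bijection between the children of the two roots that matches edge weights and maps each child subtree isomorphically to its partner. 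The child subtrees are $\UNR{G,x,L-1}$ and $\UNR{G,y,L-1}$.

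By the induction hypothesis, subtree isomorphism at depth $L-1$ is the same as equality of $C^1_{L-1}$. So the bijection condition is exactly equality of the two multisets above.

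It remains to recover the first \wlone{} condition, $C^1_{L-1}(u)=C^1_{L-1}(v)$, from the tree side. Root-label equality alone only gives $C^1_0(u)=C^1_0(v)$. I will use that colors are refinements: $C^1_L(u)=C^1_L(v)$ implies $C^1_{L-1}(u)=C^1_{L-1}(v)$, since $\REL$ encodes the previous color. Similarly, $\UNR{G,u,L}\cong\UNR{G,v,L}$ implies $\UNR{G,u,L-1}\cong\UNR{G,v,L-1}$, by truncating both trees at depth $L-1$. Combined with the induction hypothesis, these two facts give the missing condition in both directions.

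Chaining these equivalences yields the case $L$.

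\textbf{Main obstacle.} The step needing the most care is this refinement/truncation bookkeeping. I would state it as a small auxiliary claim and prove it by its own induction. Concretely, truncating a depth-$L$ unrolling tree to depth $L-1$ gives the depth-$(L-1)$ unrolling tree, and this truncation commutes with isomorphism. The rest is routine and matches \citet[Lemma~12]{Mor+2020}. The edge weights enter only as extra labels on the edges from a root to its children, so they do not affect the structure of the argument.
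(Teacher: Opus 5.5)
Your induction on $L$ is correct, including the refinement/truncation step that recovers $C^1_{L-1}(u)=C^1_{L-1}(v)$ from the tree side. Part (2) does follow by running the same argument with the initial coloring in which $r$ carries $\mathsf{[*]}$. The paper gives no separate argument: it calls the lemma immediate and defers to \citet[Lemma~12]{Mor+2020}, and your proof is the standard induction behind that citation.
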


\subsubsection{Proof of \cref{app:lem:onehalfwltosssp}}
Let $(G,w_G)$ and $(H,w_H)$ be two connected edge-weighted graphs, and let $s,v\in V(G)$ and $t,w\in V(H)$. We need to
show that if $C_\infty^{1{.}5,s}(v)=C_\infty^{1{.}5,t}(w)$, then
$\SSSP(G,(s,v))=\SSSP(H,(t,w))$.

\medskip
Assume for contradiction that $\SSSP(G,(s,v))\neq \SSSP(H,(t,w))$. Fix $L$ large enough so that the (unique) marked vertices $s$ in $\UNR{G,v,L,s}$ and $t$ in $\UNR{H,w,L,t}$ appear within depth $L$ along all shortest root-to-marked paths. In $\UNR{G,v,L,s}$, the cost of the shortest path from the root $v$ to the unique vertex labeled $\mathsf{[*]}$ equals $\SSSP(G,(s,v))$, and analogously, in $\UNR{H,w,L,t}$ the cost of the shortest path from the root $w$ to the unique vertex labeled $\mathsf{[*]}$ equals $\SSSP(H,(t,w))$.  If $\SSSP(G,(s,v))\neq \SSSP(H,(t,w))$, then the two rooted, vertex-labeled, edge-weighted trees $\UNR{G,v,L,s}$ and $\UNR{H,w,L,t}$ cannot be isomorphic, contradicting the unrolling-tree characterization of \wlfive{} (see~\cref{thm:sp}) together with the assumption $C_\infty^{1,s}(v)=C_\infty^{1,t}(w)$.

\subsubsection{Proof of~\cref{app:lem:11wltomst}}

Let $(G,w_G)$ and $(H,w_H)$ be two connected edge-weighted graphs that are \wloo-in\-distin\-guishable. We need to show that $\MST(G)=\MST(H)$.

\medskip
We prove this by showing that the cost of a minimal spanning tree is determined by the number of connected components in threshold subgraphs that only hold edges with weights below a certain threshold. \Cref{app:lem:onehalfwltosssp} then follows from the fact that \wloo{} determines the number of connected components.
We detail the argument at the end of this subsection.

\medskip
For a graph $X$ we write $\cc(X)$ for its set of connected components and $\#\cc(X)$ for the number of connected components.
Let $(G,w_G)$ be an (undirected) edge-weighted graph and let $W(G)\coloneqq \{w_G(e)\mid e\in E(G)\}$ be its set of edge weights. Let the distinct weights be
\[
w_1 < w_2 < \cdots < w_m,
\]
and set $w_{m+1}\coloneqq +\infty$. For any $w\in\R_{>0}$ we define the (unweighted) \emph{threshold subgraph}
\[
G_{<w}\coloneqq \bigl(V(G),\{e\in E(G)\mid w_G(e)<w\}\bigr).
\]
For each $j\in[m+1]$ define $\kappa_j \coloneqq \#\cc(G_{<w_j})$. Note that $\kappa_{m+1}=\#\cc(G)$.

Let $\MSF(G)$ be the minimum spanning forest cost, i.e., the total weight of a minimum spanning forest of $G$. If $G$ is connected, then we denote $\MSF(G)$ by $\MST(G)$.

\begin{lemma}
\label{thm:kappa-sum}
For any edge-weighted graph $(G,w_G)$ with distinct weights $w_1<\cdots<w_m$,
\[
\MSF(G)=\sum_{j=1}^{m}(\kappa_j-\kappa_{j+1})\,w_j.
\]
\end{lemma}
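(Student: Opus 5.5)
We will prove the formula by running Kruskal's algorithm and counting, for each weight class, how many of its edges Kruskal accepts. The target identity is that a minimum spanning forest $F$ produced by Kruskal contains exactly $\kappa_j-\kappa_{j+1}$ edges of weight $w_j$. Summing these counts against the weights then gives the claim, because every minimum spanning forest has the same cost.

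First, we order the edges by nondecreasing weight, breaking ties arbitrarily, and run Kruskal to obtain $F$. Kruskal's standard correctness argument, via the cut property applied componentwise, shows that $F$ is a minimum spanning forest. Hence $\MSF(G)=\sum_{e\in F}w_G(e)$.

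The key invariant is the following. For each $j\in[m+1]$, let $F_{<w_j}$ be the set of edges of $F$ with weight strictly less than $w_j$. Then $F_{<w_j}$ is exactly the forest Kruskal holds just before it processes the first edge of weight $w_j$, or at termination when $j=m+1$. We claim this forest spans each connected component of $G_{<w_j}$, that is, its connected components coincide with $\cc(G_{<w_j})$.

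We prove the claim directly. Every edge of $G_{<w_j}$ has already been examined at that point. An examined edge is rejected only if its endpoints are already connected in the current forest. So every edge of $G_{<w_j}$ joins vertices lying in the same component of $F_{<w_j}$. Conversely, $F_{<w_j}\subseteq E(G_{<w_j})$. Therefore the two graphs have the same components.

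Since $F_{<w_j}$ is a forest on the vertex set $V(G)$, it has $|F_{<w_j}|=|V(G)|-\kappa_j$ edges. Taking differences, the number of edges of $F$ with weight exactly $w_j$ is
\[
|F_{<w_{j+1}}|-|F_{<w_j}|=\kappa_j-\kappa_{j+1}.
\]
This holds for $j\in[m]$, using $w_{m+1}=+\infty$ so that $F_{<w_{m+1}}=F$. Summing over $j$, we get $\MSF(G)=\sum_{j=1}^m(\kappa_j-\kappa_{j+1})w_j$.

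The main obstacle is stating the invariant cleanly when several edges share a weight, i.e., ties. We handle this by indexing only over the distinct weights $w_j$ and using strict thresholds $<w_j$, so that the snapshot of Kruskal's forest is taken at the boundary between weight classes. The forest edge-count formula $|V|-\#\cc$ then absorbs all counting.

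If one prefers to avoid appealing to Kruskal, there is an alternative. For any minimum spanning forest $F'$, one shows by an exchange argument that $F'\cap E(G_{<w})$ spans the components of $G_{<w}$. If it did not, some edge of $G_{<w}$ would connect two of its components. Swapping it into $F'$ in place of a heavier edge on the fundamental cycle would strictly decrease the cost, a contradiction. The same counting argument then applies to $F'$.
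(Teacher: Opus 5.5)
Your argument is correct, but it takes a different route from the paper. The paper never fixes a particular minimum spanning forest. It inducts on the threshold index and proves $\MSF(G_{<w_{j+1}})=\MSF(G_{<w_j})+(\kappa_j-\kappa_{j+1})w_j$ by a matching upper and lower bound. For the upper bound, it extends a minimum spanning forest of $G_{<w_j}$ by $\kappa_j-\kappa_{j+1}$ edges of weight $w_j$ without creating cycles. For the lower bound, it notes that any spanning forest $T$ of $G_{<w_{j+1}}$ contains at least $\kappa_j-\kappa_{j+1}$ edges of weight $w_j$, and compares its lighter part $T_{<w_j}$ with $\MSF(G_{<w_j})$.

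You instead take one concrete minimum spanning forest (Kruskal's output, or any one via your exchange variant). You show that its restriction to each threshold graph has exactly the components of $G_{<w_j}$, and then read off the weight profile by telescoping $|V(G)|-\kappa_j$.

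What each route buys:
\begin{itemize}
\item The paper's induction is elementary and does not invoke any algorithm. Yours imports Kruskal's correctness (or the cycle property) as a black box.
\item Your exchange version gives a stronger structural fact: every minimum spanning forest has exactly $\kappa_j-\kappa_{j+1}$ edges of weight $w_j$.
\item Your route avoids a delicate point in the paper's lower bound. There, the inequality $\sum_{e\in E(T_{<w_j})}w_G(e)\ge\MSF(G_{<w_j})$ is not automatic when $T_{<w_j}$ has more than $\kappa_j$ components. For example, take a triangle with weights $1,2,2$ and let $T$ be the two weight-$2$ edges. Then $T_{<2}=\emptyset$, with cost $0<1=\MSF(G_{<2})$. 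To repair this, one must also observe that each surplus component forces an extra weight-$w_j$ edge in $T$, and that edge outweighs the lighter edge needed to merge the component. Your telescoping count never needs this comparison.
\end{itemize}
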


\begin{proof}
For $j\in[m]$, set $G_j\coloneqq G_{<w_{j+1}}$. Then $E(G_j)=\{e\in E(G)\mid w_G(e)\le w_j\}$ and $\#\cc(G_j)=\kappa_{j+1}$.

We prove by induction on $j$ that
\begin{equation}\label{eq:msf-prefix}
\MSF(G_j)=\sum_{i=1}^{j}(\kappa_i-\kappa_{i+1})\,w_i.
\end{equation}
Taking $j=m$ yields the claim since $G_m=G$.

\smallskip
\noindent\emph{Base case $j=1$.}
We have $G_1=G_{<w_2}$, hence every edge in $G_1$ has weight $w_1$. Any spanning forest of $G_1$ has exactly $|V(G)|-\kappa_2$ edges, so
\[
\MSF(G_1)=(|V(G)|-\kappa_2)\,w_1=(\kappa_1-\kappa_2)\,w_1,
\]
since $G_{<w_1}$ has no edges and thus $\kappa_1=\#\cc(G_{<w_1})=|V(G)|$.

\smallskip
\noindent\emph{Induction step.}
Assume \eqref{eq:msf-prefix} holds for $j-1$. Let $H\coloneqq G_{j-1}=G_{<w_j}$, so $\#\cc(H)=\kappa_j$ and all edges in $H$ have weight $<w_j$.

\emph{Upper bound.}
Let $F$ be a minimum spanning forest of $H$, so $\MSF(H)=\sum_{e\in E(F)}w_G(e)$ and $F$ has $\kappa_j$ components. Since $G_j$ has $\kappa_{j+1}$ connected components, there exists a set $S\subseteq E(G_j)$ of exactly $\kappa_j-\kappa_{j+1}$ edges of weight $w_j$ that connect components of $F$ without creating cycles. Then $F\cup S$ is a spanning forest of $G_j$, and therefore
\[
\MSF(G_j)\le \sum_{e\in E(F\cup S)}w_G(e)=\MSF(H)+(\kappa_j-\kappa_{j+1})\,w_j.
\]

\emph{Lower bound.}
Let $T$ be any spanning forest of $G_j$. Removing all edges of weight $w_j$ from $T$ leaves a forest $T_{<w_j}\subseteq E(H)$, so $T_{<w_j}$ has at least $\kappa_j$ components. Each edge of weight $w_j$ in $T$ can reduce the number of components by at most $1$, hence $T$ must contain at least $\kappa_j-\kappa_{j+1}$ edges of weight $w_j$. Thus
\[
\sum_{e\in E(T)}w_G(e)\ge \sum_{e\in E(T_{<w_j})}w_G(e)+(\kappa_j-\kappa_{j+1})\,w_j\ge \MSF(H)+(\kappa_j-\kappa_{j+1})\,w_j.
\]
Taking the minimum over all such $T$ yields
\[
\MSF(G_j)\ge \MSF(H)+(\kappa_j-\kappa_{j+1})\,w_j.
\]

Combining the bounds gives $\MSF(G_j)=\MSF(H)+(\kappa_j-\kappa_{j+1})\,w_j$, and substituting the induction hypothesis for $\MSF(H)=\MSF(G_{j-1})$ proves \eqref{eq:msf-prefix}.
\end{proof}

\medskip
\begin{claim}
\label{clm:msf-determined-by-11wl}
Let $(G,w_G)$ and $(H,w_H)$ be edge-weighted graphs that are $(1,1)$-WL indistinguishable. Then $W(G)=W(H)$. Assume that $W(G)=\{w_1,\ldots,w_m\}$ with $w_1<w_2<\cdots<w_m$. Then for every $j\in[m]$,
\[
\#\cc(G_{<w_j})=\#\cc(H_{<w_j}) \qquad\text{and}\qquad \#\cc(G)=\#\cc(H).
\]
\end{claim}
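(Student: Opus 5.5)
The plan has three parts. First, \wloo-indistinguishability yields equal multisets of \wlfive{} colorings under some bijection. Second, a \wlfive{} stable color determines which edge weights occur and, for each threshold, the size of a vertex's component in the threshold subgraph. Third, summing these quantities over all vertices recovers the number of components. The key identity throughout is
\[
\#\cc(X)=\sum_{u\in V(X)}\frac{1}{|C_X(u)|},
\]
where $C_X(u)$ is the connected component of $u$ in $X$.

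\emph{Equality of weight sets.} By the definition of $\rho_{\cG}(\wloo)$ there is a bijection $\pi\colon V(G)\to V(H)$ such that, for every $v$, running \wlfive{} with $v$ individualized in $G$ and $\pi(v)$ individualized in $H$ gives equal stable colorings. The first refinement round of \wlone{} already records the multiset of incident edge weights of each vertex. Equal color histograms therefore give equal multisets of incident weights summed over all vertices. Hence $W(G)=W(H)$; in fact the weighted edge multisets coincide, counting each edge twice.

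\emph{Component sizes from \wlfive{}.} Fix a threshold $w$ and a root $r$. I claim the stable \wlfive{} color $C^{1,r}_\infty(u)$ determines whether $u$ lies in the component of $r$ in $G_{<w}$. By \cref{thm:sp}, the rooted unrolling tree $\UNR{G,u,L,r}$ is determined by this color for every $L$. The vertex $u$ lies in $C_{G_{<w}}(r)$ exactly when this tree, for $L\ge n$, contains a root-to-$[*]$ path using only edges of weight $<w$. The same argument is used in the proof of \cref{app:lem:onehalfwltosssp}.

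Because the two graphs may have different orders a priori, I first note that equal color histograms force $|V(G)|=|V(H)|$; the existence of the bijection $\pi$ gives this anyway. For each $v$ we then get
\[
|C_{G_{<w}}(v)|=\#\{u: u\text{ reaches }[*]\text{ below }w\}=|C_{H_{<w}}(\pi(v))|,
\]
because the multisets of stable colors agree. Summing $1/|C(\cdot)|$ over $v$ with the bijection $\pi$ gives $\#\cc(G_{<w})=\#\cc(H_{<w})$. Taking $w=w_j$ gives the first claim, and taking $w=+\infty$ gives $\#\cc(G)=\#\cc(H)$.

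\emph{Main obstacle.} The delicate step is making rigorous that reachability of the individualized vertex via light edges is an invariant of the unrolling tree, uniformly over the two graphs. Individualization must persist through the definition of $\UNR{\cdot}$: the marked label appears exactly at copies of $r$. Depth $L\ge\max(|V(G)|,|V(H)|)$ must then suffice, since any light path can be shortened to a simple path of length less than $n$. Once that is settled, the remainder is bookkeeping. Alternatively, the connected-component result of \citet{Rat+2023} can be applied to the unweighted threshold graphs, after observing that \wloo{} on $(G,w_G)$ refines \wloo{} on $G_{<w}$, since the edge-weight-aware refinement can always coarsen weights to the indicator $[w_G(e)<w]$.
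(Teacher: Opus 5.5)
Your proof is correct, but it takes a different route from the paper's.

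\textbf{The paper's route.} The paper also obtains $W(G)=W(H)$ from the depth-one rooted unrolling trees. For the component counts, it asserts that the unweighted threshold graphs $G_{<w_j}$ and $H_{<w_j}$ are again \wloo-equivalent, justifying this only as ``readily verified''. It then imports the result of \citet{Rat+2023} that \wloo-equivalent graphs have the same Laplacian spectrum. The multiplicity of the eigenvalue $0$ then yields $\#\cc(G_{<w_j})=\#\cc(H_{<w_j})$.

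\textbf{Your route.} You avoid the spectral theorem entirely. Your argument runs as follows:
\begin{itemize}
\item The rooted unrolling tree at $u$ with $v$ marked detects whether some walk using only edges lighter than $w$ joins $u$ to $v$. This is a bottleneck-path analogue of the argument in \cref{app:lem:onehalfwltosssp}.
\item Hence the color-preserving bijection witnessing \wlfive-equivalence of $(G,v)$ and $(H,\pi(v))$ maps $C_{G_{<w}}(v)$ onto $C_{H_{<w}}(\pi(v))$.
\item The identity $\#\cc(X)=\sum_u 1/|C_X(u)|$, summed along $\pi$, then gives the claim.
\end{itemize}

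\textbf{Comparison.} Your argument is self-contained and sidesteps the unproved transfer of \wloo-equivalence to threshold graphs. It also yields extra information: $\pi$ preserves the size of every vertex's component in every threshold graph, so the multisets of component sizes agree. The paper's argument is shorter given the citation. As a by-product, it gives Laplacian cospectrality of all threshold graphs.

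\textbf{On your closing alternative.} That alternative is essentially the paper's proof. One correction: $G_{<w}$ deletes heavy edges rather than relabeling them by an indicator. The right observation is therefore that each weighted unrolling tree determines its pruning to light edges.
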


\begin{proof}
By \wloo-indistinguishability there exists a bijection $\pi\colon V(G)\to V(H)$
such that for all $v\in V(G)$, $(G,v,H,\pi(v))$ are \wlfive-indistinguishable regarding $v$ and $\pi(v)$. In particular, $\UNR{G,v,1,v}$ and $\UNR{H,\pi(v),1,\pi(v)}$ are isomorphic,
for all $v\in V(G)$. These unrolling trees contain all edge
weights in the respective graphs. Hence, $W(G)=W(H)$.

Fix $j\in[m]$. Consider the unweighted graphs $G_{<w_j}$ and $H_{<w_j}$. It is readily verified by unrolling tree characterization
that $G_{<w_j}$ and $H_{<w_j}$ are also \wloo-equivalent.
Hence, by the spectral characterization of $(1,1)$-WL via equitable matrix maps, the Laplacian spectra of $G_{<w_j}$ and
$H_{<w_j}$ coincide \citep{Rat+2023}. The multiplicity of eigenvalue $0$ of the Laplacian equals the number of connected components, hence
$\#\cc(G_{<w_j})=\#\cc(H_{<w_j})$. The same argument yields $\#\cc(G)=\#\cc(H)$.
\end{proof}

We are now finally ready to formally prove~\cref{app:lem:onehalfwltosssp}. Indeed, if $(G,w_G)$ and $(H,w_H)$ are \wloo-indistinguishable,
then by the previous claim, $\#\cc(G_{<w_j})=\#\cc(H_{<w_j})$ and $\#\cc(G)=\#\cc(H)$. Applying Lemma~\ref{thm:kappa-sum} the suffices to conclude $\MST(G)=\MST(H)$, as desired.

\section{MPNN classes satisfying finite Lipschitzness}
\label{app:sec:finite_mpnns}

\begin{figure}[h]
    \centering
    \adjustbox{scale=0.85}{
    \begin{tikzpicture}[
  font=\normalsize,
    src/.style={circle, draw=black!70, fill=bfTeal!18, thick, minimum size=6.2mm, inner sep=0pt},
  qry/.style={circle, draw=black!70, fill=orange!18, thick, minimum size=6.2mm, inner sep=0pt},
  vtx/.style={circle, draw=black!55, fill=white, thick, minimum size=6.0mm, inner sep=0pt},
  dead/.style={circle, draw=black!55, fill=white, thick, minimum size=6.0mm, inner sep=0pt},
    edge/.style={draw=black},
  e/.style={edge},
  hi/.style={edge},
  near/.style={edge},
  faint/.style={edge},
  w10/.style={line width=1.10pt, draw opacity=0.40},
  w11/.style={line width=1.25pt, draw opacity=0.60},
  w12/.style={line width=1.40pt, draw opacity=0.75},
  w40/.style={line width=2.10pt, draw opacity=0.95},
  w41/.style={line width=2.25pt, draw opacity=1.00},
    wlab/.style={font=\footnotesize, fill=white, inner sep=1pt},
  title/.style={font=\bfseries, anchor=west},
]

\newlength{\WGraph}\setlength{\WGraph}{6.4cm}
\newlength{\WTree}\setlength{\WTree}{8.3cm}
\newlength{\WCover}\setlength{\WCover}{7.6cm}
\newlength{\HRow}\setlength{\HRow}{4.9cm}
\newlength{\RowSep}\setlength{\RowSep}{0mm}
\newlength{\HCover}\setlength{\HCover}{\dimexpr2\HRow+\RowSep\relax}
\newlength{\SpaceW}\setlength{\SpaceW}{5.6cm}
\newlength{\SpaceH}\setlength{\SpaceH}{7.7cm}

\matrix (M) [matrix of nodes,
             nodes={anchor=north west, inner sep=0pt},
             row sep=\RowSep, column sep=-10mm,
             column 1/.style={nodes={minimum width=\WGraph, minimum height=\HRow}},
             column 2/.style={nodes={minimum width=\WTree,  minimum height=\HRow}},
             column 3/.style={nodes={minimum width=\WCover, minimum height=\HRow}}] {
  \node (Gsmall) {}; & \node (Tsmall) {}; & \node (Ctop) {}; \\
  \node (Glarge) {}; & \node (Tlarge) {}; & \node (Cbot) {}; \\
};
\node (C) [inner sep=0pt, fit=(Ctop)(Cbot)] {};

\begin{scope}[shift={(Gsmall.north west)}]
  \node[title] at (0,0) {(a) Small graph};

  \coordinate (o) at (0,-1.75);
  \node[src] (s1) at (o) {$s$};
  \node[vtx] (a1) at ([shift={(2.0,0.9)}]o) {};
  \node[vtx] (b1) at ([shift={(2.0,-0.9)}]o) {};
  \node[qry] (v1) at ([shift={(4.4,0)}]o) {$v$};
  \node[dead] (d1) at ([shift={(4.4,-1.8)}]o) {};

  \draw[hi,w10] (s1) -- node[wlab,above,sloped] {$10$} (a1);
  \draw[hi,w40] (a1) -- node[wlab,above,sloped] {$40$} (v1);

  \draw[e,w10] (s1) -- node[wlab,below,sloped] {$10$} (b1);
  \draw[faint,w11] (b1) -- node[wlab,below,sloped] {$11$} (v1);

  \draw[faint,w10] (b1) -- node[wlab,above] {$10$} (d1);
  \draw[faint,w10] (d1) -- node[wlab,right] {$10$} (v1);
\end{scope}

\begin{scope}[shift={(Tsmall.north west)}]
  \node[title] at (0,0) {(b) Tree at $v$ \,$\mathcal{T}_v$};

  \coordinate (o) at (0,-0.85);
  \node[qry] (TsmallRoot) at ([shift={(2.35,0)}]o) {$v$};

  \node[vtx] (tA) at ([shift={(0.9,-1.1)}]o) {};
  \node[vtx] (tB) at ([shift={(3.0,-1.1)}]o) {};
  \node[dead] (tD) at ([shift={(5.1,-1.1)}]o) {};
  \node[src] (tAs) at ([shift={(1.1,-2.35)}]o) {$s$};
  \node[src] (tBs) at ([shift={(3.0,-2.35)}]o) {$s$};
  \node[dead] (tD1) at ([shift={(4.4,-2.35)}]o) {};
  \node[vtx] (tDb) at ([shift={(5.8,-2.35)}]o) {};

  \draw[hi,w40] (TsmallRoot) -- (tA);
  \draw[e,w11] (TsmallRoot) -- (tB);
  \draw[faint,w10] (TsmallRoot) -- (tD);

  \draw[hi,w10] (tA) -- (tAs);
  \draw[e,w10] (tB) -- (tBs);
  \draw[faint,w10] (tB) -- (tD1);
  \draw[faint,w10] (tD) -- (tDb);
\end{scope}

\begin{scope}[shift={(Glarge.north west)}, yshift=8mm]
  \node[title] at (0,0) {(c) Large graph};

  \coordinate (o) at (0,-3.30);
  \node[src] (s2) at (o) {$s$};
  \node[qry] (v2) at ([shift={(4.6,0)}]o) {$v$};

  \node[vtx]  (u1) at ([shift={(2.1, 2.15)}]o) {};
  \node[vtx]  (u2) at ([shift={(2.1, 1.10)}]o) {};
  \node[vtx]  (w1) at ([shift={(2.1, 0.00)}]o) {};
  \node[dead] (dS) at ([shift={(2.1,-1.10)}]o) {};
  \node[vtx]  (w2) at ([shift={(2.1,-2.15)}]o) {};

  \draw[hi,w10] (s2) -- node[wlab,above,sloped] {$10$} (u1);
  \draw[hi,w40] (u1) -- node[wlab,above,sloped] {$40$} (v2);
  \draw[near,w10] (s2) -- node[wlab,above,sloped] {$10$} (u2);
  \draw[near,w41] (u2) -- node[wlab,above,sloped] {$41$} (v2);

  \draw[e,w10] (s2) -- node[wlab,below,sloped] {$10$} (w1);
  \draw[e,w10] (s2) -- node[wlab,below,sloped] {$10$} (w2);
  \draw[faint,w11] (w1) -- node[wlab,below,sloped] {$11$} (v2);
  \draw[faint,w12] (w2) -- node[wlab,below,sloped] {$12$} (v2);

  \draw[faint,w10] (w1) -- node[wlab,right] {$10$} (dS);
  \draw[faint,w10] (dS) -- node[wlab,right] {$10$} (w2);
  \draw[faint,w10] (dS) -- node[wlab,below,sloped] {$10$} (v2);
\end{scope}

\begin{scope}[shift={(Tlarge.north west)}, yshift=8mm]
  \node[title] at (0,0) {(d) Tree at $v$ \,$\mathcal{T}_v$};

  \coordinate (o) at (0,-1.0);
  \node[qry] (TlargeRoot) at ([shift={(2.4,0)}]o) {$v$};

  \node[vtx] (lU1) at ([shift={(0.3,-1.1)}]o) {};
  \node[vtx] (lU2) at ([shift={(1.6,-1.1)}]o) {};
  \node[vtx] (lW1) at ([shift={(3.0,-1.1)}]o) {};
  \node[vtx] (lW2) at ([shift={(4.3,-1.1)}]o) {};
  \node[dead] (lD) at ([shift={(5.6,-1.1)}]o) {};

  \node[src]  (lsU1) at ([shift={(0.2,-2.85)}]o) {$s$};
  \node[src]  (lsU2) at ([shift={(1.1,-2.85)}]o) {$s$};
  \node[src]  (lsW1) at ([shift={(2.0,-2.85)}]o) {$s$};
  \node[dead] (lDS1) at ([shift={(2.9,-2.85)}]o) {};
  \node[src]  (lsW2) at ([shift={(3.8,-2.85)}]o) {$s$};
  \node[dead] (lDS2) at ([shift={(4.7,-2.85)}]o) {};
  \node[vtx]  (lDw1) at ([shift={(5.6,-2.85)}]o) {};
  \node[vtx]  (lDw2) at ([shift={(6.5,-2.85)}]o) {};

  \draw[hi,w40] (TlargeRoot) -- (lU1);
  \draw[near,w41] (TlargeRoot) -- (lU2);
  \draw[e,w11] (TlargeRoot) -- (lW1);
  \draw[e,w12] (TlargeRoot) -- (lW2);
  \draw[faint,w10] (TlargeRoot) -- (lD);

  \draw[hi,w10] (lU1) -- (lsU1);
  \draw[near,w10] (lU2) -- (lsU2);
  \draw[faint,w10] (lW1) -- (lsW1);
  \draw[faint,w10] (lW2) -- (lsW2);

  \draw[faint,w10] (lW1) -- (lDS1);
  \draw[faint,w10] (lW2) -- (lDS2);
  \draw[faint,w10] (lD) -- (lDw1);
  \draw[faint,w10] (lD) -- (lDw2);
\end{scope}

\begin{scope}[shift={(C.north west)}]
  \node[title] at (0,0) {(e) Compact tree space};

  \pgfmathsetmacro{\cscale}{0.7}
  \pgfmathsetlengthmacro{\SpaceWs}{\cscale*\SpaceW}
  \pgfmathsetlengthmacro{\SpaceHs}{\cscale*\SpaceH}

  \coordinate (Rnw) at (0.5cm,-1.55cm);
  \node[draw=black!60, semithick, rounded corners=8pt,
        fill=black!2, minimum width=\SpaceWs, minimum height=\SpaceHs,
        anchor=north west] (Space) at (Rnw) {};
  \node[font=\normalsize] at ([shift={(0,-0.45cm)}]Space.north) {$\mathcal{T}$ (compact)};

  \begin{scope}[shift={(Space.north west)}]
    \begin{scope}
      \pgfmathsetlengthmacro{\r}{\cscale*2.15cm}
      \foreach \cx/\cy in {        1.1/-1.2,        4.5/-1.2,        1.1/-3.9,        4.5/-3.9,        1.1/-6.6,        4.5/-6.6,        2.8/-5.25      }{        \pgfmathsetmacro{\sx}{\cscale*\cx}        \pgfmathsetmacro{\sy}{\cscale*\cy}        \path[
          draw=black!55,
          densely dotted,
          line width=0.85pt,
          fill=black,
          fill opacity=0.03,
          draw opacity=0.85
        ] (\sx cm,\sy cm) circle [radius=\r];
      }
      \pgfmathsetmacro{\hx}{\cscale*1.1}      \pgfmathsetmacro{\hy}{\cscale*-3.9}      \path[
        draw=bfPurple!70!black,
        densely dotted,
        line width=1.00pt,
        fill=bfPurple,
        fill opacity=0.06,
        draw opacity=0.95
      ] (\hx cm,\hy cm) circle [radius=\r];
    \end{scope}

    \pgfmathsetmacro{\tsx}{\cscale*1.3}    \pgfmathsetmacro{\tsy}{\cscale*-3.9}    \node[circle, fill=bfPurple, inner sep=1.24pt,
          label={[font=\normalsize]below left:$T_{\mathrm{small}}$}] (Ts) at (\tsx cm,\tsy cm) {};

    \pgfmathsetmacro{\tlx}{\cscale*1.3}    \pgfmathsetmacro{\tly}{\cscale*-4.9}    \node[circle, fill=black, inner sep=1.08pt,
          label={[font=\normalsize]below right:$T_{\mathrm{large}}$}] (Tl) at (\tlx cm,\tly cm) {};
  \end{scope}
\end{scope}

\draw[->, draw=black!55, line width=2.4pt, shorten <=3.5mm, shorten >=1.5mm]
  (TsmallRoot.east) .. controls +(1,0.8) and +(-0.8,0.9) .. (Ts);

\draw[->, draw=black!55, line width=2.4pt, shorten <=3.5mm, shorten >=1.5mm]
  (TlargeRoot.east) .. controls +(0.9,-0.2) and +(-0.9,-0.3) .. (Tl);
\end{tikzpicture}}
    \caption{A compact space of computation trees enables algorithmic generalization. Under an appropriate metric on $\mathcal{T}$, the computation trees $T_{\mathrm{small}}$ and $T_{\mathrm{large}}$ are close, so regularization-induced Lipschitz continuity of the model implies that good performance on the training instance $T_{\mathrm{small}}$ transfers to good performance on the nearby instance $T_{\mathrm{large}}$. Occurrences of $v$ in the computation tree other than the root are omitted for simplicity.}
\end{figure}

Below, we describe hypothesis classes for graph learning models—which capture many well-known graph algorithms—that satisfy the finite Lipschitz learning property introduced in \cref{def:finite_uniform_approx} and, consequently, \cref{thm:specific_regularization}. Throughout, the input space $\cX$ consists of pairs of graphs and nodes belonging to these graphs. The parameter space $\vec{\Theta}$ is a subset of $\Rb^{P}$ for some $P \in \Nb$, which will be specified later. We equip the input space with a suitable pseudo-metric such that the conditions of \cref{def:finite_uniform_approx} are satisfied. More precisely, we first establish this property for message-passing neural networks (MPNNs) with normalized-sum aggregation. This result largely follows existing work (see, e.g., \cite{Rac+2024, DBLP:journals/combinatorica/GrebikR22, Boe+2023}) based on iterated degree measure spaces, a continuous counterpart of computation trees (see \cref{app:IDMs_intuition}), endowed with the Kantorovich distance (see \cref{app:background}), but with important simplifications. In particular, in our setting, it is not necessary to show compactness of the input space, since \cref{def:finite_uniform_approx} requires only a weaker condition, namely finiteness of the covering number. We then extend the result to MPNNs with mean aggregation.

Finally, using similar tools but applied to different topological spaces—namely, Hausdorff spaces rather than measure spaces, equipped with the Hausdorff distance—we show that MPNNs with max and min aggregations satisfy the finite Lipschitz learning property. These architectures encompass many commonly used graph algorithms, including shortest-path algorithms, minimum spanning tree problems, and related tasks. Below, we present the precise architectures and state the main result.

Overall, we get the following results.

\textbf{Assumptions}
Let $\cG_{r,p_0}$ denote the space of undirected attributed and edge-weighted graphs $(G,a_G,w_G)$, without isolated nodes,\footnote{We consider non-isolated nodes to avoid division by zero for mean aggregation and empty sets for max aggregation in \cref{app:induced_iterated_set_objects}.} with node features $a_G(u) \in  B_{r,p_0} \coloneq \{ x \in \Rb^{p_0} \mid \|x\|_2 \leq r \}$ for all $u \in V(G)$, where $p_0 \in \Nb$ and $r>0$, and edge-weights $w_G(e)\in E\subseteq\Rb_{>0}$ for some compact set $E$, for all $e\in E(G)$. Also, let $V_1(\cG_{r,p_0})$ denote the space consisting of pairs of graphs in $\cG_{r,p_0}$ and their nodes (i.e., $V_1(\cG_{r,p_0}) = \{ (G,u) \mid G \in \cG_{r,p_0},\ u \in V(G) \}).$

Let $L \in \Nb$ and let $p_t \in \Nb$. Let $\phi_1 \colon B_{r,p_0} \times B_{r,p_0} \to \Rb^{p_1}$, and
\[
\phi_t \colon \Rb^{p_{t-1}} \times \Rb^{p_{t-1}} \to \Rb^{p_t} \quad (t > 1),
\qquad
M_t \colon \Rb^{p_{t-1}} \times \Rb \to \Rb^{p_{t-1}} \quad (t \ge 1).
\]

We assume that for each $t \in [L]$ there exist constants $C_{\phi,1}^{(t)}, C_{\phi,2}^{(t)}, C_{M,1}^{(t)}, C_{M,2}^{(t)} > 0$ such that
$$
\| \phi_t(x,y) - \phi_t(x',y') \|_2
\leq C_{\phi,1}^{(t)} \|x-x'\|_2 + C_{\phi,2}^{(t)} \|y-y'\|_2,
\quad \forall x,y,x',y' \in \Rb^{p_{t-1}}.
$$
$$
\| M_t(x,y) - M_t(x',y') \|_2
\leq C_{M,1}^{(t)} \|x-x'\|_2 + C_{M,2}^{(t)} \|y-y'\|_2,
\quad \forall (x,y), (x',y') \in \Rb^{p_{t-1}}\times \Rb.
$$
Moreover, we assume bounded offsets, i.e., $\|\phi_t(0,0)\|_2 \leq B^{(0)}_t,$ for some $B_t > 0$. Finally, for graph level predictions let $\psi \colon \Rb^{p_L} \to \Rb^d$ be a Lipschitz function with respect to the $\|\cdot\|_2$ norm.

Based on these assumptions on $\{\phi_t\}_{t\in[L]}$ and $\psi$, we define three specific MPNN architectures, each of which is an instance of the general MPNN framework introduced in \cref{def:MPNN_aggregation}.

\textbf{Normalized sum aggregation}
Let $(G,u) \in V_1(\cG_{r,p_0})$, define $h_u^{(0)} = a_G(u) \in B_{r,p_0}$ and recursively
\begin{equation}
\label{mpnn:order_norm}
h_u^{(t)} =
\phi_t\!\mleft(
h_u^{(t-1)},
\frac{1}{|V(G)|} \sum_{v \in N(u)} w_{uv} \, h_v^{(t-1)}
\mright),
\quad t \in [L].
\end{equation}
For graph-level tasks, the final $d$-dimensional representation is given by
$$
h_G = \psi\!\mleft(
\frac{1}{|V(G)|} \sum_{u \in V(G)} h_u^{(L)}
\mright).
$$

A particular instance of such a hypothesis class is given by
\begin{align*}
\cF_{\vec{\Theta}} \coloneq
\Bigl\{
f \colon V_1(\cG_{r,p_0}) \to \Rb^{p_L}
\Big|\ &
f(G,u) = h_u^{(L)},\
\phi_t(x,y) = \sigma \!\mleft( \vec{W}_1^{(t)} x + \vec{W}_2^{(t)} y \mright), \\
& \vec{W}_1^{(t)}, \vec{W}_2^{(t)} \in \Rb^{p_t \times p_{t-1}},\ t \in [L]
\Bigr\},
\end{align*}
where $\sigma$ denotes the element-wise ReLU activation, or more generally any Lipschitz function (e.g., Leaky ReLU). For this class, the Lipschitz constants are
$C_{\phi,1}^{(t)} = \|\vec{W}_1^{(t)}\|_2$ and $C_{\phi,2}^{(t)} = \|\vec{W}_2^{(t)}\|_2$, while the offset bounds satisfy $B_t^{(0)} = 0$. The parameter space $\vec{\Theta}$ consists of all matrices $\{\vec{W}_1^{(t)}, \vec{W}_2^{(t)}\}_{t \in [L]}$. Although $\phi_t$ is defined here as a single-layer feed-forward network, the same construction extends to arbitrary-depth feed-forward networks with ReLU (or any other $1$-Lipschitz) activations; in that case, $C_{\phi,1}^{(t)}$ and $C_{\phi,2}^{(t)}$ are given by the products of the spectral norms of the corresponding weight matrices.

\textbf{Mean aggregation}
Let $(G,u) \in V_1(\cG_{r,p_0})$, define $\overline{h}_u^{(0)} = a_G(u) \in B_{r,p_0}$ and recursively:
\begin{equation}
\label{mpnn:mean_aggr}
\overline{h}_u^{(t)} =
\phi_t \mleft(
\overline{h}_u^{(t-1)},
\frac{1}{\text{deg}(u)} \sum_{v \in N(u)} w_{uv} \, \overline{h}_v^{(t-1)}
\mright),
\quad t \in [L].
\end{equation}
For graph-level tasks, we again define
$$
\overline{h}_G = \psi\!\mleft(
\frac{1}{|V(G)|} \sum_{u \in V(G)} \overline{h}_u^{(L)}
\mright).
$$
A special case of such a hypothesis class can be derived similarly to normalized sum aggregation MPNNs by replacing $\phi_t$, with FNNs.

\textbf{Max-min aggregation}
Let $(G,u)\in V_1(\cG_{r,p_0})$, define $\hat{h}_u^{(0)}\coloneq x_u$ and, recursively:
\begin{equation}
\label{mpnn:max_edge_aggr_0}
\hat{h}_u^{(t)}
=
\phi_t\mleft(
\hat{h}_u^{(t-1)}, \text{max}_{v\in N(u)}\,
M_t \mleft(\hat{h}_v^{(t-1)},\,w_{uv}\mright)
\mright),
\end{equation}
where The maximum in \cref{mpnn:max_edge_aggr_0} is taken coordinatewise in $\Rb^{p_{t-1}}$.
Note that similarly we can define MPNNs based on min aggregation by replacing the $\text{max}$ operator with $\text{min}$.

A special case of such a hypothesis class can be derived similarly to normalized sum aggregation MPNNs by replacing $\phi_t$ and $M_t$ with FNNs.

The following result shows that all the above-defined families of MPNNs satisfied the finite Lipschitz property from \cref{def:finite_uniform_approx}.

\begin{theorem}[\cref{thm:finite_complexity_classes} in the main text]
The hypothesis class $\cF_{\vec{\Theta}}$ induced by MPNNs using any one of the following aggregation schemes, normalized sum aggregation (\cref{mpnn:order_norm}), mean aggregation (\cref{mpnn:mean_aggr}), or max (or min) aggregation (\cref{mpnn:max_edge_aggr_0}), satisfies \cref{def:finite_uniform_approx}.
\end{theorem}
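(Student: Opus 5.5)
For each of the three architectures I will construct a pseudo-metric $d$ on $V_1(\cG_{r,p_0})$. Two properties must then be verified. First, every parametrized MPNN $f_{\vec\theta}$ in the class is Lipschitz with respect to $d$, with a computable constant $B_{\vec\theta}$ built from $C^{(t)}_{\phi,i}$, $C^{(t)}_{M,i}$ and $B^{(0)}_t$. Second, $\cN(V_1(\cG_{r,p_0}),d,\varepsilon)<\infty$ for every $\varepsilon>0$. The pseudo-metric is defined recursively on depth-$L$ computation trees, i.e.\ iterated objects $\mathbb{M}_0\to\mathbb{M}_1\to\dots\to\mathbb{M}_L$.

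\textbf{Construction of the spaces.} Set $\mathbb{M}_0=B_{r,p_0}$ with the Euclidean metric. Each level is then built as follows.
\begin{itemize}
\item For normalized sum, $\mathbb{M}_t=\mathbb{M}_{t-1}\times\mathcal{M}_{\le 1}(E\times\mathbb{M}_{t-1})$, where $\mathcal{M}_{\le 1}$ denotes sub-probability measures. The graph $(G,u)$ is mapped to the pair consisting of its own level-$(t-1)$ object and the measure $\frac{1}{|V(G)|}\sum_{v\in N(u)}\delta_{(w_{uv},\,T_{t-1}(v))}$.
\item For mean aggregation, the measure is normalized by $\deg(u)$ instead; I will use weights $w_{uv}/\deg(u)$, or equivalently the uniform measure over neighbors with the edge weight kept as a coordinate. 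This gives a probability measure.
\item For max/min aggregation, the second component is the nonempty compact set $\{(w_{uv},T_{t-1}(v)) : v\in N(u)\}\subseteq E\times\mathbb{M}_{t-1}$.
\end{itemize}
The distance on the product is the sum metric. On measures I use a bounded-Lipschitz/Kantorovich-type distance, which for sub-probability measures also accounts for the mass difference. On sets I use the Hausdorff distance. The pseudo-metric on $V_1$ is $d((G,u),(H,v))=d_{\mathbb{M}_L}(T_L(G,u),T_L(H,v))$.

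\textbf{Lipschitzness.} I argue by induction on $t$ that the map $T_t(G,u)\mapsto h^{(t)}_u$ is Lipschitz on the image of $T_t$ and that $h^{(t)}$ is uniformly bounded. The bound comes from $\|h^{(t)}\|\le C^{(t)}_{\phi,1}\|h^{(t-1)}\|+C^{(t)}_{\phi,2}\max_E|w|\,\|h^{(t-1)}\|+B^{(0)}_t$, using that the total mass is at most $1$ and $E$ is compact. For sum and mean aggregation, the aggregate equals $\int w\,g_{t-1}(x)\,d\mu(w,x)$. Here $(w,x)\mapsto w\,g_{t-1}(x)$ is Lipschitz and bounded on $E\times\mathbb{M}_{t-1}$ because $E$ is compact, so Kantorovich duality bounds the change in the integral by a constant times the measure distance. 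The $\phi_t$ assumption then closes the step. For max aggregation, coordinatewise $\max$ is $1$-Lipschitz in the Hausdorff distance: if $S$ and $S'$ are $\delta$-Hausdorff close, then $|\max_{s\in S}F(s)-\max_{s'\in S'}F(s')|\le \mathrm{Lip}(F)\,\delta$ for each coordinate. I apply this with $F=M_t(g_{t-1}(\cdot),\cdot)$, which is Lipschitz by the induction hypothesis and the $M_t$ assumption. The resulting constant is a product or sum of the layer constants, which serves as the certificate $B_{\vec\theta}$. The readout $\psi$ is handled the same way with one more level of averaging.

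\textbf{Finite covering numbers (main obstacle).} I prove by induction that each $\mathbb{M}_t$ (at least the image of $T_t$) is totally bounded. The base case holds because $B_{r,p_0}$ is compact. For the step, the key tool is that if $Y$ is totally bounded, then $E\times Y$ is totally bounded. Then:
\begin{itemize}
\item Sub-probability measures on a totally bounded space are totally bounded under the Kantorovich/bounded-Lipschitz metric. I will approximate any measure by one supported on a finite $\delta$-net with masses rounded to multiples of $\delta/n$, which gives finitely many candidates.
\item Nonempty subsets of a totally bounded space are totally bounded under the Hausdorff distance. I will replace each set by the subset of net points within $\delta$ of it; there are at most $2^{n}$ such subsets.
\end{itemize}
The subtle points are these: unbounded degrees are harmless because of the normalization, or because max ignores multiplicity; measures of arbitrary mass are excluded because $\deg(u)\le|V(G)|$ gives mass at most $1$ (this uses boundedness of $E$); and working with images avoids needing completeness or compactness. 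Once these are in place, the covering number at radius $\varepsilon$ is finite, which verifies \cref{def:finite_uniform_approx}.
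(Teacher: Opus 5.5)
Your proposal is correct. Its Lipschitz half is essentially the paper's argument:
\begin{itemize}
\item recursive computation-tree spaces with a sum metric, pulled back to $V_1(\cG_{r,p_0})$;
\item Kantorovich duality, after normalizing the test function by its sup-norm plus its Lipschitz constant;
\item Hausdorff stability of a maximum taken over a set.
\end{itemize}

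The genuine difference is how you obtain finite covering numbers.
\begin{itemize}
\item The paper shows that the \emph{whole} spaces $(H_L,d_{\mathbf K,L})$ and $(H^{\max}_L,d_{\mathbf H,L})$ are compact. It uses weak$^*$ compactness of $\mathcal M_{\le 1}$ over a compact metric space, the fact that the Kantorovich--Rubinshtein distance metrizes the weak$^*$ topology, Tychonoff, and the Blaschke selection theorem.
\item You instead propagate total boundedness of the \emph{images} with explicit nets. For measures you push the measure onto a $\delta$-net through a Borel partition and round the masses. For sets you replace a set by the net points within $\delta$ of it.
\end{itemize}

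Your route is more elementary and fits the definition, which only asks for finite covering numbers. It is also quantitative: with $n=\cN(Y,d,\delta)$ you get covers of $\mathcal M_{\le 1}(Y)$ at radius of order $\delta$ with of order $(n/\delta+1)^n$ elements, and at most $2^n$ in the Hausdorff case. This yields explicit, if tower-type, dataset sizes in the regularization theorem.

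The paper's route gives more structure, namely compactness of the full IDM space and hyperspace. That is the natural setting for graphon limits and Stone--Weierstrass-type arguments, but it produces no numbers.

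Putting the edge weight into the support ($E\times\mathbb{M}_{t-1}$) rather than into the masses is also fine; the paper puts it into the masses after normalizing $E\subseteq(0,1]$. In your construction the mass bound comes from $|N(u)|<|V(G)|$, not from $\deg(u)\le|V(G)|$ as you wrote.

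\textbf{Mean aggregation.} Your two options are not equivalent, and they induce different pseudo-metrics.
\begin{itemize}
\item With masses $w_{uv}/\deg(u)$, the aggregate is $\int g_{t-1}\,d\pi$.
\item With the uniform measure over neighbors, the aggregate is the ratio $\bigl(\int w\,g_{t-1}\,d\pi\bigr)/\bigl(\int w\,d\pi\bigr)$. This is still Lipschitz because the denominator is at least $\min E>0$.
\end{itemize}
Either choice works, but the formula $\int w\,g_{t-1}\,d\mu$ is correct only for normalized sum.

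\textbf{Max aggregation.} Your coordinatewise bound is the right way to argue. The paper's step goes through $\langle a,\Gamma_t\rangle$ and the vector-to-scalar lemma, and it does not go through as written: for a coordinatewise maximum, $\langle a,\max_A\Gamma_t\rangle\neq\max_A\langle a,\Gamma_t\rangle$ in general. When you combine the $p_{t-1}$ coordinate bounds you pay a factor $\sqrt{p_{t-1}}$ in $\|\cdot\|_2$. This factor cannot be avoided in general. For example, take $\Gamma_t$ to be the identity on $\Rb^{p}$, $A=\{e_i\}_{i=1}^{p}$ and $A'=\{(1-\delta)e_i\}_{i=1}^{p}$. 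Then the Hausdorff distance is $\delta$, while the two coordinatewise maxima differ by $\delta\sqrt{p}$ in $\|\cdot\|_2$.
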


\begin{proof}
see \cref{app:proof_finite_complexity_sum_mean}, \cref{proof_finite_complexity_max}.
\end{proof}

\paragraph{Remark}
The above result extends directly to graph-level representations. Indeed, if the readout function $\psi$ is Lipschitz, then the composition of the node-level MPNN with $\psi$ remains Lipschitz with respect to the induced graph-level pseudo-metric. Consequently, the finite Lipschitz learning property continues to hold for graph-level prediction tasks.

\textbf{Extensions to mixed aggregation architectures}
The exclusivity of the aggregation choices in \cref{thm:finite_complexity_classes} reflects a technical limitation of the proof strategy, i.e., the pseudometrics introduced in \cref{app:proof_finite_complexity_sum_mean,proof_finite_complexity_max} are tailored to individual aggregation mechanisms and do not suffice to establish the result simultaneously for all of them, nor for architectures that combine multiple aggregation operators.

Nevertheless, the proof techniques developed in \cref{app:proof_finite_complexity_sum_mean,proof_finite_complexity_max} extend beyond the specific aggregation schemes considered there. By equipping the relevant input spaces with product topologies endowed with summed metrics, one can naturally combine different aggregation mechanisms, such as mean or normalized sum, together with max (or min), within a single GNN architecture. More generally, the arguments can be unified by considering a generic aggregation operator and defining suitable pseudometrics on the associated iterated neighborhood spaces, such as the Kantorovich–Rubinstein distance on spaces of iterated degree measures or the Hausdorff distance on spaces of iterated neighborhood sets. With these choices, the input space remains compact, and both aggregation and update maps remain Lipschitz with respect to the resulting metrics.

\section{Topological, measure-theoretic, and geometric background}
\label{app:background}

This appendix collects all background material required for the proof of
\cref{thm:finite_complexity_classes}. The central objective is to
construct suitable metric spaces for node and neighborhood
representations induced by message passing neural networks, and to show
that these spaces are compact. Compactness will later imply the
finiteness of covering numbers, which is the key condition in
\cref{def:finite_uniform_approx}.

The appendix is organized as follows.
Section~\ref{app:topology} recalls basic notions from topology and metric
geometry.
Section~\ref{app:measures} reviews measure-theoretic preliminaries and
weak$^*$ convergence.
Section~\ref{app:kr} introduces iterated degree measures and the
Kantorovich--Rubinshtein metric, which underpin the analysis of sum and
mean aggregation.
Section~\ref{app:hausdorff} develops the hyperspace and Hausdorff-metric
framework needed for max (and min) aggregation.

\subsection{Topological background}
\label{app:topology}

We begin by recalling standard definitions from topology, emphasizing
compactness and product constructions, which will be repeatedly used in
later proofs.

\begin{definition}[Topological space]
A topological space is a pair $(\cX,\tau)$, where $\cX$ is a set and
$\tau$ is a collection of subsets of $\cX$ containing the empty set and
$\cX$, closed under arbitrary unions and finite intersections. The elements of $\tau$ are called open sets.
\end{definition}

Let $(\cX,\tau_{\cX})$ and $(\cY,\tau_{\cY})$ be topological spaces. A map
$f \colon \cX \to \cY$ is said to be \emph{continuous} if for every open
set $U \subseteq \cY$, the preimage $f^{-1}(U)$ is an open subset of
$\cX$.

Let $\tau_1$ and $\tau_2$ be two topologies on the same set $\cX$. We say that $\tau_1$ is \emph{coarser} than $\tau_2$ if $\tau_1 \subseteq \tau_2$.

\begin{definition}[Product topology]
Let $\{(\cX_i,\tau_i)\}_{i \in I}$ be a family of topological spaces. The
product topology on $\prod_{i \in I} \cX_i$ is the coarsest topology for
which all coordinate projections are continuous.
\end{definition}
\begin{definition}[Compact space]
A topological space $(\cX, \tau)$ is compact if every open cover of $\cX$
admits a finite subcover.
\end{definition}

\begin{theorem}[Tychonoff]
\label{thm:Tychonoff}
The product of compact topological spaces is compact with respect to the
product topology.
\end{theorem}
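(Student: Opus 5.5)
The statement to prove is Tychonoff's theorem (\cref{thm:Tychonoff}): a product of compact spaces, with the product topology, is compact. The plan is the standard ultrafilter argument, which handles arbitrary index sets $I$ uniformly and needs only the axiom of choice via the ultrafilter lemma. First I will fix the characterization: a space $(\cX,\tau)$ is compact if and only if every ultrafilter on $\cX$ converges to at least one point.

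To prove this characterization, I go through the finite intersection property. Suppose every ultrafilter converges, and let $\mathcal{U}$ be an open cover with no finite subcover. Then the complements $\{\cX\setminus U : U\in\mathcal{U}\}$ have the finite intersection property. By the ultrafilter lemma (Zorn), they extend to an ultrafilter $\mathfrak{u}$. Let $x$ be a limit of $\mathfrak{u}$. Some $U\in\mathcal{U}$ contains $x$, so $U\in\mathfrak{u}$; but $\cX\setminus U\in\mathfrak{u}$ as well, and these two sets are disjoint, a contradiction.

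Conversely, suppose $\cX$ is compact and $\mathfrak{u}$ is an ultrafilter with no limit. Then every point has an open neighbourhood not in $\mathfrak{u}$. Finitely many such neighbourhoods cover $\cX$. Since $\mathfrak{u}$ is an ultrafilter and none of these finitely many sets lies in it, their complements all lie in $\mathfrak{u}$, so their intersection does too. That intersection is empty, which is impossible.

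Next, I will establish that convergence in the product topology is coordinatewise. The product topology has the subbasis $\{\pi_i^{-1}(V) : i\in I,\ V\in\tau_i\}$, and every basic neighbourhood of a point $x$ is a finite intersection of subbasic sets. Since a filter is closed under finite intersections, a filter $\mathfrak{u}$ on $\prod_i\cX_i$ converges to $x$ exactly when $\pi_i^{-1}(V)\in\mathfrak{u}$ for every $i$ and every open $V\ni x_i$. This is equivalent to the pushforward $(\pi_i)_*\mathfrak{u}=\{A\subseteq\cX_i : \pi_i^{-1}(A)\in\mathfrak{u}\}$ converging to $x_i$ for each $i$.

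With both facts in hand, the main argument is short. Let $\mathfrak{u}$ be an ultrafilter on $\prod_i\cX_i$. Each pushforward $(\pi_i)_*\mathfrak{u}$ is again an ultrafilter, because for any $A\subseteq\cX_i$ exactly one of $\pi_i^{-1}(A)$ and $\pi_i^{-1}(\cX_i\setminus A)$ lies in $\mathfrak{u}$. By compactness of $\cX_i$, each pushforward converges to some $x_i\in\cX_i$. Choosing one such limit for every $i$, again using choice, gives a point $x=(x_i)_{i\in I}$, and by coordinatewise convergence $\mathfrak{u}\to x$. The characterization from the first step then yields compactness of the product.

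The only delicate step is the ultrafilter characterization of compactness, specifically the direction that uses the finite intersection property together with the ultrafilter lemma. The remaining steps are routine checks from the definitions. If filters are to be avoided, an alternative is Alexander's subbase lemma: it suffices to check covers by subbasic sets $\pi_i^{-1}(V)$. If no finite subcover exists, then for each $i$ the sets $V$ used at coordinate $i$ fail to cover $\cX_i$, since compactness of $\cX_i$ would otherwise give a finite subcover. Picking an uncovered $x_i$ in each coordinate produces a point $x=(x_i)_{i\in I}$ that the cover misses, a contradiction. On that route, the subbase lemma itself becomes the main obstacle, and it is proved by a Zorn's lemma argument on a maximal cover with no finite subcover.
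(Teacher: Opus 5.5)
Your proof is correct. The ultrafilter characterization of compactness (both directions), coordinatewise convergence via the subbasis $\{\pi_i^{-1}(V)\}$, the fact that $(\pi_i)_*\mathfrak{u}$ is again an ultrafilter, and choosing one limit per coordinate (needed since limits are not unique without the Hausdorff property) together prove the theorem for an arbitrary index set. Your Alexander-subbase alternative is also sound.

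The paper, however, gives no proof of this statement. It quotes Tychonoff's theorem as a classical fact and only uses it, as Claim~3 in the proof of \cref{thm:compactness_of_HL} and in \cref{thm:compactness_of_HL_max}, for products of \emph{two} compact metric spaces: $H_{t-1}\times M_t$, $H_{t-1}^{\text{max}}\times S_t^{\text{max}}$, and $H_{t-1}^{\text{max}}\times E$.

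Your route buys full generality at the cost of the ultrafilter lemma plus a choice of limits. That cost cannot be avoided in general, since Tychonoff for arbitrary (non-Hausdorff) products is equivalent to the axiom of choice. For the paper's needs a much lighter argument would suffice. A sequence in a product $A\times B$ of compact metric spaces has a subsequence converging in the first coordinate and a further subsequence converging in the second. By Claim~4, the sum metric $d_A+d_B$ metrizes the product topology, so $(A\times B,d_A+d_B)$ is sequentially compact, hence compact. That version would make the paper's compactness results self-contained using only elementary metric-space facts.
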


\begin{definition}[Metric space and induced topology]
Let $X$ be a set. A function $d \colon X \times X \to [0,+\infty)$ is a
\emph{metric} on $X$ if, for all $x,y,z \in X$,
$$
d(x,y)=0 \iff x=y, \quad d(x,y)=d(y,x), \quad d(x,z)\le d(x,y)+d(y,z).
$$
The pair $(X,d)$ is called a metric space. The metric $d$ induces a
topology on $X$ whose open sets are those $U \subset X$ such that for
every $x \in U$ there exists $\varepsilon>0$ with
$B_d(x,\varepsilon)\subset U$.
\end{definition}

\begin{definition}[Metrizable topology]
A topological space $(\cX,\tau)$ is said to be \emph{metrizable} if there
exists a metric $d$ on $\cX$ such that the topology induced by $d$
coincides with $\tau$.
\end{definition}

\subsection{Measure-theoretic background}
\label{app:measures}

We next recall basic measure-theoretic notions that will be used to model
neighborhood aggregation by averaging or summation.

Let $\cX$ be a set. A collection $\cA \subseteq 2^{\cX}$ is a
\emph{$\sigma$-algebra} if $\cX \in \cA$, $\cA$ is closed under
complements, and under countable unions.

\begin{definition}[Measure]
Let $(\cX,\cA)$ be a measurable space. A measure on $(\cX,\cA)$ is a map
$\mu \colon \cA \to [0,\infty]$ such that $\mu(\emptyset)=0$ and
$$
\mu\!\mleft( \bigcup_{i \in \Nb} A_i \mright)
=
\sum_{i \in \Nb} \mu(A_i)
$$
for every countable collection of pairwise disjoint sets.
\end{definition}

Given a collection $\mathcal S \subseteq 2^{\cX}$, we denote by $\sigma(\mathcal S)$ the smallest $\sigma$-algebra containing $\mathcal S$. Given a topological space $(\cX,\tau)$, we denote by $\mathcal{B}(\cX)=\sigma(\tau)$ the Borel $\sigma$-algebra on $\cX$, by $\mathcal{M}_{\leq 1}(\cX)$ the space of finite Borel measures with total mass at most $1$, and by $\mathcal{P}(\cX)$ the space of finite Borel measures with total mass exactly $1$ (i.e., probability measures).

\begin{definition}[Weak$^{*}$ topology]
Let $(\cX, \tau)$ be a topological space. The weak$^{*}$ topology on $\mathcal{M}_{\leq 1}(\cX)$ (or similarly on $\mathcal{P}(\cX)$) is the coarsest topology such that, for every bounded continuous function $f \colon \cX \to \Rb$, the map
$$
\mu \mapsto \int_{\cX} f \, d\mu
$$
is continuous.
\end{definition}

\begin{theorem}[\cite{Kechris1995}, Theorem~17.22]
\label{thm:measures in compact is compact}
If $(\cX,\tau)$ is a compact metrizable space, then the space
$\mathcal{M}_{\leq 1}(\cX)$  (similarly $\mathcal{P}(\cX)$), endowed with the weak$^{*}$ topology, are also compact and metrizable.
\end{theorem}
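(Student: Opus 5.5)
The plan is to handle the statement as a standard fact from measure theory, obtained by combining the Riesz representation theorem with Banach--Alaoglu. Fix a compatible metric on $\cX$. Since $\cX$ is compact metrizable, $C(\cX)$ with the sup norm is a separable Banach space. Separability follows because the functions $x\mapsto d(x,q)$, for $q$ in a countable dense set, generate a subalgebra that separates points. By Stone--Weierstrass, rational polynomials in these functions then form a countable dense subset of $C(\cX)$.

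The key steps are the following.

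\begin{enumerate}
\item Identify $\mathcal{M}_{\le 1}(\cX)$ with a subset of the dual $C(\cX)^*$. By Riesz--Markov, finite Borel measures on a compact metric space correspond exactly to positive linear functionals. Under this identification, $\mathcal{M}_{\le 1}(\cX)$ becomes the set $\{\Lambda\ge 0,\ \Lambda(1)\le 1\}$ inside the closed unit ball. The weak$^*$ topology defined in the paper uses bounded continuous $f$, which on compact $\cX$ is all of $C(\cX)$, so it coincides with the functional-analytic weak$^*$ topology.
\item Apply Banach--Alaoglu: the unit ball is weak$^*$ compact. The conditions $\Lambda(f)\ge 0$ for each $f\ge 0$ and $\Lambda(1)\le 1$ are weak$^*$ closed, so $\mathcal{M}_{\le 1}(\cX)$ is a closed subset of a compact set and hence compact. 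Replacing $\Lambda(1)\le 1$ by $\Lambda(1)=1$ gives the same conclusion for $\mathcal{P}(\cX)$.
\item Metrizability. Let $\{f_n\}$ be a countable dense subset of the unit ball of $C(\cX)$, and define
\[
\rho(\mu,\nu)=\sum_{n}2^{-n}\Bigl|\int f_n\,d\mu-\int f_n\,d\nu\Bigr|.
\]
This $\rho$ separates points, because the $f_n$ are dense and a measure is determined by its integrals against all continuous functions. It also induces the weak$^*$ topology on bounded sets, by a $3\varepsilon$ approximation argument that uses the uniform mass bound $\le 1$.
\end{enumerate}

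The main obstacle is the metrizability step. There one has to check that the topology induced by $\rho$ agrees with the weak$^*$ topology, and not merely that it is coarser. The simplest route is to note that the identity map from $(\mathcal{M}_{\le 1},\text{weak}^*)$ to $(\mathcal{M}_{\le 1},\rho)$ is a continuous bijection. Its domain is compact by step 2 and its codomain is Hausdorff, so it is a homeomorphism. This removes the need for any direct comparison of neighborhoods. Alternatively, the statement can simply be cited as \cite[Theorem~17.22]{Kechris1995}, as the paper does.
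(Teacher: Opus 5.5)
Your argument is correct. Identifying $\mathcal{M}_{\le 1}(\cX)$ (resp.\ $\mathcal{P}(\cX)$) via Riesz--Markov with a weak$^*$-closed set of positive functionals in the unit ball of $C(\cX)^*$, applying Banach--Alaoglu, and metrizing through a countable dense family in $C(\cX)$ together with the compact-to-Hausdorff homeomorphism trick is the standard argument behind \cite[Theorem~17.22]{Kechris1995}; the paper gives no proof of its own and only cites that result, so your route is essentially the one it relies on.
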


\subsection{Iterated degree measures and Kantorovich--Rubinshtein metric}
\label{app:kr}

This subsection develops the measure-valued representation spaces used
for normalized-sum and mean aggregation.
We recall that $B_{r,p_0} \coloneq \{ x \in \Rb^{p_0} \mid \|x\|_2 \leq r \}$.

\begin{definition}[Iterated degree measures]
Let $L \in \Nb$. We define a sequence of spaces recursively
$\{M_\ell\}_{\ell=0}^L$ and $\{H_\ell\}_{\ell=0}^L$ as follows.
\begin{itemize}
\item Set
$$
M_0 \coloneq B_{r,p_0},
\qquad
H_0 \coloneq M_0.
$$
\item For each $\ell \geq 0$, define
$$
M_{\ell+1} \coloneq \mathcal{M}_{\leq 1}(H_\ell),
\qquad
H_{\ell+1} \coloneq \prod_{j=0}^{\ell+1} M_j.
$$
\end{itemize}
\end{definition}

Note that, an element of $M_\ell$ is a (sub-)probability measure on the space $H_{\ell-1}$, whereas an element of $H_\ell$ is a tuple
\[
(h_0,h_1,\dots,h_\ell),
\qquad h_j \in M_j \text{ for each } j\le \ell.
\]
In particular, for $\ell\ge 1$, the space $H_\ell$ collects the iterated degree measures up to level $\ell$, while $M_{\ell+1}$ consists of measures supported on such tuples.

\begin{definition}[Kantorovich--Rubinshtein distance]
\label{def:Kantorovich_distance}
Let $(\cX,d)$ be a metric space and let $\mu,\nu \in \mathcal{M}_{\leq 1}(\cX)$.
Define
$$
\mathrm{Lip}_1(\cX,\Rb)
\coloneq
\{ f \colon \cX \to \Rb \mid |f(x)-f(x')| \le d(x,x') \ \forall x,x' \in \cX \}.
$$
The Kantorovich--Rubinshtein distance is
$$
\mathbf{K}(\mu,\nu)
\coloneq
\sup_{\substack{
f \in \mathrm{Lip}_1(\mathcal X,\mathbb R)\\
\|f\|_\infty \le 1
}}
\mleft|
\int_{\mathcal X} f \, d\mu - \int_{\mathcal X} f \, d\nu
\mright|.
$$
\end{definition}

\begin{remark}
When $\mu$ and $\nu$ are probability measures on $\cX$, the Kantorovich--Rubinshtein distance coincides with the $1$-Wasserstein distance and admits an equivalent formulation as an optimal transportation problem (see \citep{Chu+2022}). The restriction to bounded test functions in the supremum is essential for well-definedness when considering finite (sub-probability) measures. Indeed, without imposing a bound on $f$, every constant function $f \equiv c$ is $1$-Lipschitz and if $\mu(\cX) \neq \nu(\cX)$, then
$\int_{\cX} f \, d\mu - \int_{\cX} f \ \, d\nu = c\bigl(\mu(\cX)-\nu(\cX)\bigr)$,
which can be made arbitrarily large in absolute value by letting $|c| \to \infty$. Consequently, the supremum would be infinite.
\end{remark}

\paragraph{Kantorovich--Rubinshtein recursive metric}
We define a metric $d_{\mathbf{K},t}$ on $H_L$, for $t \in \Nb$ recursively as follows. For $x,y \in H_0$, recall $H_0=M_0=B_{r,p_0}$, and
set $d_0(x,y)=\|x-y\|_2.$
For $t \geq 1$, let $x=(\eta,\mu)$ and $x'=(\eta',\mu')$ be elements of
$H_t$, where $\eta,\eta' \in H_{t-1}$ and $\mu,\mu' \in M_t$. Define
$$
d_t(x,x')
\coloneq
d_{t-1}(\eta,\eta') + \mathbf{K}(\mu,\mu').
$$
\begin{theorem}
\label{thm:compactness_of_HL}
For every $L \in \Nb$, $(H_L,d_L)$ is a compact metric space.
\end{theorem}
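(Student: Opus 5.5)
We argue by induction on $L$. The approach is to show that each $M_\ell$ is a compact metrizable space under the weak$^*$ topology, and that the recursive metric $d_L$ metrizes the product topology on $H_L=\prod_{j=0}^{L}M_j$. Compactness of $(H_L,d_L)$ then follows from \cref{thm:Tychonoff}, and metrizability is part of the construction.

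\textbf{Base case.} $H_0=B_{r,p_0}$ with $d_0(x,y)=\|x-y\|_2$ is a closed bounded subset of $\Rb^{p_0}$. By Heine--Borel it is a compact metric space.

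\textbf{Inductive step, compactness of $M_{\ell+1}$.} Assume $(H_\ell,d_\ell)$ is a compact metric space. Then $H_\ell$ is compact metrizable, so by \cref{thm:measures in compact is compact} the space $M_{\ell+1}=\mathcal{M}_{\le 1}(H_\ell)$ is compact and metrizable in the weak$^*$ topology.

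\textbf{Inductive step, identifying the metric.} We must show that the Kantorovich--Rubinshtein distance $\mathbf{K}$ of \cref{def:Kantorovich_distance}, built from the metric $d_\ell$ on $H_\ell$, is a metric on $M_{\ell+1}$ that induces exactly the weak$^*$ topology. This is the main obstacle. It has two parts.
\begin{enumerate}
\item \emph{$\mathbf{K}$ is a metric.} Symmetry and the triangle inequality follow directly from the supremum definition. For definiteness, bounded Lipschitz functions are dense in $C(H_\ell)$ for the sup norm on a compact metric space (Stone--Weierstrass, or McShane-type approximation). Hence $\mathbf{K}(\mu,\nu)=0$ forces $\int f\,d\mu=\int f\,d\nu$ for all continuous $f$, and then $\mu=\nu$ by the Riesz representation theorem.
\item \emph{$\mathbf{K}$ induces the weak$^*$ topology.} First, $\mathbf{K}$-convergence implies weak$^*$ convergence: approximate any continuous $f$ uniformly by a multiple of a function in the bounded-Lipschitz unit ball, and note that total masses are at most $1$. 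Second, weak$^*$ convergence implies $\mathbf{K}$-convergence. The unit ball of bounded $1$-Lipschitz functions on a compact space is uniformly bounded and equicontinuous, hence totally bounded in sup norm by Arzel\`a--Ascoli. Given $\varepsilon>0$, take a finite $\varepsilon$-net $f_1,\dots,f_m$ of this ball. Weak$^*$ convergence controls $\int f_i\,d\mu_n-\int f_i\,d\mu$ for each $i$, and the net approximation contributes at most $2\varepsilon$ because masses are bounded by $1$. This gives $\mathbf{K}(\mu_n,\mu)\to 0$; sequences suffice since both topologies are metrizable.
\end{enumerate}
Together these show $(M_{\ell+1},\mathbf{K})$ is a compact metric space.

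\textbf{Conclusion.} By construction $d_{\ell+1}(x,x')=d_\ell(\eta,\eta')+\mathbf{K}(\mu,\mu')$ is the sum metric on $H_\ell\times M_{\ell+1}=H_{\ell+1}$. A finite sum of metrics on a finite product of metric spaces is a metric inducing the product topology. By Tychonoff (\cref{thm:Tychonoff}) the product of the compact spaces $(H_\ell,d_\ell)$ and $(M_{\ell+1},\mathbf{K})$ is compact, so $(H_{\ell+1},d_{\ell+1})$ is a compact metric space. This completes the induction. Finite covering numbers, as required in \cref{def:finite_uniform_approx}, then follow immediately from compactness.
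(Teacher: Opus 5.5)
Your proposal is correct and follows the paper's proof essentially step for step: induction on $L$, compactness of $\mathcal{M}_{\le 1}(H_\ell)$ via Kechris, $\mathbf{K}$ metrizing the weak$^*$ topology, Tychonoff, and the sum metric metrizing the product topology. The only difference is that you prove the metrization step yourself, using density of bounded Lipschitz functions and Arzel\`a--Ascoli, whereas the paper simply cites Bogachev's Theorem~8.3.2 for it.
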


\begin{proof}
We prove the theorem by induction on $t$. We begin by stating four claims that will be used throughout the proof.

\smallskip
\noindent
\textbf{Claim 1.}
If $(\cX,d)$ is a compact metric space, then $\mathcal{M}_{\leq 1}(\cX)$ is compact with respect to the weak$^*$ topology. This follows from \cref{thm:measures in compact is compact}.

\smallskip
\noindent
\textbf{Claim 2.}
If $(\cX,d)$ is a compact metric space, then the Kantorovich--Rubinstein metric $\mathbf{K}$ defined in \cref{def:Kantorovich_distance} metrizes the weak$^*$ topology on $\mathcal{M}_{\leq 1}(\cX)$. This follows from \citet{Bogachev2007}[Theorem~8.3.2].

\smallskip
\noindent
\textbf{Claim 3.}
If $A$ and $B$ are compact topological spaces, then $A \times B$ is compact with respect to the product topology. This is a direct consequence of Tychonoff's theorem (\cref{thm:Tychonoff}).

\smallskip
\noindent
\textbf{Claim 4.}
Let $(A,d_A)$ and $(B,d_B)$ be metric spaces, and define the sum metric $d$ on $A \times B$ by
\[
d\bigl((a,b),(a',b')\bigr) \coloneq d_A(a,a') + d_B(b,b').
\]
Then $d$ metrizes the product topology induced by $d_A$ and $d_B$. Indeed, for any $\varepsilon>0$,
\begin{equation*}
B_{d_A}\bigl(a,\varepsilon/2\bigr)\times B_{d_B}\bigl(b,\varepsilon/2\bigr)
\subseteq
B_d\bigl((a,b),\varepsilon\bigr)
\subseteq
B_{d_A}(a,\varepsilon)\times B_{d_B}(b,\varepsilon),
\end{equation*}
where for a metric space $(\cX,d)$ and $\varepsilon>0$ we write
\[
B_d(x,\varepsilon)=\{y \in \cX \mid d(x,y) \leq \varepsilon\}.
\]

\smallskip
\noindent
We now proceed with the proof. For the base case $t=0$, we have $H_0 = B_{r,p_0}$ with
$d_{\mathbf{K},0}(x,y)=\|x-y\|_2$. Since $B_{r,p_0}$ is closed and bounded in $\mathbb{R}^{p_0}$, it is compact. Hence $(H_0,d_{\mathbf{K},0})$ is compact.

\smallskip
\noindent
For the induction step, assume that $(H_{t-1},d_{\mathbf{K},t-1})$ is a compact metric space for some $t \geq 1$. We show that $(H_t,d_{\mathbf{K},t})$ is compact. By the induction hypothesis and Claim~1, $\mathcal{M}_{\leq 1}(H_{t-1}) = M_t$ is compact in the weak$^*$ topology. By Claim~2, the metric $\mathbf{K}$ metrizes this topology, so $(M_t,\mathbf{K})$ is a compact metric space. Since $H_t = H_{t-1} \times M_t$ and both $H_{t-1}$ and $M_t$ are compact, Claim~3 implies that $H_t$ is compact with respect to the product topology. Finally, by Claim~4, the topology induced by $d_{\mathbf{K},t}$ coincides with the product topology induced by $d_{\mathbf{K},t-1}$ on $H_{t-1}$ and $\mathbf{K}$ on $M_t$. Therefore, $(H_t,d_{\mathbf{K},t})$ is compact, completing the induction.
\end{proof}

\subsection{Hyperspaces and Hausdorff metrics for max aggregation}
\label{app:hausdorff}

We now turn to max (and min) aggregation, where neighborhoods are encoded
as compact sets rather than measures.

\paragraph{Hyperspace and Hausdorff metric}
Let $(\cX,d)$ be a metric space and denote by
$$
\mathrm{Haus}(\cX) \coloneq \{ A \subseteq \cX \mid A \neq \emptyset,\ A \text{ is compact} \}.
$$
the hyperspace of nonempty compact subsets of $\cX$. The Hausdorff metric on $\mathrm{Haus}(\cX)$ induced by $d$ is denoted by $\mathbf{H}_d$ and is defined as$$
\mathbf{H}_d(A,B)
=
\max\Big\{
\sup_{a\in A}\inf_{b\in B} d(a,b),
\sup_{b\in B}\inf_{a\in A} d(a,b)
\Big\}.
$$

The following result is a corollary of Blaschke Selection Theorem \citep{Schneider1993}[Thm. 1.8.4].

\begin{theorem}
\label{thm:hausdorf_compactness}
Let $(\cX,d)$ be compact. Then $(\mathrm{Haus}(\cX),\mathbf{H}_d)$ is compact.
\end{theorem}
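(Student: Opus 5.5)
The argument runs through sequential compactness and total boundedness, so we never need to identify the hyperspace topology with anything else. A compact metric space is complete and totally bounded, so it suffices to prove that $(\mathrm{Haus}(\cX),\mathbf{H}_d)$ is both totally bounded and complete. The paper phrases the result as a corollary of the Blaschke selection theorem; the argument below is exactly its proof, specialised to a compact ambient space.

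\emph{Total boundedness.} Fix $\varepsilon>0$. By compactness of $\cX$ there is a finite $\varepsilon$-net $F=\{x_1,\dots,x_n\}\subseteq\cX$. For $A\in\mathrm{Haus}(\cX)$, set $S_A\coloneq\{x_i\in F\mid d(x_i,A)\le\varepsilon\}$. This set is nonempty, because every $a\in A$ has some $x_i$ within distance $\varepsilon$, and that $x_i$ lies in $S_A$. We check that $\mathbf{H}_d(A,S_A)\le\varepsilon$ in both directions.
\begin{itemize}
\item Each $a\in A$ has a point of $S_A$ within $\varepsilon$, namely the net point $x_i$ chosen above.
\item Each $x_i\in S_A$ satisfies $d(x_i,A)\le\varepsilon$ by definition of $S_A$.
\end{itemize}
The nonempty subsets of $F$ are finite, hence compact, and there are only $2^n-1$ of them. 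So they form a finite $\varepsilon$-net of $\mathrm{Haus}(\cX)$.

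\emph{Completeness.} Let $(A_k)$ be $\mathbf{H}_d$-Cauchy. Define the upper limit
\[
A\coloneq\Bigl\{x\in\cX \Bigm| \exists\, k_1<k_2<\cdots,\ a_{k_j}\in A_{k_j},\ a_{k_j}\to x\Bigr\},
\]
equivalently $A=\bigcap_m\overline{\bigcup_{k\ge m}A_k}$. Each set $\overline{\bigcup_{k\ge m}A_k}$ is closed and nonempty in a compact space, and these sets are nested. So $A$ is a nonempty closed subset of a compact space, and therefore $A\in\mathrm{Haus}(\cX)$.

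Now fix $\varepsilon>0$ and choose $N$ such that $\mathbf{H}_d(A_k,A_l)<\varepsilon$ for all $k,l\ge N$. We show $\mathbf{H}_d(A_k,A)\le 2\varepsilon$ for $k\ge N$, one direction at a time.
\begin{itemize}
\item \emph{Points of $A$ are close to $A_k$.} Let $x\in A$, and take $a_{k_j}\to x$ with $a_{k_j}\in A_{k_j}$. For large $j$ we have $k_j\ge N$, so $d(a_{k_j},A_k)<\varepsilon$. Letting $j\to\infty$ gives $d(x,A_k)\le\varepsilon$.
\item \emph{Points of $A_k$ are close to $A$.} Let $a\in A_k$. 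Choose indices $k=m_0<m_1<\cdots$ such that $\mathbf{H}_d(A_l,A_{l'})<\varepsilon 2^{-i}$ for all $l,l'\ge m_i$. Build a chain $a=b_0,b_1,b_2,\dots$ with $b_i\in A_{m_i}$ and $d(b_i,b_{i+1})<\varepsilon 2^{-i}$. This chain is Cauchy in the compact, hence complete, space $\cX$, so it converges to a limit. That limit lies in $A$, and its distance from $a$ is less than $2\varepsilon$.
\end{itemize}
Hence $A_k\to A$ in $\mathbf{H}_d$, which proves completeness.

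The main obstacle is the second direction of completeness, where we must build a chain through the sets so that its limit lands in $A$. The geometric chaining with radii $\varepsilon 2^{-i}$ is what makes this work. We should also confirm that compactness of $\cX$ makes the limit set $A$ nonempty. Once both properties hold, a complete and totally bounded metric space is compact, which concludes the proof.
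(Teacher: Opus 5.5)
Your proof is correct. The nonempty subsets of a finite $\varepsilon$-net give total boundedness. The upper-limit set together with the $\varepsilon 2^{-i}$ chaining gives completeness, and both one-sided estimates check out: $d(x,A_k)\le\varepsilon$ for $x\in A$, and $d(a,A)<2\varepsilon$ for $a\in A_k$.

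The paper gives no argument of its own. It only remarks that the statement is a corollary of the Blaschke selection theorem in Schneider's book, so your route differs in kind: a self-contained proof rather than a citation. The citation is shorter, but in that reference the result is stated for compact subsets of $\mathbb{R}^n$. The paper instead applies the statement to abstract compact metric spaces such as $Y^{\text{max}}_{t-1}=H^{\text{max}}_{t-1}\times E$, which are built from hyperspaces and do not live in Euclidean space. Your argument works in any compact metric space, so it covers exactly what \cref{thm:compactness_of_HL_max} uses. For the same reason, calling your argument Schneider's proof ``specialised to a compact ambient space'' is slightly off; it is the Euclidean argument transplanted to a general compact metric space.

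Your total-boundedness step also gives something quantitative: $\mathcal{N}(\mathrm{Haus}(\cX),\mathbf{H}_d,\varepsilon)\le 2^{\mathcal{N}(\cX,d,\varepsilon)}-1$. Since \cref{def:finite_uniform_approx} only asks for finite covering numbers, iterating this bound through the recursion defining $H^{\text{max}}_t$ (with sum metrics on the products) would give explicit covering numbers for $d_{\text{max},L}$. These would be tower-exponential in the depth, but explicit rather than merely finite.
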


\paragraph{Iterated neighborhood set spaces}
Let $E \subset \Rb^e$ be compact. Define recursively
\begin{itemize}
\item
$
S_0^{\text{max}} = B_{r,p_0}, \quad H_0^{\text{max}} = S_0^{\text{max}}.
$
\item
$
Y_t^{\text{max}} = H_t^{\text{max}} \times E,
\quad
S_{t+1}^{\text{max}} = \mathrm{Haus}(Y_t^{\text{max}}),
\quad
H_{t+1}^{\text{max}} = H_t^{\text{max}} \times S_{t+1}^{\text{max}}.
$
\end{itemize}

\paragraph{Hausdorff recursive metric}
We define a metric $d_{\mathbf{H},t}$ on $H_t^{\text{max}}$ recursively.
For $x,y\in H_0^{\text{max}}$, set
\[
d_{\mathbf{H},0}(x,y)\coloneq \|x-y\|_2.
\]
Equip $Y_{t-1}^{\text{max}}=H_{t-1}^{\text{max}}\times E$ with the product metric
\[
d_{Y,t-1}\bigl((\eta,e),(\eta',e')\bigr)
\coloneq
d_{\mathbf{H},t-1}(\eta,\eta') + \|e-e'\|_2.
\]
Let $\mathbf H_{t-1}$ denote the Hausdorff metric on $\mathrm{Haus}(Y_{t-1}^{\text{max}})$ induced by $d_{Y,t-1}$.
For $t\ge 1$ and $x=(\eta,A)$, $x'=(\eta',A')$ in $H_t^{\text{max}}=H_{t-1}^{\text{max}}\times S_t^{\text{max}}$, define
\[
d_{\mathbf{H},t}(x,x') \coloneq d_{\mathbf{H},t-1}(\eta,\eta') + \mathbf H_{t-1}(A,A').
\]

\begin{theorem}[Compactness of $(H_t^{\text{max}},d_{\mathbf H,t})$]
\label{thm:compactness_of_HL_max}
For every $t\in\Nb$, the space $(H_t^{\text{max}},d_{\mathbf{H},t})$ is a compact metric space.
\end{theorem}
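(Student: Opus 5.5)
We argue by induction on $t$, mirroring the proof of \cref{thm:compactness_of_HL}, with the weak$^*$/Kantorovich machinery replaced by the hyperspace/Hausdorff one. We show simultaneously that $d_{\mathbf H,t}$ is a genuine metric on $H_t^{\text{max}}$ and that the resulting space is compact. The base case $t=0$ is immediate: $H_0^{\text{max}}=B_{r,p_0}$ with the Euclidean metric is closed and bounded in $\Rb^{p_0}$, hence compact.

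For the induction step, assume $(H_{t-1}^{\text{max}},d_{\mathbf H,t-1})$ is a compact metric space. First, $E\subset\Rb^e$ is compact under $\|\cdot\|_2$. By Claim~4 from the proof of \cref{thm:compactness_of_HL}, the sum metric $d_{Y,t-1}$ metrizes the product topology on $Y_{t-1}^{\text{max}}=H_{t-1}^{\text{max}}\times E$. By Tychonoff (Claim~3), $(Y_{t-1}^{\text{max}},d_{Y,t-1})$ is a compact metric space.

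Next, \cref{thm:hausdorf_compactness} applied to this compact metric space shows that $(S_t^{\text{max}},\mathbf H_{t-1})=(\mathrm{Haus}(Y_{t-1}^{\text{max}}),\mathbf H_{t-1})$ is compact. We also record that $\mathbf H_{t-1}$ is a genuine, finite metric on nonempty compact sets. Finiteness holds because the space is bounded. Definiteness holds because if $\mathbf H(A,B)=0$, then every $a\in A$ lies in the closure of $B$, which is $B$ itself since $B$ is compact; by symmetry $A=B$. Finally, $H_t^{\text{max}}=H_{t-1}^{\text{max}}\times S_t^{\text{max}}$ is a product of two compact metric spaces. By Claims~3 and~4 again, the sum metric $d_{\mathbf H,t}$ is a metric inducing the product topology, so $(H_t^{\text{max}},d_{\mathbf H,t})$ is compact. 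This closes the induction.

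The main point needing care is the invocation of \cref{thm:hausdorf_compactness}. Blaschke's selection theorem is classically stated for convex bodies in Euclidean space, so the statement we actually need is the general fact that the hyperspace of a compact metric space is compact. If the cited corollary is judged insufficient, we would prove it directly. Completeness of $\mathrm{Haus}(\cX)$ follows from completeness of $\cX$: a Cauchy sequence $(A_n)$ converges to its upper limit $\bigcap_n\overline{\bigcup_{m\ge n}A_m}$. Total boundedness follows from that of $\cX$: if $F$ is a finite $\varepsilon$-net of $\cX$, then the nonempty subsets of $F$ form a finite $\varepsilon$-net of $\mathrm{Haus}(\cX)$. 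A complete and totally bounded metric space is compact. Everything else in the argument is routine.
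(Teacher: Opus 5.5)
Your proof is correct and follows the paper's argument step for step. Both run an induction on $t$: compactness of $Y_{t-1}^{\text{max}}$ via the sum metric and Tychonoff, then compactness of $\mathrm{Haus}(Y_{t-1}^{\text{max}})$ under $\mathbf H_{t-1}$, then compactness of $H_{t-1}^{\text{max}}\times S_t^{\text{max}}$ with the sum metric inducing the product topology. Your extra checks are a worthwhile supplement: that $\mathbf H_{t-1}$ is a genuine finite metric, and the self-contained completeness-plus-total-boundedness proof that the hyperspace of a compact metric space is compact. They matter because the paper cites Blaschke's selection theorem from Schneider, which concerns convex bodies in Euclidean space rather than an abstract compact metric space such as $Y_{t-1}^{\text{max}}$.
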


\begin{proof}
We prove the theorem by induction on $t$. For $t=0$, by definition, $H_0^{\text{max}}=B_{r,p_0}$ and
$d_{\mathbf{H},0}(x,y)=\|x-y\|_2$, hence $(H_0^{\text{max}}, d_{\mathbf{H},0})$ is compact.

\smallskip
\noindent
Assume that $(H_{t-1}^{\text{max}}, d_{\mathbf{H},t-1})$ is compact for some $t\ge1$.
We show that $(H_t^{\text{max}}, d_{\mathbf{H},t})$ is compact. By the induction hypothesis and the compactness of the edge-feature space $E$, the product space $Y_{t-1}^{\text{max}}=H_{t-1}^{\text{max}}\times E$ is compact with respect to the sum metric $d_{Y,t-1}$ (see Claim 4 in the proof of \cref{thm:compactness_of_HL}). By \cref{thm:hausdorf_compactness}, the hyperspace $\mathrm{Haus}(Y_{t-1}^{\text{max}})$ endowed with the Hausdorff metric $\mathbf{H}_{t-1}$ is compact. Hence, the neighborhood set space $S_t^{\text{max}}$ is compact. Since $H_t^{\text{max}} = H_{t-1}^{\text{max}} \times S_t^{\text{max}}$ is a finite product of compact spaces, it is compact in the product topology (\cref{thm:Tychonoff}). Finally, by construction, the metric
$$
d_{\mathbf{H},t}\bigl((\eta,A),(\eta',A')\bigr)
=
d_{\mathbf{H},t-1}(\eta,\eta') + \mathbf H_{t-1}(A,A')
$$
metrizes the product topology (again see Claim 4 in \cref{thm:compactness_of_HL}). Therefore $(H_t^{\text{max}}, d_{\mathbf{H},t})$ is compact.
\end{proof}

\section{Proof of \cref{thm:finite_complexity_classes} for sum/mean-aggregation}
\label{app:proof_finite_complexity_sum_mean}

This section proves \cref{thm:finite_complexity_classes} for the
normalized-sum and mean aggregation schemes. The overall strategy is to
construct, for each $(G,u)\in \cG_{r,p_0}\otimes \cV$, a canonical
\emph{iterated degree measure} (IDM) representation in the compact metric
space $(H_L,d_{\mathbf K,L})$ from Appendix~\ref{app:background}. This
allows us to endow $\cG_{r,p_0}\otimes \cV$ with a pseudo-metric induced
by $d_{\mathbf K,L}$ and to deduce finiteness of covering numbers from
compactness. We then show that the MPNN maps are Lipschitz with respect
to these pseudo-metrics, completing the verification of
\cref{def:finite_uniform_approx} for the two aggregation schemes.

\paragraph{Roadmap}
We proceed in three steps:
\begin{enumerate}
\item In \cref{subsec:idm_pseudometrics}, we define the induced IDM maps
  and the corresponding pseudo-metrics on $V_1(\cG_{r,p_0})$.
\item In \cref{subsec:lipschitz_sum_mean}, we prove Lipschitz continuity
  of the normalized-sum and mean MPNNs with respect to the induced
  pseudo-metrics (via the equivalent IDM formulations).
\item In \cref{subsec:conclusion_sum_mean}, we state the main conclusion
  (Theorem~5 in the main paper) and explain how it follows immediately
  from the Lipschitz bounds and compactness/covering arguments.
\end{enumerate}

\paragraph{Normalized-sum and mean-aggregation MPNNs.}
We recall the normalized-sum and mean-aggregation schemes defined in
\cref{mpnn:order_norm} and \cref{mpnn:mean_aggr}. Fix $L\in\Nb$. For $(G,u)\in V_1(\cG_{r,p_0})$, set $h_u^{(0)}=\overline{h}_u^{(0)}\coloneq x_u$ and, for $t\in[L]$, define
\begin{align*}
h_u^{(t)}
&=
\phi_t \mleft(
h_u^{(t-1)},
\frac{1}{|V(G)|}\sum_{v\in N(u)} w_{uv}\, h_v^{(t-1)}
\mright)\\
\overline h_u^{(t)}
&=
\phi_t\!\mleft(
\overline h_u^{(t-1)},
\frac{1}{\deg(u)}\sum_{v\in N(u)} w_{uv}\, \overline h_v^{(t-1)}
\mright)
\end{align*}
where $\deg(u)\coloneq \sum_{v\in N(u)} w_{uv}$.
For each $t\in[L]$, we assume that the update map
$\phi_t:\Rb^{p_{t-1}}\times\Rb^{p_{t-1}}\to\Rb^{p_t}$ is Lipschitz, i.e.,
there exist constants $C_{\phi,1}^{(t)},C_{\phi,2}^{(t)}>0$ such that
\[
\|\phi_t(x,y)-\phi_t(x',y')\|_2
\le
C_{\phi,1}^{(t)}\|x-x'\|_2 + C_{\phi,2}^{(t)}\|y-y'\|_2 ,
\qquad
\forall x,x',y,y'\in\Rb^{p_{t-1}}.
\]

\subsection{Induced IDMs and pseudo-metrics}
\label{subsec:idm_pseudometrics}

In this subsection, we construct mappings from the space
$V_1(\cG_{r,p_0})$ to the space of iterated degree measures (IDMs)
$H_L$. This allows us to endow $V_1(\cG_{r,p_0})$ with a
pseudo-metric defined as the recursive Kantorovich--Rubinshtein metric on
$H_L$, evaluated on the induced elements. Using the compactness result
proved in Appendix~\ref{app:background} (in particular,
\cref{thm:compactness_of_HL}), we will later deduce finiteness of the
covering number of $V_1(\cG_{r,p_0})$ with respect to this
pseudo-metric. Moreover, we aim to construct these mappings in such a way
that message passing neural networks (MPNNs) are Lipschitz continuous
with respect to the induced pseudo-metric. For this reason, we introduce
different mappings from $V_1(\cG_{r,p_0})$ to $H_L$, depending on
the aggregation scheme under consideration (normalized-sum or mean). We
therefore define below two different induced IDMs, one for each
aggregation scheme. For this section, without loss of generality, we
assume that for all $G \in \cG_{r,p_0}$, the edge weights
$w_{uv} \in E (0,1]$ for all $\{u,v\} \in E(G)$ (i.e., $E \subset (0,1]$
compact).

\paragraph{Normalized-sum induced IDMs}
Let $(G,u) \in V_1(\cG_{r,p_0})$. We define the normalized-sum induced IDM of order $t$, denoted by $\eta_u^{(t)}$, recursively as follows. Set
\[
\eta_u^{(0)} = x_u,
\]
where $x_u$ denotes the initial node features of the node $u \in V(G)$. For $t \geq 0$, we define
\[
\eta_u^{(t+1)} = \bigl( \eta_u^{(t)}, \mu_u^{(t+1)} \bigr) \in H_t \times M_{t+1} = H_{t+1},
\]
where
\[
\mu_u^{(t+1)} = \frac{1}{|V(G)|} \sum_{v \in N(u)} w_{uv} \, \delta_{\eta_v^{(t)}}.
\]
Here, $\delta_{\eta}$ denotes the Dirac measure concentrated at the point $\eta \in H_t$, that is, for any measurable set $A \subset H_t$,
\[
\delta_{\eta}(A) =
\begin{cases}
1, & \text{if } \eta \in A, \\
0, & \text{otherwise}.
\end{cases}
\]

\paragraph{Mean induced IDMs}
Let $(G,u) \in V_1(\cG_{r,p_0})$. We define the mean induced IDM of order $t$, denoted by $\overline{\eta}_u^{(t)}$, recursively as follows. Let
\[
\text{deg}(u) = \sum_{v \in N(u)} w_{uv} > 0.
\]
The positivity of $\text{deg}(u)$ holds since, by definition, graphs in $\cG_{r,p_0}$ do not contain isolated nodes. We set
\[
\overline{\eta}_u^{(0)} = a_G(u).
\]
For $t \geq 0$, we define
\[
\overline{\eta}_u^{(t+1)} = \bigl( \overline{\eta}_u^{(t)}, \pi_u^{(t+1)} \bigr) \in H_t \times M_{t+1} = H_{t+1},
\]
where
\[
\pi_u^{(t+1)} = \frac{1}{\text{deg}(u)} \sum_{v \in N(u)} w_{uv} \, \delta_{\overline{\eta}_v^{(t)}}.
\]

\paragraph{Induced pseudo metric on $V_1(\cG_{r,p_0})$}
Since $d_{\mathbf{K},L}$ is a well-defined metric on $H_L$, the following pseudo-metrics on $V_1(\cG_{r,p_0})$ are well defined for $(G,u), (G',u') \in V_1(\cG_{r,p_0})$:
\begin{itemize}
\item[(i)] the normalized-sum pseudo-metric
\[
d_{\mathrm{sum},L}\bigl( (G,u), (G',u') \bigr)
\coloneq
d_{\mathbf{K},L}\bigl( \eta_u^{(L)}, \eta_{u'}^{(L)} \bigr),
\]

\item[(ii)] the mean-aggregation pseudo-metric
\[
d_{\mathrm{mean},L}\bigl( (G,u), (G',u') \bigr)
\coloneq
d_{\mathbf{K},L}\bigl( \overline{\eta}_u^{(L)}, \overline{\eta}_{u'}^{(L)} \bigr).
\]
\end{itemize}

\medskip
Below, we derive equivalent expressions of the MPNNs defined in
\cref{mpnn:order_norm} and \cref{mpnn:mean_aggr} through their induced
IDMs. These formulations will be used later to prove the Lipschitz
property with respect to the corresponding pseudo-metrics in
\cref{prop:lipschitz_sum_mpnn,prop:lipschitz_mean_mpnn_restricted}.

\begin{lemma}
\label{lemma:mpnns_equivalent_formulation}
For $(G,u) \in V_1(\cG_{r,p_0})$, let $h_u^{(t)}$, and $\overline{h}_u^{(t)}$  be as defined in \cref{mpnn:order_norm}, and \cref{mpnn:mean_aggr}, respectively. Let $\eta_u^{(t)}$, and $\overline{\eta}_u^{(t)}$, denote the corresponding induced IDMs defined above. Then, for all $t \in \Nb$, the following identities hold:
\begin{itemize}
\item[(i)]
$
h_u^{(t)}
= g_{\text{sum}}^{(t)} (\eta_u^{(t)}) \coloneq
\phi_{t}\Bigl(
g_{\text{sum}}^{(t-1)}\bigl(\eta_u^{(t-1)}\bigr),
\;
\int g_{\text{sum}}^{(t-1)}(z)\, d\mu_u^{(t)}(z)
\Bigr),
$
\item[(ii)]
$
\overline{h}_u^{(t)}
= g_{\text{mean}}^{(t)} (\overline{\eta}_u^{(t)})  \coloneq
\phi_{t}\Bigl(
g_{\text{mean}}^{(t-1)}\bigl(\overline{\eta}_u^{(t-1)}\bigr),
\;
\int g_{\text{mean}}^{(t-1)}(z)\, d\pi_u^{(t)}(z)
\Bigr).
$
\end{itemize}
where $h_u^{(0)}=g_{\text{sum}}^{(0)}(\eta_u^{(0)}) = g_{\text{mean}}^{(0)}(\eta_u^{(0)}) \coloneq \eta_u^{(0)} = a_G(u)$ for all $(G,u) \in V_1(\cG_{r,p_0})$.
\end{lemma}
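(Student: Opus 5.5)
The plan is to prove both identities simultaneously by induction on $t$, using only the recursive definitions of the MPNN features and of the induced IDMs, together with the fact that integrating against a weighted sum of Dirac measures reduces to the corresponding weighted sum of point evaluations. The functions $g^{(t)}_{\text{sum}}$ and $g^{(t)}_{\text{mean}}$ are defined recursively on $H_t$. An element $\eta=(\eta',\mu)\in H_{t-1}\times M_t$ is mapped to $\phi_t\bigl(g^{(t-1)}(\eta'),\int g^{(t-1)}\,d\mu\bigr)$. We should first confirm that this recursion is well defined. This needs $g^{(t-1)}$ to be Borel measurable and bounded on the compact space $H_{t-1}$, so that the integral against a finite measure exists. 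Measurability and boundedness can be carried along in the induction: $g^{(0)}$ is the identity on $B_{r,p_0}$, and continuity is inherited because $\phi_t$ is Lipschitz and $\mu\mapsto\int f\,d\mu$ is continuous for the weak$^*$ topology, which $\mathbf K$ metrizes. Alternatively, since the statement only concerns induced IDMs, whose measures are finitely supported, we can note that the integrals are finite sums and sidestep measurability entirely.

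The base case $t=0$ holds by definition: $h^{(0)}_u=a_G(u)=\eta^{(0)}_u=g^{(0)}(\eta^{(0)}_u)$, and the same holds for the mean version. For the induction step in (i), we write $\eta^{(t)}_u=(\eta^{(t-1)}_u,\mu^{(t)}_u)$ with $\mu^{(t)}_u=\frac{1}{|V(G)|}\sum_{v\in N(u)}w_{uv}\,\delta_{\eta^{(t-1)}_v}$. By linearity of the integral in the measure and $\int f\,d\delta_\eta=f(\eta)$, this gives
\[
\int g^{(t-1)}_{\text{sum}}\,d\mu^{(t)}_u=\frac{1}{|V(G)|}\sum_{v\in N(u)}w_{uv}\,g^{(t-1)}_{\text{sum}}\bigl(\eta^{(t-1)}_v\bigr)=\frac{1}{|V(G)|}\sum_{v\in N(u)}w_{uv}\,h^{(t-1)}_v ,
\]
where the last equality applies the induction hypothesis to every neighbour $v$. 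Applying the induction hypothesis at $u$ gives $g^{(t-1)}_{\text{sum}}(\eta^{(t-1)}_u)=h^{(t-1)}_u$, and substituting both into $\phi_t$ reproduces \cref{mpnn:order_norm}. The mean case (ii) is identical, with $\frac{1}{|V(G)|}$ replaced by $\frac{1}{\deg(u)}$, which is positive because graphs have no isolated nodes. Because the hypothesis must hold for all nodes of all graphs at level $t-1$, the induction is taken over $t$ uniformly in $(G,u)$.

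The only real subtlety is checking that $\eta^{(t)}_u$ actually lies in $H_t$, that is, that $\mu^{(t)}_u$ has total mass at most $1$. For the mean aggregation the mass is exactly $1$. For normalized sum the mass is $\frac{1}{|V(G)|}\sum_{v}w_{uv}\le\frac{\deg_{\text{unw}}(u)}{|V(G)|}<1$, using the standing assumption $E\subset(0,1]$. With this settled, the remaining argument is routine.
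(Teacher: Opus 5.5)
Your proof is correct and follows the same route as the paper: induction on $t$, expanding $\int g^{(t-1)}\,d\mu_u^{(t)}$ as the weighted sum of point evaluations at the neighbours' IDMs, then applying the induction hypothesis at $u$ and at each neighbour. Your extra checks are sound and are details the paper leaves implicit: that $g^{(t-1)}$ is continuous and bounded, so the recursion is well defined on all of $H_t$, and that $\mu_u^{(t)}$ has total mass at most $1$ under the assumption $w_{uv}\in(0,1]$.
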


\begin{proof}
The proof proceeds by induction on $t$. By direct application of the definition of IDMs, we obtain
$$
\int g^{(t)}(z)\, d\mu^{(t+1)}(z)=
\frac{1}{|V(G)|} \sum_{v \in N(u)}w_{uv} g^{(t)}\!\mleft(\eta_v^{(t)}\mright) .
$$
The claim then follows by applying the induction hypothesis.
\end{proof}

The following lemma reduces differences of vector-valued integrals to scalar-valued ones, which allows us to apply the Kantorovich--Rubinshtein distance later on.

\begin{lemma}
\label{lemma:vector_to_scalar_reduction}
Let $(\cX,\cA)$ be a measurable space, let $\mu,\mu'$ be finite measures on $(\cX,\cA)$, and let $G \colon \cX \to \Rb^m$ be integrable with respect to both $\mu$ and $\mu'$. Then
\begin{equation*}
\mleft\|
\int_{\cX} G \, d\mu - \int_{\cX} G \, d\mu'
\mright\|_2
=
\sup_{\|a\|_2 \leq 1}
\mleft|
\int_{\cX} \langle a, G(x) \rangle \, d\mu(x)
-
\int_{\cX} \langle a, G(x) \rangle \, d\mu'(x)
\mright|.
\end{equation*}
\end{lemma}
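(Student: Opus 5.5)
We prove the identity by establishing two inequalities, working with the vector $v \coloneq \int_{\cX} G\,d\mu - \int_{\cX} G\,d\mu' \in \Rb^m$. Throughout we use linearity of the Bochner (equivalently, componentwise) integral: for every fixed $a\in\Rb^m$, the map $x\mapsto\langle a,G(x)\rangle$ is a finite linear combination of the integrable coordinate functions $G_i$. It is therefore integrable with respect to both $\mu$ and $\mu'$, and
\[
\int_{\cX}\langle a,G(x)\rangle\,d\mu(x)=\Bigl\langle a,\int_{\cX} G\,d\mu\Bigr\rangle ,
\]
and likewise for $\mu'$. This identity is checked coordinatewise; finiteness of the measures guarantees that every integral involved is finite.

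With this identity in hand, the right-hand side of the claim equals $\sup_{\|a\|_2\le 1}|\langle a,v\rangle|$. The statement therefore reduces to the dual characterization of the Euclidean norm.

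For the upper bound we apply Cauchy--Schwarz: $|\langle a,v\rangle|\le\|a\|_2\|v\|_2\le\|v\|_2$ for every $a$ with $\|a\|_2\le 1$.

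For the lower bound, if $v=0$ both sides vanish, since each term on the right is $|\langle a,0\rangle|=0$. Otherwise we choose $a\coloneq v/\|v\|_2$, which gives $\langle a,v\rangle=\|v\|_2$, so the supremum is attained.

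There is no real obstacle in this argument. The only point that needs care is justifying that the inner product can be moved inside the integral, and that follows from integrability of each coordinate together with finiteness of $\mu$ and $\mu'$.
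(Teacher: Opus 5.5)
Your proof is correct and follows the paper's argument: both set $v$ to be the difference of the two integrals, use linearity to write $\langle a,v\rangle$ as the difference of the scalar integrals, and invoke the dual characterization $\|v\|_2=\sup_{\|a\|_2\le 1}|\langle a,v\rangle|$, which you additionally verify via Cauchy--Schwarz and the choice $a=v/\|v\|_2$. One small remark: integrability of $G$ alone is what makes the integrals finite and lets you move $\langle a,\cdot\rangle$ inside, so finiteness of $\mu,\mu'$ is not needed for that step.
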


\begin{proof}
Set
\begin{equation*}
v \coloneq \int_{\cX} G \, d\mu - \int_{\cX} G \, d\mu' \in \Rb^m.
\end{equation*}
By the dual characterization of the Euclidean norm,
\begin{equation*}
\|v\|_2 = \sup_{\|a\|_2 \leq 1} \langle a, v \rangle
= \sup_{\|a\|_2 \leq 1} |\langle a, v \rangle|.
\end{equation*}
Using linearity of the integral and the inner product, for each $a \in \Rb^m$ we have
\begin{equation*}
\langle a, v \rangle
=
\mleft\langle a, \int_{\cX} G \, d\mu - \int_{\cX} G \, d\mu' \mright\rangle
=
\int_{\cX} \langle a, G(x) \rangle \, d\mu(x)
-
\int_{\cX} \langle a, G(x) \rangle \, d\mu'(x).
\end{equation*}
Substituting into the previous yields the claimed identity.
\end{proof}

\subsection{Lipschitzness of sum/mean MPNNs}
\label{subsec:lipschitz_sum_mean}

We now establish Lipschitz continuity of the normalized-sum and mean MPNN
maps with respect to the induced pseudo-metrics. This is the key analytic
step that connects the recursive representation space geometry to the
stability of message passing.

\begin{proposition}[Lipschitzness of normalized-sum MPNNs]
\label{prop:lipschitz_sum_mpnn}
Fix $L \in \Nb$ and consider the normalized-sum MPNN in \cref{mpnn:order_norm}.
Let $g_{\text{sum}}^{(t)} \colon H_t \to \Rb^{p_t}$ be defined as in \cref{lemma:mpnns_equivalent_formulation}, i.e.,
$h_u^{(t)} = g_{\text{sum}}^{(t)}(\eta_u^{(t)})$ for all $(G,u) \in V_1(\cG_{r,p_0})$.
Then for each $t \in [L]$, the map $g^{(t)}$ is Lipschitz with respect to $d_{\mathbf{K},t}$.
In particular, the function $f_{\mathrm{sum}} \colon V_1(\cG_{r,p_0}) \to \Rb^{p_L}$ defined by
$f_{\mathrm{sum}}(G,u) \coloneq h_u^{(L)}$ is Lipschitz with respect to $d_{\mathrm{sum},L}$, i.e., there exists $C_{\mathrm{sum},L}>0$ such that
\begin{equation*}
\|f_{\mathrm{sum}}(G,u) - f_{\mathrm{sum}}(G',u')\|_2
\leq
C_{\mathrm{sum},L}\, d_{\mathrm{sum},L}\bigl((G,u),(G',u')\bigr),
\quad \forall (G,u),(G',u') \in V_1(\cG_{r,p_0}).
\end{equation*}
\end{proposition}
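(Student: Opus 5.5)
The plan is to prove by induction on $t\in[L]_0$ that $g_{\text{sum}}^{(t)}\colon H_t\to\Rb^{p_t}$ is both bounded and Lipschitz with respect to $d_{\mathbf K,t}$. We track constants $C_t$ (Lipschitz) and $R_t$ (sup bound) with
\[
\|g^{(t)}(x)-g^{(t)}(x')\|_2\le C_t\, d_{\mathbf K,t}(x,x'),\qquad \sup_{x\in H_t}\|g^{(t)}(x)\|_2\le R_t .
\]
Boundedness is needed because the Kantorovich--Rubinshtein distance in \cref{def:Kantorovich_distance} only tests against functions with $\|f\|_\infty\le 1$. For the base case $t=0$, $g^{(0)}$ is the identity on $H_0=B_{r,p_0}$, so $C_0=1$ and $R_0=r$. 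The claim for $f_{\mathrm{sum}}$ then follows from \cref{lemma:mpnns_equivalent_formulation}, since $f_{\mathrm{sum}}(G,u)=g^{(L)}(\eta_u^{(L)})$ and $d_{\mathrm{sum},L}$ is $d_{\mathbf K,L}$ evaluated on induced IDMs; one can take $C_{\mathrm{sum},L}=C_L$.

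For the induction step, write $x=(\eta,\mu)$ and $x'=(\eta',\mu')$ in $H_t=H_{t-1}\times M_t$. The Lipschitz assumption on $\phi_t$ gives
\[
\|g^{(t)}(x)-g^{(t)}(x')\|_2\le C^{(t)}_{\phi,1}C_{t-1}\,d_{\mathbf K,t-1}(\eta,\eta')+C^{(t)}_{\phi,2}\Bigl\|\int g^{(t-1)}\,d\mu-\int g^{(t-1)}\,d\mu'\Bigr\|_2 .
\]
For the integral term, \cref{lemma:vector_to_scalar_reduction} reduces it to a supremum over unit vectors $a$ of the scalar differences $|\int f_a\,d\mu-\int f_a\,d\mu'|$, where $f_a\coloneq\langle a,g^{(t-1)}\rangle$. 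By Cauchy--Schwarz and the induction hypothesis, $f_a$ is $C_{t-1}$-Lipschitz and bounded by $R_{t-1}$. Setting $s\coloneq\max(C_{t-1},R_{t-1},1)$, the function $f_a/s$ is admissible in the supremum defining $\mathbf K$. Hence the integral term is at most $s\,\mathbf K(\mu,\mu')$, and we may take $C_t=\max(C^{(t)}_{\phi,1}C_{t-1},\,C^{(t)}_{\phi,2}s)$, because $d_{\mathbf K,t}=d_{\mathbf K,t-1}+\mathbf K$.

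The remaining constant is $R_t$. We have $\|g^{(t)}(x)\|_2\le B^{(0)}_t+C^{(t)}_{\phi,1}R_{t-1}+C^{(t)}_{\phi,2}\|\int g^{(t-1)}d\mu\|_2$, and the last norm is at most $R_{t-1}\mu(H_{t-1})\le R_{t-1}$ since elements of $M_t$ are sub-probability measures. This is the place to be careful: it must be checked that the induced measures $\mu_u^{(t)}$ really have mass at most $1$. Their mass is $\frac1{|V(G)|}\sum_{v\in N(u)}w_{uv}$, which is at most $1$ under the standing normalization $w_{uv}\in(0,1]$ and $|N(u)|<|V(G)|$. The bound on $g^{(t)}$ must hold on all of $H_t$, not only on graph-induced points, because the $\mathbf K$-test functions range over the whole space; the mass bound for arbitrary elements of $M_t$ supplies exactly this. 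The measurability and integrability of $g^{(t-1)}$ follow from its continuity on the compact metric space $H_{t-1}$ (\cref{thm:compactness_of_HL}).
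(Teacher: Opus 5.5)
Your proposal is correct and follows essentially the same route as the paper. Both arguments run an induction that tracks a Lipschitz constant together with a sup bound, reduce to the scalar functions $f_a=\langle a,g^{(t-1)}\rangle$ via \cref{lemma:vector_to_scalar_reduction}, rescale $f_a$ into the class of bounded $1$-Lipschitz test functions defining $\mathbf K$, and use the sub-probability mass bound to propagate boundedness. The only differences are cosmetic: the paper normalizes by $B_{t-1}+C_{t-1}$ where you use $\max(C_{t-1},R_{t-1},1)$, and you spell out the mass and measurability checks that the paper leaves implicit.
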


\begin{proof}
We prove by induction on $t$ that there exist constants $C_t,b_t>0$ such that, for all $x,x' \in H_t$,
\begin{equation*}
\|g_{\text{sum}}^{(t)}(x) - g_{\text{sum}}^{(t)}(x')\|_2 \leq C_t \, d_{\mathbf{K},t}(x,x')
\quad \text{and} \quad
\sup_{x \in H_t} \|g^{(t)}(x)\|_2 \leq B_t.
\end{equation*}

We have $H_0 = B_{r,p_0}$, $g_{\text{sum}}^{(0)}(\eta)=\eta$, and $d_{\mathbf{K},0}(\eta,\eta')=\|\eta-\eta'\|_2$.
Hence $C_0=1$ and $B_0=r$.

Assume the claim holds for $t-1$, with constants $C_{t-1}$ and $M_{t-1}$.
Let $x=(\eta,\mu)$ and $x'=(\eta',\mu')$ in $H_t$ , where $\eta,\eta' \in H_{t-1}$  and $\mu,\mu' \in M_t$.
By \cref{lemma:mpnns_equivalent_formulation},
\begin{equation*}
g_{\text{sum}}^{(t)}(x)
=
\phi_t\Bigl(g_{\text{sum}}^{(t-1)}(\eta), \int g_{\text{sum}}^{(t-1)}(z)\, d\mu(z)\Bigr),
\qquad
g^{(t)}(x')
=
\phi_t\Bigl(g_{\text{sum}}^{(t-1)}(\eta'), \int g_{\text{sum}}^{(t-1)}(z)\, d\mu'(z)\Bigr).
\end{equation*}
Using the assumed Lipschitz property of $\phi_t$, we obtain
\begin{equation*}
\|g_{\text{sum}}^{(t)}(x)-g_{\text{sum}}^{(t)}(x')\|_2
\leq
C_{\phi,1}^{(t)} \|g_{\text{sum}}^{(t-1)}(\eta)-g_{\text{sum}}^{(t-1)}(\eta')\|_2
+
C_{\phi,2}^{(t)} \mleft\|\int g_{\text{sum}}^{(t-1)}\, d\mu - \int g_{\text{sum}}^{(t-1)}\, d\mu'\mright\|_2.
\end{equation*}

\textbf{Bounding the integral term} Let  $G \colon H_{t-1}\to \Rb^{p_{t-1}}$ denote $g_{\text{sum}}^{(t-1)}$.
For any unit vector $a \in \Rb^{p_{t-1}}$ with $\|a\|_2 \leq 1$, define the scalar function $f_a \colon H_{t-1} \to \Rb$ as $f_a(z) \coloneq \langle a, G(z)\rangle$. Then
\begin{equation*}
\|f_a\|_\infty \leq B_{t-1},
\qquad
\mathrm{Lip}(f_a) \leq C_{t-1}.
\end{equation*}
where $\mathrm{Lip}(f_a)$ denotes the smallest Lipschitz constant of $f_a$.

Set $\lambda_{t-1} \coloneq B_{t-1}+C_{t-1}$ and define $\widetilde{f}_a \coloneq f_a/\lambda_{t-1}$.
Then $\|\widetilde{f}_a\|_\infty \leq 1$ and $\mathrm{Lip}(\widetilde{f}_a)\leq 1$, so by definition of $\mathbf{K}$,
\begin{equation*}
\mleft|\int f_a\, d\mu - \int f_a\, d\mu'\mright|
=
\lambda_{t-1}\mleft|\int \widetilde{f}_a\, d\mu - \int \widetilde{f}_a\, d\mu'\mright|
\leq
\lambda_{t-1}\, \mathbf{K}(\mu,\mu').
\end{equation*}
Taking the supremum over $\|a\|_2\leq 1$ and using \cref{lemma:vector_to_scalar_reduction} yields
\begin{equation*}
\mleft\|\int G\, d\mu - \int G\, d\mu'\mright\|_2
\leq
\lambda_{t-1}\, \mathbf{K}(\mu,\mu').
\end{equation*}

\textbf{Lipschitzess of $g^{(t)}$}
By the induction hypothesis,
\begin{equation*}
\|g_{\text{sum}}^{(t-1)}(\eta)-g_{\text{sum}}^{(t-1)}(\eta')\|_2 \leq C_{t-1}\, d_{\mathbf{K},t-1}(\eta,\eta').
\end{equation*}
Combining with the previous gives
\begin{equation*}
\|g_{\text{sum}}^{(t)}(x)-g_{\text{sum}}^{(t)}(x')\|_2
\leq
C_{\phi,1}^{(t)} C_{t-1}\, d_{\mathbf{K},t-1}(\eta,\eta')
+
C_{\phi,2}^{(t)} \lambda_{t-1}\, \mathbf{K}(\mu,\mu').
\end{equation*}
Since $d_{\mathbf{K},t}(x,x') = d_{\mathbf{K},t-1}(\eta,\eta') + \mathbf{K}(\mu,\mu')$, we obtain
\begin{equation*}
\|g_{\text{sum}}^{(t)}(x)-g_{\text{sum}}^{(t)}(x')\|_2
\leq
C_t\, d_{\mathbf{K},t}(x,x'),
\qquad
C_t \coloneq \max\Bigl\{ C_{\phi,1}^{(t)} C_{t-1},\ C_{\phi,2}^{(t)}(B_{t-1}+C_{t-1}) \Bigr\}.
\end{equation*}

\textbf{Boundedness of $g^{(t)}$}
Let $x=(\eta,\mu)\in H_t$ .
Using the Lipschitz bound for $\phi_t$ and the offset assumption,
\begin{equation*}
\|g^{(t)}(x)\|_2
\leq
\|\phi_t(0,0)\|_2
+
C_{\phi,1}^{(t)} \|g^{(t-1)}(\eta)\|_2
+
C_{\phi,2}^{(t)} \mleft\|\int g^{(t-1)}(z)\, d\mu(z)\mright\|_2.
\end{equation*}
Since $\mu$ has total mass at most $1$ in the normalized-sum case, we have
\begin{equation*}
\mleft\|\int g^{(t-1)}(z)\, d\mu(z)\mright\|_2
\leq
\int \|g^{(t-1)}(z)\|_2\, d\mu(z)
\leq
B_{t-1}\, \mu(H_{t-1})
\leq
B_{t-1}.
\end{equation*}
Therefore,
\begin{equation*}
\|g^{(t)}(x)\|_2 \leq B_t + (C_{\phi,1}^{(t)}+C_{\phi,2}^{(t)})B_{t-1},
\end{equation*}
so it suffices to take
\begin{equation*}
B_t \coloneq B^{(\phi)}_t + (C_{\phi,1}^{(t)}+C_{\phi,2}^{(t)})B_{t-1}.
\end{equation*}
This completes the induction.

\noindent
Overall, we have that for $(G,u),(G',u') \in V_1(\cG_{r,p_0})$, we have
\begin{equation*}
\|h_u^{(L)} - h_{u'}^{(L)}\|_2
=
\|g^{(L)}(\eta_u^{(L)}) - g^{(L)}(\eta_{u'}^{(L)})\|_2
\leq
C_L\, d_{\mathbf{K},L}\bigl(\eta_u^{(L)},\eta_{u'}^{(L)}\bigr)
=
C_L\, d_{\mathrm{sum},L}\bigl((G,u),(G',u')\bigr).
\end{equation*}
Thus $f_{\mathrm{sum}}$ is Lipschitz with Lipschitz constant $C_{\mathrm{sum},L} \coloneq C_L$.
\end{proof}

\begin{proposition}[Lipschitzness of mean-aggregation MPNNs]
\label{prop:lipschitz_mean_mpnn_restricted}
Fix $L \in \Nb$ and consider the mean-aggregation MPNN in \cref{mpnn:mean_aggr}.
For each $t \in \{0,\dots,L\}$, let
$$
H^{\cG}_t
\coloneq
\Bigl\{ \overline{\eta}_u^{(t)} \ \Big|\ (G,u) \in \cG_{r,p_0}\otimes \cV \Bigr\}
\subseteq H_t
$$
denote the set of mean-induced IDMs of order $t$. Let $\overline{g}^{(t)} \colon H^{\cG}_t \to \Rb^{p_t}$ be defined as in \cref{lemma:mpnns_equivalent_formulation}, i.e.,
$\overline{h}_u^{(t)} = \overline{g}^{(t)}(\overline{\eta}_u^{(t)})$ for all $(G,u) \in V_1(\cG_{r,p_0})$.
Then, for each $t \in [L]$, the map $\overline{g}^{(t)}$ is Lipschitz with respect to $d_{\mathbf{K},t}$ restricted to $H^{\cG}_t$.
In particular, the function $f_{\mathrm{mean}} \colon V_1(\cG_{r,p_0}) \to \Rb^{p_L}$ defined by
$f_{\mathrm{mean}}(G,u) \coloneq \overline{h}_u^{(L)}$ is Lipschitz with respect to $d_{\mathrm{mean},L}$, i.e., there exists $C_{\mathrm{mean},L}>0$ such that
\begin{equation*}
\|f_{\mathrm{mean}}(G,u) - f_{\mathrm{mean}}(G',u')\|_2
\leq
C_{\mathrm{mean},L}\, d_{\mathrm{mean},L}\bigl((G,u),(G',u')\bigr),
\quad \forall (G,u),(G',u') \in V_1(\cG_{r,p_0}).
\end{equation*}
\end{proposition}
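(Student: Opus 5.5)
We follow the template of the proof of \cref{prop:lipschitz_sum_mpnn} and argue by induction on $t$. The claim is that there are constants $\overline C_t, \overline B_t>0$ such that, for all $x,x'\in H^{\cG}_t$,
\[
\|\overline g^{(t)}(x)-\overline g^{(t)}(x')\|_2\le \overline C_t\, d_{\mathbf K,t}(x,x')
\qquad\text{and}\qquad
\sup_{x\in H^{\cG}_t}\|\overline g^{(t)}(x)\|_2\le \overline B_t .
\]
The base case $t=0$ is immediate: $\overline g^{(0)}$ is the identity on $B_{r,p_0}$, so $\overline C_0=1$ and $\overline B_0=r$.

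For the induction step, write $x=(\eta,\pi)$ and $x'=(\eta',\pi')$ with $x,x'\in H^{\cG}_t$.

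\emph{Step 1 (reduce to the previous level).} By construction, $\eta,\eta'\in H^{\cG}_{t-1}$. The measures $\pi,\pi'$ are probability measures, since they are normalized by the weighted degree. They are also supported on $H^{\cG}_{t-1}$, because they are weighted sums of Dirac masses at $\overline\eta_v^{(t-1)}$.

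\emph{Step 2 (use the Lipschitz property of $\phi_t$).} This gives
\[
\|\overline g^{(t)}(x)-\overline g^{(t)}(x')\|_2\le C^{(t)}_{\phi,1}\,\overline C_{t-1}\,d_{\mathbf K,t-1}(\eta,\eta')+C^{(t)}_{\phi,2}\Bigl\|\int \overline g^{(t-1)}\,d\pi-\int\overline g^{(t-1)}\,d\pi'\Bigr\|_2 .
\]

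\emph{Step 3 (bound the integral term).} Apply \cref{lemma:vector_to_scalar_reduction} and consider $f_a=\langle a,\overline g^{(t-1)}\rangle$ with $\|a\|_2\le1$. Rescaling by $\lambda_{t-1}=\overline B_{t-1}+\overline C_{t-1}$ yields the bound $\lambda_{t-1}\mathbf K(\pi,\pi')$.

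\emph{Step 4 (boundedness).} Because $\pi$ has unit mass, $\|\int\overline g^{(t-1)}d\pi\|_2\le\overline B_{t-1}$. Hence we may take
\[
\overline B_t=B^{(0)}_t+\bigl(C^{(t)}_{\phi,1}+C^{(t)}_{\phi,2}\bigr)\overline B_{t-1}.
\]

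Finally, since $d_{\mathbf K,t}=d_{\mathbf K,t-1}+\mathbf K$, we can set
\[
\overline C_t=\max\bigl\{C^{(t)}_{\phi,1}\overline C_{t-1},\ C^{(t)}_{\phi,2}\lambda_{t-1}\bigr\}.
\]
Evaluating at $t=L$ on $\overline\eta^{(L)}_u,\overline\eta^{(L)}_{u'}$ then gives the bound on $f_{\mathrm{mean}}$ with $C_{\mathrm{mean},L}=\overline C_L$.

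The main obstacle is Step 3. The test function $f_a$ is only known to be Lipschitz and bounded on the subset $H^{\cG}_{t-1}$, whereas $\mathbf K$ on $M_t=\mathcal M_{\le1}(H_{t-1})$ takes its supremum over functions defined on all of $H_{t-1}$. We would handle this with a McShane-type extension: extend $\widetilde f_a=f_a/\lambda_{t-1}$ (scalar, $1$-Lipschitz, $|\cdot|\le1$) to $H_{t-1}$ by $\inf_{y}\{\widetilde f_a(y)+d_{\mathbf K,t-1}(\cdot,y)\}$, truncated to $[-1,1]$. The extension is still $1$-Lipschitz and bounded by $1$. Since $\pi,\pi'$ are supported on $H^{\cG}_{t-1}$, the integrals of the extension agree with those of $\widetilde f_a$, so the $\mathbf K$ bound applies.

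This restriction to the image set is precisely why the statement is phrased on $H^{\cG}_t$. Unlike in the sum case, $\overline g^{(t)}$ is a priori only defined there, via \cref{lemma:mpnns_equivalent_formulation}. Also, general elements of $M_t$ may have mass below one, in which case the mean formula is not the natural one; restricting to $H^{\cG}_t$ avoids this. The boundedness in Step 4 likewise only needs unit mass on the image set.
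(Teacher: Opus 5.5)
Your proposal is correct and follows essentially the same route as the paper, whose proof simply reruns the induction of \cref{prop:lipschitz_sum_mpnn} on the restricted domain $H^{\cG}_t$ with $\mu_u^{(t)}$ replaced by $\pi_u^{(t)}$. Your McShane-extension step makes explicit a detail the paper leaves implicit. It could also be avoided: the recursion defining $\overline g^{(t)}$ is formally identical to that of $g^{(t)}_{\mathrm{sum}}$ and makes sense on all of $H_t$, so $\overline g^{(t)}$ is just the restriction of a function already shown to be Lipschitz.
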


\begin{proof}
The proof follows the same argument as in \cref{prop:lipschitz_sum_mpnn}, with the only changes that we work on the restricted domain $H^{\cG}_t$, so that mean-aggregation is well defined, and we replace the induced measures $\mu_u^{(t)}$ by $\pi_u^{(t)}$.
\end{proof}

\subsection{Conclusion: finite Lipschitz classes for sum/mean aggregation}
\label{subsec:conclusion_sum_mean}

We now state the main consequence for normalized-sum and mean
aggregation, corresponding to Theorem~5 in the main paper, and explain
how it follows directly from the Lipschitz continuity established above
together with compactness (hence total boundedness) of the representation
space $(H_L,d_{\mathbf K,L})$.

\begin{theorem}[\cref{thm:finite_complexity_classes} (sum/mean aggregation) in the main text]
\label{app1:thm:finite_complexity_classes}
Fix $L\in\Nb$. Consider the hypothesis class induced by the normalized-sum
MPNN \cref{mpnn:order_norm} (respectively, the mean-aggregation MPNN
\cref{mpnn:mean_aggr}) with Lipschitz update maps $\{\phi_t\}_{t\in[L]}$
as assumed in the main text. Endow $V_1(\cG_{r,p_0})$ with the
pseudo-metric $d_{\mathrm{sum},L}$ (respectively, $d_{\mathrm{mean},L}$).
Then the resulting hypothesis class is a finite Lipschitz class in the
sense of \cref{def:finite_uniform_approx}.
\end{theorem}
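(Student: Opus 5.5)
The proof is a direct assembly of the two ingredients already established. \cref{def:finite_uniform_approx} asks for two things: a pseudo-metric on $V_1(\cG_{r,p_0})$ under which every hypothesis $f_{\vec\theta}$ is Lipschitz with a finite constant, and finite covering numbers at every scale. We take the pseudo-metric to be $d_{\mathrm{sum},L}$ (respectively $d_{\mathrm{mean},L}$). It is a pseudo-metric because it is the pullback of the metric $d_{\mathbf K,L}$ on $H_L$ under the induced-IDM map $\iota\colon (G,u)\mapsto \eta_u^{(L)}$ (respectively $\overline\eta_u^{(L)}$). Symmetry and the triangle inequality are inherited from $d_{\mathbf K,L}$. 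Only identity of indiscernibles can fail, since $\iota$ need not be injective.

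The Lipschitz part is immediate. For each parameter choice $\vec\theta$, the update maps $\phi_t$ are Lipschitz with constants $C^{(t)}_{\phi,1},C^{(t)}_{\phi,2}$ and have bounded offsets. \cref{prop:lipschitz_sum_mpnn} (respectively \cref{prop:lipschitz_mean_mpnn_restricted}) then gives $\|f_{\vec\theta}(G,u)-f_{\vec\theta}(G',u')\|_2\le C_L\, d_{\mathrm{sum},L}((G,u),(G',u'))$, where $C_L$ is the explicit recursive constant built from the $C_{\phi,\cdot}^{(t)}$ and the $B_t$. Hence $M_{\vec\theta}\le C_L<\infty$, and $C_L$ itself serves as a computable certificate $B_{\vec\theta}$. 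If the outputs are vector-valued, a fixed scalar coordinate or a Lipschitz readout $\psi$ preserves this, at the cost of multiplying by $\mathrm{Lip}(\psi)$.

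For the covering numbers we use \cref{thm:compactness_of_HL}: $(H_L,d_{\mathbf K,L})$ is a compact metric space, so it is totally bounded. Fix $\varepsilon>0$. The image $\iota(V_1(\cG_{r,p_0}))\subseteq H_L$ is a subset of a totally bounded space and is therefore totally bounded as well. Concretely, take a finite $\varepsilon/2$-net $\{z_1,\dots,z_m\}$ of $H_L$. For each $i$ whose ball $B(z_i,\varepsilon/2)$ meets the image, pick a preimage $x_i$ with $\iota(x_i)\in B(z_i,\varepsilon/2)$. For any $x$, some $z_i$ satisfies $d_{\mathbf K,L}(\iota(x),z_i)\le\varepsilon/2$, and the triangle inequality gives $d(x,x_i)\le\varepsilon$. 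This yields a finite $\varepsilon$-cover of $V_1(\cG_{r,p_0})$ by actual graph–node pairs, so $\cN(V_1(\cG_{r,p_0}),d,\varepsilon)\le m<\infty$. Picking points of $V_1$ itself matters, because the training set must consist of real graph–node pairs.

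The main obstacle is a well-definedness check rather than an estimate. We must confirm that $\iota$ really lands in $H_L$, that is, that the neighbour measures lie in $\mathcal M_{\le1}(H_{t})$. For normalized sum, the total mass is $\frac{1}{|V(G)|}\sum_{v\in N(u)}w_{uv}\le \deg(u)/|V(G)|\le 1$, using the normalization $E\subset(0,1]$ and $|N(u)|\le|V(G)|$. For the mean, the measures are probability measures, which is where the absence of isolated nodes is needed. For mean aggregation we must also observe that \cref{prop:lipschitz_mean_mpnn_restricted} only establishes Lipschitzness on the restricted set $H^{\cG}_t$. This suffices, because both the Lipschitz property and the covering argument above only ever evaluate at points of $\iota(V_1(\cG_{r,p_0}))$.
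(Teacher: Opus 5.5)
Your proposal is correct and follows the paper's proof: Lipschitzness comes from \cref{prop:lipschitz_sum_mpnn} (respectively \cref{prop:lipschitz_mean_mpnn_restricted}), and finite covering numbers come from compactness of $(H_L,d_{\mathbf K,L})$ in \cref{thm:compactness_of_HL}, pulled back through the induced IDM map. Your $\varepsilon/2$-net step, which makes the cover points genuine graph--node pairs as the definition of covering number requires, and your total-mass check for the induced measures fill in details the paper leaves implicit, and both are done correctly.
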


\begin{proof}
By \cref{prop:lipschitz_sum_mpnn} (respectively,
\cref{prop:lipschitz_mean_mpnn_restricted}), the hypothesis map
$f_{\mathrm{sum}}(G,u)=h_u^{(L)}$ (respectively
$f_{\mathrm{mean}}(G,u)=\overline h_u^{(L)}$) is Lipschitz with respect to
the induced pseudo-metric on $V_1(\cG_{r,p_0})$. By \cref{thm:compactness_of_HL}, the metric space $(H_L,d_{\mathbf K,L})$ is compact, and therefore has a finite covering number for every radius $\varepsilon>0$. Since $d_{\mathrm{sum},L}$ and $d_{\mathrm{mean},L}$ are defined as pullbacks of $d_{\mathbf K,L}$ through the induced IDM maps, $\bigl(V_1(\cG_{r,p_0}), d_{\mathrm{sum},L}\bigr)$ and $\bigl(V_1(\cG_{r,p_0}), d_{\mathrm{mean},L}\bigr)$ admit finite $\varepsilon$-covers for every $\varepsilon>0$ as well.
\end{proof}

\textbf{Remark}
In the above result, the Lipschitz constant $M_{\vec\theta}$ appearing in \cref{def:finite_uniform_approx} depends only on the Lipschitz constants $\{C_{\phi,1}^{(t)},C_{\phi,2}^{(t)}\}_{t\in[L]}$ of the update maps $\{\phi_t\}_{t\in[L]}$ and on the number of layers $L$, and is independent of the size or structure of the input graph. In particular, in the special case
\begin{align*}
\cF_{\Theta} \coloneq
\Bigl\{
f \colon V_1(\cG_{r,p_0}) \to \Rb^{p_L}
\ \Big|\ &
f(G,u) = h_u^{(L)},\
\phi_t(x,y) = \sigma \mleft( \vec{W}_1^{(t)} x + \vec{W}_2^{(t)} y \mright), \\
& \vec{W}_1^{(t)}, \vec{W}_2^{(t)} \in \Rb^{p_t \times p_{t-1}},\ t \in [L]
\Bigr\}.
\end{align*}

where $\sigma$ is the ReLu function applied elementwise, the Lipschitz constant $M_{\vec\theta}$ depends only on the operator  $2$-norms $\|\vec{W}_1^{(t)}\|_2$ and $\|\vec{W}_2^{(t)}\|_2$ of the weight matrices and on $L$.

\section{Proof of \cref{thm:finite_complexity_classes} for max/min-aggregation}
\label{proof_finite_complexity_max}

In this section, we prove \cref{thm:finite_complexity_classes} for
max-aggregation MPNNs. We work in the slightly more general setting where
edge weights are vectors in a compact set $E \subset \Rb^{d_e}$, for $d_e\in \Nb$; the result
In the main paper, it corresponds to the special case $d_e=1$. As in the
sum/mean case, the proof proceeds by constructing a canonical
representation space endowed with a compact metric and showing that the
MPNN maps are Lipschitz with respect to the induced pseudo-metric on
$V_1(\cG_{r,p_0})$.

\subsection*{Roadmap}
The argument follows the same high-level structure as for sum/mean
aggregation:
\begin{enumerate}
\item We introduce induced \emph{iterated neighborhood-set} objects that
  encode rooted graph neighborhoods recursively.
\item We define an induced pseudo-metric on $V_1(\cG_{r,p_0})$ via a
  recursive Hausdorff metric.
\item We show that max-aggregation MPNNs are Lipschitz with respect to
  this pseudo-metric.
\item We conclude by combining Lipschitzness with compactness to verify
  \cref{def:finite_uniform_approx}.
\end{enumerate}

\paragraph{Max-aggregation MPNNs}
We recall the max-aggregation scheme defined in
\cref{mpnn:max_edge_aggr_0}. Fix $L\in\Nb$. For
$(G,u)\in V_1(\cG_{r,p_0})$, define $\hat{h}_u^{(0)}\coloneq x_u$ and,
for $t\in[L]$,
\begin{equation}
\label{mpnn:max_edge_aggr}
\hat{h}_u^{(t)}
=
\phi_t\mleft(
\hat{h}_u^{(t-1)},
\text{max}_{v\in N(u)}\,
M_t \mleft(\hat{h}_v^{(t-1)},\,w_{uv}\mright)
\mright),
\end{equation}
where $M_t:\Rb^{p_{t-1}}\times \Rb^{d_e}\to \Rb^{p_{t-1}}$ is a message map and
$\phi_t:\Rb^{p_{t-1}}\times \Rb^{p_{t-1}}\to \Rb^{p_t}$ is an update map. The
maximum is taken coordinatewise in $\Rb^{p_{t-1}}$. For each $t\in[L]$, assume
that $\phi_t$ and $M_t$ are Lipschitz, i.e., there exist constants
$C_{\phi,1}^{(t)}, C_{\phi,2}^{(t)}, C_{M,1}^{(t)}, C_{M,2}^{(t)}>0$ such
that
\[
\| \phi_t(x,y) - \phi_t(x',y') \|_2
\leq C_{\phi,1}^{(t)} \|x-x'\|_2 + C_{\phi,2}^{(t)} \|y-y'\|_2,
\]
and
\[
\| M_t(x,y) - M_t(x',y') \|_2
\leq C_{M,1}^{(t)} \|x-x'\|_2 + C_{M,2}^{(t)} \|y-y'\|_2 .
\]

\subsection{Induced iterated neighborhood-set objects}
\label{app:induced_iterated_set_objects}

Here we represent neighborhoods as sets of feature--edge pairs. This leads to a recursive representation in terms of iterated neighborhood-set objects equipped with Hausdorff metrics. Let $(G,u)\in V_1(\cG_{r,p_0})$. Define
$\eta_u^{\text{max},(t)}\in H_t^{\text{max}}$ recursively by
$\eta_u^{\text{max},(0)}\coloneq x_u\in H_0^{\text{max}}$ and, for $t\ge0$,
\[
\eta_u^{\text{max},(t+1)}
\coloneq
\bigl(\eta_u^{\text{max},(t)},\,A_u^{(t+1)}\bigr)\in H_{t+1}^{\text{max}},
\qquad
A_u^{(t+1)} \coloneq \bigl\{(\eta_v^{\text{max},(t)},\,w_{uv}) : v\in N(u)\bigr\}.
\]
Nonemptiness of $A_u^{(t+1)}$ follows since graphs in
$\cG_{r,p_0}$ have no isolated nodes.

Let
\[
H_t^{\cG,\text{max}}
\coloneq
\bigl\{\eta_u^{\text{max},(t)} \mid (G,u)\in V_1(\cG_{r,p_0}) \bigr\}
\subseteq H_t^{\text{max}}
\]
denote the collection of neighborhood-set objects induced by the graph-node
pairs. We define maps
$g_{\text{max}}^{(t)}:H_t^{\cG,\text{max}}\to \Rb^{p_t}$ recursively by
$g_{\text{max}}^{(0)}(x)=x$ and, for $t\in[L]$,
\[
g_{\text{max}}^{(t)}(\eta,A)
=
\phi_t\Bigl(
g_{\text{max}}^{(t-1)}(\eta),
\;
\text{max}_{(z,w)\in A}\,
\Gamma_t(z,w)
\Bigr),
\qquad
\Gamma_t(z,w)\coloneq M_t(g_{\text{max}}^{(t-1)}(z),w),
\]
where the maximum is taken coordinatewise in $\Rb^{p_{t-1}}$.

\begin{lemma}[Equivalent formulation for max-aggregation]
\label{lemma:mpnns_equivalent_formulation_max}
For all $(G,u) \in V_1(\cG_{r,p_0})$ and all $t$,
\[
\hat{h}_u^{(t)} = g_{\text{max}}^{(t)} \mleft(\eta_u^{\text{max},(t)}\mright).
\]
\end{lemma}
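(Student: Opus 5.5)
We argue by induction on $t$, exactly as for \cref{lemma:mpnns_equivalent_formulation}. For $t=0$ both sides equal $x_u=a_G(u)$, since $\hat h_u^{(0)}\coloneq x_u$, $\eta_u^{\max,(0)}\coloneq x_u$, and $g_{\max}^{(0)}$ is the identity. There is nothing further to check here.

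For the induction step we assume $\hat h_v^{(t-1)}=g_{\max}^{(t-1)}(\eta_v^{\max,(t-1)})$ for every node $v$ of every graph, and fix $(G,u)$. By definition, $\eta_u^{\max,(t)}=(\eta_u^{\max,(t-1)},A_u^{(t)})$ with $A_u^{(t)}=\{(\eta_v^{\max,(t-1)},w_{uv}) : v\in N(u)\}$. Unfolding $g_{\max}^{(t)}$ gives
\[
g_{\max}^{(t)}(\eta_u^{\max,(t)})=\phi_t\Bigl(g_{\max}^{(t-1)}(\eta_u^{\max,(t-1)}),\ \max_{(z,w)\in A_u^{(t)}} M_t\bigl(g_{\max}^{(t-1)}(z),w\bigr)\Bigr).
\]
The first argument equals $\hat h_u^{(t-1)}$ by the induction hypothesis.

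The only step needing care is the second argument: we must show that the coordinatewise maximum over the \emph{set} $A_u^{(t)}$ equals the coordinatewise maximum over the \emph{multiset} of neighbors $v\in N(u)$. The map $v\mapsto(\eta_v^{\max,(t-1)},w_{uv})$ sends $N(u)$ onto $A_u^{(t)}$, possibly non-injectively. A coordinatewise maximum is insensitive to multiplicities, because $\max$ is idempotent. Hence
\[
\max_{(z,w)\in A_u^{(t)}}M_t(g_{\max}^{(t-1)}(z),w)=\max_{v\in N(u)}M_t\bigl(g_{\max}^{(t-1)}(\eta_v^{\max,(t-1)}),w_{uv}\bigr)=\max_{v\in N(u)}M_t(\hat h_v^{(t-1)},w_{uv}),
\]
where the last equality uses the induction hypothesis applied to each neighbor $v$. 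The maximum is over a nonempty finite set because graphs in $\cG_{r,p_0}$ have no isolated nodes.

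Substituting both arguments into $\phi_t$ yields exactly the recursion \cref{mpnn:max_edge_aggr}, so $g_{\max}^{(t)}(\eta_u^{\max,(t)})=\hat h_u^{(t)}$. This completes the induction. The same argument applies verbatim with $\min$ in place of $\max$. We also note that $g_{\max}^{(t)}$ is well defined on $H_t^{\cG,\max}$, since it is defined purely in terms of the pair $(\eta,A)$ and not in terms of the graph that induced it.
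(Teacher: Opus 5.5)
Your proof is correct and uses the same induction on $t$ as the paper: you unfold the recursion for $g_{\text{max}}^{(t)}$ and apply the induction hypothesis to $u$ and to each of its neighbors. Your remark that the coordinatewise maximum ignores multiplicities makes precise a point the paper only covers with the one-line observation that $A_u^{(t)}$ is exactly the set of pairs $(\eta_v^{\text{max},(t-1)},w_{uv})$. This matters because the map $v\mapsto(\eta_v^{\text{max},(t-1)},w_{uv})$ need not be injective.
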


\begin{proof}
We prove by induction on $t$.

For $t=0$, $\eta_u^{\text{max},(0)}=x_u$ and $\hat{h}_u^{(0)}=x_u$, so define $g_{\text{max}}^{(0)}(\eta)\coloneq \eta$. Assume the claim holds at depth $t-1$. Fix $(G,u)$. By construction,
\[
\eta_u^{\text{max},(t)} = \bigl(\eta_u^{\text{max},(t-1)},A_u^{(t)}\bigr),
\qquad
A_u^{(t)}=\{(\eta_v^{\text{max},(t-1)},w_{uv}): v\in N(u)\}.
\]
Using \eqref{mpnn:max_edge_aggr} and the induction hypothesis $h_v^{(t-1)}=g_{\text{max}}^{(t-1)}(\eta_v^{\text{max},(t-1)})$,
\[
\hat{h}_u^{(t)}
=
\phi_t \mleft(
g_{\text{max}}^{(t-1)}(\eta_u^{\text{max},(t-1)}),
\;
\text{max}_{v\in N(u)}\,
M_t \mleft(g_{\text{max}}^{(t-1)}(\eta_v^{\text{max},(t-1)}),\,w_{uv}\mright)
\mright).
\]
Since $A_u^{(t)}$ is precisely the set of pairs $(\eta_v^{\text{max},(t-1)},w_{uv})$, the maximum equals
$\text{max}_{(z,e)\in A_u^{(t)}} \Gamma_t(z,e)$ with $\Gamma_t(z,e)\coloneq M_t(g_{\text{max}}^{(t-1)}(z),e)$.
Define $g_{\text{max}}^{(t)}$ on induced elements by the stated recursion; then
$h_u^{(t)}=g_{\text{max}}^{(t)}(\eta_u^{\text{max},(t)})$, completing the induction.
\end{proof}

\subsection{Induced pseudo-metric}

Since $(H_L^{\text{max}},d_{\mathbf H,L})$ is a compact metric space (see
\cref{thm:compactness_of_HL_max}), we may pull back its metric to obtain a
pseudo-metric on rooted graphs. Define
\begin{equation}
\label{eq:induced_max_pm}
d_{\text{max},L}\bigl((G,u),(G',u')\bigr)
\coloneq
d_{\mathbf H,L}\bigl(\eta_u^{\text{max},(L)},\eta_{u'}^{\text{max},(L)}\bigr),
\qquad (G,u),(G',u')\in V_1(\cG_{r,p_0}).
\end{equation}

\subsection{Lipschitzness of max-aggregation MPNNs}

The next lemma shows that the coordinatewise maximum over a compact set is
stable under perturbations measured by the Hausdorff distance.

\begin{lemma}
\label{lemma:max_lipschitz_haus}
Let $(\cX,d)$ be a metric space and let $f:\cX\to\Rb$ be Lipschitz with
constant $L_f$. Define
\[
F(A)\coloneq \max_{x\in A} f(x),\qquad A\in\mathrm{Haus}(\cX).
\]
Then $F$ is $L_f$-Lipschitz with respect to $\mathbf{H}_d$.
\end{lemma}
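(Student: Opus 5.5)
The argument is a direct two-sided comparison through the definition of the Hausdorff distance. Fix $A,B\in\mathrm{Haus}(\cX)$ and set $\delta\coloneq\mathbf{H}_d(A,B)$.

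First, the maxima exist. A Lipschitz $f$ is continuous, and $A$, $B$ are nonempty compact sets, so $F(A)$ and $F(B)$ are attained.

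Next, we bound $F(A)$ from above. Pick $a^\ast\in A$ with $f(a^\ast)=F(A)$. By the definition of $\mathbf{H}_d$, we have $\inf_{b\in B}d(a^\ast,b)\le\sup_{a\in A}\inf_{b\in B}d(a,b)\le\delta$. Because $B$ is compact and $d(a^\ast,\cdot)$ is continuous, this infimum is attained at some $b\in B$ with $d(a^\ast,b)\le\delta$. Then
\[
F(A)=f(a^\ast)\le f(b)+L_f\,d(a^\ast,b)\le F(B)+L_f\,\delta.
\]
If one wants to avoid using compactness of $B$ here, it suffices to take $b$ with $d(a^\ast,b)\le\delta+\epsilon$ and let $\epsilon\to0$.

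By symmetry, exchanging the roles of $A$ and $B$ and using the other term in the maximum defining $\mathbf{H}_d$, we get $F(B)\le F(A)+L_f\delta$. Together the two bounds give $|F(A)-F(B)|\le L_f\,\mathbf{H}_d(A,B)$.

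The only point needing care is that each one-sided inequality uses the matching directed term of the Hausdorff distance. Everything else is routine.

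Coordinatewise maxima of vector-valued maps, such as $\Gamma_t$ later in the section, are handled by applying the lemma to each coordinate $f_i=\langle e_i,\Gamma_t\rangle$. Each $f_i$ is $\mathrm{Lip}(\Gamma_t)$-Lipschitz, and combining the coordinates in $\ell_2$ costs at most a factor $\sqrt{p_{t-1}}$.
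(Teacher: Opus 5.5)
Your proof is correct and follows the paper's argument: take a maximizer $a^\ast\in A$, use the matching directed Hausdorff term to get $b\in B$ with $d(a^\ast,b)\le\mathbf{H}_d(A,B)$, apply the Lipschitz bound, and symmetrize; you also make explicit that the compactness of $B$ is what guarantees such a $b$ exists, which the paper leaves implicit. Your closing remark on coordinatewise maxima is also right and is the correct way to treat the vector case: since $\max_{A}\langle a,\Gamma_t\rangle\neq\langle a,\max_{A}\Gamma_t\rangle$ in general, the unit-direction reduction used later in the paper does not go through, and the $\sqrt{p_{t-1}}$ factor is genuinely needed.
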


\begin{proof}
Fix $A,B\in\mathrm{Haus}(\cX)$.
Let $x\in A$ be such that $f(x)=\max_{a\in A} f(a)$.
By definition of the Hausdorff distance $\mathbf{H}_d(A,B)$, there exists $y\in B$ such that
\[
d(x,y)\le \mathbf{H}_d(A,B).
\]
Then
\[
F(A)-F(B)
=
\max_{a\in A} f(a)-\max_{b\in B} f(b)
\le
f(x)-f(y)
\le
L_f\, d(x,y)
\le
L_f\,\mathbf{H}_d(A,B).
\]
Similarly, for the difference $F(B)-F(A)$.
\end{proof}

We are now ready to establish the Lipschitz continuity of max-aggregation MPNNs with respect to the induced pseudo-metric.

\begin{proposition}[Lipschitzness of max-aggregation MPNNs]
\label{prop:lipschitz_max_mpnn}
Fix $L\in\Nb$ and consider the max-aggregation MPNN defined in \cref{mpnn:max_edge_aggr}. Then for each $t\in[L]$, the function $f_{\text{max}}:V_1(\cG_{r,p_0})\to\Rb^{p_L}$, $f_{\text{max}}(G,u)\coloneq \hat{h}_u^{(L)}$,
is Lipschitz with respect to the induced pseudo-metric $d_{\text{max},L}$ (\cref{eq:induced_max_pm}).
\end{proposition}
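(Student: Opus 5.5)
We mirror the proof of \cref{prop:lipschitz_sum_mpnn}, replacing the Kantorovich--Rubinshtein estimate for integrals with the Hausdorff estimate of \cref{lemma:max_lipschitz_haus}. By \cref{lemma:mpnns_equivalent_formulation_max} we have $\hat h_u^{(L)} = g_{\text{max}}^{(L)}(\eta_u^{\text{max},(L)})$. Since $d_{\text{max},L}$ is the pullback of $d_{\mathbf H,L}$, it suffices to show the following claim: for every $t\in[L]_0$ there is a constant $C_t$ with $\|g_{\text{max}}^{(t)}(x)-g_{\text{max}}^{(t)}(x')\|_2\le C_t\, d_{\mathbf H,t}(x,x')$ for all $x,x'\in H_t^{\cG,\text{max}}$. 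We prove this by induction on $t$. The base case is $C_0=1$, since $g_{\text{max}}^{(0)}$ is the identity on $B_{r,p_0}$.

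For the induction step, write $x=(\eta,A)$ and $x'=(\eta',A')$, and apply the Lipschitz assumption on $\phi_t$ to get
\[
\|g_{\text{max}}^{(t)}(x)-g_{\text{max}}^{(t)}(x')\|_2 \le C_{\phi,1}^{(t)}C_{t-1}\,d_{\mathbf H,t-1}(\eta,\eta') + C_{\phi,2}^{(t)}\Bigl\|\max_{(z,w)\in A}\Gamma_t(z,w)-\max_{(z,w)\in A'}\Gamma_t(z,w)\Bigr\|_2 .
\]
The map $\Gamma_t(z,w)=M_t(g_{\text{max}}^{(t-1)}(z),w)$ is Lipschitz on $Y_{t-1}^{\text{max}}$ with respect to $d_{Y,t-1}$, with constant $\Lambda_{t-1}\coloneq\max\{C_{M,1}^{(t)}C_{t-1},\,C_{M,2}^{(t)}\}$. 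Each coordinate $\Gamma_{t,i}$ is also $\Lambda_{t-1}$-Lipschitz, since $|\Gamma_{t,i}(y)-\Gamma_{t,i}(y')|\le\|\Gamma_t(y)-\Gamma_t(y')\|_2$. Applying \cref{lemma:max_lipschitz_haus} coordinatewise (the sets $A,A'$ are finite, hence compact) gives $|\max_A\Gamma_{t,i}-\max_{A'}\Gamma_{t,i}|\le\Lambda_{t-1}\mathbf H_{t-1}(A,A')$ for each $i\in[p_{t-1}]$. Summing over coordinates yields a bound on the vector difference of $\sqrt{p_{t-1}}\,\Lambda_{t-1}\mathbf H_{t-1}(A,A')$. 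Since $d_{\mathbf H,t}(x,x')=d_{\mathbf H,t-1}(\eta,\eta')+\mathbf H_{t-1}(A,A')$, we may take
\[
C_t\coloneq\max\bigl\{C_{\phi,1}^{(t)}C_{t-1},\ C_{\phi,2}^{(t)}\sqrt{p_{t-1}}\,\Lambda_{t-1}\bigr\},
\]
which completes the induction.

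The main subtlety is that $g_{\text{max}}^{(t-1)}$ is only defined on the induced set $H_{t-1}^{\cG,\text{max}}$, while \cref{lemma:max_lipschitz_haus} uses the metric space $\cX$ in which the sets live. We handle this by applying the lemma with $\cX\coloneq H_{t-1}^{\cG,\text{max}}\times E$ equipped with the restriction of $d_{Y,t-1}$. The sets $A_u^{(t)}$ are nonempty finite subsets of this space, and the lemma's proof only needs, for the maximizer of one set, a nearby point of the other set at distance at most $\mathbf H$. That point exists for finite sets regardless of the ambient space, so the restriction causes no problem.

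Unlike the sum case, we need no boundedness constants $B_t$, because the Hausdorff argument never normalizes test functions. The dependence on $p_{t-1}$ comes only from converting the coordinatewise bounds to the $2$-norm. Finally, the min-aggregation case follows by the same argument, using $\min_A f=-\max_A(-f)$.
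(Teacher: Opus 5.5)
Your proof is correct and has the same skeleton as the paper's: induction on $t$, the Lipschitz bound on $\phi_t$ for the self term, and \cref{lemma:max_lipschitz_haus} for the neighborhood term. It differs at the step that passes from scalar to vector-valued maxima, and there your route is the sound one.

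The paper scalarizes along unit directions, $f_a=\langle a,\Gamma_t\rangle$, applies the Hausdorff lemma to $f_a$, and then takes the supremum over $\|a\|_2\le 1$ ``by an argument identical to'' \cref{lemma:vector_to_scalar_reduction}. That lemma rests on linearity of the integral, $\langle a,\int G\,d\mu\rangle=\int\langle a,G\rangle\,d\mu$. Nothing analogous holds for the coordinatewise maximum: in general $\langle a,\max_A\Gamma_t\rangle\neq\max_A\langle a,\Gamma_t\rangle$. So bounding $|\max_A f_a-\max_{A'}f_a|$ for all $a$ does not bound $\|\max_A\Gamma_t-\max_{A'}\Gamma_t\|_2$.

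The dimension-free bound the paper states is in fact false. Take $\Gamma$ the identity on $\Rb^2$, $A=\{e_1,e_2\}$ and $A'=\{(1-\delta)e_1,(1-\delta)e_2\}$. Then $\mathbf H(A,A')=\delta$, but the coordinatewise maxima $(1,1)$ and $(1-\delta,1-\delta)$ are at distance $\sqrt{2}\,\delta$. This is realized at $t=1$ with $M_1(z,w)=z$ by two stars carrying these leaf features (scaled into $B_{r,p_0}$). So applying the lemma coordinatewise and paying $\sqrt{p_{t-1}}$ is not a loss of sharpness; with Euclidean norms it is necessary.

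Two further details in your write-up are more accurate than the paper's:
\begin{itemize}
\item Your Lipschitz constant $\max\{C_{M,1}^{(t)}C_{t-1},\,C_{M,2}^{(t)}\}$ for $\Gamma_t$ is correct. The paper's $\max\{C_{M,1}^{(t)},C_{M,2}^{(t)}\}\,C_{t-1}$ tacitly needs $C_{t-1}\ge 1$.
\item You justify applying the Hausdorff lemma on the induced subspace $H_{t-1}^{\cG,\text{max}}\times E$, where $g_{\text{max}}^{(t-1)}$ is actually defined. The paper passes over this point.
\end{itemize}
The proposition's conclusion, Lipschitzness with a graph-independent constant, is unaffected; only the paper's explicit $C_t$ needs your extra dimension factor.
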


\begin{proof}
We prove by induction on $t$ that there exist constants $C_t,B_t>0$ such that for all $x,x'\in H_t^{\cG,\text{max}}$,
\[
\|g_{\text{max}}^{(t)}(x)-g_{\text{max}}^{(t)}(x')\|_2 \le C_t\, d_{\mathbf{H},t}(x,x'),
\]
then the result follows directly by \cref{lemma:mpnns_equivalent_formulation_max}.

For $t=0$, $g_{\text{max}}^{(0)}(\eta)=\eta$, hence $C_0=1$.
Assume the claim holds for $t-1$. Let $x=(\eta,A)$ and $x'=(\eta',A')$ in $H_t^{\cG,\text{max}}$.
By \cref{lemma:mpnns_equivalent_formulation_max},
\[
g_{\text{max}}^{(t)}(x)
=
\phi_t\Bigl(g_{\text{max}}^{(t-1)}(\eta),\ \text{max}_{(z,e)\in A}\Gamma_t(z,e)\Bigr),
\qquad
\Gamma_t(z,e)=M_t(g_{\text{max}}^{(t-1)}(z),e),
\]
and similarly for $x'$.

Using the Lipschitz property of $\phi_t$,
\[
\|g_{\text{max}}^{(t)}(x)-g_{\text{max}}^{(t)}(x')\|_2
\le
C_{\phi,1}^{(t)}\|g_{\text{max}}^{(t-1)}(\eta)-g_{\text{max}}^{(t-1)}(\eta')\|_2
+
C_{\phi,2}^{(t)}
\mleft\|\text{max}_{A}\Gamma_t-\text{max}_{A'}\Gamma_t\mright\|_2,
\]
where $\text{max}_{A}\Gamma_t$ abbreviates $\text{max}_{(z,e)\in A}\Gamma_t(z,e)$.

The first term is bounded by $C_{\phi,1}^{(t)}C_{t-1}\, d_{\text{max},t-1}(\eta,\eta')$ by the induction hypothesis.

For the max term, fix a unit direction $a\in\Rb^{p_{t-1}}$ with $\|a\|_2\le 1$ and consider the scalar functional
$f_a(z,e)\coloneq \langle a,\Gamma_t(z,e)\rangle$ on $Y_{t-1}^{\text{max}}=H_{t-1}^{\text{max}}\times E$.
By Lipschitzness of $M_t$ and the induction hypothesis, $f_a$ is Lipschitz on $(Y_{t-1}^{\text{max}},d_{Y,t-1})$
with constant at most $C^{(t)}_{M}\,C_{t-1}$, uniformly in $\|a\|_2\le 1$, $C^{(t)}_{M} \coloneq \max\{C^{(t)}_{M,1},C^{(t)}_{M,2}\}$
Applying \cref{lemma:max_lipschitz_haus} on $(\mathrm{Haus}(Y_{t-1}^{\text{max}}),\mathbf H_{t-1})$ yields
\[
\bigl|\text{max}_{(z,e)\in A} f_a(z,e)-\text{max}_{(z,e)\in A'} f_a(z,e)\bigr|
\le
C^{(t)}_{M} C_{t-1}\,\mathbf H_{t-1}(A,A').
\]
Taking the supremum over $\|a\|_2\le 1$ and using an argument identical to \cref{lemma:vector_to_scalar_reduction}, gives
\[
\mleft\|\text{max}_{A}\Gamma_t-\text{max}_{A'}\Gamma_t\mright\|_2
\le
C^{(t)}_{M} C_{t-1}\,\mathbf{H}_{t-1}(A,A').
\]
Combining the bounds and recalling $d_{\mathbf{H},t}(x,x')=d_{\mathbf{H},t-1}(\eta,\eta')+\mathbf{H}_{t-1}(A,A')$
yields a recursion of the form
\[
\|g_{\text{max}}^{(t)}(x)-g_{\text{max}}^{(t)}(x')\|_2
\le
C_t\, d_{\mathbf{H},t}(x,x'),
\quad
C_t \coloneq \max\Bigl\{C_{\phi,1}^{(t)}C_{t-1},\ C_{\phi,2}^{(t)}C^{(t)}_{M}C_{t-1}\Bigr\}.
\]
\end{proof}

\subsection{Conclusion: finite Lipschitz class for max aggregation}

We now state the final consequence for max aggregation, completing the proof of \cref{thm:finite_complexity_classes}.

\begin{theorem}[\cref{thm:finite_complexity_classes} (max/min aggregation) in the main text]
\label{app2:thm:finite_complexity_classes}
Fix $L\in\Nb$. Consider the hypothesis class induced by the
max-aggregation MPNN \cref{mpnn:max_edge_aggr}. Endow
$V_1(\cG_{r,p_0})$ with the pseudo-metric $d_{\text{max},L}$. Then the
resulting hypothesis class is a finite Lipschitz class in the sense of
\cref{def:finite_uniform_approx}.
\end{theorem}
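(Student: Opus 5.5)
The proof mirrors the one for \cref{app1:thm:finite_complexity_classes}, and it verifies the two requirements of \cref{def:finite_uniform_approx} separately.

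\textbf{Lipschitz continuity.} Fix parameters $\vec\theta$, so that $\phi_t$ and $M_t$ are fixed Lipschitz maps. By \cref{prop:lipschitz_max_mpnn}, the hypothesis $f_{\vec\theta}(G,u)=\hat h_u^{(L)}$ is Lipschitz with respect to $d_{\text{max},L}$. Its constant is
\[
C_L=\prod_t \max\bigl\{C_{\phi,1}^{(t)},\,C_{\phi,2}^{(t)}C_M^{(t)}\bigr\},
\]
and this quantity is finite and independent of the input graph. Hence $M_{\vec\theta}<\infty$. For FNN instantiations, $C_L$ is bounded by products of spectral norms, which also gives a computable certificate $B_{\vec\theta}$. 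Scalar outputs, or outputs composed with a Lipschitz readout, remain Lipschitz.

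\textbf{Finite covering numbers.} By \cref{thm:compactness_of_HL_max}, $(H_L^{\text{max}},d_{\mathbf H,L})$ is compact. A compact metric space is totally bounded, so for every $\varepsilon>0$ it has a finite $\varepsilon/2$-cover. The pseudo-metric $d_{\text{max},L}$ is the pullback of $d_{\mathbf H,L}$ under the map $(G,u)\mapsto\eta_u^{\text{max},(L)}$, so the image $H_L^{\cG,\text{max}}$ is a subset of a totally bounded space and is itself totally bounded. The only subtlety is that a cover must consist of induced points. For each $\varepsilon/2$-ball that meets the image, pick one induced point $\eta_{u_i}^{\text{max},(L)}$ in it. The triangle inequality then shows that the pairs $(G_i,u_i)$ form an $\varepsilon$-cover of $V_1(\cG_{r,p_0})$, so $\cN(V_1(\cG_{r,p_0}),d_{\text{max},L},\varepsilon)<\infty$.

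Two routine well-definedness points need checking. First, each $A_u^{(t)}$ must lie in $\mathrm{Haus}(Y^{\text{max}}_{t-1})$. It is finite, hence compact, and it is nonempty because graphs have no isolated nodes; the edge weights lie in the compact set $E$. Second, the maximum in $g^{(t)}_{\text{max}}$ is attained, which holds because the sets are finite.

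\textbf{Min aggregation.} This reduces to the max case. Since
\[
\min_{v} M_t(\cdot)=-\max_{v}\bigl(-M_t(\cdot)\bigr),
\]
a min-MPNN is a max-MPNN with $M_t$ replaced by $-M_t$ and $\phi_t(x,y)$ replaced by $\phi_t(x,-y)$. Both replacements preserve the Lipschitz constants.

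\textbf{Main obstacle.} The substantive work is the vector-valued max step inside \cref{prop:lipschitz_max_mpnn}, and I take that step as given. Within this proof, the delicate step is passing from compactness of $H_L^{\text{max}}$ to a cover whose centers are actual graph–node pairs. This is exactly what the halving-of-radius argument handles.
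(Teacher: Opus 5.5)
Your proposal is correct and follows the paper's proof: Lipschitzness comes from \cref{prop:lipschitz_max_mpnn}, and finite covering numbers are pulled back from the compact space $(H_L^{\text{max}},d_{\mathbf H,L})$ of \cref{thm:compactness_of_HL_max}; your $\varepsilon/2$ re-centering and your explicit min-to-max reduction only spell out steps the paper leaves implicit. One caution: the closed-form $C_L$ you quote is inherited from that proposition, and its vector-valued max step (identifying $\langle a,\max_A\Gamma_t\rangle$ with $\max_A\langle a,\Gamma_t\rangle$) fails for a coordinatewise maximum, so the bound you can actually prove carries an extra dimension factor such as $\sqrt{p_{t-1}}$ (apply \cref{lemma:max_lipschitz_haus} coordinate by coordinate) --- this is harmless here, since only $M_{\vec\theta}<\infty$ is needed.
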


\begin{proof}
By \cref{prop:lipschitz_max_mpnn}, the hypothesis map $f_{\text{max}}(G,u)=\hat{h}_u^{(L)}$ is Lipschitz with respect to $d_{\text{max},L}$. By \cref{thm:compactness_of_HL_max}, $(H_L^{\text{max}},d_{\mathbf H,L})$ is compact which implies finite covering numbers for every radius $\varepsilon$. Since $d_{\text{max},L}$ is obtained by pulling back $d_{\mathbf H,L}$ through the induced neighborhood-set map, the space $V_1(\cG_{r,p_0})$ endowed with $d_{\text{max},L}$ also admits finite $\varepsilon$-covers for all $\varepsilon>0$. This verifies \cref{def:finite_uniform_approx}.
\end{proof}

\textbf{Remark.}
The Lipschitz constant $M_{\vec\theta}$ in
\cref{def:finite_uniform_approx} depends only on the Lipschitz constants
of the message maps $\{M_t\}_{t\in[L]}$, the update maps
$\{\phi_t\}_{t\in[L]}$, and on the number of layers $L$, and is
independent of the size or structure of the input graph. In particular,
in the special case
\begin{align*}
\cF_{\Theta} \coloneq
\Bigl\{
f \colon V_1(\cG_{r,p_0}) \to \Rb^{p_L}
\ \Big|\ &
f(G,u) = h_u^{(L)},\
\phi_t(x,y) = \sigma\mleft( \vec{W}_1^{(t)} x + \vec{W}_2^{(t)} y \mright), \\
& M_t(z,w) = \sigma\mleft( \vec{U}_1^{(t)} z + \vec{U}_2^{(t)} w \mright), \\
& \vec{W}_1^{(t)}, \vec{W}_2^{(t)} \in \Rb^{p_t \times p_{t-1}},\
\vec{U}_1^{(t)} \in \Rb^{q_t \times p_{t-1}},\
\vec{U}_2^{(t)} \in \Rb^{q_t \times e},\ t \in [L]
\Bigr\},
\end{align*}
where $\sigma$ is the ReLu function applied elementwise, the Lipschitz constant $M_{\vec\theta}$ depends only on the $2$-norms
$\|\vec{W}_1^{(t)}\|_2$, $\|\vec{W}_2^{(t)}\|_2$,
$\|\vec{U}_1^{(t)}\|_2$, and $\|\vec{U}_2^{(t)}\|_2$ of the weight
matrices and on the depth $L$.

The same conclusions hold for min-aggregation, since $\min(- \cdot) = -\max(- \cdot )$ and negation preserves all Lipschitz properties.

\section{Iterated degree measures as computation trees}
\label{app:IDMs_intuition}

Computation trees or unrolling trees (see \cref{sec:15wltomst}) provide a convenient way to understand how message passing neural networks (MPNNs) propagate and aggregate information across a graph. Starting from a root node, a computation tree records its neighbors, the neighbors of those neighbors, and so on, up to a prescribed depth. Since MPNNs update node representations by repeatedly aggregating information from local neighborhoods, such trees encode precisely the information required to compute node features layer by layer.

Iterated degree measures (IDMs) can be viewed as a measure-theoretic analog of these computation trees. Instead of explicitly storing finite sets of neighbors, IDMs represent neighborhoods as probability measures and neighborhoods of neighborhoods as measures over measures, recursively. This abstraction is particularly natural in the setting of graphons, which are continuous objects extending finite graphs. Formally, a graphon is a measurable function $W \colon [0,1]^2 \to [0,1]$, which can be interpreted as a weighted graph with an uncountable node set $[0,1]$, where the edge weight between two nodes $x,y \in [0,1]$ is given by $W(x,y)$.

In a graphon, each node has infinitely many neighbors, and its neighborhood cannot be described by a finite multiset. Instead, the local structure around a node is naturally captured by a measure encoding the distribution of its neighbors and their attributes. Since edge weights are integrable, these neighborhood measures have total mass at most one. Iterating this construction—taking measures of neighborhood measures—yields exactly the hierarchy of spaces $\{M_\ell\}_{\ell=0}^L$ and $\{H_\ell\}_{\ell=0}^L$ introduced above. In this sense, IDMs serve as computation trees for graphons, providing a compact, recursive representation of increasingly deep neighborhood information. This perspective aligns with recent work connecting graph neural networks applied on graphons, and measure-based representations of local structure (see, e.g., \cite{DBLP:journals/combinatorica/GrebikR22}). See also \citep{Boe+2023}, for a definition of the $\wlone$ algorithm applied on graphons based on the above analysis.

\section{Experimental study}\label{app:experiments}
In the following, we outline details related to the experimental evaluation of \textbf{Q1} to \textbf{Q3} in \cref{sec:experiments}. In addition, we provide further results on size generalization and training set construction. The source code for all experiments is available in the supplementary material.
\paragraph{Dataset creation}
To investigate \textbf{Q1}, we construct multiple training and test datasets.
Throughout the experiments, we consider two training sets of graphs. Building on theoretical results on minimal training samples for the SSSP problem, we construct a training set of the minimum number of path graphs, as outlined in \cref{sec:cannot}. This results in a set of three graphs for a Bellman--Ford problem with two steps.
In addition, we construct an extended training set to help with training in our MPNN setting. For this, we consider additional path graphs constructed in the same way as for the minimal training set with edge weight $x$, but with edges scaled by $0{.}5$ and $2$. Moreover, we add a special version of the path graph, which includes multiple paths to the furthest reachable node. In all training datasets, the initial Bellman--Ford state and edge weights are given. Moreover, the initial starting node is marked with $0$ unless otherwise specified.  For all experiments, we set the edge weight $x=50$ and the value indicating unvisited nodes to $1000$. In addition, all graphs in both the training and test datasets have a self-loop edge for each node.

As the simplest case of size generalization, we provide a test set, which we call ER-constdeg, of 200 randomly generated Erdős--Rényi graphs with an average node degree of $6{.}4$. We fix the average node degree and increase the number of nodes, generating one dataset for each from 64 to 1024 nodes.
Furthermore, we provide a second dataset generated from Erd\"{o}s--Reyni graphs, but with an unbounded average degree. For this, we set $p=0{.}1$ as the edge probability in the graph generator provided by NetworkX \citep{Hagberg2008NetworkX}. We call this dataset ER. Again, we provide a test dataset for 64 to 1024 nodes each.
As the most general case of the test set, we provide a set of 50 graphs, each consisting of Erd\"{o}s--Reyni graphs with unbounded degree, stochastic block model graphs with probability matrices outlined in \cref{tab:graphgen}, complete graphs, as well as star graphs and path graphs as shown in \cref{fig:edge_case_graph_k_4}. This dataset is called General.
Across all test datasets, the weight distribution is uniform on $[1,100]$ for all graphs.

For \textbf{Q2} and \textbf{Q3}, we consider the same training and test sets as in \textbf{Q1}. However, we restrict our evaluation to a subset of the test datasets, since \textbf{Q1} showed that the General and ER test sets are sufficient for evaluating generalization capabilities.

\begin{table}
\caption{Parameters for generation of Erdős--Reyni and stochastic block model (SBM) graphs in \textbf{Q1} to \textbf{Q3}.}
\label{tab:graphgen}
\centering
\resizebox{0.50\columnwidth}{!}{
\begin{tabular}{ @{}lccc@{} }
\toprule
\bf Dataset/Graphs & \bf n &\bf p & \bf weight \\
\midrule
ER-constdeg &  64-1024 & 6.4/n& Uniform(1,100) \\
ER & 64-1024 & 0.1 & Uniform(1,100)\\
Star graphs& 64-1024 & - & Uniform(1,100)\\
Complete graphs& 64-1024 & - & Uniform(1,100)\\
SBM & 64-1024 &\(\begin{bmatrix}
$0.7$ & $0.05$ & $0.02$ \\
$0.05$ & $0.6$ & $0.03$ \\
$0.02$ & $0.03$ & $0.4$ \\
\end{bmatrix}\)& Uniform(1,100)\\
\bottomrule
\end{tabular}
}

\label{table:increasingedge}
\end{table}

\paragraph{Neural architectures}
The MPNN architecture discussed in \Cref{sec:cannot} consists of update and aggregation functions that map node and edge features to an intermediate representation. The ReLU activation function is used, except for the last layer, as this resulted in unstable training behavior. Moreover, all functions in each layer are implemented using a two-layer FNN with the configurations outlined in the following paragraph.

\paragraph{Hyperparameters and hardware}
Across all experiments, we trained the MPNN architectures for 160000 steps at a learning rate of $0{.}001$. \Cref{tab:hyperparams} showcases tuned parameters with selected parameters highlighted. A constant learning rate was used without a specific scheduler. Furthermore, the Adam optimizer \citep{Kingma2015Adam} was used across all experiments.

For size generalization and regularization experiments in \textbf{Q1} and \textbf{Q3}, we used a two-layer MPNN as outlined in \cref{sec:cannot}. For all experiments, we used the design outlined in the theory of \cref{sec:cannot}. In addition, we set the update and aggregation functions to two-layer MLPs with a hidden dimension of 64. The first aggregation FNN uses minimum aggregation with an output dimension of 16, while the second layer reduces it to 1. All layers are randomly initialized via the uniform initialization provided by PyTorch \citep{Paszke2019Pytorch}.

Furthermore, we report the runtime and memory usage of our experiments in \Cref{tab:runtime}.
We provide a PyTorch Geometric implementation for each model. All our experiments were executed on a system with 12 CPU cores, an Nvidia L40 GPU, and 120GB of memory.
\begin{table}

\caption{Hyperparameter selection for each experiment in \textbf{Q1} to \textbf{Q3}. Selected hyperparameters are highlighted, with the tuned ones shown in brackets.}
\centering
\resizebox{0.7\columnwidth}{!}{
\label{tab:hyperparams}
\begin{tabular}{ @{}lccc@{} }
\toprule
\bf Hyperparameter & \bf Q1 &\bf Q2 & \bf Q3 \\
\midrule
Learning rate & \{0.0001, \textbf{0.001}, 0.01\} &\{0.0001, \textbf{0.001}, 0.01\} &  \{0.0001, \textbf{0.001}, 0.01\} \\
Weight decay & 0 & 0 & 0\\
Optimzier & Adam & Adam & Adam \\
\midrule
Number of steps & 160000 & 160000 & 160000\\
Batch size & 1& 1& 1\\
Edge weight (x) & 50& 50& 50\\
Initial node value & 0& 0& 0\\
\midrule
Hidden dim. &  \{32,\textbf{64},128\} & \{32,\textbf{64},128\} & \{32,\textbf{64},128\} \\
Number of MPNN layers & 2& 2& 2\\
Number of MLP layers & 2& 2& 2\\
1st layer output dim. & \{8,\textbf{16}, 32\}& \{8,\textbf{16}, 32\}& \{8,\textbf{16}, 32\}\\
2nd layer output dim. & 1& 1& 1\\
\bottomrule
\end{tabular}
}
\end{table}

\begin{table}
\caption{Runtime and Memory Usage for each experiment in \Cref{sec:experiments}. The first value denotes the runtime in minutes (m) and seconds (s) of each experiment, and the second value denotes the used VRAM in MB. All results were obtained on a single computing node with an Nvidia L40 GPU and 120GB of RAM. For each experiment, the longest runtime was considered, obtained from test datasets with 1024 nodes.}
	\centering
	\resizebox{0.45\textwidth}{!}{ 	\renewcommand{\arraystretch}{1.05}
		\begin{tabular}{@{}lccc@{}}
			\toprule   & \multicolumn{3}{c}{\textbf{Dataset}}
			\\\cmidrule{2-4}
            \textbf{Task}& \textsc{ER-constdeg}    & \textsc{ER}      & \textsc{General}  \\
            \midrule
             Q1 & 18m48s/606.33 & 27m20s/881.06 & 25m3s/891.55 \\
			 Q2 & 12m3s/573.33 & 14m17s/571.28 & 24m44s/887.95 \\
             Q3 $L_1$ & -/- & -/- & 23m37s/887.94 \\
             Q3 $L_2$ & -/- & -/- & 24m16s/887.95 \\

			\bottomrule
		\end{tabular}
        }
        \label{tab:runtime}
\end{table}

\paragraph{Experimental protocol}
In all experiments, we use the Bellman--Ford state of the nodes and the edge weights from the graph construction as input to our model. Furthermore, the target is given by the result obtained from Bellman-Ford after $K$ additional steps from the starting iteration. To calculate the training loss, a combined loss consisting of an $\ell_1$-loss $L_{\text{emp}}$ and a regularization term $L_{\text{reg}}$ is given:
\begin{equation*}
\cL = \cL_{\text{emp}} + \eta \cL_{\text{reg}}.
\end{equation*}
Throughout \textbf{Q1}, this loss is used to train the model with the regularization outlined in \Cref{sec:cannot}. For $\eta$, a value of $0.1$ is used across experiments. The test score, however, is computed slightly differently. Given the precision $h_v$ of a node value after $K$ additional Bellman-Ford steps and the underlying ground truth $x_v$ the test score is computed as follows:
\begin{equation*}
L_{\text{test}} = \frac{1}{\abs{G_{\text{test}}}}\sum_{v \in G_{\text{test}}}\frac{\abs{h_v - x_v}}{(x_v + 1)}.
\end{equation*}
Therefore, a lower test score implies better generalization by the model to the unseen test dataset.

For \textbf{Q2}, we use slightly modified training and test sets compared to \textbf{Q1} and \textbf{Q3}. Since the more expressive MPNN architecture from \Cref{sec:cannot} requires a marked starting node for Bellman-Ford, we remove this special label from the training and test data and only provide the Bellman--Ford initialization from the execution of the algorithm. Otherwise, the parameters and training/test data remain the same as in \textbf{Q1}.

Finally, in \textbf{Q3}, we consider the regularization term introduced in \Cref{sec:cannot} and whether it improves performance over standard $\ell_1$ or $\ell_2$ regularization. To conduct the experiment, we select the ER and General datasets, as outlined in \Cref{app:experiments}, to evaluate both regularization terms.
To provide a fair comparison, both architectures were kept the same as in \textbf{Q1} and executed with the same seeds across experiments.

\paragraph{Additional results}
To supplement the results for \textbf{Q1}, we present detailed results in \Cref{fig:extendedQ1}, highlighting similar test behavior across test graph sizes. Furthermore, we provide an additional training dataset containing both random graphs and the minimum-path graphs for the training outlined in \Cref{sec:cannot}. In addition to synthetic graphs used throughout \textbf{Q1} to \textbf{Q3} we apply a two-layer MPNN to real world graphs given by Cora and Citeseer. Results for these experiments are outlined in \Cref{tab:results_cora_citeseer}, indicating similar performance to results on synthetic graphs.

Moreover, we provide distributions of the weight matrices for \textbf{Q1}. We omit showing bias value distributions, as they converge to 0 across all FNN layers. In addition, the weight matrices for update MLP layers contain singular non-zero values, with most entries remaining zero during training. Similar results can be observed for the aggregation FNN layers.
Furthermore, the singular non-zero weights converge to positive values across all layers, with a maximum of 1.5 observed in the aggregation FNN of the first MPNN layer. We observe similar behavior for $\ell_1$ and $\ell_2$ regularization, but not as pronounced as in \Cref{fig:weightmatricesQ1}.

Since size generalization experiments in \textbf{Q1} were conducted only for two Bellman--Ford steps, we aim to provide additional insights into predicting future Bellman--Ford steps. Using the setup from \textbf{Q1}, we predict three steps of Bellman-Ford instead of two. For this, we use the same MPNN as in \textbf{Q1} and highlight the results obtained in \Cref{tab:additionalstepsQ1}. We note that a two-layer MPNN is insufficient to learn to predict three steps of Bellman-Ford from scratch using the given training set from \textbf{Q1}. However, this aligns with theoretical results, indicating that at least one layer is needed for each step of Bellman-Ford to be sufficiently predicted.

Furthermore, we provide results for a three-layer MPNN trained on three steps of Bellman-Ford. Similar to results for \textbf{Q1}, application of a three-layer MPNN to three steps of Bellman-Ford obtains test scores in a similar range as results for two Bellman-Ford steps. Additionally, results from \Cref{tab:additionalstepsQ1} indicate the difference between a two-layer and three-layer MPNN for predicting three steps of Bellman-Ford. Results are outlined in \Cref{tab:results_steps_bf}.

Moreover, we consider the case of repeated application of the trained two-layer MPNN from \textbf{Q1} to predict more than two Bellman--Ford steps. As a proof of concept, we provide results for predicting the four steps of Bellman-Ford using the two-time application of the MPNN from \textbf{Q1}.
As indicated in \Cref{tab:results_steps_bf} repeated application yields favorable results in the outlined case of four Bellman-Ford steps for both tasks. Moreover, results are also improved over \Cref{tab:additionalstepsQ1}, similar to the application of a three-layer MPNN to learn three steps of Bellman-Ford.

Finally, \Cref{tab:additionalresultsQ2} shows additional results for \textbf{Q2}, highlighting differences between training and test results due to the increased expressivity of the \wlfive-MPNN. As shown, the training loss and test score are significantly higher for \textbf{Q2} than for the MPNN in \textbf{Q1}. This aligns empirical results with theoretical observations on the required expressivity for the SSSP problem, as seen in \Cref{prop:MST and SSSP}.

\begin{table}

\caption{Additional results for the ER-constdeg and ER dataset used for size generalization in \textbf{Q1}. Results are outlined for the given MPNN from \textbf{Q1}, with the test dataset changed accordingly. A list of hyperparameters can be found under \textbf{Q1} in \Cref{tab:hyperparams}.}
\centering
	\resizebox{0.75\textwidth}{!}{ 	\renewcommand{\arraystretch}{1.05}
    \label{tab:additionalresultsQ1otherdata}
		\begin{tabular}{@{}lccccc@{}}
			\toprule   & \multicolumn{5}{c}{\textbf{Nodes}}
			\\\cmidrule{2-6}
            \textbf{Test Set} (Test score $\downarrow$) &  64 &  128 & 256 & 512 & 1024 \\
            \midrule
             ER-constdeg &$0{.}0035\tiny\pm0{.}0002$ &$0{.}0034\tiny\pm0{.}0002$ &$0{.}0033\tiny\pm0{.}0002$ & $0{.}0031\tiny\pm0{.}0002$& $0{.}0030 \tiny\pm 0{.}0001$  \\
			 ER  & $0{.}0034\tiny\pm0{.}0004$ & $0{.}0033\tiny\pm0{.}0006$ & $0{.}0037\tiny\pm0{.}0001$ & $0{.}0038\tiny\pm0{.}0002$ & $0{.}0038\tiny\pm0{.}0002$ \\
             General  & $0{.}0032\tiny\pm0{.}0002$ & $0{.}0033\tiny\pm0{.}0002$ & $0{.}0033\tiny\pm0{.}0002$ &  $0{.}0033\tiny\pm0{.}0003$ & $0{.}0033\tiny\pm0{.}0002$ \\
			\bottomrule
		\end{tabular}
        }
\end{table}

\begin{table}

\caption{Additional results indicating test scores for the application of a three-layer MPNN as an extension of \textbf{Q1} and the repeated application of a two-layer MPNN to learn four steps of Bellman-Ford. The MPNNs used are the same as in the experiments for \textbf{Q1}.}
\centering
	\resizebox{0.40\textwidth}{!}{ 	\renewcommand{\arraystretch}{1.05}
    \label{tab:results_steps_bf}
		\begin{tabular}{@{}lcc@{}}
			\toprule   & \multicolumn{2}{c}{\textbf{Nodes}}
			\\\cmidrule{2-3}
            \textbf{Test Set} (Test score $\downarrow$) &  General (64) & ER (64) \\
            \midrule
            Three steps BF & $0{.}0075\tiny\pm0{.}0023$ & $0{.}0155\tiny\pm0{.}0014$ \\
            Repeated application & $0{.}0040\tiny\pm0{.}0011$ & $0{.}0051\tiny\pm0{.}0013$ \\
			\bottomrule
		\end{tabular}
        }
\end{table}

\begin{table}

\caption{Results indicating the application of the two-layer MPNN to real-world graph datasets Cora and Citeseer. }
\centering
	\resizebox{0.40\textwidth}{!}{ 	\renewcommand{\arraystretch}{1.05}
    \label{tab:results_cora_citeseer}
		\begin{tabular}{@{}lcc@{}}
			\toprule   & \multicolumn{2}{c}{\textbf{Nodes}}
			\\\cmidrule{2-3}
            \textbf{Test Set} (Test score $\downarrow$) &  Cora & Citeseer \\
            \midrule
            Two steps BF & $0{.}0083\tiny\pm0{.}0035$ & $0{.}0080\tiny\pm0{.}0040$ \\
			\bottomrule
		\end{tabular}
        }
\end{table}

\begin{figure}[htbp]
\caption{Weights associated with the two MPNN layers and the corresponding MLPs for \textbf{Q1} with 1024 nodes in the test set. Bias values are not shown as they converge towards 0. 1st or 2nd denotes the MPNN layer, whereas layer 1 or 2 denotes the FNN layer.}
\label{fig:weightmatricesQ1}
\centering
\includegraphics[width=\textwidth]{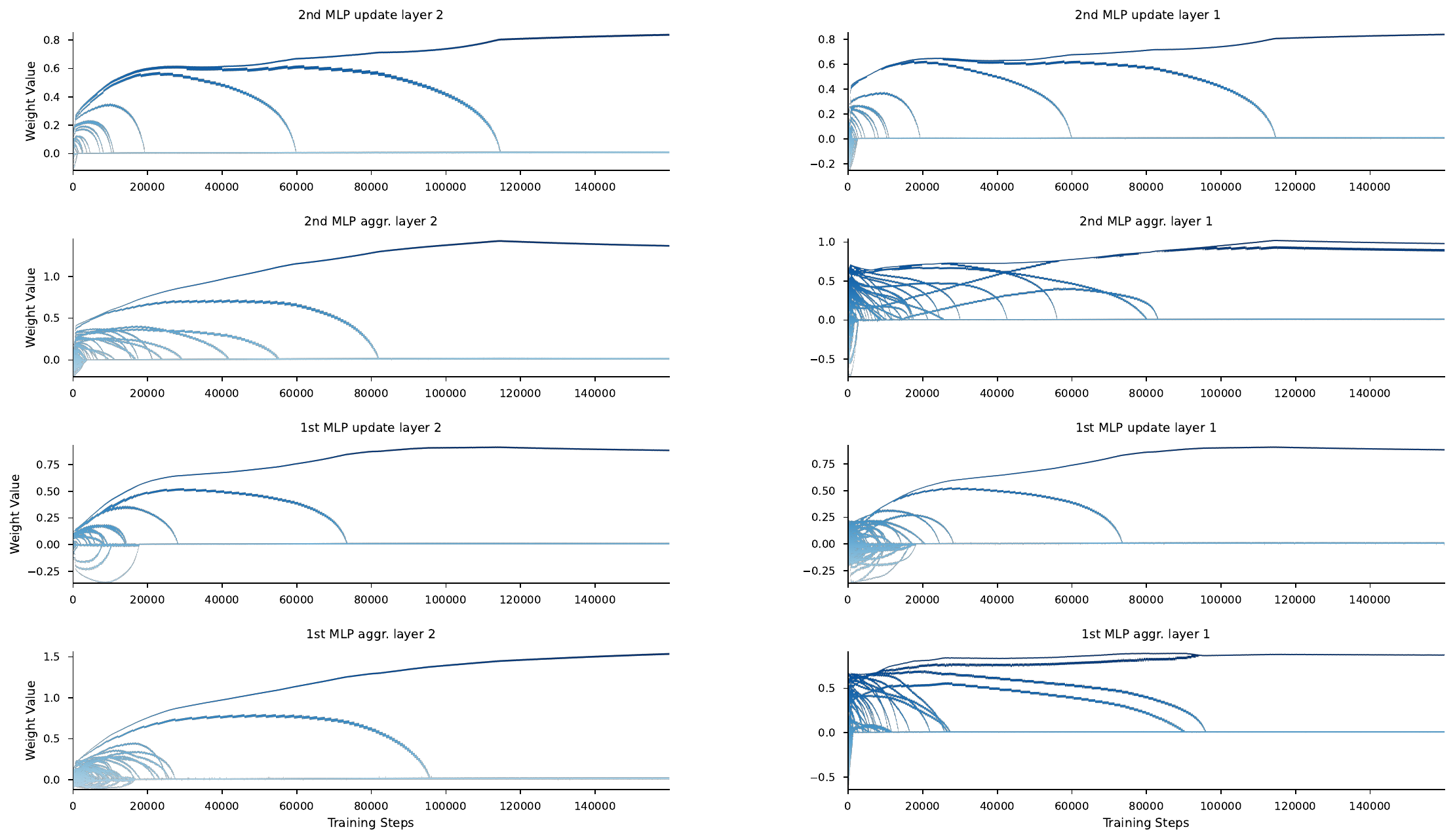}
\end{figure}

\begin{table}
\caption{Results for the application of the 2-layer MPNN from \textbf{Q1} to the prediction task of three Bellman--Ford steps. Following the same protocol as for \textbf{Q1}, the node features are given by the node values at the initial Bellman--Ford state. The target is given by the Bellman-Ford state after three steps.}
\centering
	\resizebox{0.75\textwidth}{!}{ 	\renewcommand{\arraystretch}{1.05}
    \label{tab:additionalstepsQ1}
		\begin{tabular}{@{}lccccc@{}}
			\toprule   & \multicolumn{5}{c}{\textbf{Nodes}}
			\\\cmidrule{2-6}
            \textbf{Test Set} (Test score $\downarrow$) &  64 &  128 & 256 & 512 & 1024 \\
            \midrule
             General  & $1{.}5381\tiny\pm0{.}0057$ & $1{.}5030\tiny\pm0{.}0043$ &  $1{.}0831\tiny\pm0{.}0015$& $0{.}6433\tiny\pm0{.}0015$ & $0{.}5721\tiny\pm0{.}0039$  \\
			\bottomrule
		\end{tabular}
        }
\end{table}

\begin{figure}
\label{fig:weightmatricesunreg}
\end{figure}

\begin{figure}[htbp]
    \centering

        \begin{subfigure}[b]{\textwidth}
        \centering
        \includegraphics[width=0.45\linewidth]{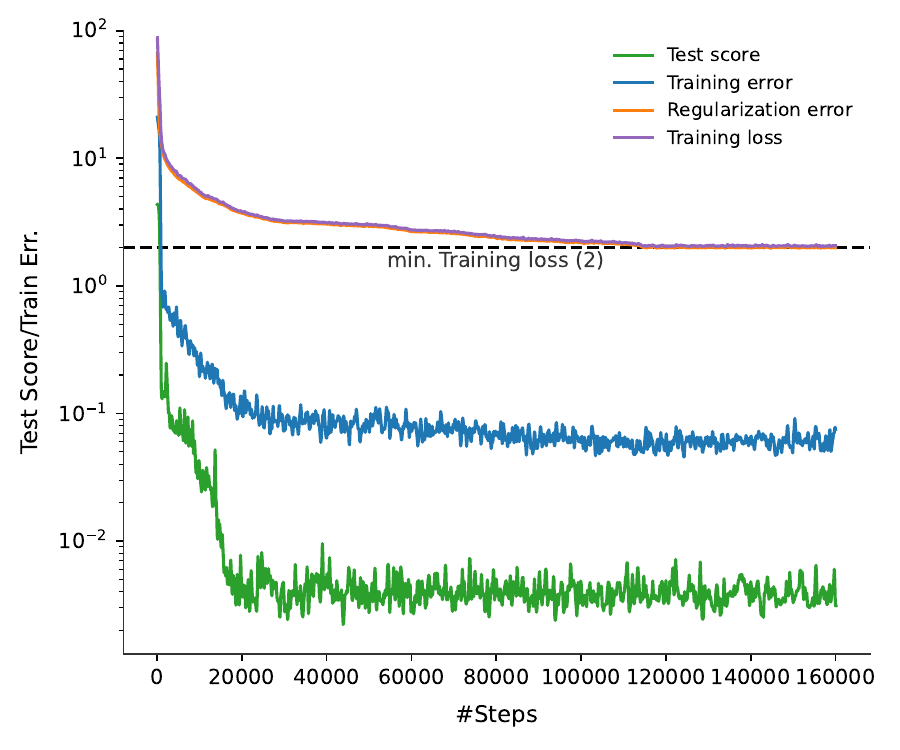}
        \caption{Results with General test set for 64 nodes}
    \end{subfigure}
    \par\bigskip

        \begin{subfigure}[b]{\textwidth}
        \centering
        \includegraphics[width=0.45\linewidth]{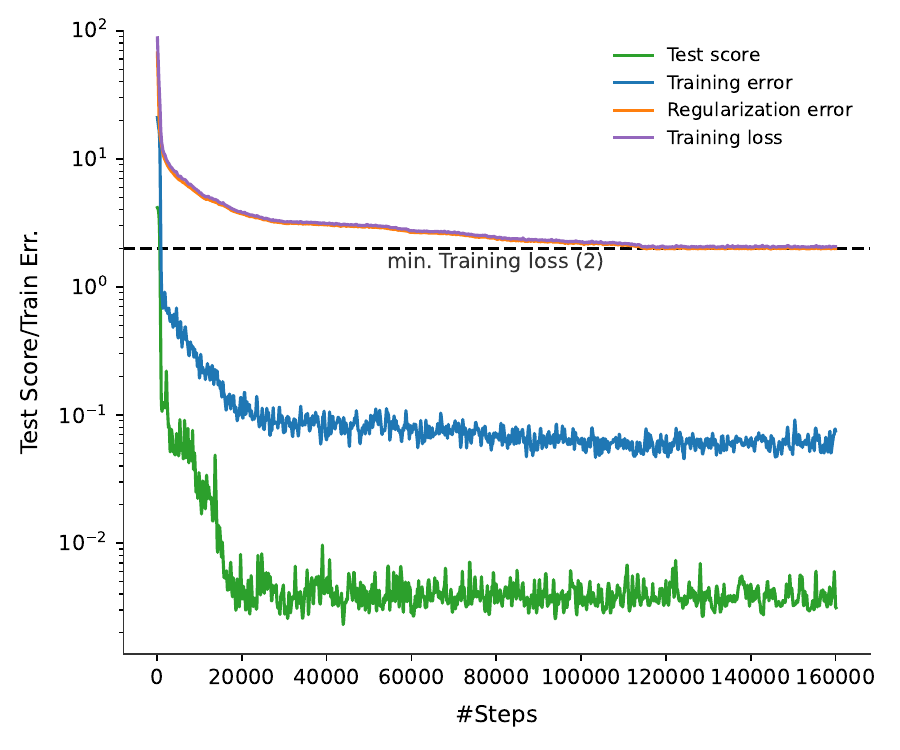}
        \caption{Results with General test set for 128 nodes}
    \end{subfigure}
    \par\bigskip

        \begin{subfigure}[b]{\textwidth}
        \centering
        \includegraphics[width=0.45\linewidth]{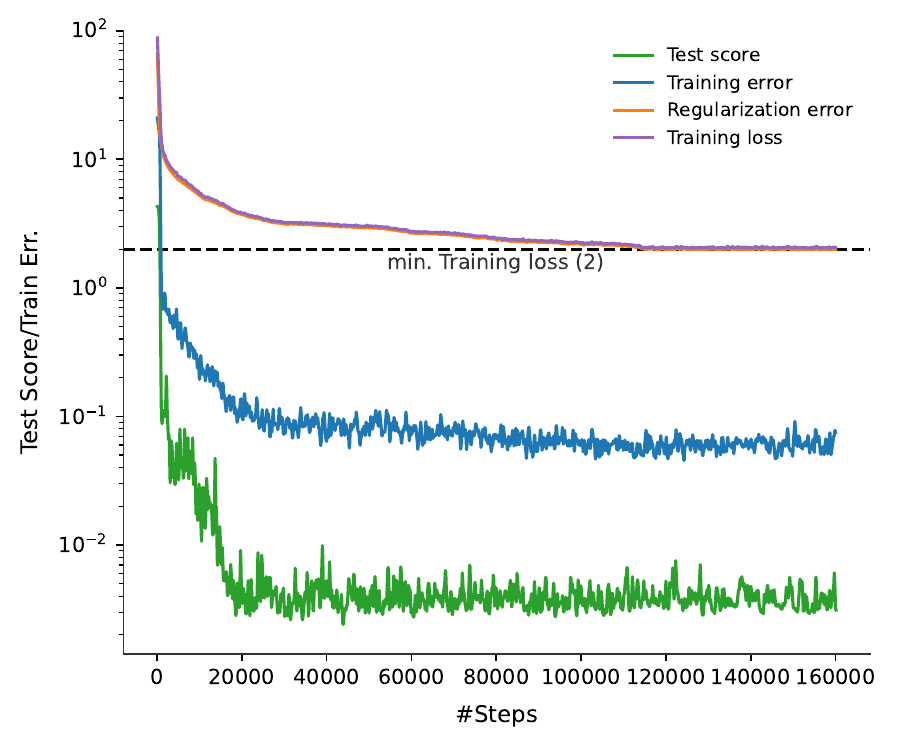}
        \caption{Results with General test set for 256 nodes}
    \end{subfigure}
    \caption{Extended results on size generalization obtained in \textbf{Q1} for the General dataset. Each plot shows training error $\cL_{\text{emp}}$, training loss $\cL$, regularization loss $\eta \cL_{\text{reg}}$, and test score for each of the experiments. All plots are generated from the same seed and smoothed using Gaussian smoothing with $\sigma =1$ (continued on next page). }
    \label{fig:extendedQ1}
\end{figure}

\begin{figure}[htbp]
    \ContinuedFloat     \centering

        \begin{subfigure}[b]{\textwidth}
        \centering
        \includegraphics[width=0.45\linewidth]{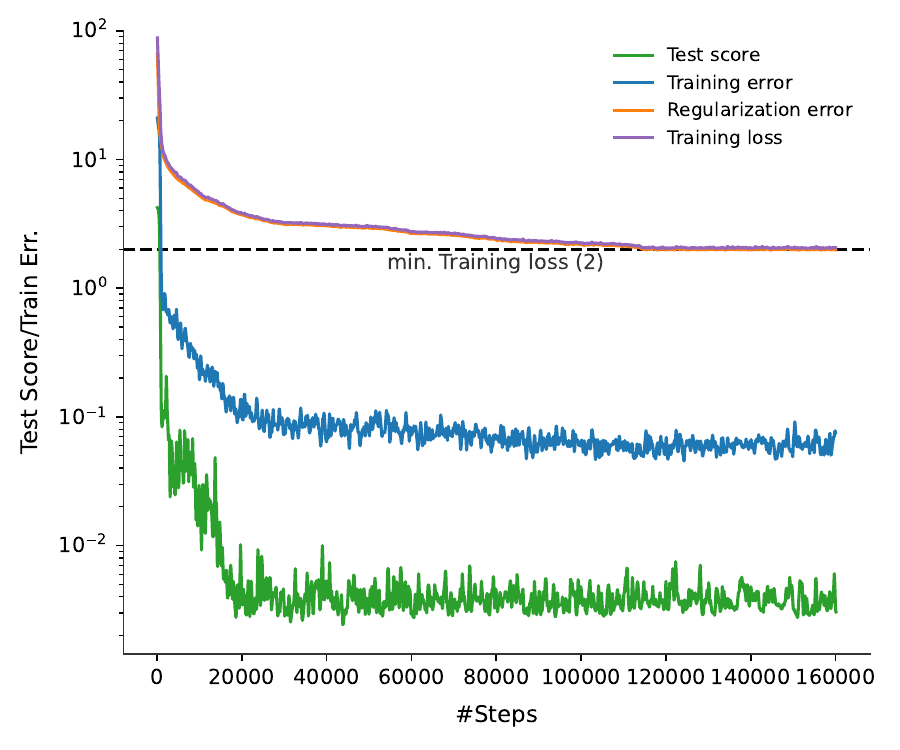}
        \caption{Results with General test set for 512 nodes}
    \end{subfigure}
    \par\bigskip

        \begin{subfigure}[b]{\textwidth}
        \centering
        \includegraphics[width=0.45\linewidth]{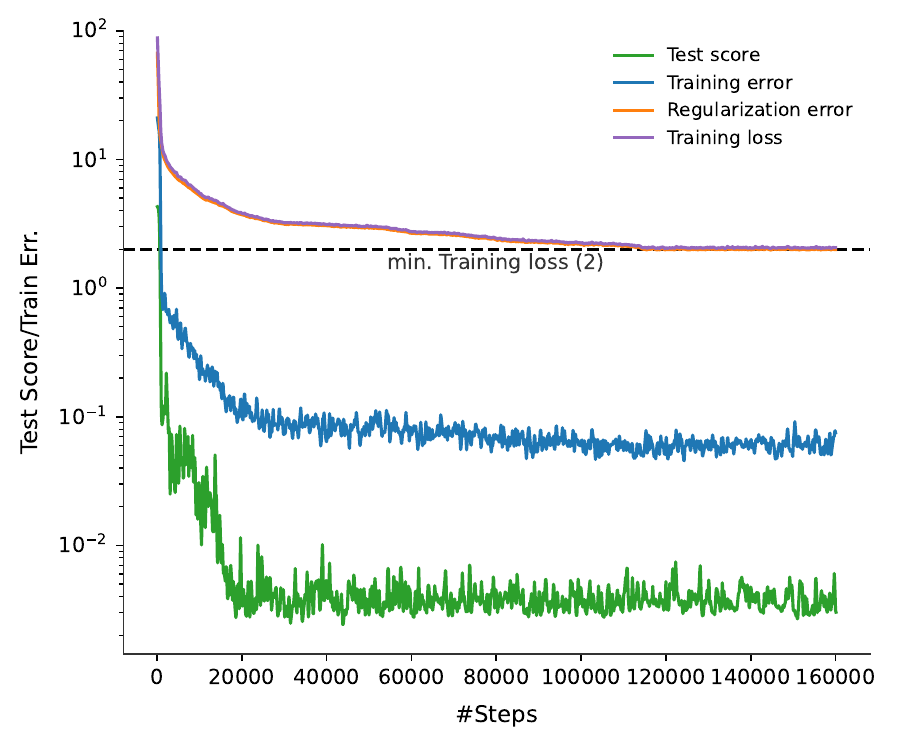}
        \caption{Results with General test set for 1024 nodes}
    \end{subfigure}

\end{figure}

\begin{table}

\caption{Additional results for \textbf{Q2} with size generalization properties. The same setup as in \textbf{Q1} was used in \textbf{Q2}, without special node labeling, unlike \textbf{Q1}.}
\centering
	\resizebox{0.8\textwidth}{!}{ 	\renewcommand{\arraystretch}{1.05}
    \label{tab:additionalresultsQ2}
		\begin{tabular}{lcccccccccc}

			\toprule  & \multicolumn{5}{c}{\textbf{Nodes - General}}
			\\\cmidrule{2-6}
            \textbf{Task} (Score $\downarrow$) &  64 &  128 & 256 & 512 & 1024  \\
            \midrule
             Q2 - Training & $28{.}4869\tiny\pm0{.}0015$ & $28{.}4869\tiny\pm0{.}0015$ &  $28{.}4869\tiny\pm0{.}0015$& $28{.}4869\tiny\pm0{.}0015$ & $28{.}4869\tiny\pm0{.}0015$ \\
			 Q2 - Test & $0{.}8393\tiny\pm0{.}0001$ & $0{.}8414\tiny\pm0{.}0008$ &  $0{.}8368\tiny\pm0{.}0010$& $0{.}8306\tiny\pm0{.}0006$ & $0{.}8217\tiny\pm0{.}0010$  \\
             \midrule
             Q1 - Training & $2{.}0544\tiny\pm0{.}0151$ & $2{.}0544\tiny\pm0{.}0151$ & $2{.}0544\tiny\pm0{.}0151$ & $2{.}0544\tiny\pm0{.}0151$ & $2{.}0544\tiny\pm0{.}0151$ \\
             Q1 - Test &  $0{.}0032\tiny\pm0{.}0002$ & $0{.}0033\tiny\pm0{.}0002$ & $0{.}0033\tiny\pm0{.}0002$ &  $0{.}0033\tiny\pm0{.}0003$ & $0{.}0033\tiny\pm0{.}0002$  \\

			\bottomrule
		\end{tabular}
        }
\end{table}

\end{document}